\definecolor{orange}{rgb}{1,0.5,0}
\newcommand{\vect}[1]{\boldsymbol{\mathbf{#1}}}
\newtheorem{theorem}{Theorem}
\def\abovestrut#1{\rule[0in]{0in}{#1}\ignorespaces}
\def\belowstrut#1{\rule[-#1]{0in}{#1}\ignorespaces}
\def\abovespace{\abovestrut{0.20in}}
\def\belowspace{\belowstrut{0.10in}}
\title{Hebbian-Descent}
\author[1]{Jan Melchior}
\author[1]{Laurenz Wiskott}
\affil[1]{\normalsize Institut f\"ur Neuroinformatik, Ruhr Universit\"at Bochum, 44780 Bochum, Germany
\texttt{<forename>.<surname>@ruhr-uni-bochum.de} }
\begin{document}
\date{}
\maketitle

\begin{abstract}  

In this work we propose Hebbian-descent as a biologically plausible learning rule for hetero-associative as well as auto-associative learning in single layer artificial neural networks.
It can be used as a replacement for gradient descent as well as Hebbian learning, in particular in online learning, as it inherits their advantages while not suffering from their disadvantages.
We discuss the drawbacks of Hebbian learning as having problems with correlated input data and not profiting from seeing training patterns several times. For gradient descent we identify the derivative of the activation function as problematic especially in online learning.
Hebbian-descent addresses these problems by getting rid of the activation function's derivative and by centering, \emph{i.e.} keeping the neural activities mean free, leading to a biologically plausible update rule that is provably convergent, does not suffer from the vanishing error term problem, can deal with correlated data, profits from seeing patterns several times, and enables successful online learning when centering is used.
We discuss its relationship to Hebbian learning, contrastive learning, and gradient decent and show that in case of a strictly positive derivative of the activation function Hebbian-descent leads to the same update rule as gradient descent but for a different loss function.
In this case Hebbian-descent inherits the convergence properties of gradient descent, but we also show empirically that it converges when the derivative of the activation function is only non-negative, such as for the step function for example.
Furthermore, in case of the mean squared error loss Hebbian-descent can be understood as the difference between two Hebb-learning steps, which in case of an invertible and integrable activation function actually optimizes a generalized linear model. 
We empirically show that Hebbian-descent outperforms Hebb's rule and the covariance rule in general, has a performance similar to gradient decent for several training epochs, and most importantly it has a much better performance than the other update rules in online / one-shot learning.
All update rules profit from centering, but only in combination with Hebbian-descent it does lead to an inherent gradual forgetting mechanism that prevents catastrophic 
interference.
In case of auto-associative learning we show that the Hebbian-descent update can be understood as a non-linear version of Oja's rule / Sanger's rule, which also corresponds to a one step mean field contrastive divergence update. 
We proof that this update rule is not the gradient of any objective function, which, however, does not mean that it does not converge.
We show empirically that it converges in terms of the reconstruction error with a performance similar to that of gradient descent.

\end{abstract}

\section*{Introduction}\label{sec:HD_introduction}

In machine learning various research fields are biologically inspired such as evolutionary computing, reinforcement learning, or artificial neural networks. 
Especially the recent success of the latter in the field of deep learning, by breaking one benchmark record after each other, shows how valuable computational neuroscience is for machine learning. 
On the one hand, machine learning often provides strong theoretical insights such as convergence guarantees or provable universality of a model, which would also be valuable for computational neuroscience. 
On the other hand, this usually compromises biologically plausibility such as locality or online capability of a learning rule, which makes it harder to transfer machine learning insights back to computational neuroscience. 

In neuroscience Hebbian learning can still be consider as the major learning principle since Donald Hebb postulated his theory in 1949~\citep{hebb1949organization}. 
It is still widely used in its canonical form generally known as Hebb's rule, where the synaptic weight changes are defined as the product of presynaptic and postsynaptic firing rates.
Since Hebb's rule cannot learn negative / inhibitory weights when assuming positive firing rates, it cannot model long- or short-term-depression. Therefore, \citet{sejnowski1989hebb} proposed the covariance rule as an alternative to overcome this limitation. 
An advantage of Hebb's rule and the covariance rule is that they are capable of one-shot learning allowing to store patterns instantaneously without delay.
A disadvantage is though that they do not take advantage of seeing input patterns several times and that they have problems with correlated patterns as has been stated by~\citet{MarrWillshawEtAl-1991} and was analyzed for auto-associative, and hetero-associative networks by \citet{Loeweothers-1998} and \citet{NeherChengEtAl-2015}, respectively.
Furthermore, Hebb's rule and the covariance rule are unstable learning rules, so that the weights are usually renormalized after each update or a weight decay term is added~\citep{BienenstockCooperEtAl-1982}. 
This prevents the weights from growing infinitely large but introduces an additional hyperparameter that controls the speed of forgetting. 
In unsupervised learning the stability problem was analytically addressed for a linear neuron by Oja's rule \citep{oja1982simplified} or for several linear neurons by Sanger's rule~\citep{sanger1989optimal}, which are convergent learning rules that drive the neurons to learn the principal components of the input patterns.
For auto-associative learning in Hopfield networks contrastive Hebbian learning~\citep{RumelhartMcClellandEtAl-1986} and for Boltzmann machines Contrastive Divergence~\citep{Hinton-2002a} and its variants~\citep{Tieleman-2008a, DesjardinsCourvilleEtAl-2010aa, cho:10} have been proposed as stable learning rules, respectively. 
While contrastive learning takes advantage of several sweeps through the data it is not capable of one-shot learning anymore. 

In machine learning, gradient descent is the most widely used optimization algorithm, which is indispensable in the field of artificial neural networks and deep learning.
Although gradient descent is theoretically sound and convergence can be proven analytically even for stochastic gradient descent~\citep{Robbins1985, saad1998online}, it can still get very slow or converge to a bad local optimum. 
One reason for slow convergence in deep and recurrent neural networks trained with gradient decent is known as the vanishing gradient problem~\citep{hochreiter1991untersuchungen}. 
In order to understand this problem, consider an artificial neural network with $N$ layers trained with gradient descent (back-propagation). 
Then the gradient of the weight matrix of layer $k$ is calculated using the chain rule so that it contains the product of the derivatives of the activation functions of the current and all $N-k$ higher layers. 
Now, if the derivatives of the activation functions take only values smaller than 1, such as the sigmoid's derivative for example, the gradient values decrease exponentially with $N-k+1$.
This problem made deep learning difficult until \citet{HochreiterSchmidhuber-1997} proposed long-short term memory cells, which help to overcome the vanishing gradient problem in recurrent neural networks, as they allow for a linear error flow back in time.
In deep neural-networks activation functions with a constant derivative of one in the positive domain, such as the rectifier~\citep{FukushimaMiyake-1982, hahnloser2000digital, hahnloser2001permitted} or exponential linear units~\citep{clevert2015fast} have been proposed, which help to overcome the vanishing gradient problem.  
In the context of shallow networks the potentially negative effect of the sigmoid's derivative has already been discussed by~\citet{Hinton-1989}.
The authors proposed to use the cross-entropy as a loss function instead of the squared error in which case the derivative of the sigmoids in the output layer cancels out in the gradient.
Due to the similarity to the sigmoid the same also holds for softmax as an activation function, which explains why the cross-entropy loss is the preferred loss when training deep networks with softmax outputs nowadays. Motivated by technical benefits \citet{HertzKroghEtAl-1997} proposed an algorithm that uses an approximation to get rid of the derivative of the activation function also in intermediate layers of deep networks. 
The authors also mentioned that for the hyperbolic tangent as an output activation function, a loss similar to the cross entropy exists that cancels out the derivative of the activation function.

A major problem of artificial neural networks in general is that they suffer from catastrophic interference~\citep{McCloskey1989, Ratcliff1990}, namely that networks immediately forget previously learned information once trained on new information. 
\citet{French-1991} analyzed the problem in small neural networks and found that a sparse hidden representation reduces catastrophic interference but comes at the cost of reduced network capacity. 
A popular approach to overcome catastrophic interference is rehearsal learning~\citep{Robins-1995} in which the network is simultaneously updated on a new and several randomly selected old patterns. This, however, has the disadvantage that all previously seen patterns need to be stored.
 In pseudo rehearsal learning~\citep{Robins-1995} random patterns are fed to the network and the corresponding output is calculated. 
 The network is then updated on a new pattern and several of these random input-output pairs, which regularizes the network to stay close to its previous representation. 
This also reduces catastrophic interference but has still an additional computational overhead in the number of rehearsal patterns.
Another popular approach is that of complementary learning systems~\citep{McClellandMcNaughtonEtAl-1995, AnsRousset-1997, French-1997}, which consists of a fast and a slow learning network. 
The fast learning network acts as a buffer that rapidly stores recent patterns, which are then carefully transfered to the slow learning network, which tries to store not only the latest but all patterns. 
A disadvantage of such systems is that we need two networks for a task that can potentially be solved by a single network and that the knowledge transfer from the fast to the slow learning network needs consolidation (rehearsal) again.
Recently, a regularization term named elastic weight consolidation has been proposed by~\citet{KirkpatrickPascanuEtAl-2017}, which actively regularizes the weights towards their previous values and thus allows for an optimal embedding of new information. This makes rehearsal / consolidation obsolete, but all old weights have to be stored for this regularization such that we effectively have two networks again.
\citet{SantoroBartunovEtAl-2016} used memory augmented neural networks to perform one shot learning in deep neural networks. The memory itself, however, is not a neural implementation such that the question remains how one-shot learning can be implemented using gradient based methods in a neural network.

In this work we propose Hebbian-descent, a learning rule for hetero-associative as well as auto-associative learning in artificial neural networks.
We begin by describing the considered type of artificial neurons and the concept of centering in Section~\ref{sec:HD_artificial_neuron}. 
Section~\ref{sec:HD_hebbdescent} introduces the Hebbian-descent update for hetero-associative learning, shows its relationship to Hebbian learning (Section~\ref{sec:HD_hebb}), contrastive Learning (Section~\ref{sec:HD_hetero_contrastive}), and gradient decent (Section~\ref{sec:HD_connection_to_gradient}), and discusses the individual advantages and disadvantaged of these learning rules.
In Section~\ref{sec:HD_Hebbian_descent_loss} we show that in case of a strictly positive derivative of the activation function Hebbian-descent 
leads to the same update rule as gradient descent but for a different loss function.
In this case Hebbian-descent is a particular form of gradient descent and thus inherits its convergence properties (Section~\ref{sec:HD_Hebbian_descent_covergence}).
In Section~\ref{sec:HD_glm} we show that in case of the mean squared error loss and an invertible and integrable activation function Hebbian-descent actually optimizes a generalized linear model.
As a consequence the Hebbian-descent loss can be seen as the general log-likelihood loss in this case (Section~\ref{sec:HD_HD_loss_generalized}).
The impact of Hebbian-descent on deep neural networks is discussed in Section~\ref{sec:HD_learning_rate}.
The auto-associative Hebbian-descent update is introduced in Section~\ref{sec:HD_auto_associative}, where we show its relationship to Oja's / Sangers's rule (Section~\ref{sec:HD_from_the_perspective_of_oja_sanger}), contrastive Learning (Section~\ref{sec:HD_auto_asso_from_the_perspective_of_contrastive_learning}), and gradient decent (Section~\ref{sec:HD_from_the_perspective_of_gradient_descent}).
Section~\ref{sec:HD_methods} describes the experiments performed to compare Hebbian-descent with gradient descent, Hebb's rule, and the covariance rule.
The results are described in Section~\ref{sec:HD_results} where we first show that all learning rules considered in this work benefit from centering (Section~\ref{sec:hetero_ssociative_online_learning_with_centering}).
We further show that Hebbian-descent outperforms Hebb's rule and the covariance rule in general (Section~\ref{sec:HD_hetero_associative_learning}), 
has a performance similar to gradient decent in batch or mini-batch learning for several epochs (Section~\ref{sec:HD_multi_epoch_experiments}), and most importantly has a much better performance than all the other update rules in online / one-shot learning (Section~\ref{sec:hetero_ssociative_online_learning_with_centering}).
We show empirically that Hebbian-descent even converges if the derivative of the activation function is non-negative and can even be used with non-linearities like the step function (Section~\ref{sec:HD_multi_epoch_experiments} and Section~\ref{sec:hetero_ssociative_online_learning_with_centering}).
Section~\ref{sec:HD_weight_decay_experiments} illustrates that only Hebbian-descent with centering shows an inherent and plausible curve of forgetting that follows a power law distribution and does not require an additional forgetting mechanism such as a weight decay.
We proof in Appendix~\ref{appendix:AA_HD_is_generally_not_the_gradient_of_any_objective_function} that the auto-associative Hebbian-descent update is not the gradient of any objective function. However, this does not imply that it does not converge and we show in Section~\ref{sec:HD_auto_associative_experiments} that it has a similar performance as the gradient descent update. 
Finally we show that the hidden units in a network trained with Hebbian-descent have approximately the same average activity, which implies a more efficient encoding scheme than that of gradient descent (Section~\ref{sec:HD_homo_hidden_experiments}).

\section{Artificial Neuron Model}\label{sec:HD_artificial_neuron}
In this work we consider artificial neural networks in which every neuron is a centered artificial neuron~\citep{LeCun1998, Melchior2016} given by
\begin{equation}
	h_j = \phi\left(a_j\right) := \phi \bigg(\sum_i^N w_{ij}\left( x_i-\mu_i \right)  + b_j \bigg),\label{eqn:neuron_element}
\end{equation}
with input $x_i$, offset value $\mu_i$ (usually the mean of the input dimension $i$), bias $b_j$, weight $w_{ij}$, activation function $\phi \left( \cdot \right)$,
and pre-threshold activity $a_j$.
%where first an offset value $\mu_i$ (\emph{i.e.}\ mean of the input dimension $i$) is subtracted from the corresponding input $x_i$ before being multiplied with the corresponding weight $w_{ij}$. Secondly, all weighted centered-inputs are summed up and a bias value $b_j$ is added forming the pre-threshold activity $a_j$ that is lastly being passed through an activation function $\phi \left( \cdot \right)$.
In this work, we consider various activation functions such as linear / identity, sigmoid, step, softmax, rectifier~\citep{hahnloser2000digital, hahnloser2001permitted}, as well as the recently proposed exponential linear units~\citep{clevert2015fast}, which altogether represent most of the frequently used activation functions in artificial neural networks.
The activity for all neurons $\vect h$ within one layer can also be written more compactly in matrix notation by
\begin{eqnarray}
\vect{h}\,\,\,\,=\,\,\,\, \vect \phi\left(\vect a\right)
		 &:=& \vect \phi\left(\vect{W}^T\left(\vect{x}-\vect\mu \right)+\vect{b}\right)\label{eqn:neuron_matrix}\\
         &=& \vect \phi \Big(\vect{W}^T\vect{x} + \underbrace{\vect{b} - \vect{W}^T\vect\mu}_{\vect b'}  \Big)
         \label{eqn:neuron_matrix_rep1}\\
         &=& \vect \phi \Bigg(\vect{W}^T\bigg(\vect{x} - \underbrace{\vect\mu  + \left(\vect{W}^{T}\right)^{-1}\vect{b}}_{\vect \mu'}\bigg)  \Bigg),\label{eqn:neuron_matrix_rep2}
\end{eqnarray}
with weight matrix $\vect W$, element-wise activation function $\vect \phi \left( \vect \cdot \right)$, as well as input, output, offset, bias, and pre-threshold activity vector $\vect x$, $\vect h$, $\vect \mu$, $\vect b$, and $\vect a$, respectively.

\subsection{The Importance of Centering}\label{sec:HD_importance_centering}

It has been shown that centering is useful for training artificial neural networks~\citep{LeCun1998}, in particular for training Boltzmann machines~\citep{MontavonMueller-2012, Melchior2016} and auto-encoder networks~\citep{Melchior2016}, since it makes the network independent of its first order statistics, \emph{i.e.}\ the mean value of each neuron.
Notice that Equation~\eqref{eqn:neuron_element} includes a standard-uncentered artificial neuron as a special case where all offset values $\mu_i$ are set to $0$. Furthermore, in case of a single layer neural network and constant offset values that are set to the data mean, centering is equivalent to training an uncentered network on mean-free input data. 
However, the mean for hidden units is usually not known in advance and changes during training, which is in particular the case in online learning where the mean can even be non-stationary. 
In this case the offsets are updated by an exponentially moving average 
of the form 
\begin{eqnarray}
\vect \mu(t+1) &=& (1-\nu) \vect \mu(t) + \nu \vect \mu_{batch}(t),
\end{eqnarray}
where $t$ denotes the update index, $0 < \nu < 1$ is a sliding factor controlling adaptation speed, and $\vect \mu_{batch}(t)$ is the mean of the current mini-batch \emph{i.e.}\ $\vect \mu_{batch}(t) = \vect x_i$ in online learning. 
An important property of centering is that it does not change the model class, \emph{i.e.}\ each centered artificial neural network can be reparameterized to an uncentered neural network and \emph{vice versa} ~\citep{Melchior2016}. It is therefore just a different parameterization for the same model, which can also be seen by comparing Equation~\eqref{eqn:neuron_matrix} and~\eqref{eqn:neuron_matrix_rep1}. 
Furthermore, Equation~\eqref{eqn:neuron_matrix_rep2} shows that in case of a freely choosable offset the network can also be reparameterized to a centered network with zero bias values.
Notice that centering is independent of the used learning rule, which is usually gradient decent / back-propagation~\citep{Kelley-1960, RumelhartWilliams-1986} or contrastive learning~\citep{RumelhartMcClellandEtAl-1986, Hinton-2002a} but can also be Hebbian learning~\citep{hebb1949organization}.

\section{Hebbian-Descent}\label{sec:HD_hebbdescent}

In this section we propose Hebbian-descent, a learning rule for hetero-associative / supervised as well as auto-associative / unsupervised learning in single layer neural networks. 
The name gives credit to Hebbian learning as well as gradient descent, both of which it is strongly connected to. 
Under certain conditions it is also related to generalized linear models, contrastive learning, and in case of supervised learning it can often even be reformulated as a gradient descent optimization problem.
Hebbian-descent thus provides a unified view on these algorithms rather than claiming to be an new algorithm itself.

\subsection{Hetero-Associative / Supervised Learning in Single Layer Networks with Hebbian-Descent}\label{sec:HD_heteroasso_HD}

In order to hetero-associate input vector $\vect x$ with output vector $\vect t$ by a centered single layer network (Equation~\eqref{eqn:neuron_matrix}) the Hebbian-descent updates for weight matrix $\vect W$ and bias vector $\vect b$ are given in its general form by
\begin{eqnarray}
\Delta_{_{HD}} \vect W &=& -\eta (\vect x- \vect \mu )\frac{\partial \mathcal{L}(\vect t, \vect h)}{ \partial \vect h}^T\label{eqn:update_general_hebbian_descent_sup_w_2}\\
&=& -\eta (\vect x- \vect \mu ) \,\bm{\mathcal{E}}(\vect t, \vect h)^T,\label{eqn:update_general_hebbian_descent_sup_w_1}\\
\Delta_{_{HD}} \vect b  &=& -\eta \frac{\partial \mathcal{L}(\vect t, \vect h)}{ \partial \vect h}
 \label{eqn:update_general_hebbian_descent_sup_b_2}\\
&=& -\eta  \, \bm{\mathcal{E}}(\vect t, \vect h),\label{eqn:update_general_hebbian_descent_sup_b_1}
\end{eqnarray}
with learning rate or step-size parameter $\eta$, and error signal $\bm{\mathcal{E}}(\vect t, \vect h) = \frac{\partial \mathcal{L}(\vect t, \vect h)}{ \partial \vect h}$ with corresponding loss function $\mathcal{L}(\vect t, \vect h)$. 
As discussed in the following sections choosing the activation function (should have a positive derivative)
 and error signal with corresponding loss function allows us to relate Hebbian-descent to other algorithms. In general a useful loss should define a unique optimum where the network output $\vect{h}$ matches the desired output $\vect{t}$, in which case the corresponding error term (\emph{i.e.}\ the partial derivative with respect to the input) vanishes. 
As a common standard consider the squared error loss $\mathcal{L}(\vect t, \vect h)=\frac{1}{2}\left(\vect h - \vect t\right)^T\left(\vect h - \vect t\right)$, which serves as an appropriate loss function for linear, sigmoid, softmax, rectifier as well as exponential linear activation functions. 
The corresponding error signal is $\bm{\mathcal{E}}(\vect t, \vect h) = \left(\vect h - \vect t\right)^T$ and the Hebbian-descent updates for weights $\vect W$ and bias $\vect b$ then become
\begin{eqnarray}
\Delta_{_{HD}} \vect W &\stackrel{(\ref{eqn:neuron_matrix},\ref{eqn:update_general_hebbian_descent_sup_w_1})}{=}&  -\eta \Big(\vect x- \vect \mu \Big)\Big(\vect \phi\big(\left(\vect{x}-\vect\mu \right)\vect{W}+\vect{b}\big) - \vect t\Big)^T\\
&\stackrel{\eqref{eqn:neuron_matrix}}{=}&  \underbrace{\eta\left(\vect x- \vect \mu \right)\vect t^T}_{Sup. Hebb}-\underbrace{\eta \left(\vect x- \vect \mu \right)\vect{h}^T}_{Unsp. Hebb},\label{eqn:update_hebbian_descent_sup_w}\\
\Delta_{_{HD}} \vect b &\stackrel{(\ref{eqn:neuron_matrix},\ref{eqn:update_general_hebbian_descent_sup_b_1})}{=}&   -\eta \Big(\vect \phi\big((\vect{x}-\vect\mu)\vect{W}+\vect{b}\big) - \vect t\Big)\\
&\stackrel{\eqref{eqn:neuron_matrix}}{=}& \underbrace{\eta  \vect t}_{Sup. Hebb}-\underbrace{\eta \vect{h}}_{Unsp. Hebb}.\label{eqn:update_hebbian_descent_sup_b}
\end{eqnarray}
Thus, with Hebbian-descent the network learns to produce a desired output  $\vect t$ given input $\vect x$ by comparing the current output $\vect h$ with the desired output $\vect t$.
This introduces stability since the updates become zero once the network output $\vect h$ is equivalent to the desired output $\vect t$. 
Equations~\eqref{eqn:update_hebbian_descent_sup_w} and~\eqref{eqn:update_hebbian_descent_sup_b} also show that in case of the squared error loss the update rules are the differences between a supervised and an unsupervised Hebb-learning step. 
While the supervised learning step measures the correlation between input and desired output, the unsupervised step measures the correlation between input and output that is already represented by the network and removes it from the current update step. 
This is a rather important property as it allows to learn only the missing information and thus to complement the representation that has already been learned by the network. 
Hebbian-descent with mean squared error loss can thus be understood as Hebb's rule with an adaptive weight decay, which depends on the network's representation of the current pattern.

\subsection{From the Perspective of Hebbian learning}\label{sec:HD_hebb}

As shown in Equations~\eqref{eqn:update_hebbian_descent_sup_w} and \eqref{eqn:update_hebbian_descent_sup_b}, in the case of the squared error loss Hebbian-descent can be considered as the difference between two Hebb-learning terms. 

\subsubsection{From the Perspective of Hebb's Rule}\label{sec:HD_hebb_rule}

By removing the second term in Equation~\eqref{eqn:update_hebbian_descent_sup_w} and fixing the bias term to zero without updating it we get the centered supervised Hebb-rule, which is given by
\begin{eqnarray}
\Delta_{_{H}} \vect W &=& \eta\left(\vect x- \vect \mu \right)\vect t ^T.\label{eqn:update_hebb_sup_w}
\end{eqnarray}
Hebb-learning~\citep{hebb1949organization} simply adds up the second cross-moments between input and output patterns and does therefore not account for learning only missing information such as Hebbian-descent does. As shown at the end of this section, it limits the model in associating arbitrary patterns with each other, which is in particular problematic if the patterns are correlated as has been analyzed for Hopfield networks by~\citep{Loeweothers-1998}. Furthermore, in contrast to Hebbian-descent and gradient descent, Hebb-rule is not a convergent learning rule since the weights continue to grow with each update step. While this is less of an issue for saturating activation functions such as the sigmoid, in case of non-saturating activation functions such as identity or rectifier one needs a mechanism that restricts the weights such that the network output stays in a reasonable range. Usually the weights are rescaled by the number of performed update steps, which has the disadvantage that we need to keep track of the number of updates, or a weight decay term~\citep{BienenstockCooperEtAl-1982} is added, which has the disadvantage of introducing an additional hyper-parameter that needs to be chosen in advance. 

\subsubsection{From the Perspective of  the Covariance Rule}\label{sec:HD_cov_rule}

A problem of the original Hebb-rule ($\vect\mu = \vect 0$, Equation~\eqref{eqn:update_hebb_sup_w}) is that it cannot learn negative / inhibitory weights when the activities are non-negative. In order to compensate this limitation \citet{sejnowski1989hebb} have proposed the covariance rule, which is given for the weights of a single layer neural network  by 
\begin{eqnarray}
\Delta_{_{C}} \vect W &=& \eta\left(\vect x- \vect \langle \vect x \rangle \right)\left(\vect t - \langle \vect t \rangle \right)^T,\label{eqn:update_cov_sup_w}
\end{eqnarray}
where $\langle \cdot \rangle$ is used to denote the argument averaged over the data.
The covariance rule, as the name suggests, models the covariance between input and output activities, whereas the uncentered Hebb-rule models the corresponding second moments. For positive input data the covariances can be positive or negative while the second moments are always positive, such that the covariance rule can learn inhibitory weights while the uncentered Hebb-rule cannot. 

Independently of whether centering is used or not, within the covariance rule the mean activities are subtracted from the current input activities and additionally also from the current output activities.
However, if the same number of update steps are performed for each data point as it is usually the case, it is sufficient to subtract either the input mean or the output mean since it leads to the same final weight matrix as the covariance rule. This can be shown as follows:
\begin{eqnarray}
\overbrace{\sum_{d=1}^D \eta\left(\vect x_d- \vect \langle \vect x \rangle \right)\left(\vect t_d - \langle \vect t \rangle \right)^T}^{Final\, weight\, matrix} 
&\overset{(\ref{eqn:update_cov_sup_w})}{=}& \eta D \langle \left(\vect x- \vect \langle \vect x \rangle \right)\left(\vect t - \langle \vect t \rangle \right)^T \rangle \\
	&=&\eta  D\langle  \vect x \vect t^T   - \vect x \langle \vect t^T \rangle - \langle \vect x \rangle \vect t^T + \langle \vect x \rangle  \langle \vect t^T \rangle \rangle \\
		&=&\eta  D\langle \vect x \vect t^T  -  \vect x  \langle \vect t^T \rangle \rangle - \langle \vect x \rangle \langle \vect t^T \rangle + \langle \vect x \rangle  \langle \vect t^T \rangle \,\,\,\,\,\,\,\\
&=&\eta  D\langle  \vect x \left( \vect t^T - \langle \vect t^T \rangle \right)  \rangle \\
	&=&\eta  D\langle  \left( \vect x - \langle \vect x \rangle \right) \vect t^T \rangle, \,\,\,\,\text{\small{(for symmetry reasons)}}\,\,\,\,\,\,\,\,\,\,\,\,\,\, \label{eqn:single_cov}
\end{eqnarray}
where $D$ is the number of data points, $\langle \vect x \rangle$ implicitly denotes the average of $d$, and we assume one update step with learning rate $\eta$ per data point.
In the centered case where the offsets are set to the average input activities ($\vect \mu =  \langle \vect x\rangle$) Hebb-rule becomes equivalent to the covariance rule as can be seen by comparing Equation~\eqref{eqn:update_hebb_sup_w} and Equation~\eqref{eqn:single_cov}. 
%But notice that when centering is used the average input activities are not only subtracted from the input within the update rule but also when the output activities are calculated (see Equation~\eqref{eqn:neuron_matrix}). 

An important property of centering shown in Equation~\eqref{eqn:neuron_matrix_rep1} is that it implicitly defines adaptive resting levels $\vect b' = \vect b-\vect W^T\vect \mu$ for all neurons that depend on the corresponding bias but also on the weights and offsets. 
As a consequence and although the bias values are set to zero when using the centered Hebb-rule this leads to adaptive resting levels $\vect b' = -\vect W^T\langle \vect x \rangle$ in this case.
In the context of the covariance rule the subtraction of the average firing rates was introduced as an implementation of Long-Term-Depression (LTD)~\citep{sejnowski1989hebb}. 
Thus centering can also be seen as an implementation of LTD through an adaptive resting level that depends on the average activities.
However, whether centered or not, the covariance rule is still a divergent learning rule, which does not account for the problem with correlated patterns. 

\subsubsection{Limitations of Hebb's Rule and the Covaricance Rule}\label{sec:HD_limit_cov_rule}

The limitations of Hebb's rule and the covariance rule in considering only missing information and in learning correlated pattern pairs can be illustrated on a simple 2D-example. Consider a dataset consisting of four input-output patterns, where two of the pattern pairs are the same and thus induce correlations. Figure~\ref{fig:hebb_diabvantage_example} shows the learned weights and the produced outputs of a centered single layer neural network with sigmoid units when trained with (a) Hebb-rule / covariance rule and (b) Hebbian-descent. 
\begin{figure}[t]
\centering
\begin{tabular}{lcc}
target & 
$\left\lbrace (0, 1), (1, 0), (1, 1), (1, 1) \right\rbrace$ & 
$\left\lbrace (0, 1), (1, 0), (1, 1), (1, 1) \right\rbrace$\\
output & 
$\lbrace (0, 0), (0, 0), (1, 1), (1, 1) \rbrace$ &
$\left\lbrace (0, 1), (1, 0), (1, 1), (1, 1) \right\rbrace$ \\
& \includegraphics[scale=0.75]{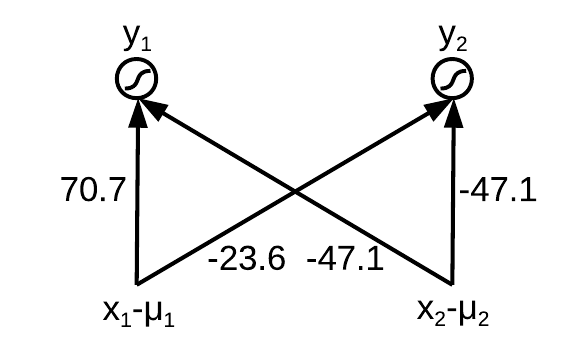}  &
\includegraphics[scale=0.75]{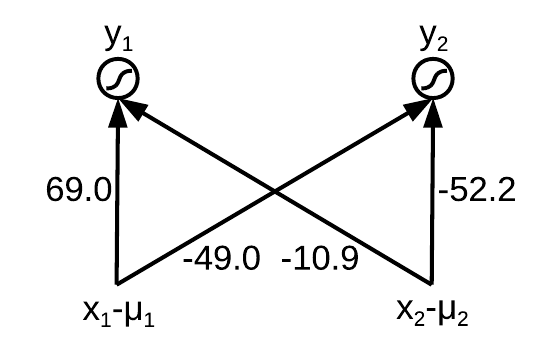} \\
input & 
$\left\lbrace (0,1), (1,1), (1,0), (1,0)  \right\rbrace$ & 
$\left\lbrace (0,1), (1,1), (1,0), (1,0)  \right\rbrace$\\
& \\
& (a) Hebb-rule / Covariance rule
 &  (b) Hebbian-descent \\
\end{tabular}
\caption{Illustration of the limitations of Hebbian learning. The \emph{input} patterns are associated with the \emph{target} patterns by using (a) the Hebb-rule / covariance rule, or (b)~Hebbian-descent, on a centered single layer neural network with sigmoid units. The update rules are repeated 300 times for each pattern with a learning rate of 10. For a fair comparison, although it does not alter the results, we fixed the bias value in Hebbian-descent to 0 and the resulting weight matrices are all normalized to have a norm of 100. The real-valued outputs of the networks are rounded to the second decimal place resulting in values of zero or one. While Hebbian-descent succeeds in learning the dataset the Hebb's / covariance rule gets the right output only for the doubled pattern.}
\label{fig:hebb_diabvantage_example}
\end{figure}
While Hebbian-descent succeeds in learning the dataset the Hebb's / covariance rule gets the correct output only for the doubled pattern. While doubling a pattern appears to be an extreme form of introducing correlation, quantitatively the same results can also be observed when correlation is induced without doubling patterns as in the following example: $input:\{(0,1,0),(0,1,1),(1,0,1),(1,1,0)\} \rightarrow target:\{(0,1,1),(1,0,0),$ $(1,1,0),(1,0,1)\}$. While Hebbian-descent succeeds again the Hebb's / covariance rule produces the correct output only for the third pattern. 

\subsection{From the Perspective of Contrastive Learning}\label{sec:HD_hetero_contrastive}

In case of the squared error loss Hebbian-descent is defined as the difference between two Hebb-learning terms, which is similar to contrastive learning rules such as Contrastive Hebbian learning~\citep{RumelhartMcClellandEtAl-1986} and Contrastive Divergence~\citep{Hinton-2002a}, which are used to train undirected auto-associative models. 

\subsubsection{Contrastive Learning Rule}\label{sec:HD_contrastive_learning_rule}
In contrastive learning the parameter updates are defined as the difference between neural activities of two different phases. 
In the so called positive phase, some or all neural activities are given from which possibly missing or incorrect activities are inferred. 
This defines the target distribution the network should ideally represent after learning. In the so called negative phase, no activities are provided, instead they are inferred from the network dynamics \emph{i.e.}\ through sampling, which defines the current model distribution.
The Contrastive Divergence algorithm~\citep{Hinton-2002a} and its variants~\citep{Tieleman-2008a, DesjardinsCourvilleEtAl-2010aa, cho:10} are the common choices when training stochastic Boltzmann machines, where the negative phase is estimated through sampling. In case of deterministic Boltzmann machines or Hopfield networks Contrastive Divergence mean field learning~\citep{WellingHinton-2002} or contrastive Hebbian learning~\citep{RumelhartMcClellandEtAl-1986} is commonly used, which estimate the values of the negative phase through a mean field estimation instead of sampling. 

In case of a centered single layer network where all input units $\vect{x}$ are connected to all output units $\vect{h}$ 
the contrastive learning updates for the weights $\vect W$, hidden bias $\vect b$, and visible bias $\vect c$ are given by
\begin{eqnarray}
\Delta_{_{CL}} \vect W &=& \eta\left(\vect{x}_{_{D}}-\vect\mu \right)\left(\vect{h}_{_{D}}-\vect\lambda \right)^T - \eta\left(\vect{x}_{_{M}}-\vect\mu \right)\left(\vect{h}_{_{M}}-\vect\lambda \right)^T \label{eqn:Contrastive_learning_W}\\
\Delta_{_{CL}} \vect b &=& \eta \vect{h}_{_{D}} - \eta \vect{h}_{_{M}} \label{eqn:Contrastive_learning_c}\\
\Delta_{_{CL}} \vect c &=& \eta  \vect{x}_{_{D}} - \eta \vect{x}_{_{M}} \label{eqn:Contrastive_learning_b}
\end{eqnarray}
where $\eta$ is the learning rate, $\vect \mu$ and $\vect \lambda$ are visible and hidden unit offsets, $\vect x_{_{D}}$ and $\vect h_{_{D}}$ are samples from the positive phase / data distribution, and $\vect x_{_{M}}$ and $\vect h_{_{M}}$ are samples from the negative phase / model distribution. Notice that in mini-batch learning on uses the average update over the individual  data-points.

\subsubsection{Contrastive Learning with Clamped Values}\label{sec:HD_contrastive_clamping}

In the supervised setup where input and output values are both known we have $\vect x_{_{D}}=\vect x$, and $\vect h_{_{D}}=\vect t$. If we additionally assume that $\vect \lambda = \vect 0$, the positive phase  becomes equivalent to the supervised Hebb-learning step in Hebbian-descent with squared error loss as given by Equation~\eqref{eqn:update_hebbian_descent_sup_w} and~\eqref{eqn:update_hebbian_descent_sup_b}. 
The negative phase, however, differs from the unsupervised Hebb-learning step since $\vect x_{_{M}}$ and $\vect h_{_{M}}$ are inferred from the model dynamics through sampling or mean field estimation. 
This leads to a different optimization objective in the two cases such that the same network trained with supervised Hebbian-descent (see Equation~\eqref{eqn:update_hebbian_descent_sup_w} and~\eqref{eqn:update_hebbian_descent_sup_b}) leads to a directed hetero-associative model, while trained with contrastive learning it leads to an undirected auto-associative model.

If we additionally fix $\vect x_{_{M}}$ to $\vect x$, also referred to as clamping the input, the update for the visible bias vanishes and the update for weights and hidden bias become equivalent to the Hebbian-decent update given by Equation~\eqref{eqn:update_hebbian_descent_sup_w} and Equation~\eqref{eqn:update_hebbian_descent_sup_b}. 
Thus, the Hebbian-descent update with squared error loss is equivalent to contrastive Hebbian learning update with clamped input units. 
\cite{Movellan-1991} argued that contrastive Hebbian learning with clamped input units is equivalent to gradient descent with squared error loss as both optimize the same error function. However, except for the identity as activation function this is not correct. 
As shown in the next section, the two algorithms optimize different loss functions and will thus converge to different solutions. 
This can also be concluded from the work of \cite{XieSeung-2003} who showed that in case of sigmoid output units the loss that is actually optimized by contrastive Hebbian learning with clamped inputs is the cross entropy loss~\citep{Hinton-1989} and not the squared error. Since clamping the input turns an auto-associative model into a hetero-associative model it is debatable if resulting updates should still be named contrastive learning.

From a probabilistic perspective (\emph{i.e.}\ Boltzmann machine) a network trained with contrastive learning models the joint distribution of $\vect x$ and $\vect t$ when the negative phase is sampled, whereas it models the conditional distribution of $\vect t$ given $\vect x$ if the input is clamped. Thus Hebbian-decent learns the maximum likelihood estimate of the output given the input, which is discussed in more detail in Section~\ref{sec:HD_glm}.

\subsubsection{Limitations of Contrastive Learning}\label{sec:HD_limit_contrastive_learning}

It is important to note that for contrastive learning to work properly (without clamping the input), the sampled negative phase needs to represent the current model distribution sufficiently well, which usually requires a rather big batch size, a small learning rate, and several sampling or mean-field estimation steps. As one can imagine it is thus rather difficult to perform rapid online learning using contrastive learning rules.

\subsection{From the Perspective of Gradient Descent}\label{sec:HD_connection_to_gradient}

Hebbian-descent has an even stronger connection to gradient descent. As shown in this section, in case of a strictly positive derivative of the activation function Hebbian-descent can be reformulated as gradient descent using a different loss function.

\subsubsection{Gradient Descent Update}\label{sec:HD_gradient_descent_update}

In gradient descent the parameters are updated by adding the negative partial derivatives of the loss function with respect to the parameters. For a single layer neural network as given by Equation~\eqref{eqn:neuron_matrix} the negative partial derivatives with respect to weight and bias values are given in matrix notation by
\begin{eqnarray}
\Delta_{_{GD}} \vect W = -\frac{\eta \partial \mathcal{L}(\vect t, \vect h)}{\partial \vect W} &=&-\eta \frac{\partial \vect a}{ \partial \vect W}\bigg(
\frac{\partial \mathcal{L}(\vect t, \vect h)}{ \partial \vect h} \odot \frac{\partial}{ \partial \vect a} \odot \vect h\bigg)^T\,\label{eqn:update_general_gradient_descent_sup_w_1}\\
&\stackrel{\eqref{eqn:neuron_matrix}}{=}&-\eta \big(\vect{x}-\vect\mu \big) \big(
\bm{\mathcal{E}}(\vect t, \vect h) \odot \vect \phi'(\vect a)\big)^T,\,\label{eqn:update_general_gradient_descent_sup_w_2}\\
\Delta_{_{GD}} \vect b = -\frac{\eta \partial \mathcal{L}(\vect t, \vect h)}{\partial \vect b} &=&-\eta \frac{\partial \vect a}{ \partial \vect b}\odot
\bigg(
\frac{\partial \mathcal{L}(\vect t, \vect h)}{ \partial \vect h} \odot \frac{\partial}{ \partial \vect a} \odot \vect h\bigg)\,\label{eqn:update_general_gradient_descent_sup_b_1}\\
&\stackrel{\eqref{eqn:neuron_matrix}}{=}&-\eta \big(
 \bm{\bm{\mathcal{E}}}(\vect t, \vect h) \odot \vect \phi'(\vect a)\big),\,\label{eqn:update_general_gradient_descent_sup_b_2}
\end{eqnarray}
where $\eta$ is the learning rate, $\odot$ denotes the Hadamard product (element-wise product), and $\vect \phi'(\vect a)$ denotes the derivative of the activation function applied element-wise to the input vector \emph{i.e.}\  
$\vect{\phi'}\left(\left[a_{0}, \cdots, a_M \right] \right) = \left[\phi' \left(a_{0}\right), \cdots, \phi' \left(a_{M}\right) \right]$.
It is obvious that when choosing the same loss and activation function the only difference to the Hebbian-descent learning rule (Equations~\eqref{eqn:update_general_hebbian_descent_sup_w_2} and \eqref{eqn:update_general_hebbian_descent_sup_b_2}) is the element-wise multiplication with the derivative of the activation function. 
As a direct consequence Hebbian-descent is equivalent to gradient descent in case of the identity as activation function. 
In this work, however, we explicitly consider nonlinear activation functions and want to motivate in the following why discarding the derivative of the activation function might be a good idea in general.

\subsubsection{Limitations of Gradient Descent}\label{sec:HD_problems_gradient_descent}

As discussed in the following, in gradient descent a problem can occur when the pre-threshold activities $\vect a$ take values that, passed through the derivative of the activation function, lead to values that are zero or close to zero ($\phi'(a_j)\approx 0$). 
In this case, the corresponding partial derivatives get also zero or close to zero, but independently of the actual error signal $\mathcal{E}( t_j,  h_j)$~(\emph{e.g.}\ $ h_j- t_j$) as can be seen from Equation~\eqref{eqn:update_general_gradient_descent_sup_w_2} and \eqref{eqn:update_general_gradient_descent_sup_b_2}. 
As a simple example consider a single layer neural network with a single sigmoid output unit that should be used to associate a pattern $\vect x$ with a binary output value $t_j$. Let us assume that $t_j = 0$ and that $\vect x$ leads to a pre-threshold activity $a_j = 4$, which corresponds to a post-threshold activity of $h_j\approx0.982$ as can be seen from Figure~\ref{fig:sigmoid_and_derivative}.
\begin{figure}[t]
\begin{center}
\subfigure[]{
\includegraphics[scale=0.34, trim=20 0 50 0, clip]{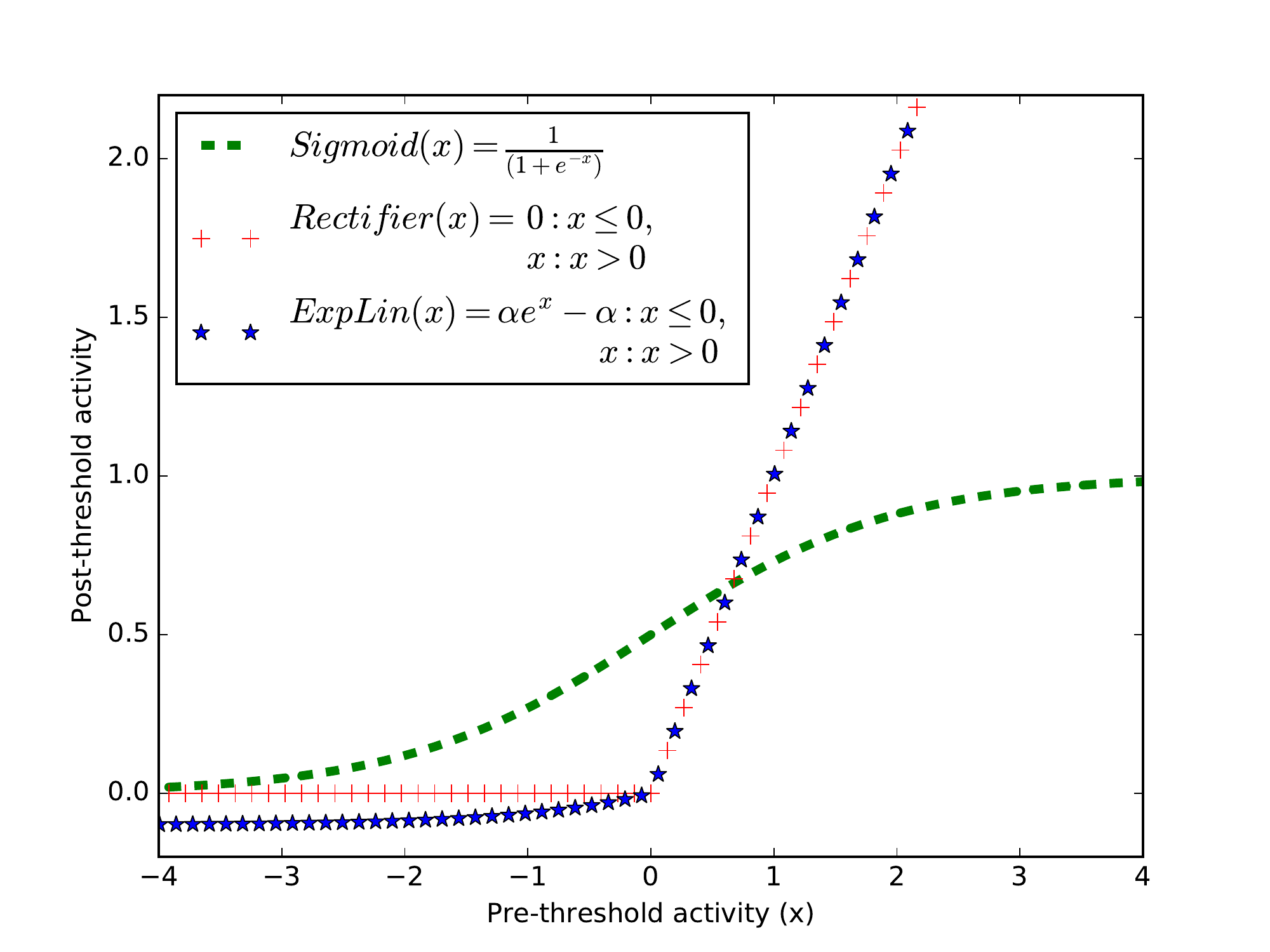}}
\subfigure[]{
\includegraphics[scale=0.34, trim=45 0 50 0, clip]{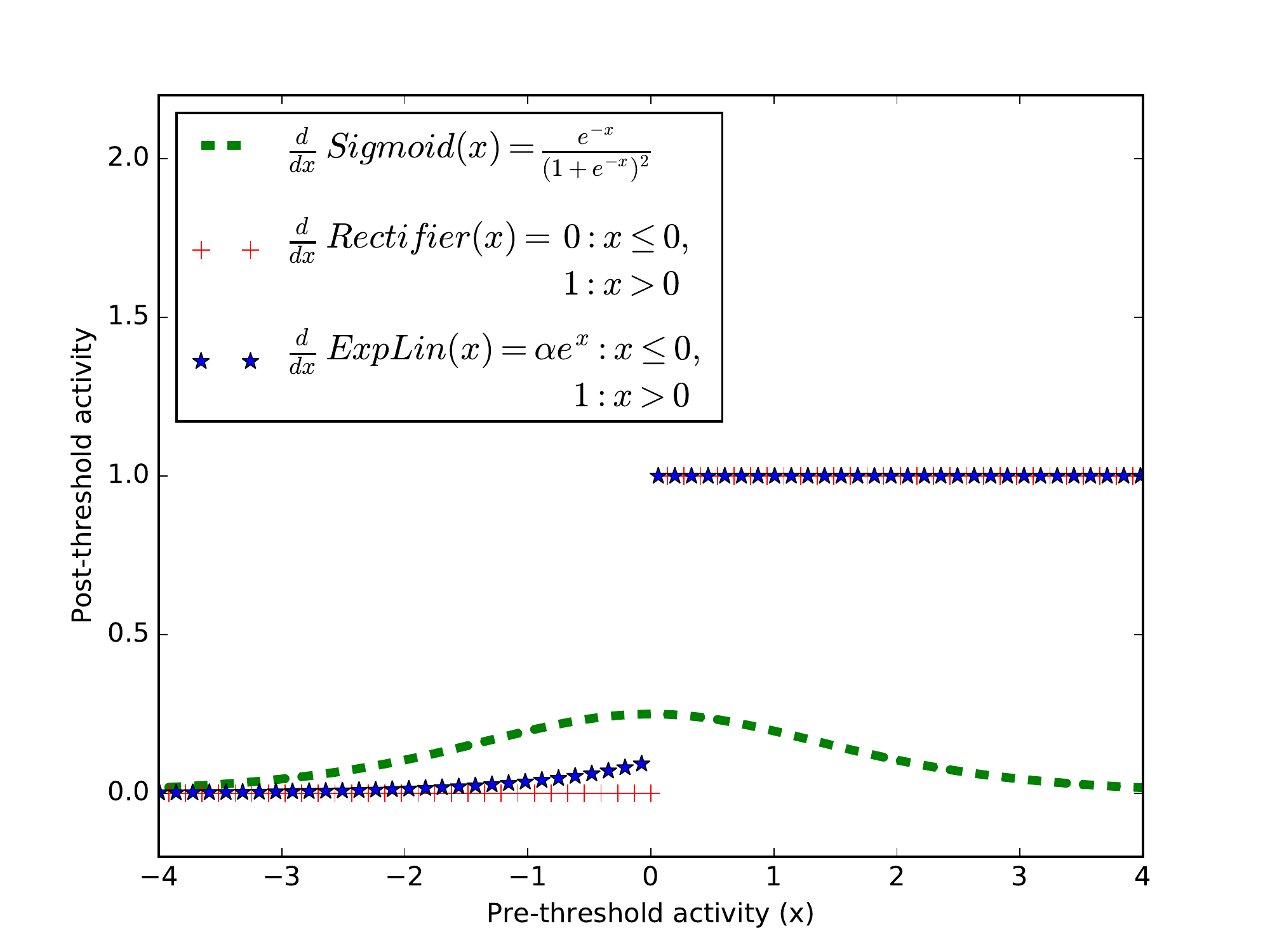}}
\caption{Plot of (a) sigmoid, rectifier, and exponential linear function (ExpLin) and (b) their corresponding derivatives.
}
\label{fig:sigmoid_and_derivative}
\end{center}
\end{figure} 
The error signal $h_j - t_j = 0.982$, which is almost maximal, gets multiplied with $\phi'(-4)=~6.14\times e^{-6}$ such that the partial derivative gets extremely small. We would thus need a lot of update steps with a big learning rate until the parameter updates are large enough to successfully learn the association between $\vect x$ and $t_j$. 

In case of batch or mini-batch learning with an appropriate parameter initialization and a sufficiently small learning rate, this effect, as shown in the experiments, is usually less of an issue since the network learns the associations of all patterns in the dataset slowly in parallel. In the example given above the pre-threshold activity $a_j$ will thus not get a very negative value for $\vect x$ in the first place. In online learning, however, where we want to store patterns more or less instantaneously or when learning non-stationary input distributions, this problem, as shown in the experiments, becomes more severe. 
For the example given above just think of the network already having learned to associate patterns of dogs with $t_j=1$ and that $\vect x$ corresponds to a new but similar pattern, let's say a pattern of a cat for which $t_j=0$. Since the network has not seen a pattern of a cat before it appears to be rather similar to a pattern of a dog such that the activity of $h_j$ given $\vect x$ as well as the error term is rather close to one (\emph{e.g.}\ $h_j-t_j = 0.982$).
Although there are some differences in the input that allow to differentiate cats from dogs, the partial derivatives of all weights $w_{ij}$ connecting the input with unit $h_j$ get scaled down extremely (\emph{e.g.}\ $\phi'(4)=~6.14\times e^{-6}$). It will thus take a lot of update steps on patterns of cats and also dogs before cats are associated correctly.  
Figure~\ref{fig:sigmoid_and_derivative} shows that this does not change for the negative regime when exponential linear units are used and in case of a rectifier it gets even impossible as its derivative is constant zero for $a_j \leq 0$.  

Figure~\ref{fig:update_rule_2D_example}~(a) illustrates the different speed of convergence for a network with sigmoid units trained on a 2D toy-example with squared error loss using either gradient descent or Hebbian-descent. 
\begin{figure}[t]
\begin{center}
\subfigure[]{
\includegraphics[scale=0.38, trim=25 10 140 40, clip]{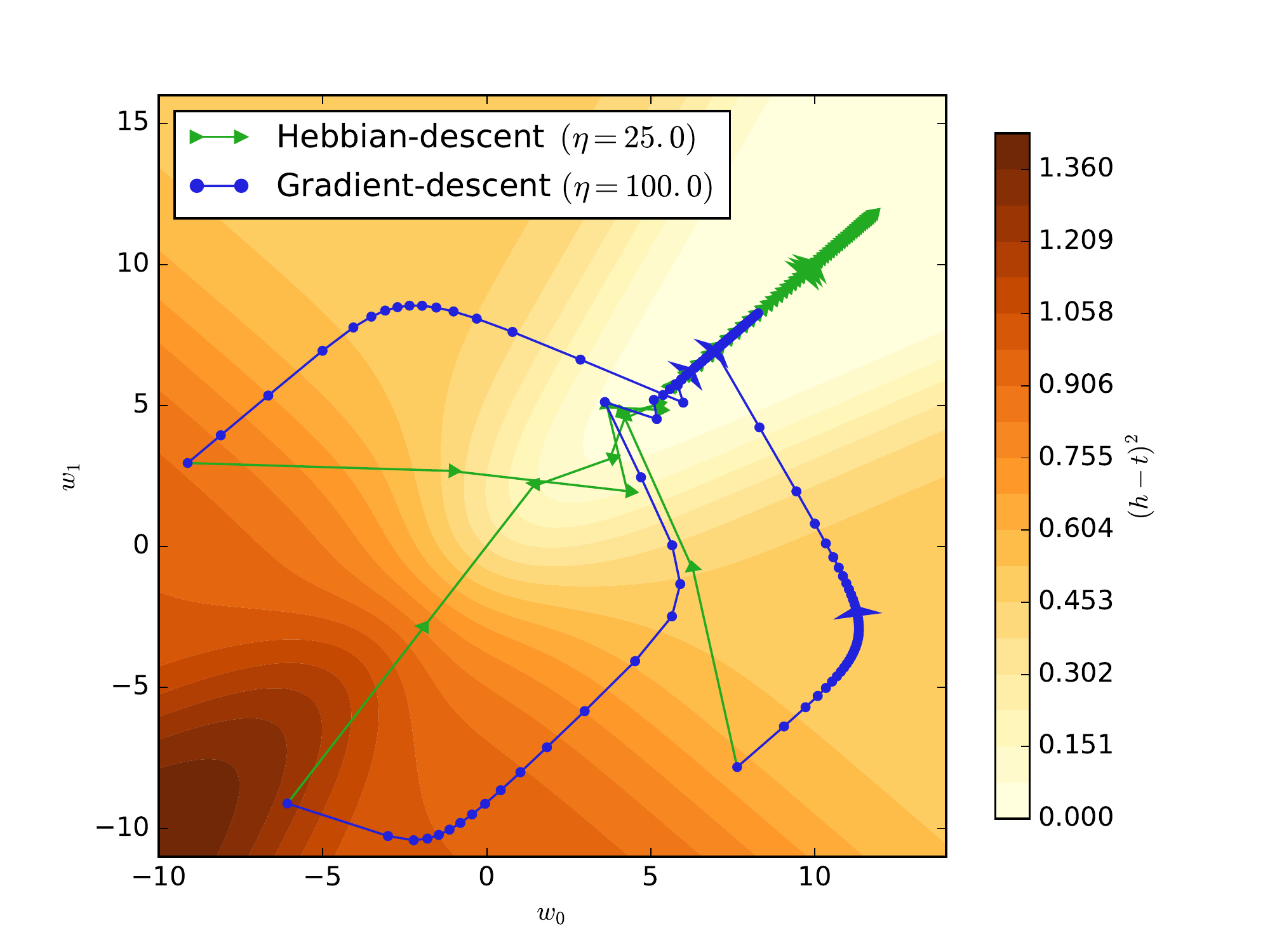}}
\subfigure[]{
\includegraphics[scale=0.38, trim=60 10 50 40, clip]{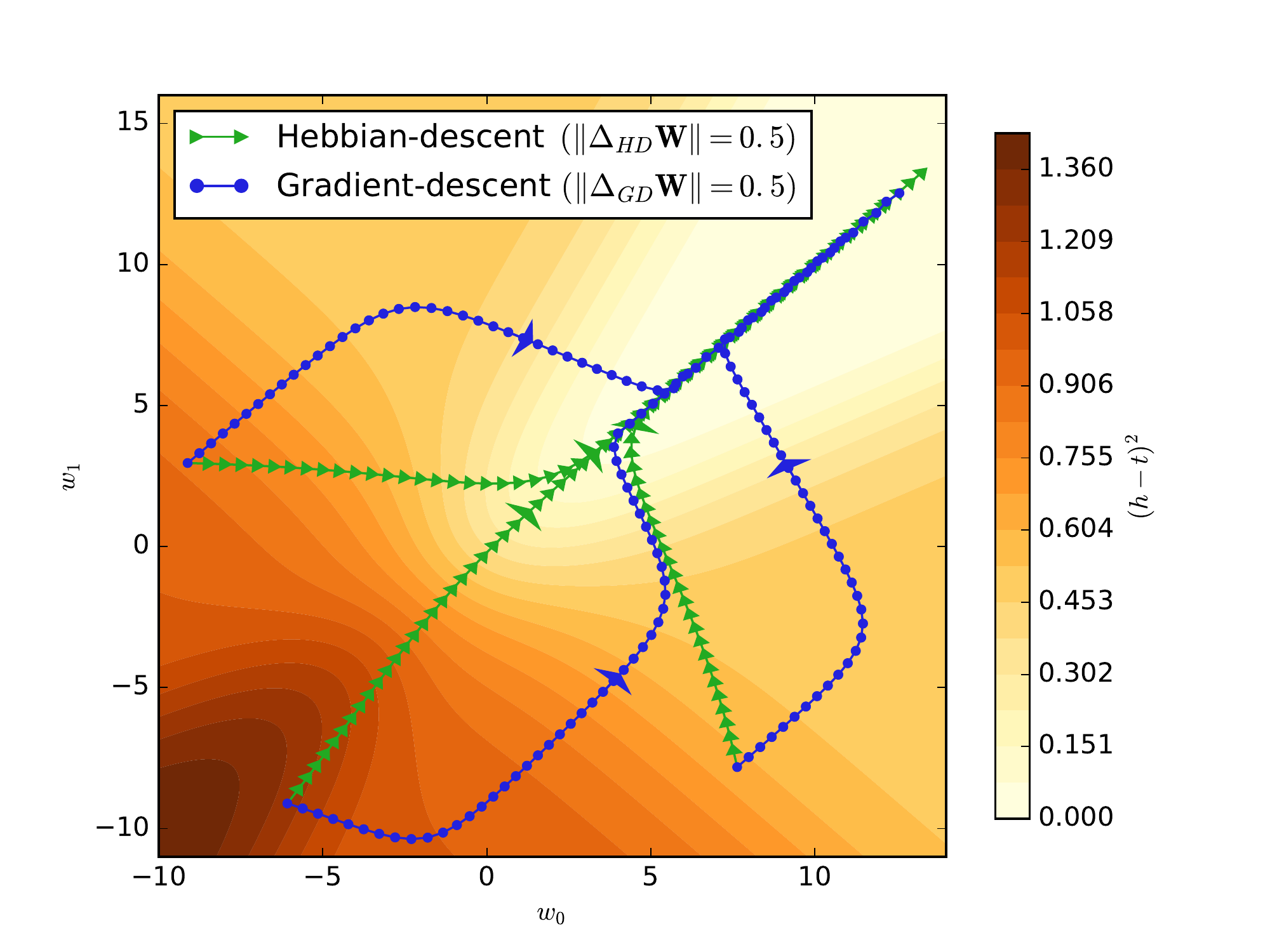}}
\caption{Comparison of gradient descent and Hebbian-descent.
Both figures show the loss landscape of a 2D toy-example and optimization paths for networks trained for 50 full-batch updates (equivalent results are obtained for online learning) using a squared error loss with either Hebbian-descent or gradient descent. 
The networks have two input units, a single sigmoid output unit and they are trained on the dataset 
$input:\{(1,0),(0,1),(1,1)\} \rightarrow target:\{(0),(0),(1)\}$ using a squared-error loss. The wider arrows indicate the optimization step after 25 update steps.
(a) The optimization paths for the two methods where the learning rates were chosen by hand such that both converged without much oscillation. Although Hebbian-descent has a smaller learning rate it converges faster than gradient descent as can be seen by comparing the wider arrows. (b) The same experiments were performed where all updates were normalized to have the same norm, illustrating that the direction of Hebbian-descent leads to a faster convergence than gradient descent. Notice that in this setting Hebbian-descent is equivalent to gradient descent with cross entropy loss (see Equation~\eqref{eqn:gradient_descent_loss_implementing_Hebbian_descent}). 
}
\label{fig:update_rule_2D_example}
\end{center}
\end{figure}
It shows how the norm of the gradient shrinks extremely in saturated regimes of the sigmoid where the derivative of the activation function take very small values. Figure~\ref{fig:update_rule_2D_example}~(b) illustrates that even if the norm of the update rules is normalized Hebbian-descent converges faster since it points almost directly towards the global minimum. 
However, when using gradient descent to 
train a network with sigmoid or softmax output units (denoted by $\sigma(\vect a)$) one  normally uses the cross-entropy loss  $\mathcal{L}(\vect t, \vect h) = -\vect t^T \ln ({\vect h})-(\vect 1-\vect t)^T\ln(1-{\vect h})$ instead of the squared error loss, since it usually leads to faster convergence.
The cross-entropy has the advantage that the partial derivative of the sigmoid or softmax units 
vanishes in the gradient~\citep{Hinton-1989, HertzKroghEtAl-1997}. 
This is exactly what we want to achieve through Hebbian-descent just for the special case  of sigmoid or softmax units in which the gradient descent update with cross-entropy loss is equivalent to the Hebbian-descent update with mean squared error loss.
Hebbian-descent can thus be understood as a generalization of the idea, \emph{i.e.}\ getting rid of the activation functions derivative of the output layer in gradient descent by using a different loss function, for arbitrary activation functions and error~terms.

\subsubsection{The Hebbian-Descent Loss}\label{sec:HD_Hebbian_descent_loss}

One of the key insights of this work is that under some mild conditions 
Hebbian-descent can be reformulate as gradient descent with an alternative loss function that is given by
\begin{eqnarray}
\mathcal{ L}_{_{HD}}(\vect t, \vect h) &=& \sum_j^M \int \left(\frac{\frac{\partial \mathcal{L}(\vect t, \vect h)}{ \partial \vect h}}{ \frac{\partial }{ \partial \vect a} \odot \vect h}\right) \,\mathrm{d} h_j 
%= \int \frac{ \bm{\mathcal{E}}(\vect t, \vect h)}{ \vect \phi'(\vect a)} \,d  h_j 
 = \sum_j^M \int \frac{ \mathcal{E}(t_j, h_j)}{\phi'(a_j)} \, \mathrm{d} h_j \, ,
\label{eqn:gradient_descent_loss_implementing_Hebbian_descent}
\end{eqnarray}
where fraction bars denote element-wise division (if not derivatives), and $\int$ denotes the general integral.
If we insert this loss into Equations~\eqref{eqn:update_general_gradient_descent_sup_w_1} and \eqref{eqn:update_general_gradient_descent_sup_b_1}, the integrals with respect to $\vect h$ cancel out with the partial derivative with respect to $\vect h$ and so do the partial derivatives of the activation functions in numerator and denominator\footnote{The equivalence of Hebbian-descent and gradient descent with Hebbian-descent loss for a single weight update can be shown as follows:
\begin{eqnarray}
\delta_{_{GD}} w_{ij} &\stackrel{\eqref{eqn:update_general_gradient_descent_sup_w_2}}{=}& -\eta \big(x_i-\mu_i \big) 
\mathcal{E}_{_{HD}}(t_j, h_j) \phi'(a_j)\\
&=& -\eta \big(x_i-\mu_i \big) 
\frac{\partial \mathcal{L}_{_{HD}}(t_j, h_j)}{ \partial h_j} \phi'(a_j)\\
&\stackrel{\eqref{eqn:gradient_descent_loss_implementing_Hebbian_descent}}{=}& 
 -\eta \big(x_i-\mu_i \big) 
\frac{\partial \int \frac{ \mathcal{E}(t_j, h_j)}{\phi'(a_j)} \,\mathrm{d} h_j}{ \partial h_j}  \phi'(a_j)\\
&\stackrel{\eqref{eqn:update_general_gradient_descent_sup_w_2}}{=}& -\eta \big(x_i-\mu_i \big) 
\frac{\mathcal{E}(t_j, h_j)}{\phi'(a_j)}\phi'(a_j)=-\eta \big(x_i-\mu_i \big)\mathcal{E}(t_j, h_j) \\ &\stackrel{\eqref{eqn:update_general_hebbian_descent_sup_w_1}}{=}& \delta_{_{HD}} w_{ij} 
\end{eqnarray}
}. 
The resulting equations are exactly the Hebbian-descent update given by Equation~\eqref{eqn:update_general_hebbian_descent_sup_w_1} and \eqref{eqn:update_general_hebbian_descent_sup_b_1}, showing that gradient descent with loss $\mathcal{L}_{_{HD}}(\vect t, \vect h)$ is equivalent to Hebbian-descent with loss $\mathcal{L}(\vect t, \vect h)$ and that Hebbian-descent actually optimizes the loss $\mathcal{L}_{_{HD}}(\vect t, \vect h)$. 
Notice, that if the $\phi'(a_j)$ contains $\phi(a_j)$ (and therefore $h_j$) you cannot simply factor out $\phi'(a_j)$ in front of the integral in Equation~\eqref{eqn:gradient_descent_loss_implementing_Hebbian_descent}. 
This is the case for the sigmoid function and most of the commonly used activation functions as they are based on the exponential function.
A list of some common activation functions with Hebbian-descent loss and corresponding gradient descent loss is given in Appendix~\ref{appendix:list_of_HD_losses}.

\subsubsection{Convergence and Expediency of Hebbian-descent}\label{sec:HD_Hebbian_descent_covergence}

First of all when the integral in Equation~\eqref{eqn:gradient_descent_loss_implementing_Hebbian_descent} exists, it is clear that Hebbian-descent naturally inherits the convergence properties of stochastic gradient descent~\citep{Robbins1985, saad1998online}. 
But the question remains whether the Hebbian-descent loss is actually useful in terms of reducing the error between $\vect h$ and $\vect t$.
Furthermore, it needs to be clarified under which conditions Hebbian-descent actually defines a valid loss and whether the integral in Equation~\eqref{eqn:gradient_descent_loss_implementing_Hebbian_descent} therefore needs to exist.

To answer these questions let us first consider the case of a single data-point, in which $\mathcal{L}(\vect t, \vect h)$ and $\mathcal{L}_{_{HD}}(\vect t, \vect h)$ have their optima at the same location since their partial derivatives only differ in the division by the derivative of the activation function. 
For a single data-point this only scales the loss function but does not shift the location of the optima. 
In case of negative values of the derivative of the activation function the two update rules can point in opposite directions in which case Hebbian-descent does not minimize but maximize the distance between $\vect h$ and $\vect t$. 
This can best be seen by comparing the entries of the gradient descent update (Equation~\eqref{eqn:update_general_gradient_descent_sup_w_1} and ~\eqref{eqn:update_general_gradient_descent_sup_b_1}) and the Hebbian-descent update (Equation~\eqref{eqn:update_general_hebbian_descent_sup_w_1} and ~\eqref{eqn:update_general_hebbian_descent_sup_b_1}).
In case of zero values of the derivative of the activation function the integral in Equation~\eqref{eqn:gradient_descent_loss_implementing_Hebbian_descent} does not exist and 
the gradient update vanishes while the Hebbian-descent update may be non-zero. In this case gradient descent has converged while Hebbian-descent has not, so that convergence for Hebbian-descent cannot be proven. 
However, as discussed at the end of this section this can be an advantage of Hebbian-descent over gradient descent in practice.
Only if the values of the derivative of the activation functions are strictly positive the two update rules share the same sign and thus share the same minima and maxima.
A strictly positive derivative thus guarantees that the angle between the two update vectors is less than 90 degrees as illustrated for the two dimensional case in Figure~\ref{fig:update_rule_compare}.
\begin{figure}[t]
\begin{center}
\centerline{\includegraphics[scale=0.5,trim=25 435 130 25, clip]{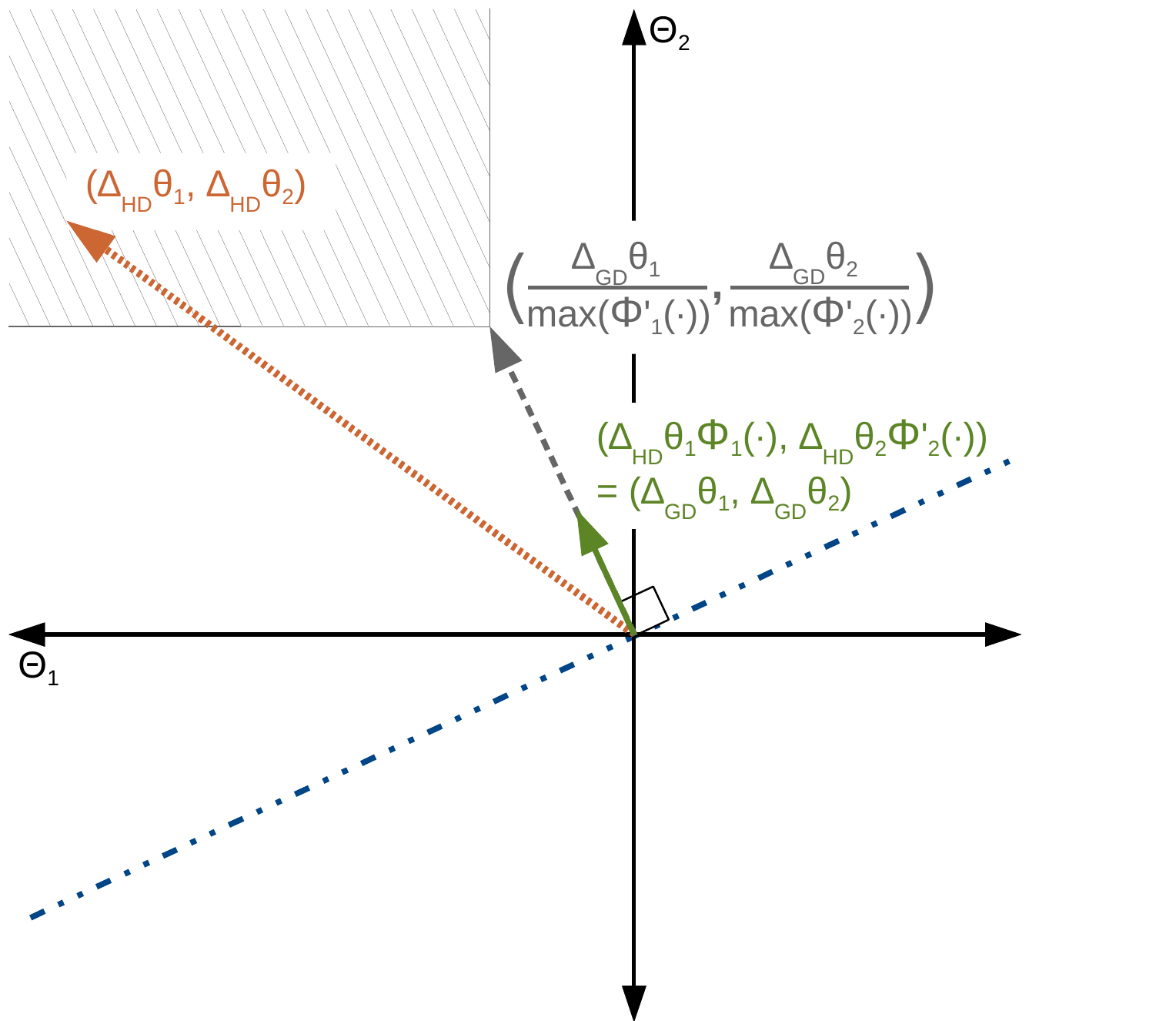}}
\caption{Visualization of the Hebbian-descent (orange fine-dashed arrow) and gradient-descent (green solid arrow) updates in the 2D case with a strictly positive derivative of the activation function. The loss equipotential-lines is given as blue dashed-dotted line.
The relationship between Hebbian-descent and gradient-descent is fully determined by the value of the derivative of the activation function. Thus for a given gradient descent update the Hebbian-descent update can only take values that fall in the gray hatched area. Furthermore, the minimal possible norm of the Hebbian-descent update vector is determined by the maximal possible value of the activation functions derivative as illustrated by the gray dashed arrow. In case of the sigmoid, the derivative has a maximal value of 0.25 such that the norm of the Hebbian-descent update vector is at least four times larger than the norm of the gradient descent update vector. 
}
\label{fig:update_rule_compare}
\end{center}
\end{figure}
Since the negative gradient is orthogonal to the loss equipotential-line and points locally in the direction of steepest descent in $\mathcal{L}(\vect t, \vect h)$, Hebbian-descent is also guaranteed to point locally in a direction of decreasing $\mathcal{L}(\vect t, \vect h)$, in case of an activation function with strictly positive derivative. 
This allows to prove convergence even if the integral in Equation~\eqref{eqn:gradient_descent_loss_implementing_Hebbian_descent} does not exist (see Appendix~\ref{appendix:convergence_of_stochastic_hebbian_descent} for a formal proof). 
Thus for a proper choice of $\mathcal{L}(\vect t, \vect h)$, which has its global minimum where $\vect t$ equals $\vect h$ such as the squared error loss, and an activation function with strictly positive derivative, gradient descent and Hebbian-descent both converge to the optimum of $\mathcal{L}(\vect t, \vect h)$. 
This is also the optimum of $\mathcal{L}_{_{HD}}(\vect t, \vect h)$, so that the Hebbian-descent loss is indeed useful in terms of reducing the error between $\vect h$ and $\vect t$.

Now for several data points the two update rules are simply the average over the individual updates. 
Unless the network is powerful enough to reach a zero loss for all data points (as it is the case in the example shown in Figure~\ref{fig:update_rule_2D_example}) the joint optimum over all data points of $\mathcal{L}(\vect t, \vect h)$ and $\mathcal{L}_{_{HD}}(\vect t, \vect h)$ will most likely differ. 
This reflects the fact that the two update rules find a different compromise between the individual errors as shown and discussed in the experiments. 

Finally, let us also consider the cases where the derivative of the activation function can additionally take zero values, which is problematic since it means that gradient descent can have converged while Hebbian-descent has not. 
One can often get around this problem in practice by modifying the chosen activation function such that it has little to no effect on the network performance but causes the integral to exist at least in a restricted domain. 
For the rectifier for example one could use a leaky-rectifier~\citep{MaasHannunEtAl-2013} instead, which uses a small positive slope $\epsilon$ for negative values instead of zero. 
However, even if we just use the rectifier in practice, Hebbian-descent still convergences empirically in terms of the specified loss $\mathcal{L}(\vect t, \vect h)$ as shown in the experiments. 
One can argue, that Hebbian-descent might still be preferable over gradient descent since a zero gradient caused by a zero derivative of the activation function (\emph{e.g.}\ rectifier) causes local plateau optima, which represent 'dead-ends' in the optimization process. 
As an example think of an initialization where any data point of a dataset causes a negative pre-threshold activity, which can be achieved by choosing a rather large negative initial bias vector for example. 
In this case the gradient is constant zero for the entire dataset right from the start and does not perform any updates at all. 
Despite this extreme example one can imagine that a gradient update that causes a unit `accidentally' to never be active again can also happen during training with a reasonable parameter initialization especially in case of online or mini-batch training with big learning rates or if the input domain is shifted. 
Furthermore, it is even problematic if a unit is updated such that it is active only on a few data points, since only for those data points the gradient is non-zero and thus allows corresponding changes. According to Equation~\eqref{eqn:update_hebbian_descent_sup_w} and \eqref{eqn:update_hebbian_descent_sup_b} Hebbian-descent still performs an update even if the gradient is zero and is thus advantageous over gradient descent as shown empirically.

\subsubsection{The Error Term Perspective}\label{sec:HD_the_error_term_perspective}

In this section we want to motivate why the error term perspective of Hebbian-descent is better when designing neural networks than the loss function perspective of gradient descent.
In practice two loss functions are mainly used, the cross entropy loss in case of sigmoid / softmax output units and otherwise the squared error loss.
But it makes a huge difference if we use the squared error loss with the sigmoid or the exponential linear units for example, since the two activation functions act in different output regions and thus in different regions of the parabola defined by the squared error loss. 
Furthermore, as has been discussed before, the derivative of the activation function might scale down the error propagated through the network in a nonlinear way, such that a huge error in the loss function does not necessarily lead to a huge change in the parameters of the network. 
It is therefore better to treat the activation function and the loss function as a unity as done by the Hebbian-descent loss.
By explicitly defining the error term, one gains full control over how the error in the output activities affects the parameters.
As an example we could define a nonlinear error term, \emph{e.g.}\ $\bm{\mathcal{E}}(\vect t, \vect h) =  \alpha\,\vect \tanh(\beta (\vect h-\vect t))$, which saturates the error at $\pm\alpha$ with $\beta$ defining the steepness around the origin. $\alpha$ and $\beta$ should thus be chosen according to the minimal and maximal output values of the network.
The corresponding loss can be calculated using Equation~\eqref{eqn:gradient_descent_loss_implementing_Hebbian_descent}, which in case of the identity as an activation function leads to $\mathcal{L}_{_{HD}}(\vect t, \vect h) = \frac{\alpha}{\beta}  \sum_j \ln(\cosh(\beta ( h_j- t_j)))$. 
This can be seen as a smoothed version of the Huber-loss~\citep{Huber-1964}, which is plotted in the two dimensional case for different $\alpha$ in Figure~\ref{fig:reg_loss_HD}. 
\begin{figure}[t]
\begin{center}
\centerline{\includegraphics[scale=0.5]{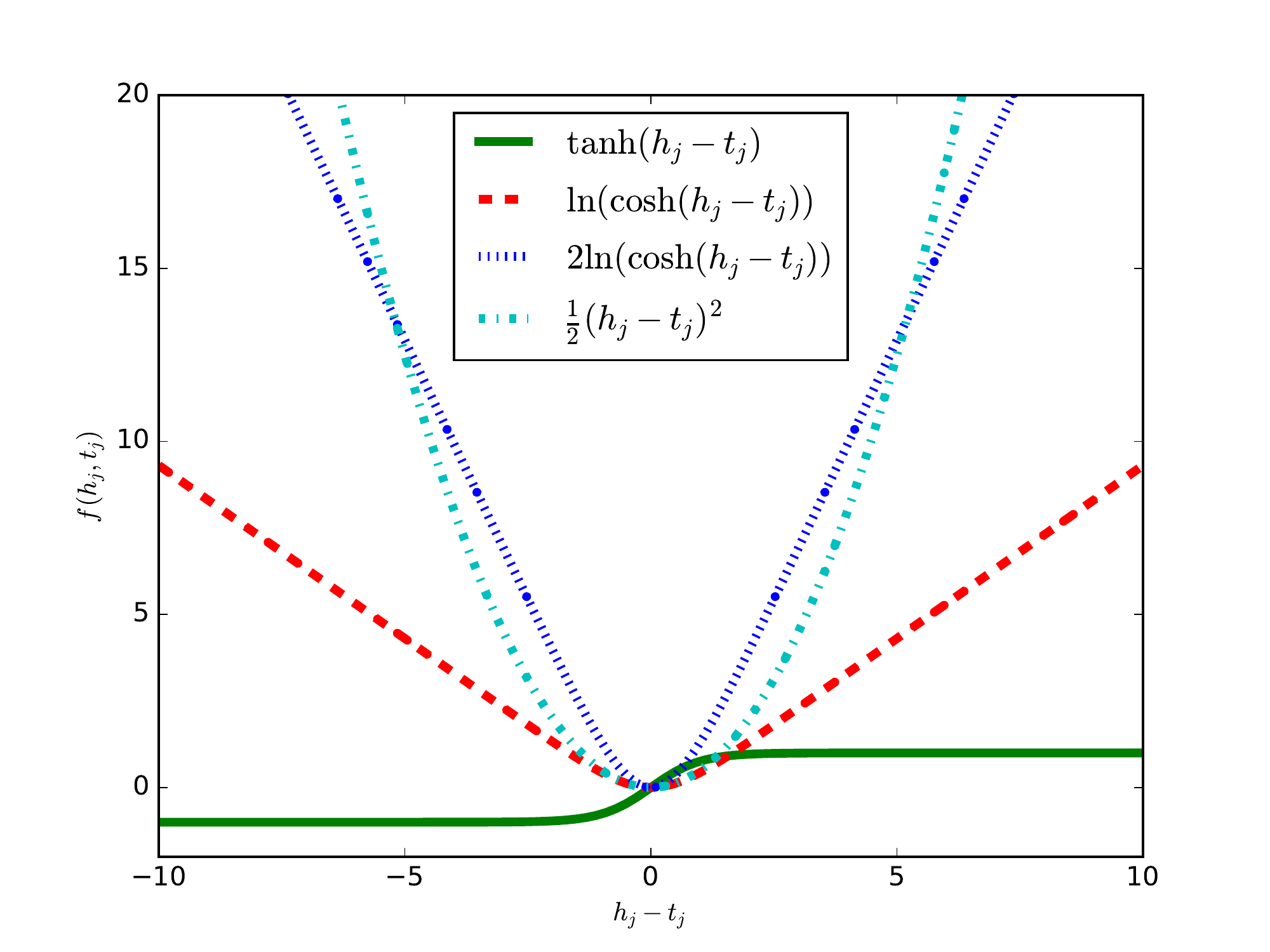}}
\caption{ Illustration of the smooth Huber loss $\mathcal{L}_{_{HD}}( t_j,  h_j) = \frac{\alpha}{\beta}  \ln(\cosh(\beta ( h_j- t_j)))$ and the corresponding error term $\mathcal{E}( t_j,  h_j) = \alpha \tanh(h_j-t_j)$, where $\beta=1$ and $\alpha = 1$ or $\alpha = 2$. 
See Equation~\eqref{eqn:gradient_descent_loss_implementing_Hebbian_descent} in combination with the identity activation function for the derivation of the smooth Huber loss. For comparison we also show the squared error loss, which has the error term $\mathcal{E}( t_j,  h_j) = h_j-t_j$.
}
\label{fig:reg_loss_HD}
\end{center}
\end{figure}
In case of the sigmoid as an activation function the integral in Equation~\eqref{eqn:gradient_descent_loss_implementing_Hebbian_descent} does not exist, but Hebbian-descent still converges as the angle between the gradient descent update and the Hebbian-descent update is still guaranteed to be less than 90 degrees (See Appendix~\ref{appendix:convergence_of_stochastic_hebbian_descent}).
A detailed analysis of this form of error terms is beyond the scope of this paper. 
But it is obvious that a bounded error term is especially useful if the activation function is unbounded such as the identity of rectifier, as in this case the error term stays in a reasonable range even for very large learning rates.

When solving classification problems with (deep) neural networks the combination softmax-output units and cross-entropy loss is the common choice. This is usually motivated by the fact that minimizing the cross-entropy loss on softmax-output units is equivalent to maximizing the likelihood of the network's softmax-output given the input~\citep{Hinton-1989}. In case of a single layer network this is also known as logistic regression, which belongs to a class of statistical models whose relation to Hebbian-descent is discussed in the next section. 

\subsection{From the Perspective of Generalized Linear Models}\label{sec:HD_glm}

In case of the squared error loss and an invertible activation function Hebbian-descent actually optimizes a generalized linear model~\citep{Nelder1972}, which models the probability distribution of some random output variables $\vect t$ given input $\vect x$. In the following we give a brief introduction to generalized linear models sufficient to show how they relate to Hebbian-descent. For a detailed introduction see \citet{McCullaghNelder-1989} for example.

\subsubsection{Generalized Linear Models}\label{sec:HD_generalized_linear_models}

As a generalization of linear least squares regression, which assumes Gaussian distributed output variables, generalized linear models allow the output variables to follow any distribution that is in the exponential family. 
In the context of generalized linear models it is convenient to consider distributions of the so called over-dispersed exponential family~\citep{Jorgensen1987}, which is a generalization of the natural exponential family. The corresponding probability density function for some output variables $\vect t$ parameterized by $\vect \theta$ and $\alpha$ is given by
\begin{eqnarray}
	p(\vect t | \vect \theta, \alpha ) &=&  b (\vect t, \alpha ) \exp\left(\frac{
		\vect \theta^T \vect  t -  \vect A (\vect \theta ) }{\alpha}\right),\label{eqn:GLM_PDF}
\end{eqnarray}
where $\vect \theta $ is usually known as the natural parameter, $\alpha$ as the dispersion parameter,  $\vect A (\vect \theta )$ as the log-partition function, and $ b (\vect t, \alpha )$ is a known function. 
Notice, that $\vect \theta $ can be matrix an that $\vect x$ enters the model through $\vect \theta$ as discussed below.
An important property of this kind distributions is that the expectation value of the output variable is known to be 
\begin{equation}
	\vect \gamma = \mathbf{E} [\vect t] =  \vect A'(\vect \theta)\label{eqn:GLM_Mean}, 
\end{equation} 
where $\mathbf{E} [\vect t]$ denotes the vector containing expectation values of the output variables \emph{i.e}\ $\mathbf{E} [\vect t] =  [\text{E} [t_0], \cdots, \text{E}[t_M]]$.
Thus, $\vect \gamma$ and $\vect \theta$ must be related, which is generally denoted by 
\begin{equation}
	\vect \theta = \vect \psi(\vect \gamma),\label{eqn:GLM_link_function}\,\,\,\,\,\,\,\,\,\,\,\,\,\,\,\,\,\,\,\,\,\text{\small{(implying $\vect A'$ to be invertable)}}
\end{equation}
where $\vect \psi(\cdot)$  is known as the link-function, which is an invertible mapping that relates the model mean $\vect \gamma$ to the natural parameter $\vect \theta$. Finally  the input $\vect x$ has to be incorporated into the model and has to be related to the model's mean. This is done by choosing a linear predictor $\vect a$ (\emph{i.e.}\ a linear combination of the input values and some parameters) that is passed through an invertible, integrable, and possibly nonlinear function $\vect\phi(\cdot)$. Such a mapping is provided by a centered single layer network (see Equation~\eqref{eqn:neuron_matrix}) when the activation function is chosen to be invertible and integrable, in which case the natural parameter becomes
\begin{equation}
	\vect \theta \overset{(\ref{eqn:neuron_matrix},\ref{eqn:GLM_link_function})}{=} \vect \psi\big(\vect\phi (\vect a)\big) = \vect\psi\Big(\vect\phi\big( \vect{W}^T\left(\vect{x}-\vect\mu \right)+\vect{b}\big)\Big)\label{eqn:GLM_natural_parameter}.
\end{equation}
The model is now fully specified and we can use maximum likelihood estimation to optimize the parameters of the model. Let us therefore consider the gradient of the model's log-likelihood w.r.t. $\vect W$ and $\vect b$, which is given by
\begin{eqnarray}
	\frac{\eta \partial \log p(\vect t| \vect x, \vect W, \vect b, \alpha )}{\partial \vect W} &\overset{(\ref{eqn:GLM_PDF},\ref{eqn:GLM_natural_parameter})}{=}& \eta \bigg(\vect x - \vect \mu \bigg)\bigg(\frac{\vect t - \vect A' (\vect \theta )}{\alpha } \odot \vect\psi'(\vect \gamma) \odot \vect\phi'(\vect a)\bigg)^T\label{eqn:GLM_log_grad_W_1}\\
	 &\overset{(\ref{eqn:GLM_Mean})}{=}& \eta \bigg(\vect x - \vect \mu \bigg)\left(\frac{\vect t - \vect \gamma }{\alpha } \odot \vect\psi'(\vect \gamma) \odot \vect\phi'(\vect a)\right)^T\label{eqn:GLM_log_grad_W_2} \\
	\frac{\eta \partial \log p(\vect t| \vect x, \vect W, \vect b, \alpha )}{\partial \vect b} &\overset{(\ref{eqn:GLM_PDF},\ref{eqn:GLM_log_grad_W_2})}{=}& \eta \left(\frac{\vect t - \vect \gamma }{\alpha } \odot \vect\psi'(\vect \gamma) \odot \vect\phi'(\vect a)\right).\label{eqn:GLM_log_grad_b}
\end{eqnarray}
In order to be able to calculate the gradient numerically we have to decide for an activation function, where we are free to choose any function as long as it is invertible and integrable. However, there is a striking choice namely to choose $\vect\phi (\cdot) = \vect\psi^{-1} (\cdot)$, in which case the model is said to be in canonical form since the natural parameter simplifies to $\vect \theta = \vect a$, in which case the gradient becomes    
\begin{eqnarray}
	\frac{\eta \partial \log p(\vect t| \vect x, \vect W, \vect b, \alpha )}{\partial \vect W} &\overset{(\ref{eqn:GLM_natural_parameter},\ref{eqn:GLM_log_grad_W_2})}{=}& \eta \bigg(\vect x - \vect \mu \bigg)\left(\frac{\vect t - \vect \gamma }{\alpha }\right)^T\\
	&\overset{(\ref{eqn:GLM_link_function})}{=}& \eta \bigg(\vect x - \vect \mu \bigg)\left(\frac{\vect t - \vect\psi^{-1}(\vect \theta) }{\alpha }\right)^T\\
&\overset{\text{\tiny{$\vect\phi (\cdot) = \vect\psi^{-1} (\cdot)$}}}{=}& \eta \bigg(\vect x - \vect \mu \bigg)\left(\frac{\vect t - \vect\phi(\vect a) }{\alpha }\right)^T,  \label{eqn:GLM_log_grad_W_canonical}\\
	\frac{\eta \partial \log p(\vect t| \vect x, \vect W, \vect b, \alpha )}{\partial \vect b} &\overset{(\ref{eqn:GLM_log_grad_b},\ref{eqn:GLM_log_grad_W_canonical})}{=}&  \eta
\left(\frac{\vect t - \vect\phi(\vect a) }{\alpha }\right) .\label{eqn:GLM_log_grad_b_canonical}
\end{eqnarray}
If we now set $\alpha=1$ the gradient of the log-likelihood becomes equivalent to the Hebbian-descent update for a squared error loss as given by Equations~\eqref{eqn:update_general_hebbian_descent_sup_w_1} and \eqref{eqn:update_general_hebbian_descent_sup_b_1}. 

\subsubsection{The Hebbian-Descent Loss is the General Log-Likelihood Loss}\label{sec:HD_HD_loss_generalized}

As shown above another important insight of this work is that for an invertible and integrable activation function and the squared error loss the Hebbian-descent update optimizes the log-likelihood of a generalized linear model in canonical form. 
As a consequence the Hebbian-descent loss given by Equation~\eqref{eqn:gradient_descent_loss_implementing_Hebbian_descent} with the mean squared error term is the general log-likelihood loss for any activation function as long as it is invertible and integrable.

We can now define the probability density function of the generalized linear model Hebbian-descent optimizes more specifically. Since the output variables $t_j$ are conditionally independent given the input $\vect x$ for a natural parameter as defined by Equation~\eqref{eqn:GLM_natural_parameter}, we can write the probability density function as 
\begin{eqnarray}
	%p(\vect t | \vect x, \vect W, \vect b) &\overset{(\ref{eqn:GLM_link_function},\ref{eqn:GLM_log_grad_W_2})}{=}&  b (\vect t ) \exp\Big(
	%	\vect a^T \vect  t -  \int \vect \phi (\vect a ) \,\mathrm{d} \vect a \Big), \\
	p(\vect t | \vect x, \vect W, \vect b) 
		&\overset{(\ref{eqn:GLM_natural_parameter})}{=}&  \prod_j^M p( t_j | \vect x, \vect W, \vect b) \\
		&\overset{(\ref{eqn:GLM_PDF},\ref{eqn:GLM_link_function})}{=}&  \prod_j^M b (t_j ) \exp\Big(
		\frac{a_j  t_j -  \int \phi ( a_j ) \,\mathrm{d} a_j}{\alpha} \Big),
		\label{eqn:GLM_PDF_HD}
\end{eqnarray}	
where $\int$ denotes the general integral emphasizing why $\phi (\cdot )$ needs to be integrable. 
For a generalized linear model one would usually define the model from the distribution's perspective rather than the activation function's perspective, \emph{i.e.}\ choose a certain distribution of the exponential family such as Gaussian, Binomial, Bernoulli, Poisson, ... for which the parameters are known.
As an example consider the Bernoulli distribution for which $b(t_j)= \alpha = 1$, and with link function $\psi (\gamma_j) =  \log(\gamma_j(1-\gamma_j))^{-1})$, which is known as the logit function that is the inverse of the sigmoid. Thus $\phi (a_j) = (1+\exp(-a_j))^{-1}$ and the corresponding anti-derivative becomes $\int \phi ( a_j ) \,\mathrm{d} a_j = \log (1+\exp(a_j) )$. 
It is, however, not obvious to see that these values lead to the Bernoulli distribution when inserted in Equation~\eqref{eqn:GLM_PDF_HD}, as it is usually defined with respect to the expectation value of random variable $\gamma_j$. The corresponding derivation is as follows:
\begin{eqnarray}
	p(t_j | \vect x, \vect W, \vect b) \!\!\!\!
		&\overset{(\ref{eqn:GLM_PDF_HD})}{=}& \!\!\!\!  \exp \Big(
		a_j  t_j -  \log \big(1+\exp(a_j) \big) \Big) \\
					&\overset{(\ref{eqn:GLM_link_function})}{=}& \!\!\!\!  
\exp \Big(
		\psi( \gamma_j) \Big)^{t_j}   \Big(1+\exp \big(\psi( \gamma_j) \big)\Big)^{-1} \\	
				&\overset{(\ref{eqn:GLM_link_function})}{=}&  \!\!\!\!
\exp \!\bigg(\!
		\log\big(\gamma_j(1-\gamma_j)^{-1} \big) \!\bigg)^{t_j}   \!\bigg(\!1+\exp \!\Big(\!\log\big(\gamma_j(1-\gamma_j)^{-1} \big)\!\Big)\!\bigg)^{-1}\,\,\,\,\,\, \\	
						&=&  \!\!\!\!
\big(\gamma_j(1-\gamma_j)^{-1} \big)^{t_j}   \big(1+\gamma_j(1-\gamma_j)^{-1} \big)^{-1}\,\,\,\,\,\, \\	
						&=&  \!\!\!\! 
\gamma_j^{t_j}(1-\gamma_j)^{-t_j}    \big((1-\gamma_j)(1-\gamma_j)^{-1}+\gamma_j(1-\gamma_j)^{-1} \big)^{-1} \\		
						&=&  \!\!\!\! 
\gamma_j^{t_j}(1-\gamma_j)^{-t_j}    \big(1-\gamma_j+\gamma_j\big)(1-\gamma_j)^{-1} \big)^{-1} \\	
						&=&  \!\!\!\! 
						\gamma_j^{t_j}(1-\gamma_j)^{-t_j}    (1-\gamma_j)  \\	
						&=&  \!\!\!\! 
\gamma_j^{t_j}(1-\gamma_j)^{1-t_j}	,
		\label{eqn:example_bernoulli}
\end{eqnarray}
which is obviously the Bernoulli-distribution for the output variable $t_j$ taking the value one with probability $\gamma_j$ and the value zero with probability $(1-\gamma_j)$.

However, one is often not really interested in modeling the probability density function, \emph{e.g.}\ when using logistic regression one is usually interested in the classification error rather than in accurate probabilistic modeling. 
In this case Hebbian-descent allows more flexibility since we are not as restricted in the choice of the activation function nor do we have to use the squared error loss. 
But even if the chosen activation function does not define a valid probability distribution as in case of a rectifier for example the relationship to maximum likelihood learning is still present and a useful insight as the  maximizing the likelihood is usually considered to be a good optimization objective.

\subsection{From the Perspective of an Adaptive Learning Rate for Deep Neural Networks}\label{sec:HD_learning_rate}

There exists another perspective on Hebbian-descent that is that of an adaptive learning rate for gradient descent. By defining an individual learning rate for each hidden unit separately as 
\begin{equation}
\eta_j = \frac{\kappa}{\phi'(\vect a_j)},
\end{equation}
where $\kappa > 0$ is a constant, the derivative of the activation function cancels out in the gradient descent update given by Equation~\eqref{eqn:update_general_gradient_descent_sup_w_1}, also turning it into the Hebbian-descent update given by Equations~\eqref{eqn:update_general_hebbian_descent_sup_w_1} and \eqref{eqn:update_general_hebbian_descent_sup_b_1}. 
A necessary requirement is obviously again a strictly positive derivative of the activation function, in which case we only scale the constant part of the learning rate $\kappa$ up or down. 
Even for a deep network with several hidden layers this learning rate only changes the gradient direction by less than 90 degrees, such that the update for a single data point converges to the minimum of the loss function as shown in Appendix~\ref{appendix:convergence_of_stochastic_hebbian_descent}. 
However, since we do not back-propagate the learning-rate through the network, the derivative only cancels out in the partial derivatives of the corresponding layer but is still present in the back-propagated delta values. One exception is a network with one hidden layer where we use the Hebbian-descent loss to get rid of the derivative in the top layer. In this case the back-propagated delta values do not contain any derivatives of activation functions and if we then use the adaptive learning rate only on the first layer no derivatives of activation functions appear in the gradient descent update at all.

Adaptive learning rates are very common when training deep neural networks, see \cite{ruder2016} for a good overview. However, all these methods use historical gradients to determine an adaptive learning rate, which is different from the approach given above where no historical gradient information is needed and an individual learning rate is given for each hidden unit and data point separately. A combination of the two methods would also be of interest, but the analysis of deep networks is beyond the scope of this work and left for future work.

\subsection{Auto-Associative Learning in Single Layer Networks}\label{sec:HD_auto_associative}

In case of auto-associative learning we consider a network structure of an auto-encoder with tied / symmetric weights as illustrated in Figure~\ref{fig:auto_ecoder_structure}, which can also be seen as a undirected model such as a Hopfield network with restricted connectivity or as a restricted Boltzmann machine. %TODO as given by Figure examplek RBM
\begin{figure}[t]
\begin{center}
\centerline{\includegraphics[scale=0.54, trim=0 100 0 10, clip]{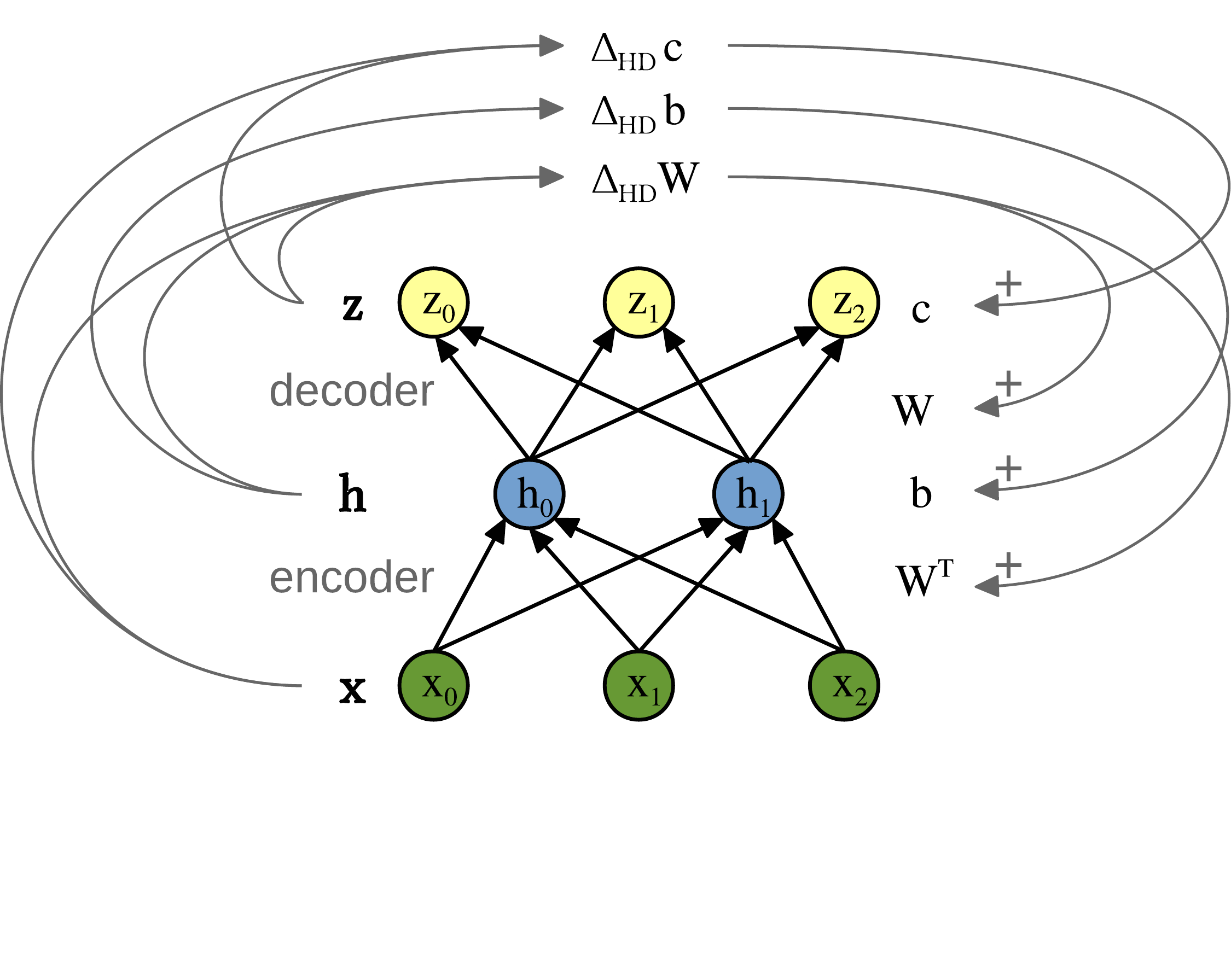}}
\caption{ Illustration of the auto-associative Hebbian-descent update for a single layer auto encoder network.
}
\label{fig:auto_ecoder_structure}
\end{center}
\end{figure}
Depending on the learning rule it is advantageous to consider one or the other perspective in the following.
An auto-encoder can generally be decomposed into an encoder that encodes the input $\vect x$ into a hidden representation $\vect h$ and a decoder that decodes the hidden representation  into a reconstructed input $\vect z$. For a centered single layer auto-encoder with tied weights we use Equation~\eqref{eqn:neuron_matrix} as encoder and define the decoder using the same weights by
\begin{eqnarray}
\vect{z} &=& \vect \phi_{_{dec}}\big(\vect a_{_{dec}}\big)
		 = \vect \phi_{_{dec}}\big(\vect{W}(\vect{h}-\vect\lambda)+\vect{c}\big)\label{eqn:neuron_second_layer1}\\
		 &=& \vect \phi_{_{dec}}\Big(\vect{W}\big(\vect \phi_{_{enc}}(\vect a_{_{enc}}) - \vect \lambda\big) + \vect c \Big)\label{eqn:neuron_second_layer2}\\
		 &\overset{\eqref{eqn:neuron_matrix}}{=}& \vect \phi_{_{dec}}\bigg(\vect{W}\Big(
		 \vect \phi_{_{enc}}\big(\vect{W}^T(\vect{x}-\vect\mu )+\vect{b}
		 \big)-\vect\lambda \Big)+\vect{c}\bigg),\label{eqn:neuron_second_layer3}
\end{eqnarray}
where $\vect W$ are the weights, $\vect b$, $\vect \mu$, $\vect a_{_{enc}}$, and $\vect \phi_{_{enc}}$ are the encoder's bias, offset, pre-threshold activity, and activation function, and $\vect c$, $\vect \lambda$, $\vect a_{_{dec}}$, and $\vect \phi_{_{dec}}$ are the decoder's bias, offset, pre-threshold activity, and activation function, respectively. 

\subsubsection{Auto-Associative Hebbian-Descent Update}\label{sec:HD_auto_associative_hebbian_descent_update}

The Hebbian-descent update we propose for these type of auto-encoder networks is basically the same as the supervised / hetero-associative update but in a reverse perspective, \emph{i.e.}\ we flip the role of input and target variables. The auto-associative Hebbian-descent updates for weight matrix $\vect W$, encoder bias $\vect b$, and decoder bias $\vect c$ can therefore be given by
\begin{eqnarray}
\Delta_{_{HD}} \vect W &=& -\eta \frac{\partial \mathcal{L}(\vect x, \vect z)}{ \partial  \vect z}(\vect h- \vect \lambda )^T \label{eqn:update_general_hebbian_descent_unsup_w_2}\\
&=&-\eta \bm{\mathcal{E}}(\vect x, \vect z)(\vect h- \vect \lambda )^T\label{eqn:update_auto_hebbian_descent_unsup_w_1},\\
\Delta_{_{HD}} \vect c  &=& -\eta \frac{\partial \mathcal{L}(\vect x, \vect z)}{ \partial \vect z}
 \label{eqn:update_auto_hebbian_descent_unsup_c_2}\\
&=& -\eta  \, \bm{\mathcal{E}}(\vect x,\vect{z})\label{eqn:update_auto_hebbian_descent_unsup_c_1},\\
\Delta_{_{HD}} \vect b &=& -\eta  \, (\vect h- \vect{\tilde \lambda}), \,\,\,\,\,\,\,\,\,\,\,\,\,\,\,\,\,\,\,\,\,\,\,\,\,\,\,\,\,\,\,\, \text{\small{(optional)}}\label{eqn:update_auto_hebbian_descent_unsup_b_1}
\end{eqnarray}
where the update for the encoder bias can be used to regularize the hidden units activity to be $\vect{\tilde \lambda}$ or set to zero otherwise.
The update process for the weights and bias terms is illustrated  in Figure~\ref{fig:auto_ecoder_structure}.
Notice, that analogously to contrastive learning this update rule is only local if we introduce time such that $\vect x$ and $\vect z$ are the post synaptic activities of the same neurons at two different time steps.
It is straight forward to see that, when assuming $\vect h$ to be given, the updates for weights and decoder bias are equivalent to the Hebbian-descent update for supervised / hetero-associative learning (Equation~\eqref{eqn:update_general_hebbian_descent_sup_w_1} and \eqref{eqn:update_general_hebbian_descent_sup_b_1}) applied on the decoder network (Equation~\eqref{eqn:neuron_second_layer1}). 
Strictly speaking, however, $\vect h$ is not given since it depends on the weights and input, such that for the same input the hidden representation will differ before and after updating the network's weights. 
This induces some kind of non-stationarity, which is problematic if we want to show (such as we did in the supervised case) that Hebbian-descent corresponds to the gradient of a specific loss function. In fact we prove in Appendix~\ref{appendix:AA_HD_is_generally_not_the_gradient_of_any_objective_function} that the auto-associative Hebbian-descent update is not only not the gradient of a corresponding loss function, it is not the gradient of any function.
Notice, however, that this does not necessarily mean that Hebbian-descent does not converge to a stable solution as it could systematically deviate less than 90 degrees from the corresponding gradient update, in which case it would still converge to the same loss value.
As shown empirically Hebbian-descent indeed converges to a reconstruction error that is comparable to when gradient decent / back-propagation is used to train the auto-encoder. 
This, can be argued, happens since the Hebbian-descent update is actually part of the gradient descent update and thus highly dependent on it.

\subsubsection{From the Perspective of Gradient Descent}\label{sec:HD_from_the_perspective_of_gradient_descent}

In gradient descent we get around the problem of a changing hidden representation by propagating the error from the reconstruction $\vect z$ through the hidden representation $\vect h$ back to the input $\vect x$, such that the hidden representation is automatically inferred alongside.
The gradient descent updates for weights $\vect W$, bias values $\vect b$ and 
$\vect c$ are given by
\begin{eqnarray}
\Delta_{_{GD}} \vect W &=& 
-\eta \underbrace{\Bigg(\vect x- \vect \mu \Bigg)
\Bigg(
\vect W^T\bigg(\frac{\partial \mathcal{L}(\vect x, \vect z)}{ \partial  \vect z} \odot \vect \phi'_{_{dec}}(\vect a_{_{dec}})\bigg)\odot \vect \phi'_{_{enc}}(\vect a_{_{enc}})\Bigg)^T}_{\text{encoder related part}},\,\,\,\,\,\,\,\,\,\,\,\\
&& -\eta \underbrace{\bigg(\frac{\partial \mathcal{L}(\vect x, \vect z)}{ \partial  \vect z} \odot \vect \phi'_{_{dec}}(\vect a_{_{dec}})\bigg)\bigg(\vect h- \vect \lambda \bigg)^T}_{\text{decoder related part}}
\label{eqn:update_auto_gradient_descent_unsup_w}\\
\Delta_{_{GD}} \vect c &=& -\eta \frac{\partial \mathcal{L}(\vect x, \vect z)}{ \partial  \vect zo} \odot \vect \phi'_{_{dec}}(\vect a_{_{dec}}),\label{eqn:update_auto_gradient_descent_unsup_c}\\
\Delta_{_{GD}} \vect b &=& -\eta 
\vect W^T\bigg(\frac{\partial \mathcal{L}(\vect x, \vect z)}{ \partial  \vect z} \odot \vect \phi'_{_{dec}}(\vect a_{_{dec}})\bigg)\odot \vect \phi'_{_{enc}}(\vect a_{_{enc}}),\label{eqn:update_auto_gradient_descent_unsup_b}
\end{eqnarray}
where we use 'encoder related part' and 'decoder related part' to denote the parts of the weight gradient that arise from the occurrence of the tied weights in encoder and decoder, respectively. 
Thus, if we would not use tied weights these two parts would be the individual gradients of an encoder and a decoder weight matrix. 
As shown below the performance of the auto-encoder is almost equivalent with and without updating the encoder bias, which can be argued is a result of the undetermined hidden representation that allows for a bias free distribution over $\vect h$. In case of a linear auto-encoder this can even be shown analytically as the encoder bias can be incorporated into the decoder bias\footnote{$\vect{W}\Big(\vect{W}^T(\vect{x}-\vect\mu )+\vect{b}-\vect\lambda \Big)+\vect{c} = \vect{W}\Big(\vect{W}^T(\vect{x}-\vect\mu )\Big)+\underbrace{\vect{W}\vect{b}-\vect{W}\vect\lambda +\vect{c}}_{\vect{\tilde c}}$}. 
This is also the reason why we do not update the encoder bias in Hebbian-descent unless we want to regularize the hidden units to match a desired average activity level of $\vect{\tilde \lambda}$. 
If we use the Hebbian-descent loss (Equation~\eqref{eqn:gradient_descent_loss_implementing_Hebbian_descent}) with the gradient descent updates, the partial derivative of the activation function disappears and the update for the decoder bias and the 'Decoder related part' of the weight gradient become equivalent to the Hebbian-descent update given by Equation~\eqref{eqn:update_auto_hebbian_descent_unsup_w_1} and~\eqref{eqn:update_auto_hebbian_descent_unsup_c_1}, respectively.
Thus, the remaining difference between the two update rules is the 'encoder related part' of the weight gradient, which is the encoder weight gradient if we would not use tied weights. 
Now, an important observation by~\citet{ImBelghaziEtAl-2016} is that even without tied weights auto-encoders tend to come up with approximately symmetric solution for the weights, giving evidence why Hebbian-descent and gradient descent have roughly the same performance.

\subsubsection{From the Perspective of Oja's / Sanger's Rule}\label{sec:HD_from_the_perspective_of_oja_sanger}

Another interesting observation is that in case of a squared error loss, zero offsets, and identity as activation functions the Hebbian-descent update for the weights become equivalent to Oja's rule~\citep{oja1982simplified} or Sanger's rule~\citep{sanger1989optimal} in case of several output units, which are known to perform principal component analysis.
Such as Sanger's Rule, Hebbian descent can thus be understood as an unsupervised learning rule for a directed single layer network. 
\citet{KarhunenJoutsensalo-1994} argued that the same update rule used with nonlinear hidden units approximates stochastic gradient descent for minimizing the square error between input and reconstruction and should thus perform nonlinear principal component analysis. The authors, however, did not show convergence and only argued that the update for the encoder becomes less important as we get closer to the optimum.

\subsubsection{From the Perspective of Contrastive Learning}\label{sec:HD_auto_asso_from_the_perspective_of_contrastive_learning}

Auto-associative Hebbian-descent cannot be reformulated as gradient descent, but such as for the hetero-associative Hebbian-descent update it can be relate to contrastive learning.
Since $\vect h$ depends on the same weights as $\vect z$ it does not change 
arbitrarily, such that it is more appropriate to view the auto-encoder as a dynamical system, \emph{i.e.}\ restricted Boltzmann machine or Hopfield network.
\citet{Kamyshanska2015} showed that an auto-encoder with tied weights is
guaranteed to have a well-defined energy function, which can be optimized using contrastive learning rules based on Equations~\eqref{eqn:Contrastive_learning_W}-\eqref{eqn:Contrastive_learning_b}. 
As discussed in Section~\ref{sec:HD_hetero_contrastive} in Contrastive Divergence~\citep{Hinton-2002a} we derive the values for the negative phase by sampling from a Markov chain that is initialized with the same data points as used in the positive phase. 
This is often done just for a single sampling step leading to a so called CD-1 update. 
Instead of sampling we can also use a mean field approach leading to the Contrastive Divergence mean field update~\citep{WellingHinton-2002}. 
Now for a single mean field step the states for the negative phase are calculate by Equation~\eqref{eqn:neuron_second_layer3}, showing that the Hebbian-descent
update with squared error loss corresponds to a CD-1 mean field update for a restricted Boltzmann machine. 
As discussed in Section~\ref{sec:HD_contrastive_clamping} this can also be understood as performing 
contrastive Hebbian learning \citep{RumelhartMcClellandEtAl-1986} where the hidden units are clamped to the same value in positive and negative phase.
Notice that, although contrastive divergence is used extensively and successfully in practice it is also not the gradient of any function~\citep{SutskeverTieleman-2010}.
It is, however, an rough approximation to stochastic gradient decent learning in Boltzmann machines, also known as persistent contrastive divergence in this context, that is sufficiently good in practice.
As a consequence, also Hebbian-descent can also be understood as an approximation to stochastic gradient decent training in restricted Boltzmann machines.
 
\subsubsection{Outlook on Deep Auto Encoder Networks}\label{sec:HD_outlook_deep_auto_encoder}

One could also use only the 'decoder related part' of the gradient for a deep auto encoder, but similar to the deep hetero-associative learning the derivative of the activation function is still present in the update for the lower layers of the network. 
Furthermore, one can also show that the 'decoder related part' of the gradient in deep auto encoders is also not the gradient of any objective function, which naturally extend from the proof for single layer auto encoders (see Appendix~\ref{appendix:AA_HD_is_generally_not_the_gradient_of_any_objective_function}).
While the analysis of deep networks is beyond the scope of this publication in our experience the performance of Hebbian-descent is also very similar to gradient descent for deep auto encoders, so that future research in this direction would be promising. 

\section{Methods}\label{sec:HD_methods}

In this section the benchmark datasets and the experimental setup is described.

\subsection{Benchmark Datasets}\label{sec:HD_benchmark_datasets}

We consider four real-world datasets from various domains as well as binary and normal distributed random patterns in our experiments. 

The {\bf{\emph{MNIST}}}~\citep{LeCun1998} dataset consists of 70,000 gray scale images of handwritten digits divided into training and test set of 60,000 and 10,000 patterns, respectively. The images have a size of $28 \times 28$ pixels, where all pixel values are normalized to lie in a range of $[0,1]$. The dataset is not binary, but the values tend to be close to zero or one. Each pattern is assigned to one out of ten classes representing the digits 0 to 9.

The {\bf{\emph{CONNECT}}}~\citep{LarochelleBengioEtAl-2010} dataset consists of 67,587 game-state patterns from the game Connect-4. The dataset is divided into training, validation, and test set with 16,000, 4,000, and 47,557 patterns, respectively. The binary patterns are 126 dimensional and each pattern is assigned to one out of three classes representing the game results win, lose, or draw.

The {\bf{\emph{ADULT}}}~\citep{LarochelleBengioEtAl-2010} dataset consists of 32,561 binary patterns of census data to predict whether a person's income exceeds 50,000 dollar per year.
The dataset is divided into training, validation, and test set with 5,000, 1,414, and 26,147 patterns, respectively. 
The binary patterns are $123$ dimensional and each pattern is assigned to one out of two classes representing whether the income level was exceeded or not.

The {\bf{\emph{CIFAR}}}~\citep{Krizhevsky-2009a} dataset consists of 60,000 color images of various objects divided into training, validation, and test set with 40,000, 10,000, and 10,000 patterns, respectively. The images have a size of $32 \times 32$ pixels that are converted to gray scale and rescaled to lie in a range of $[0,1]$ such that the dataset  has a non-zero mean, and can be represented by most of the activation functions. Each pattern is assigned to one out of ten classes representing trucks, cats, or dogs for example. 

The {\bf{\emph{RAND}}} and {\bf{\emph{RANDN}}} datasets serve as baseline datasets each consisting of $100$ random patterns with a size of 200 pixels. The pixels in the \emph{RAND} dataset take the value one with a probability of 0.5 and zero otherwise. The pixels in the \emph{RANDN} dataset are drawn from an isotropic Gaussian distribution with zero mean and unit variance. 
The dataset is rescaled to lie in a range of $[0,1]$ such that it has a non-zero mean, and can be represented by most of the activation functions. Notice that the data used did not contain extreme outliers such that the variance is still sufficiently large. Both datasets do not have label information.

\subsection{Network Structure and Learning Setup}\label{sec:HD_network_structure_and_learning_setup}

In our experiments we consider various activation functions such as linear, sigmoid, step, softmax, rectifier, and exponential linear units. 
The bias values were initialized to zero and according to \citet{Bengio-2010} we initialized the weights to $w_{ij} \sim U\Big(-\frac{\sqrt{6}}{\sqrt{N+M}},$ $ +\frac{\sqrt{6}}{\sqrt{N+M}} \Big)$, where $N$ is the number of input units, $M$ is the number of output units and ~$U(a, b)$ is the uniform distribution in the interval [a,~b]. 
When centering was used and if not mentioned otherwise the input offsets were fixed to the corresponding data mean, and the hidden offsets were initialized to $\lambda_j(t=0)= 0.5$ and updated with an exponential moving average of $\lambda_j(t+1) = 0.99 \lambda_j(t) + 0.01 h_j(t)$. 
For a fair comparison of the methods we had to fix input offsets to the data mean because changing offsets during training require a bias parameter for the reparameterization that is not available when using Hebb's rule and the covariance rule. 
However, slowly updated input offsets converge to the data mean leading to a very similar performance as when initially fixing them to the data mean. This has been shown for mini-batch learning in restricted Boltzmann machines by \citet{Melchior2016} and is shown for online hetero-association in the following.
Without centering the offsets were all fixed to zero.
Each experiment was repeated 10 times, where the initial weight matrices were the same among the methods but different in each trial. 
Depending on the dataset and activation function the optimal learning rate varied a lot such that we performed a grid search over 35 different learning rates\footnote{Learning rates: 100,  80,  60,  40,  20,  10,  8,  6,  4,  2,  1,  0.8,  0.6,  0.4,  0.2,  0.1,  0.08,  0.06,  0.04,  0.02,  0.01,  0.008,  0.006,  0.004,  0.002,  0.001,  0.0008,  0.0006,  0.0004,  0.0002,  0.0001,  0.00008,  0.00006,  0.00004, 0.00002} ranging from 100 to 0.00002. 
When a weight decay was used we additionally performed a grid search over 20 different weight decay values\footnote{Weight decay values: 2.0, 1.0, 0.8, 0.6, 0.4, 0.2, 0.1, 0.08, 0.06, 0.04, 0.02, 0.01, 0.008, 0.006, 0.004, 0.002, 0.001, 0.0005, 0.0001, 0.0} ranging from 2 to 0
leading to a total search space of $20\times 35 = 700$ hyper parameter combinations. 
The default batch size was one in case of online learning and 100 in case of mini-batch learning, and when training involved several sweeps through the data the models were trained for 100 epochs. 

\subsection{Performance Measurement}\label{sec:HD_performance_measurement}

As the different methods and activation functions effectively optimize different loss functions we decided for a coherent performance measure that represents the performed task best and is not in a favor for one or the other method.
For the classification experiments we therefore evaluated the average miss-classification rate, while for the other experiments we used the Mean Absolute Error (MAE) as an intuitive performance measure. 
Since the output patterns lie in a range between zero and one it is clear that a mean absolute error of $0.1$ for example means that on average each neuron deviates 10\% from the true value and that an error above $0.5$, is worse than for random output (chance level). 
Alternatively we could have used the mean squared error that would have been in favor for gradient descent, while the log-likelihood would have been in favor for Hebbian-descent, thus both not allowing for an unbiased performance measure. Furthermore, both overestimate outliers and underestimate small deviations thus leading to a less interpretable performance measure. 
Another choice that is often used in neuroscience is the Pearson correlation between target and output pattern. 
However, it is scale invariant and can thus lead to a wrong impression of the network's performance. 
For some experiments we also evaluated these other performance measurements and found qualitatively the same results, \emph{i.e.}\ if a method performed significantly better in terms of the MAE it was also the best w.r.t. the other measures.

\section{Results}\label{sec:HD_results}

% BASELINES
%RAND 0.4946
%ADULT 0.1229
%CONNECT 0.1683
%MNIST 0.1473
%CIFAR 0.1670

Here we compare the performance of Hebbian-descent, gradient descent, Hebb's rule, and the covariance rule for centered and uncentered networks. 
We focus on online hetero-association, \emph{i.e.}\ associating $N$ input patterns with $N$ output patterns one after the other, but we also compare the methods with respect to their classification performance, \emph{i.e.}\ associating all input patterns with the corresponding labels, and performed experiments for auto-associative learning, \emph{i.e.}\ associating all input patterns with themselves in an auto-encoding network.

\subsection{Hetero-Associative Learning}\label{sec:HD_hetero_associative_learning}

In this section we investigate the performance of Hebb's rule, the covariance rule, gradient descent, and Hebbian-descent in hetero-associative learning with focus on online learning.

\subsubsection{Hetero-Associative Online Learning without Centering}\label{sec:hetero_associative_online_learning_without_centering}

In a first set of experiments we analyze how well 100 patterns of one dataset can be associated with 100 patterns of another dataset one pair at a time without presenting a pattern twice.
Figure~\ref{fig:rand_rand_sigmoid_false_online} (a) shows the performance of uncentered networks with sigmoid units when the four different methods have been used to associate
100 binary random patterns (\emph{RAND}) with 100 binary random patterns (\emph{RAND}). 
\begin{figure}[t]
\begin{center}
\subfigure[]{
\includegraphics[scale=0.425, trim=21 0 37 0, clip]{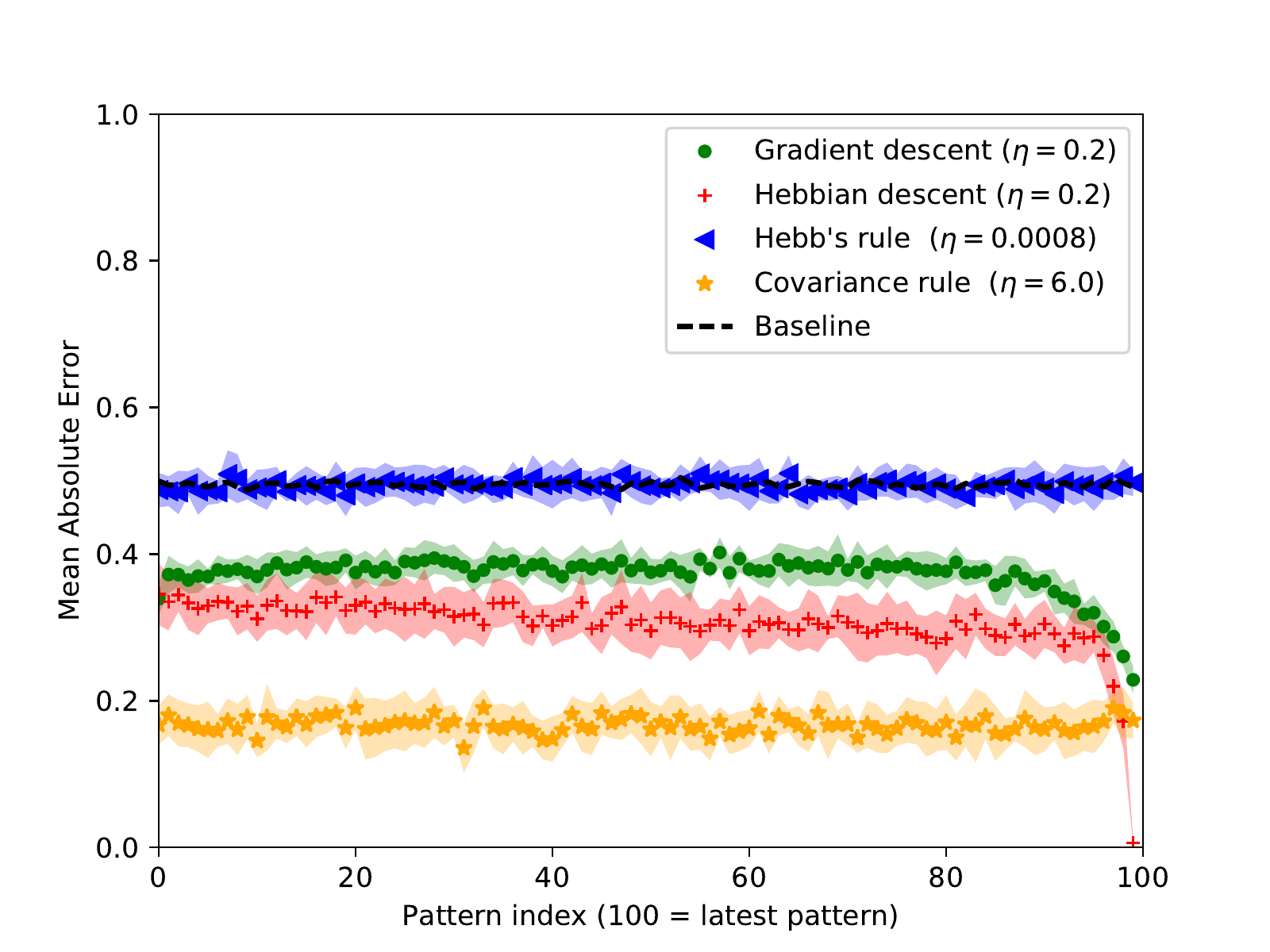}}
\subfigure[]{
\includegraphics[scale=0.425, trim=35 0 37 0, clip]{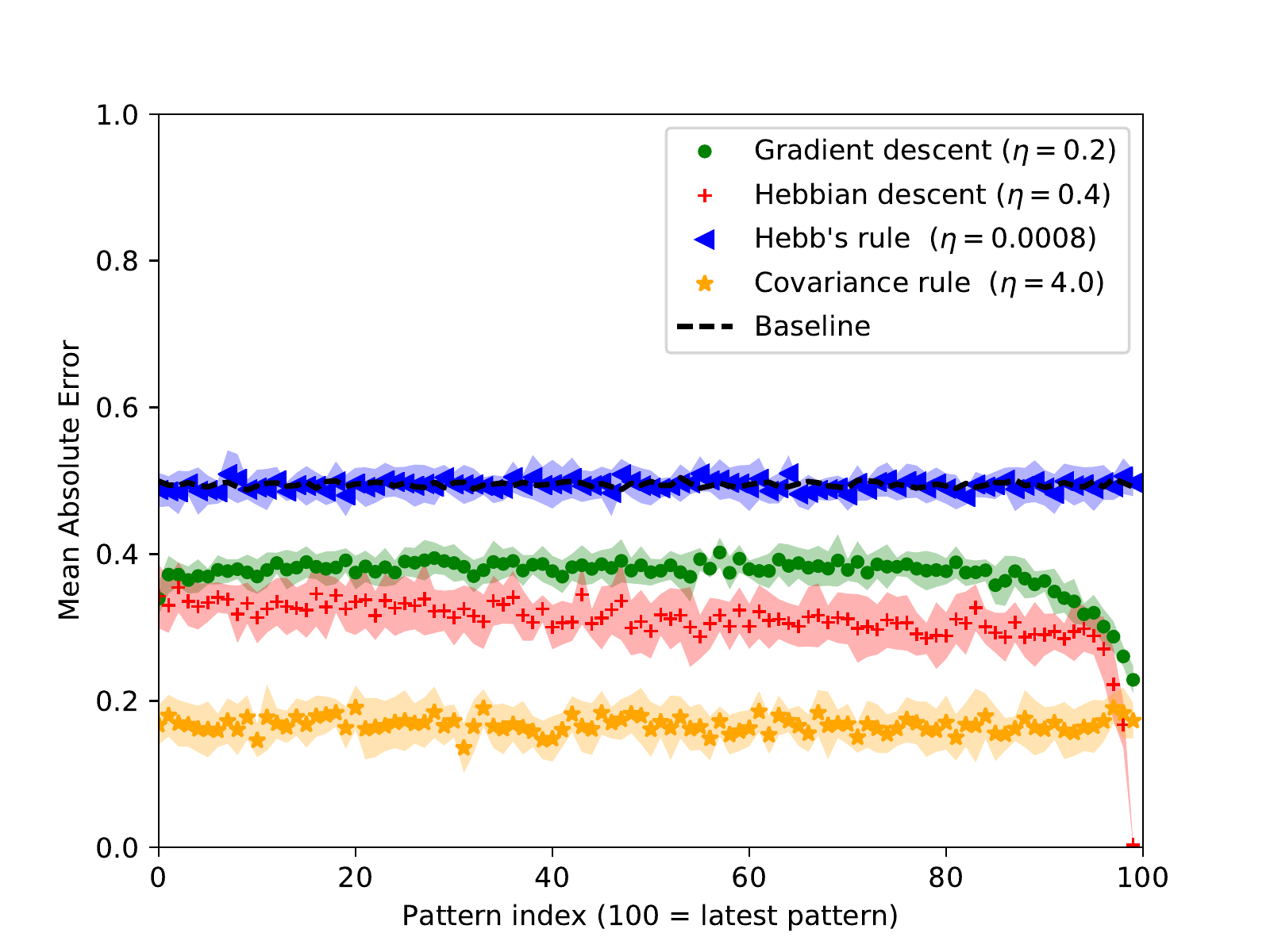}}
\caption{Online learning performance of the four different update rules without centering and sigmoid units, when 100 binary random patterns (\emph{RAND}) have been associated with another 100 binary random patterns (\emph{RAND}) one pattern pair at a time. 
The Mean absolute error and the corresponding standard deviation over 10 trials is plotted for each pattern separately. 
The learning rate $\eta$ was chosen for each method individually such that (a) the performance over all 100 patterns is best and (b) the performance for the last pattern is best. The baseline (overlaid with the curve for Hebb's rule) represents the performance of a network that independently of the input always returns the mean of the output patterns. }
\label{fig:rand_rand_sigmoid_false_online}
\end{center}
\end{figure}
The optimal learning rate was determined for each method individually such that the average performance over all patterns is best. 
For Hebb's rule the error is close to 0.5, which means that the network has not learned to associate the patterns at all. 
It is the same performance as baseline, which is the error between the output patterns and their mean value. 
This corresponds to the performance of a network that independently of the input always returns the mean output pattern and thus represents the most trivial solution. 
The bad performance of Hebb's rule is a direct consequence of not being able to learn negative or zero correlations and can be explained as follows. 
Equation~\eqref{eqn:update_hebb_sup_w} indicates that in the uncentered case and for binary patterns each weight $w_{ij}$ is updated by a value of $\eta$ when the corresponding input and target value is one, or by zero otherwise. 
Since the patterns are drawn uniformly at random, the weights will thus increase  with a probability of 0.25 or stay the same otherwise. Consequently, all neurons will sooner or later and independently of the input produce a constant output of one resulting in an error of 0.5. 
The covariance rule solves this problem by allowing negative weight changes leading to a much better performance, which is even better than that of Hebbian-descent and gradient descent. 
Interestingly, the error for all patterns is almost the same in case of Hebb's rule and the covariance rule, whereas for gradient descent and Hebbian-descent more recent patterns are represented better than older ones. 
But only Hebbian-descent allows to store the latest pattern almost perfectly. 
To show that this does not depend on the learning rate we performed the same experiments but selected the optimal learning rate, so that the performance of only the last pattern (index 100) is best.  
The results are shown in Figure~\ref{fig:rand_rand_sigmoid_false_online} (b), which are  almost equivalent to the results shown in Figure~\ref{fig:rand_rand_sigmoid_false_online} (a).
\begin{figure}[t]
\begin{center}
\subfigure[]{
\includegraphics[scale=0.425, trim=21 0 37 0, clip]{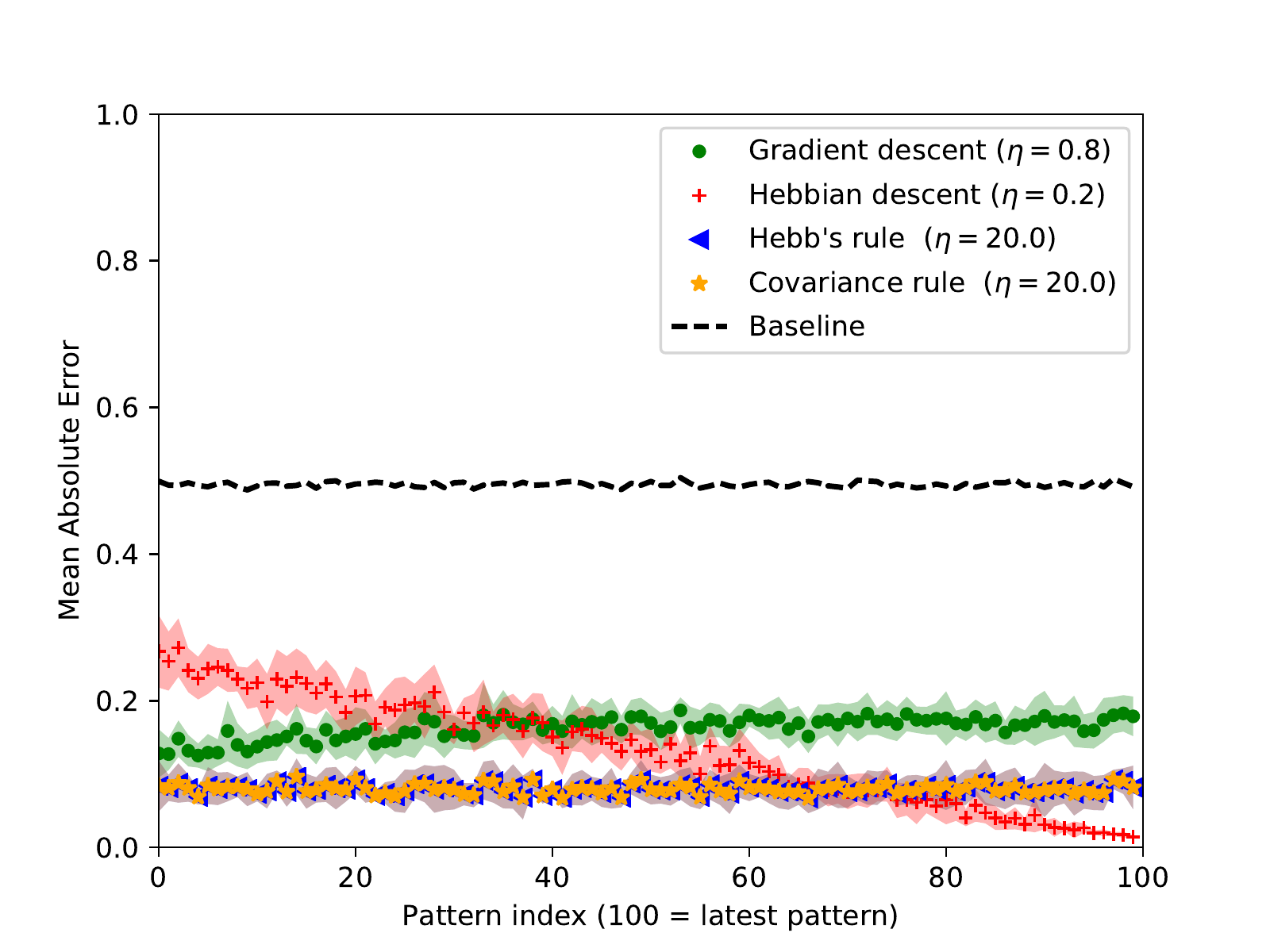}}
\subfigure[]{
\includegraphics[scale=0.425, trim=35 0 37 0, clip]{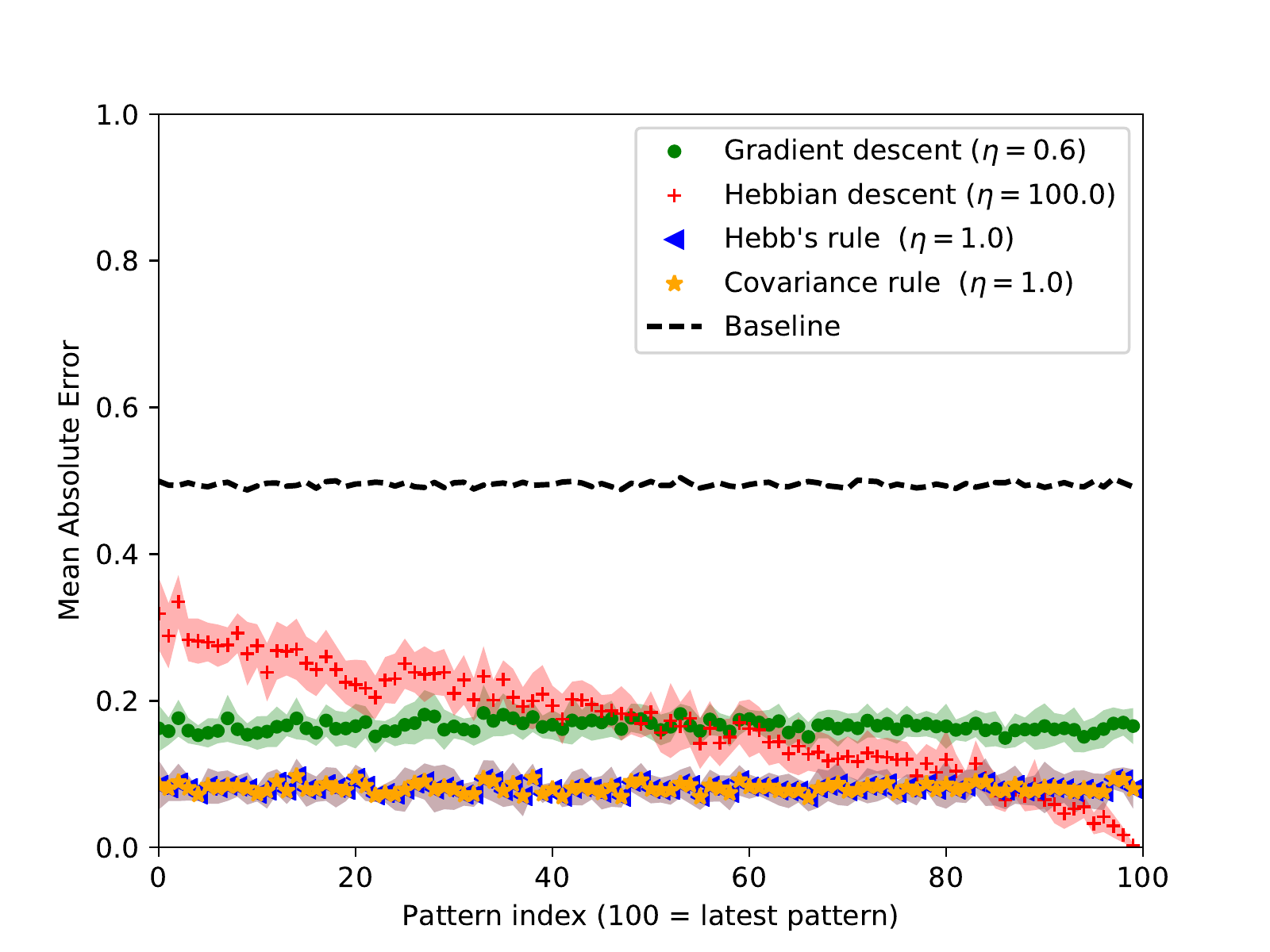}}
\caption{The same online learning experiments as in Figure~\ref{fig:rand_rand_sigmoid_false_online} except that now centering is used.
}
\label{fig:rand_rand_sigmoid_true_online}
\end{center}
\end{figure} 
Since the optimal learning rates in both experiments are almost equivalent, a larger learning rate does not allow the methods to represent only the latest pattern better.
However, the better storage of the last pattern when using gradient descent or Hebbian-descent comes at the cost of catastrophic interference, \emph{i.e.}\ abrupt decrease in performance of previously stored patterns.

\subsubsection{Hetero-Associative Online Learning with Centering}\label{sec:hetero_ssociative_online_learning_with_centering}

To show the importance of centering and its ability to prevent catastrophic interference we performed the same experiments as before but with centered networks. 
The results are shown in Figure~\ref{fig:rand_rand_sigmoid_true_online} illustrating that all methods profit significantly from centering and that the covariance rule and Hebb's rule become equivalent in case of centering as shown analytically in Equation~\eqref{eqn:single_cov}.
All methods except for Hebbian-descent have a homogeneous error distribution over the single patterns, which does also not change when the learning rate is selected such that the last pattern is represented best as shown in Figure~\ref{fig:rand_rand_sigmoid_true_online} (b). In comparison, Hebbian-descent allows for a linear slope of forgetting and does thus not suffer from catastrophic interference anymore. 
It is remarkable how a 500 times larger learning rate has little effect on the performance of Hebbian-descent in this case. 

Since natural data is usually not uncorrelated we performed the same experiments as before but with correlated real-world datasets. 
In particular, we associated 100 binary random patterns (\emph{RAND}) with 100 patterns of the \emph{ADULT} dataset. 
The results for centered networks are shown in Figure~\ref{fig:adult_rand_sigmoid_true_online} (a), where Hebb's rule and the covariance rule perform significantly worse than gradient descent, Hebbian-descent, and also as baseline.
\begin{figure}[t]
\begin{center}
\subfigure[]{
\includegraphics[scale=0.425, trim=21 0 37 0, clip]{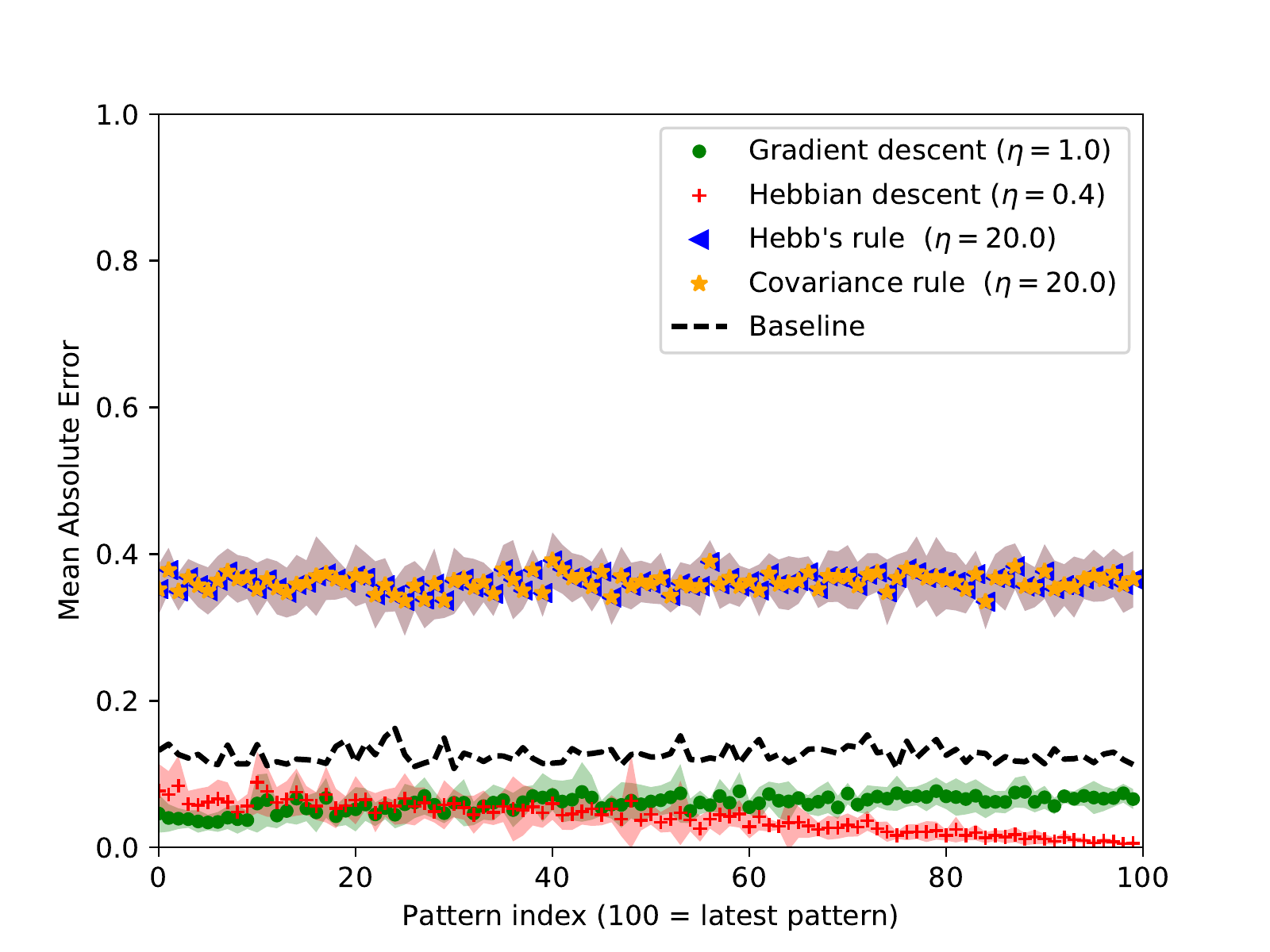}}
\subfigure[]{
\includegraphics[scale=0.425, trim=35 0 37 0, clip]{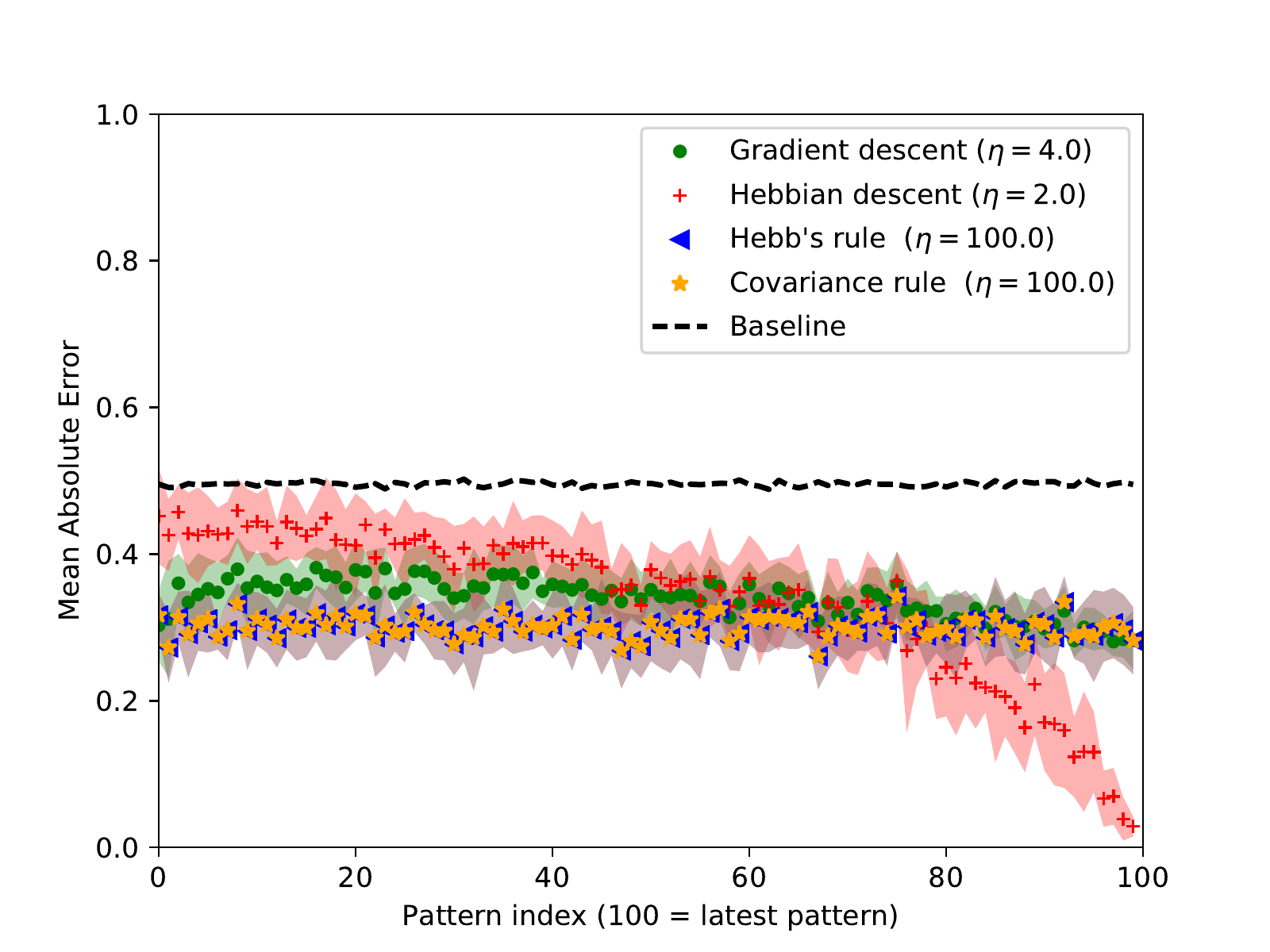}}
\caption{Online learning performance of the four different update rules with centering and sigmoid units, when (a) 100 binary random patterns (\emph{RAND}) have been associated with 100 patterns of the \emph{ADULT} dataset and (b) 100 patterns of the \emph{ADULT} dataset have been associated with 100 binary random patterns (\emph{RAND}). 
The Mean absolute error and the corresponding standard deviation over 10 trials is plotted for each pattern separately. 
The learning rate $\eta$ was chosen for each method individually, so that the performance over all 100 patterns was best.
The baseline represents the performance of a network that independently of the input always returns the mean of the output patterns.}
\label{fig:adult_rand_sigmoid_true_online}
\end{center}
\end{figure} 
Again Hebbian-descent has a linear slope of forgetting while all the other methods each have a homogeneous error distribution. 
The plot for associating a set of patterns of the \emph{ADULT} dataset with another set of patterns of the \emph{ADULT} dataset is qualitatively similar and is thus not shown.
Figure~\ref{fig:adult_rand_sigmoid_true_online} (b), shows the reverse experiment where 100 patterns of the \emph{ADULT} dataset in the input have been associated with 100 binary random patterns (\emph{RAND}) in the output.
The error for all methods is rather large, which arises from a rather high pairwise correlation of the patterns in the \emph{ADULT} dataset. 
Associating two very similar patterns with two completely random output patterns is a rather difficult task in online learning as the chance of `overwriting' of associations is extremely high.
Again all methods have roughly the same error on the individual patterns except for Hebbian-descent, which allows to store at least the most recent patterns with high accuracy. 
This comes at the cost of older pattern's accuracy leading to a power-law like gradual forgetting. 
The optimal learning rate in both experiments was chosen, so that the performance is best over all patterns, but consistent with the previous experiments the results are almost the same when choosing it to be best for only the last pattern. Furthermore, without centering all methods perform significantly worse (data not shown).

To confirm empirically that Hebbian-descent is generally better on the most recent patterns we performed several experiments with various datasets and activation functions. The results for centered networks are shown in Table~\ref{tab:Hetero_online_centered_1} as well as in Table~\ref{tab:Hetero_online_centered_2} and \ref{tab:Hetero_online_centered_3} in Appendix~\ref{appendix:hebbian_descent_additional_results}. 
The numbers represent the average absolute error over the last 20 patterns followed by the performance over all patterns given in parenthesis, both also averaged over 10 trials.
The optimal learning rate was chosen, so that the performance on the last 20 patterns was best and among the four update rules the best result is highlighted in bold face. 
Across all combinations of datasets and activation functions Hebbian-descent and gradient descent perform significantly better than Hebb's rule and the covariance rule and the latter two are both not even better than baseline performance.
Furthermore, Hebbian-descent performs significantly better than gradient descent in most cases, which gets more significant as the activation function gets more nonlinear. 
As an extreme case we used the step-function, which guarantees a binary output of the network but is incompatible with gradient descent as the gradient is constant zero for such networks. 
Hebbian-descent however can deal with this type of activation functions and reaches good performances.
The linear networks, for which gradient descent and Hebbian-descent are equivalent, perform always worse than or similar to corresponding nonlinear networks.  
A nonlinearity is thus always beneficial and the sigmoid function has the best performance among the different activation functions even when the output data comes from a continuous domain such as \emph{ADULT$\,\,\rightarrow\,\,$RANDN} or \emph{CIFAR$\,\,\rightarrow\,\,$RANDN} for example (See Table~\ref{tab:Hetero_online_centered_3}).  
To emphasize the superiority of Hebbian-descent over gradient descent more clearly we plotted the results of Table~\ref{tab:Hetero_online_centered_1}, Table~\ref{tab:Hetero_online_centered_2}, and Table~\ref{tab:Hetero_online_centered_3} in a scatter plot shown in Figure~\ref{fig:heteroassociative_online_centered}. 
All points lie either roughly on the diagonal or clearly below it showing that if there is a significant difference between the two methods Hebbian-descent performs significantly better than gradient descent.
\begin{figure}[t]
\begin{center}
\subfigure[]{
\includegraphics[scale=0.415, trim=21 0 37 0, clip]{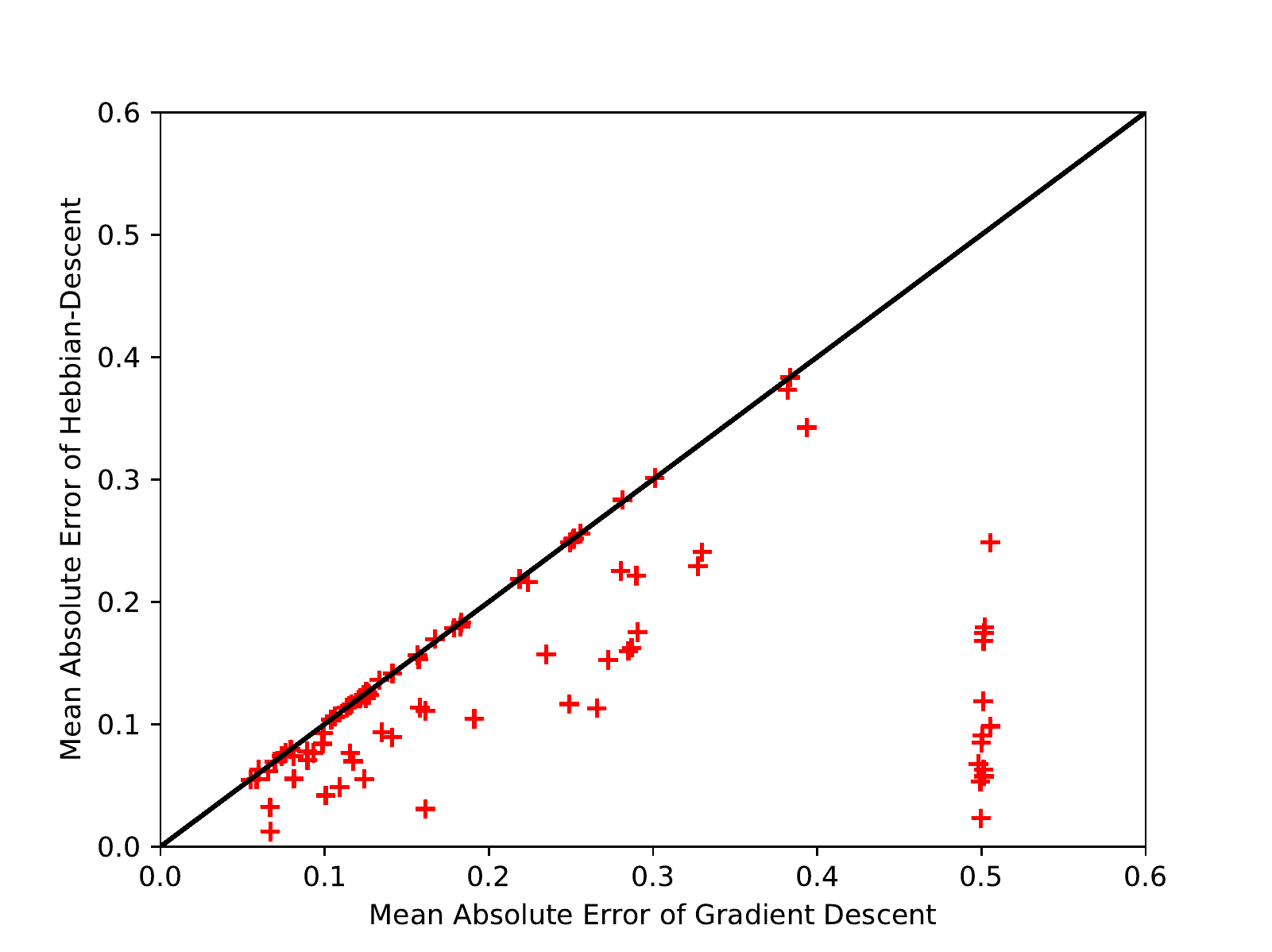}\label{fig:heteroassociative_online_centered}}
\subfigure[]{
\includegraphics[scale=0.415, trim=21 0 37 0, clip]{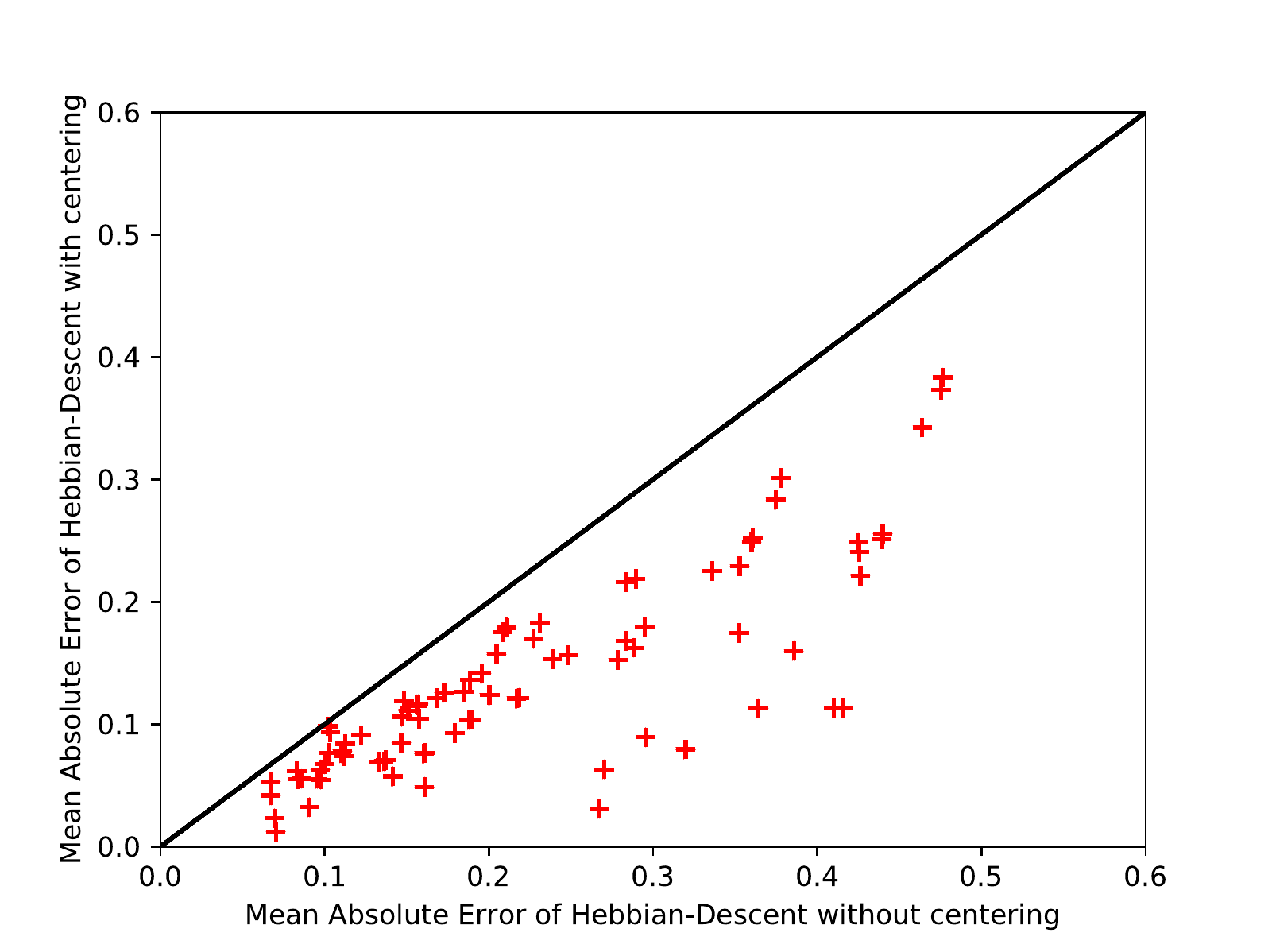}\label{fig:heteroassociative_online_hebbian_centered_vs_uncentered}}
%\subfigure[]{
%\includegraphics[scale=0.4, trim=21 0 37 0, clip]{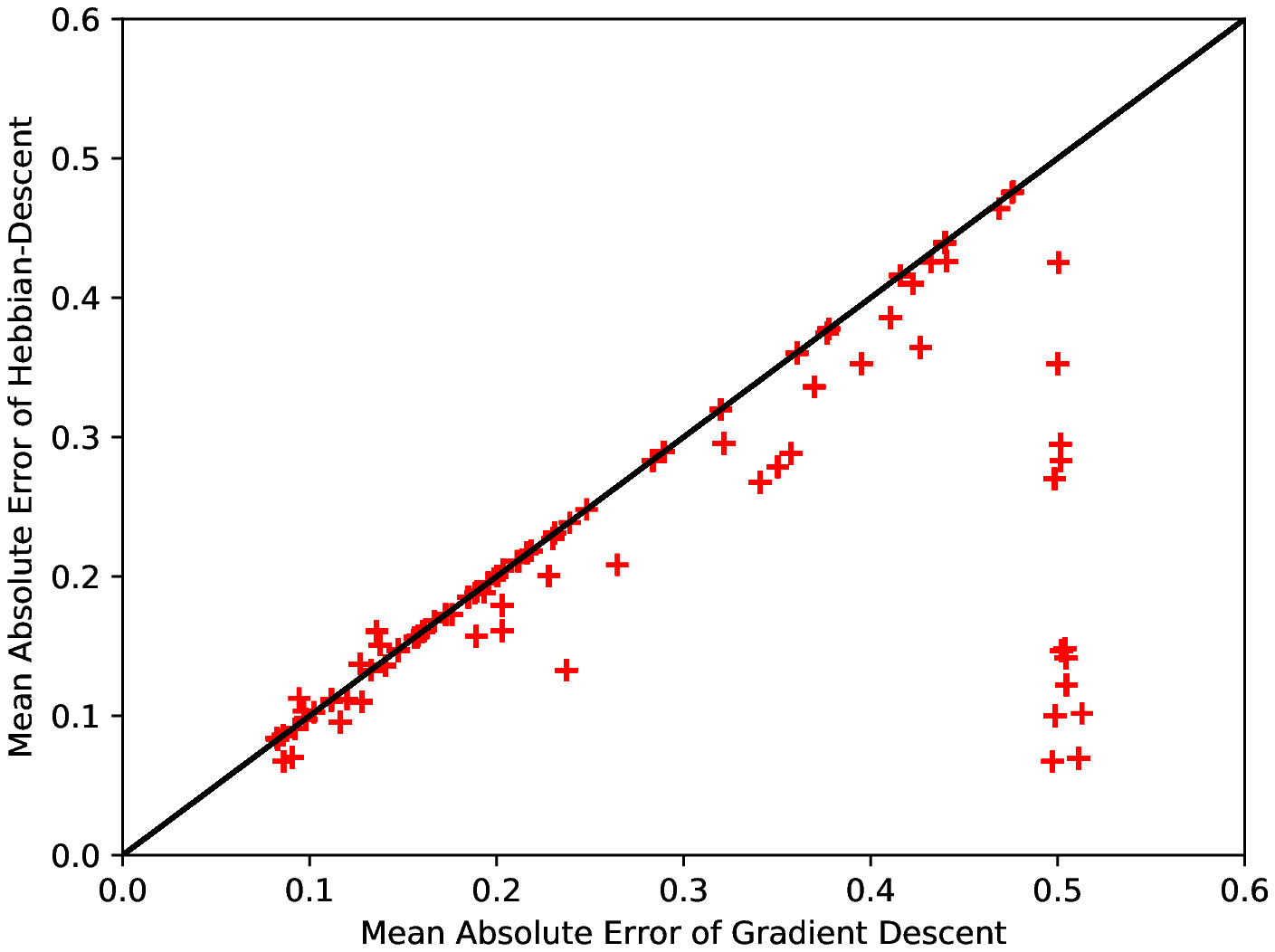}\label{tab:heteroassociative_online_uncentered}}
%\subfigure[]{
%\includegraphics[scale=0.4, trim=21 0 37 0, clip]{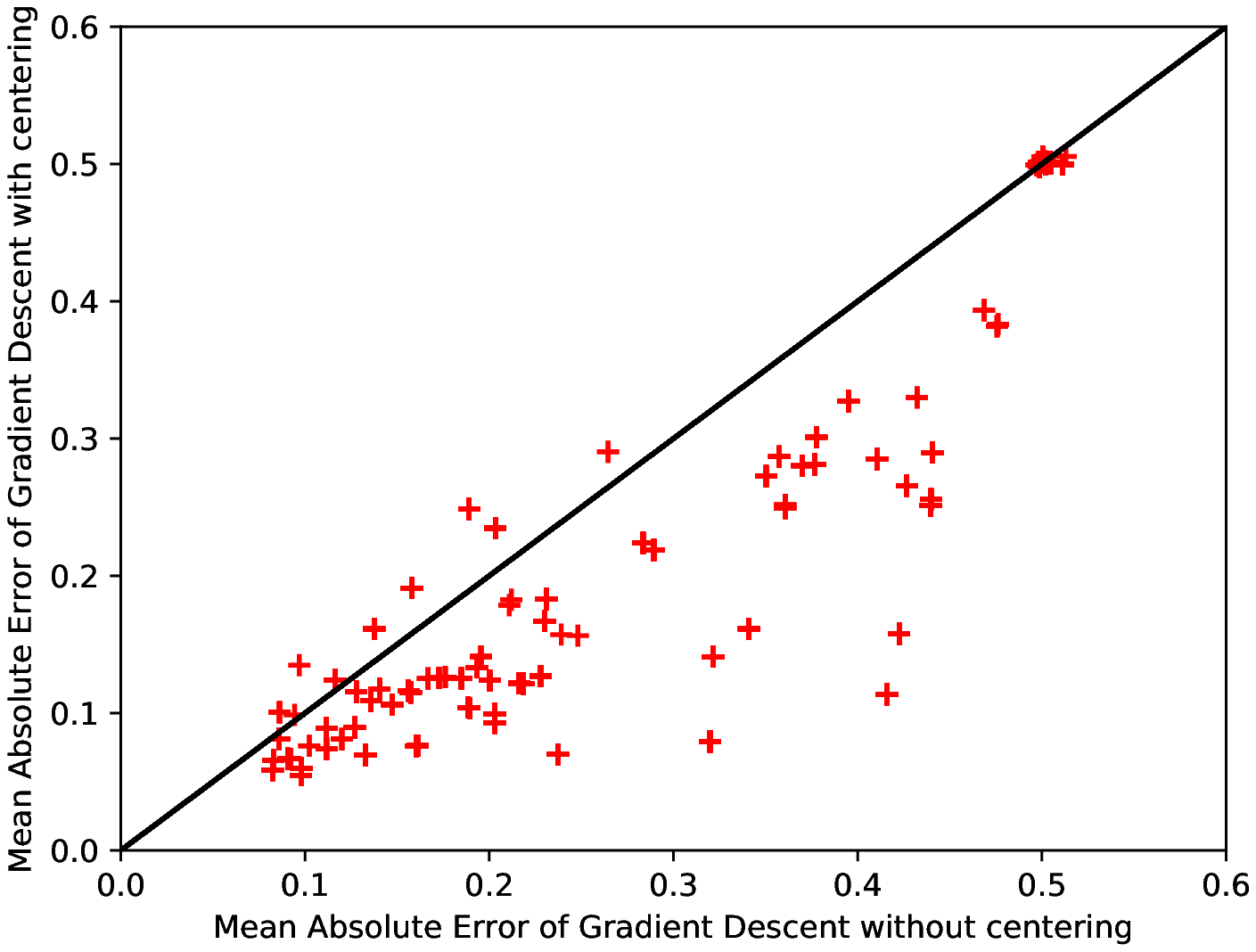}\label{tab:heteroassociative_online_gradient_centered_vs_uncentered}}
\caption{Comparison of online hetero-association of Hebbian-descent and gradient descent. Each cross represents the mean absolute error of the last 20 patterns averaged over 10 trials for one experiment. 
(a) Hebbian-descent versus gradient descent with centering. 
%(b) Hebbian-descent versus gradient descent without centering.
(b) Hebbian-descent with centering versus Hebbian-descent without centering.
%(d) gradient descent with centering versus gradient descent without centering. 
For detailed results compare Table~\ref{tab:Hetero_online_centered_1}, \ref{tab:Hetero_online_uncentered_1}, \ref{tab:Hetero_online_centered_2}, \ref{tab:Hetero_online_centered_3}, \ref{tab:Hetero_online_uncentered_2}, and \ref{tab:Hetero_online_uncentered_3}.}
\label{fig:heteroassociative_online_hebbian_scatter}
\end{center}
\end{figure} 

For comparison we performed the same experiments as before but without centering.
The results are shown in Table~\ref{tab:Hetero_online_uncentered_1} as well as in Table~\ref{tab:Hetero_online_uncentered_2} and \ref{tab:Hetero_online_uncentered_3} in Appendix~\ref{appendix:hebbian_descent_additional_results}, which clearly support our statement that centering is valuable for all methods.
Without centering Hebbian-descent looses its ability to store recent patterns significantly better than older ones, but in most cases it still performs significantly better than the other methods.  
We also emphasize the superiority of centered over uncentered networks by plotting the results for centered and uncentered Hebbian-descent given in Table~\ref{tab:Hetero_online_centered_1}, \ref{tab:Hetero_online_uncentered_1}, \ref{tab:Hetero_online_centered_2}, \ref{tab:Hetero_online_centered_3}, \ref{tab:Hetero_online_uncentered_2}, and \ref{tab:Hetero_online_uncentered_3} in a scatter plot shown in Figure~\ref{fig:heteroassociative_online_hebbian_centered_vs_uncentered}.
All points lie clearly below the diagonal showing that centering is always beneficial.

\begin{table}[htbp]
\setlength{\tabcolsep}{2pt}
\begin{center}
\begin{small}
\begin{sc}
\begin{tabular}{l@{\hskip 0.2in} r@{\hskip 0.02in} r@{\hskip 0.14in} r@{\hskip 0.02in} r@{\hskip 0.14in} r@{\hskip 0.02in} r@{\hskip 0.14in} r@{\hskip 0.02in} r }
\hline
\abovespace\belowspace
  $\vect \phi$ & \multicolumn{2}{c}{\hskip -0.16in Grad. Descent} &   \multicolumn{2}{c}{\hskip -0.16in Hebb. Descent} &  \multicolumn{2}{c}{\hskip -0.16in Hebb rule} &  \multicolumn{2}{c}{\hskip -0.16in Cov. Rule} \\ 
\hline & & & & & & & &\vspace{-0.3cm}\\ 
\multicolumn{3}{l}{{\emph{RAND$\,\,\rightarrow\,\,$ADULT (0.1229)}}}\\ 
 & & & & & & & &  \vspace{-0.35cm}\\ 
Linear  & \textbf{0.0793}  & (0.1829)  & \textbf{0.0793}  & (0.1829)  & 0.4856  & (0.4871)  & 0.4856  & (0.4871)  \\ 
ExpLin  & 0.1411  & (0.1979)  & \textbf{0.0892}  & (0.1692)  & 0.4179  & (0.4196)  & 0.4179  & (0.4196)  \\ 
Rectifier  & 0.0667  & (0.0760)  & \textbf{0.0321}  & (0.0553)  & 0.2401  & (0.2422)  & 0.2401  & (0.2422)  \\ 
Sigmoid  & 0.0668  & (0.0651)  & \textbf{0.0120}  & (0.0459)  & 0.3627  & (0.3622)  & 0.3627  & (0.3622)  \\ 
Step  & 0.4997  & (0.5008)  & \textbf{0.0230}  & (0.0680)  & 0.3624  & (0.3623)  & 0.3624  & (0.3623)  \\ 
\hline & & & & & & & &\vspace{-0.3cm}\\ 
\multicolumn{3}{l}{{\emph{RANDN$\,\,\rightarrow\,\,$CONNECT (0.1683)}}}\\ 
 & & & & & & & &  \vspace{-0.35cm}\\ 
Linear  & \textbf{0.1215}  & (0.1479)  & \textbf{0.1215}  & (0.1479)  & 0.3906  & (0.3910)  & 0.3906  & (0.3910)  \\ 
ExpLin  & 0.1218  & (0.1466)  & \textbf{0.1209}  & (0.1466)  & 0.3791  & (0.3792)  & 0.3791  & (0.3792)  \\ 
Rectifier  & 0.0992  & (0.1138)  & \textbf{0.0928}  & (0.1135)  & 0.2724  & (0.2710)  & 0.2724  & (0.2710)  \\ 
Sigmoid  & 0.1091  & (0.1099)  & \textbf{0.0487}  & (0.0728)  & 0.3199  & (0.3171)  & 0.3199  & (0.3171)  \\ 
Step  & 0.5014  & (0.5015)  & \textbf{0.0573}  & (0.0839)  & 0.3176  & (0.3173)  & 0.3176  & (0.3173)  \\ 
\hline & & & & & & & &\vspace{-0.3cm}\\ 
\multicolumn{3}{l}{{\emph{ADULT$\,\,\rightarrow\,\,$MNIST (0.1473)}}}\\ 
 & & & & & & & &  \vspace{-0.35cm}\\ 
Linear  & \textbf{0.1063}  & (0.1531)  & \textbf{0.1063}  & (0.1531)  & 0.2215  & (0.2190)  & 0.2215  & (0.2190)  \\ 
ExpLin  & 0.1061  & (0.1594)  & \textbf{0.1060}  & (0.1521)  & 0.2172  & (0.2147)  & 0.2172  & (0.2147)  \\ 
Rectifier  & 0.0891  & (0.1134)  & \textbf{0.0778}  & (0.1217)  & 0.1531  & (0.1519)  & 0.1531  & (0.1519)  \\ 
Sigmoid  & 0.1155  & (0.1265)  & \textbf{0.0762}  & (0.1247)  & 0.4252  & (0.4262)  & 0.4252  & (0.4262)  \\ 
Step  & 0.5006  & (0.4998)  & \textbf{0.0907}  & (0.1325)  & 0.4254  & (0.4259)  & 0.4254  & (0.4259)  \\ 
\hline & & & & & & & &\vspace{-0.3cm}\\ 
\multicolumn{3}{l}{{\emph{CONNECT$\,\,\rightarrow\,\,$CIFAR (0.167)}}}\\ 
 & & & & & & & &  \vspace{-0.35cm}\\ 
Linear  & \textbf{0.1238}  & (0.1803)  & \textbf{0.1238}  & (0.1803)  & 0.4981  & (0.5019)  & 0.4981  & (0.5019)  \\ 
ExpLin  & \textbf{0.1238}  & (0.1803)  & \textbf{0.1238}  & (0.1802)  & 0.4887  & (0.4913)  & 0.4887  & (0.4913)  \\ 
Rectifier  & 0.1270  & (0.1886)  & \textbf{0.1237}  & (0.1802)  & 0.3590  & (0.3622)  & 0.3590  & (0.3622)  \\ 
Sigmoid  & \textbf{0.1253}  & (0.1682)  & 0.1264  & (0.1974)  & 0.1770  & (0.1749)  & 0.1770  & (0.1749)  \\ 
\hline & & & & & & & &\vspace{-0.3cm}\\ 
\multicolumn{3}{l}{{\emph{MNIST$\,\,\rightarrow\,\,$CONNECT (0.1683)}}}\\ 
 & & & & & & & &  \vspace{-0.35cm}\\ 
Linear  & \textbf{0.1830}  & (0.2659)  & \textbf{0.1830}  & (0.2659)  & 0.5071  & (0.5070)  & 0.5071  & (0.5070)  \\ 
ExpLin  & \textbf{0.1671}  & (0.2435)  & 0.1695  & (0.2507)  & 0.4788  & (0.4785)  & 0.4788  & (0.4785)  \\ 
Rectifier  & 0.1911  & (0.2339)  & \textbf{0.1042}  & (0.1668)  & 0.3484  & (0.3479)  & 0.3484  & (0.3479)  \\ 
Sigmoid  & 0.1242  & (0.1434)  & \textbf{0.0552}  & (0.1218)  & 0.3925  & (0.3937)  & 0.3925  & (0.3937)  \\ 
Step  & 0.4982  & (0.4992)  & \textbf{0.0672}  & (0.1329)  & 0.3926  & (0.3932)  & 0.3926  & (0.3932)  \\ 
\hline & & & & & & & &\vspace{-0.3cm}\\ 
\multicolumn{3}{l}{{\emph{CIFAR$\,\,\rightarrow\,\,$CONNECT (0.1683)}}}\\ 
 & & & & & & & &  \vspace{-0.35cm}\\ 
Linear  & \textbf{0.2187}  & (0.2576)  & \textbf{0.2187}  & (0.2576)  & 0.4978  & (0.4939)  & 0.4978  & (0.4939)  \\ 
ExpLin  & 0.2239  & (0.2620)  & \textbf{0.2161}  & (0.2536)  & 0.4757  & (0.4724)  & 0.4757  & (0.4724)  \\ 
Rectifier  & 0.2348  & (0.2636)  & \textbf{0.1570}  & (0.1964)  & 0.3663  & (0.3628)  & 0.3663  & (0.3628)  \\ 
Sigmoid  & 0.1614  & (0.1852)  & \textbf{0.1108}  & (0.1633)  & 0.4431  & (0.4377)  & 0.4431  & (0.4377)  \\ 
Step  & 0.5009  & (0.4998)  & \textbf{0.1186}  & (0.1707)  & 0.4436  & (0.4379)  & 0.4436  & (0.4379)  \\ 
\hline & & & & & & & &  \vspace{-0.85cm}
\end{tabular}
\end{sc}
\end{small}
\end{center}
\caption{Online learning performance of hetero-association with centering for various activation functions and datasets. 
The task is to associate 100 patterns of one dataset with 100 patterns of another dataset one after the other. 
Each experiment was repeated 10 times and the optimal learning rate was determined via grid search for each method separately, so that the performance on the last 20 patterns was best.
The first number in each entry shows the MAE for the last 20 patterns followed by the MAE for all patterns in parenthesis averaged over 10 trials. The best result is indicated in bold.
The name of the output dataset is followed by the corresponding baseline performance in parenthesis. See Table~\ref{tab:Hetero_online_centered_2} and \ref{tab:Hetero_online_centered_3} in Appendix~\ref{appendix:hebbian_descent_additional_results} for more results. 
} 
\label{tab:Hetero_online_centered_1}
\end{table}
\begin{table}[htbp]
\setlength{\tabcolsep}{2pt}
\begin{center}
\begin{small}
\begin{sc}
\begin{tabular}{l@{\hskip 0.2in} r@{\hskip 0.02in} r@{\hskip 0.14in} r@{\hskip 0.02in} r@{\hskip 0.14in} r@{\hskip 0.02in} r@{\hskip 0.14in} r@{\hskip 0.02in} r }
\hline
\abovespace\belowspace
  $\vect \phi$ & \multicolumn{2}{c}{\hskip -0.16in Grad. Descent} &   \multicolumn{2}{c}{\hskip -0.16in Hebb. Descent} &  \multicolumn{2}{c}{\hskip -0.16in Hebb rule} &  \multicolumn{2}{c}{\hskip -0.16in Cov. Rule} \\ 
\hline & & & & & & & &\vspace{-0.3cm}\\ 
\multicolumn{3}{l}{{\emph{RAND$\,\,\rightarrow\,\,$ADULT (0.1229)}}}\\ 
 & & & & & & & &  \vspace{-0.35cm}\\ 
Linear  & \textbf{0.3198}  & (0.3493)  & \textbf{0.3198}  & (0.3493)  & 0.6626  & (0.6630)  & 0.6701  & (0.6692)  \\ 
ExpLin  & 0.3216  & (0.3489)  & \textbf{0.2955}  & (0.3244)  & 0.5718  & (0.5723)  & 0.5593  & (0.5594)  \\ 
Rectifier  & 0.0920  & (0.0955)  & \textbf{0.0907}  & (0.1044)  & 0.3765  & (0.3785)  & 0.3473  & (0.3488)  \\ 
Sigmoid  & 0.0909  & (0.0947)  & \textbf{0.0702}  & (0.0836)  & 0.5044  & (0.5053)  & 0.4061  & (0.4048)  \\ 
Step  & 0.5112  & (0.5128)  & \textbf{0.0696}  & (0.0846)  & 0.5120  & (0.5136)  & 0.4076  & (0.4082)  \\ 
\hline & & & & & & & &\vspace{-0.3cm}\\ 
\multicolumn{3}{l}{{\emph{RANDN$\,\,\rightarrow\,\,$CONNECT (0.1683)}}}\\ 
 & & & & & & & &  \vspace{-0.35cm}\\ 
Linear  & \textbf{0.2184}  & (0.2250)  & \textbf{0.2184}  & (0.2250)  & 0.5023  & (0.5012)  & 0.6292  & (0.6282)  \\ 
ExpLin  & \textbf{0.2162}  & (0.2226)  & 0.2168  & (0.2233)  & 0.4709  & (0.4698)  & 0.5728  & (0.5716)  \\ 
Rectifier  & 0.2030  & (0.2064)  & \textbf{0.1792}  & (0.1856)  & 0.3728  & (0.3721)  & 0.4070  & (0.4054)  \\ 
Sigmoid  & \textbf{0.1359}  & (0.1417)  & 0.1607  & (0.1658)  & 0.4325  & (0.4324)  & 0.4456  & (0.4422)  \\ 
Step  & 0.5044  & (0.5057)  & \textbf{0.1415}  & (0.1487)  & 0.4361  & (0.4360)  & 0.4393  & (0.4379)  \\ 
\hline & & & & & & & &\vspace{-0.3cm}\\ 
\multicolumn{3}{l}{{\emph{ADULT$\,\,\rightarrow\,\,$MNIST (0.1473)}}}\\ 
 & & & & & & & &  \vspace{-0.35cm}\\ 
Linear  & \textbf{0.1474}  & (0.1705)  & \textbf{0.1474}  & (0.1705)  & 0.2336  & (0.2321)  & 0.2525  & (0.2509)  \\ 
ExpLin  & 0.1473  & (0.1698)  & \textbf{0.1469}  & (0.1698)  & 0.2293  & (0.2278)  & 0.2450  & (0.2434)  \\ 
Rectifier  & 0.1117  & (0.1237)  & \textbf{0.1107}  & (0.1354)  & 0.1857  & (0.1836)  & 0.1677  & (0.1665)  \\ 
Sigmoid  & 0.1278  & (0.1380)  & \textbf{0.1101}  & (0.1346)  & 0.4725  & (0.4729)  & 0.4245  & (0.4253)  \\ 
Step  & 0.5047  & (0.5046)  & \textbf{0.1221}  & (0.1462)  & 0.5056  & (0.5056)  & 0.4272  & (0.4274)  \\ 
\hline & & & & & & & &\vspace{-0.3cm}\\ 
\multicolumn{3}{l}{{\emph{CONNECT$\,\,\rightarrow\,\,$CIFAR (0.167)}}}\\ 
 & & & & & & & &  \vspace{-0.35cm}\\ 
Linear  & \textbf{0.2003}  & (0.2126)  & \textbf{0.2003}  & (0.2126)  & 0.3065  & (0.3038)  & 0.5149  & (0.5183)  \\ 
ExpLin  & 0.2004  & (0.2127)  & \textbf{0.2003}  & (0.2126)  & 0.3064  & (0.3038)  & 0.4980  & (0.5010)  \\ 
Rectifier  & 0.2279  & (0.2403)  & \textbf{0.2004}  & (0.2127)  & 0.3039  & (0.3048)  & 0.3655  & (0.3689)  \\ 
Sigmoid  & 0.1848  & (0.1942)  & 0.1851  & (0.1934)  & 0.2054  & (0.2032)  & \textbf{0.1831}  & (0.1810)  \\ 
\hline & & & & & & & &\vspace{-0.3cm}\\ 
\multicolumn{3}{l}{{\emph{MNIST$\,\,\rightarrow\,\,$CONNECT (0.1683)}}}\\ 
 & & & & & & & &  \vspace{-0.35cm}\\ 
Linear  & \textbf{0.2311}  & (0.2627)  & \textbf{0.2311}  & (0.2627)  & 0.4390  & (0.4400)  & 0.5650  & (0.5660)  \\ 
ExpLin  & 0.2301  & (0.2651)  & \textbf{0.2272}  & (0.2649)  & 0.4162  & (0.4168)  & 0.5241  & (0.5243)  \\ 
Rectifier  & 0.1580  & (0.1749)  & \textbf{0.1574}  & (0.1843)  & 0.3345  & (0.3349)  & 0.3794  & (0.3789)  \\ 
Sigmoid  & 0.1163  & (0.1217)  & \textbf{0.0954}  & (0.1305)  & 0.4333  & (0.4333)  & 0.4205  & (0.4215)  \\ 
Step  & 0.4986  & (0.4979)  & \textbf{0.0999}  & (0.1305)  & 0.4305  & (0.4292)  & 0.4192  & (0.4194)  \\ 
\hline & & & & & & & &\vspace{-0.3cm}\\ 
\multicolumn{3}{l}{{\emph{CIFAR$\,\,\rightarrow\,\,$CONNECT (0.1683)}}}\\ 
 & & & & & & & &  \vspace{-0.35cm}\\ 
Linear  & \textbf{0.2894}  & (0.3001)  & \textbf{0.2894}  & (0.3001)  & 0.6556  & (0.6441)  & 0.7530  & (0.7444)  \\ 
ExpLin  & 0.2835  & (0.2941)  & \textbf{0.2832}  & (0.2935)  & 0.5953  & (0.5850)  & 0.6538  & (0.6472)  \\ 
Rectifier  & \textbf{0.2033}  & (0.2092)  & 0.2046  & (0.2127)  & 0.4514  & (0.4450)  & 0.4582  & (0.4536)  \\ 
Sigmoid  & \textbf{0.1376}  & (0.1445)  & 0.1509  & (0.1620)  & 0.4382  & (0.4381)  & 0.4896  & (0.4897)  \\ 
Step  & 0.5039  & (0.5028)  & \textbf{0.1483}  & (0.1610)  & 0.4355  & (0.4343)  & 0.4854  & (0.4869)  \\ 
\hline & & & & & & & &  \vspace{-0.85cm}\\
\end{tabular}
\end{sc}
\end{small}
\end{center}
\caption{The same experiments as in Table~\ref{tab:Hetero_online_centered_1} but without centering. See Table~\ref{tab:Hetero_online_uncentered_2} and \ref{tab:Hetero_online_uncentered_3} in Appendix~\ref{appendix:hebbian_descent_additional_results} for more results.
\vspace{2.95cm}
} 
\label{tab:Hetero_online_uncentered_1}
\end{table}

\subsubsection{On the Advantage of Saturating Activation Functions in Online Learning}\label{sec:on_the_advantage_of_saturating_activation_functions_in_online_learning}

The advantage of activation functions with restricted output values is that their values cannot 'overshoot', meaning that even extreme weight changes will lead to reasonable output values, which seems to be crucial for online learning. 
This can best be seen from the sensitivity of the network with respect to the learning rate. Figure~\ref{fig:learning_rate_compare_lin_sig} shows the performance of the last 20 patterns for (a) linear networks and (b) networks with sigmoid activation functions that are trained to associate 100 binary random patterns (\emph{RAND}) with another 100 binary random patterns (\emph{RAND}) using different learning rates.
\begin{figure}[t]
\begin{center}
\subfigure[]{
\includegraphics[scale=0.425, trim=21 0 39 0, clip]{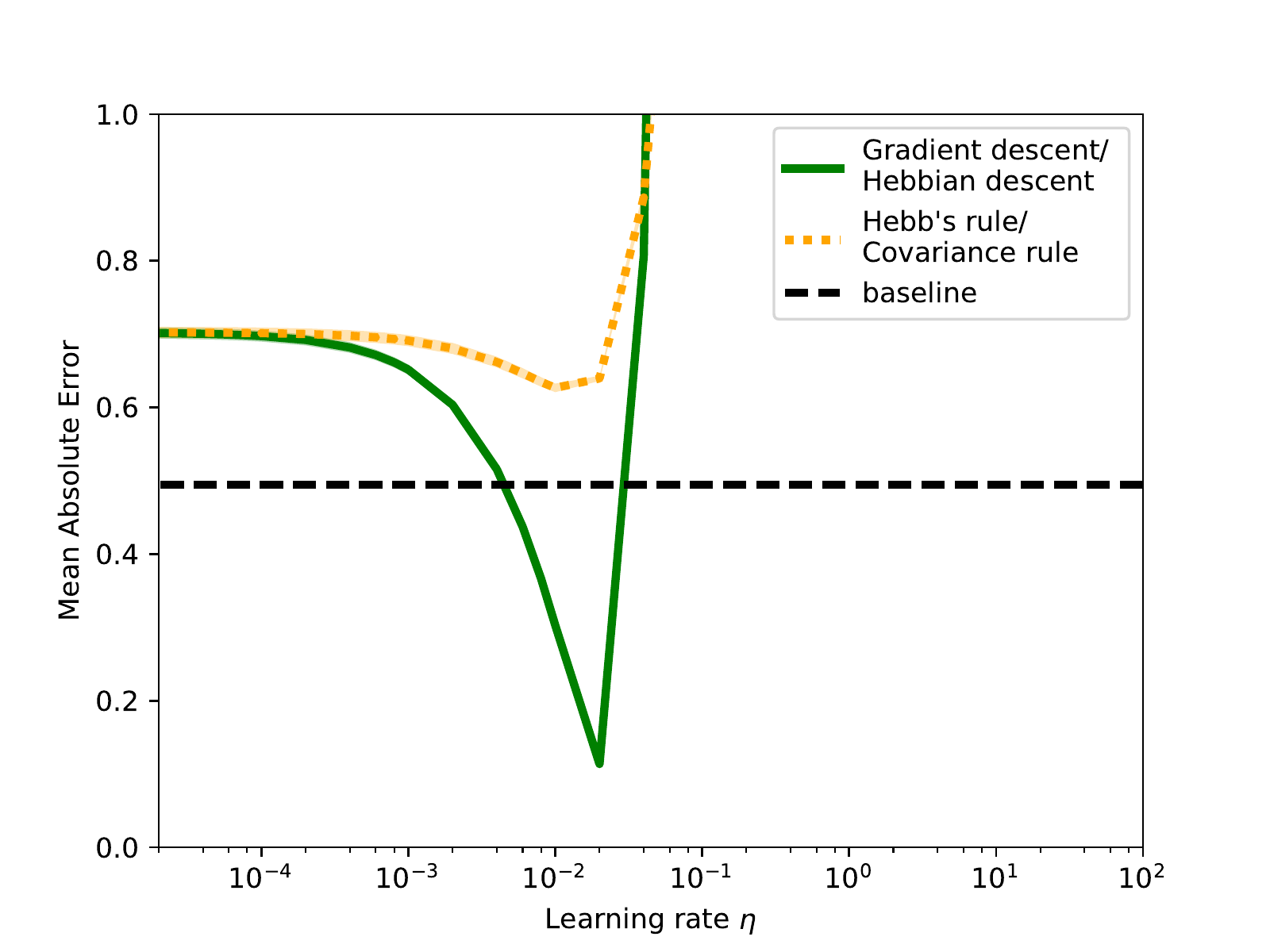}}
\subfigure[]{
\includegraphics[scale=0.425, trim=34 0 38 0, clip]{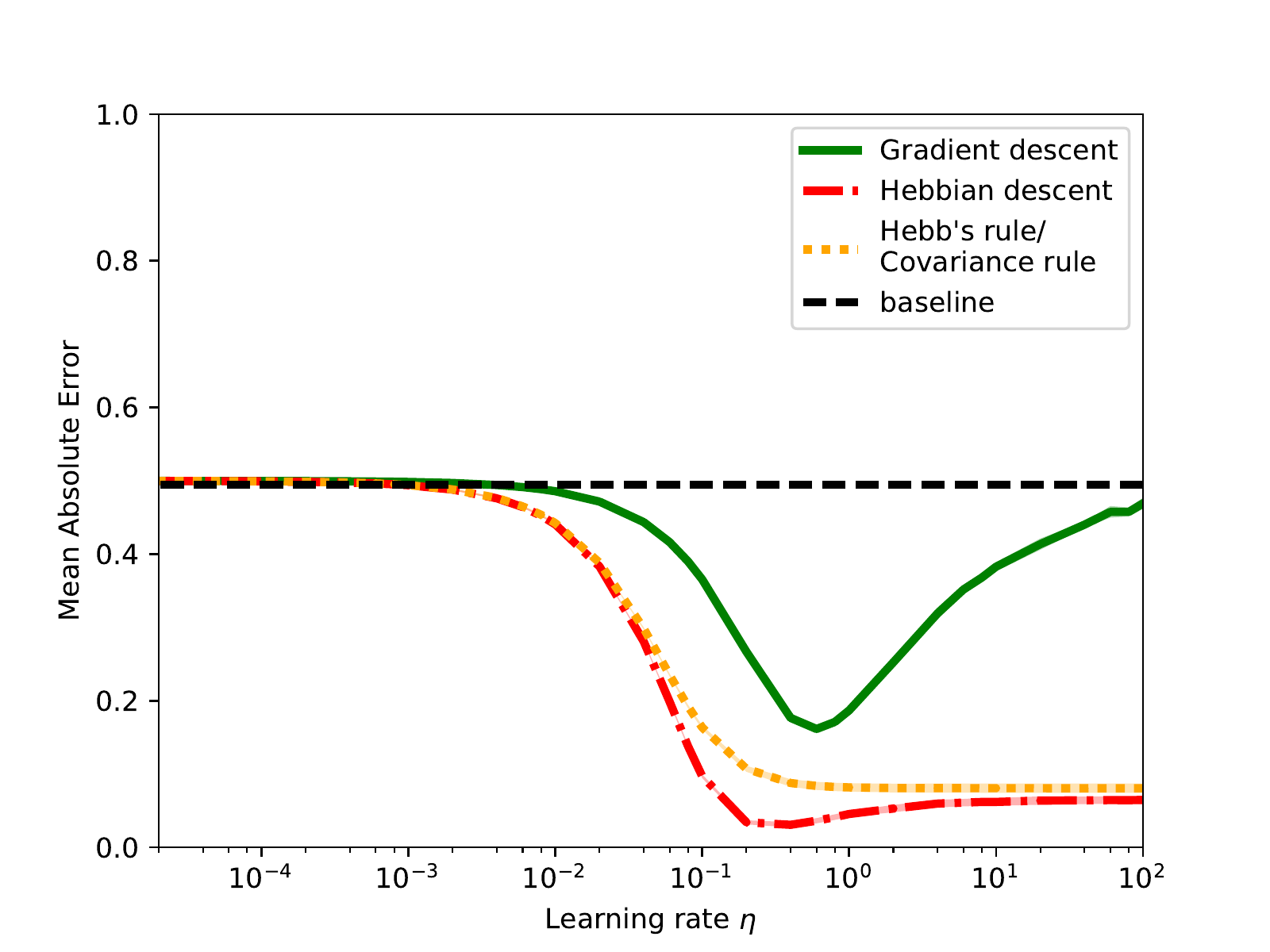}}
\caption{Online learning performance of the four different update rules with respect to the learning rate. 
The centered networks have been trained to associate 100 binary random patterns (\emph{RAND}) with another 100 binary random patterns (\emph{RAND}) with (a) linear, and (b) sigmoid activation function. 
The curves represent the average MAE for the last 20 patterns for the corresponding learning rate averaged over 10 trials. 
The standard deviation is also shown, but the values are too small to be visible without zooming in.
The baseline represents the performance of a network that independently of the input always returns the mean of the output patterns.}
\label{fig:learning_rate_compare_lin_sig}
\end{center}
\end{figure}
In the linear case it is very crucial for all methods to choose the right learning rate as can be seen from the very sharp optimum. 
If the learning rate is chosen to big the error increases exponentially. Also notice that Hebb's rule / covariance rule does not even get close to baseline performance.
When using a sigmoid activation function, however, all methods perform significantly better than baseline. 
While gradient descent has still a rather sharp optimum, the performance of the other methods do not change significantly for a learning rate above a certain threshold.
This is a very useful property as one can simply select a large learning rate, instead of performing a grid search, to achieve a performance close to optimum.
Qualitatively the same picture can be seen for the other datasets used in the experiments for Table~\ref{tab:Hetero_online_centered_1}.

\begin{table}[htbp]
\setlength{\tabcolsep}{2pt}
\begin{center}
\begin{small}
\begin{sc}
%\begin{tabular}{l@{\hskip 0.17in} r@{\hskip 0.02in} r@{\hskip 0.1in} r@{\hskip 0.01in} r@{\hskip 0.17in} r@{\hskip 0.01in} r@{\hskip 0.1in} r@{\hskip 0.01in} r@{\hskip 0.02in} }
\begin{tabular}{l@{\hskip 0.2in} r@{\hskip 0.02in} r@{\hskip 0.14in} r@{\hskip 0.02in} r@{\hskip 0.14in} r@{\hskip 0.02in} r@{\hskip 0.14in} r@{\hskip 0.02in} r }
\hline
\abovespace\belowspace
      & \multicolumn{4}{c}{\hskip -0.16in Gradient Descent} &   \multicolumn{4}{c}{\hskip -0.16in Hebbian-Descent} \\ 
  $\vect \phi$ & \multicolumn{2}{c}{\hskip -0.16in \footnotesize{fixed offsets}} &   \multicolumn{2}{c}{\hskip -0.16in  \footnotesize{adaptive offsets}} &  \multicolumn{2}{c}{\hskip -0.16in \footnotesize{fixed offsets}} &  \multicolumn{2}{c}{\hskip -0.05in \footnotesize{adaptive offsets}} \\ 
\hline & & & & & & & &\vspace{-0.3cm}\\ 
\multicolumn{3}{l}{{\emph{RAND$\,\,\rightarrow\,\,$ADULT (0.1229)}}}\\ 
 & & & & & & & &  \vspace{-0.35cm}\\ 
Linear  & \textbf{0.0793}  & (0.1829)  & 0.0820  & (0.1901)  & \textbf{0.0793}  & (0.1829)  & 0.0820  & (0.1901)  \\ 
Rectifier  & 0.0667  & (0.0760)  & 0.0639  & (0.0709)  & 0.0321  & (0.0553)  & \textbf{0.0285}  & (0.0516)  \\ 
ExpLin  & 0.1411  & (0.1979)  & 0.1475  & (0.2042)  & 0.0892  & (0.1692)  & \textbf{0.0877}  & (0.1734)  \\ 
Sigmoid  & 0.0668  & (0.0651)  & 0.0706  & (0.0625)  & 0.0120  & (0.0459)  & \textbf{0.0103}  & (0.0420)  \\ 
Step  & 0.4997  & (0.5008)  & 0.5026  & (0.5018)  & 0.0230  & (0.0680)  & \textbf{0.0157}  & (0.0450)  \\ 
\hline & & & & & & & &\vspace{-0.3cm}\\ 
\multicolumn{3}{l}{{\emph{RANDN$\,\,\rightarrow\,\,$CONNECT (0.1683)}}}\\ 
 & & & & & & & &  \vspace{-0.35cm}\\ 
Linear  & 0.1215  & (0.1479)  & \textbf{0.1166}  & (0.1450)  & 0.1215  & (0.1479)  & \textbf{0.1166}  & (0.1450)  \\ 
Rectifier  & 0.0992  & (0.1138)  & 0.0972  & (0.1122)  & 0.0928  & (0.1135)  & \textbf{0.0888}  & (0.1103)  \\ 
ExpLin  & 0.1218  & (0.1466)  & 0.1172  & (0.1437)  & 0.1209  & (0.1466)  & \textbf{0.1161}  & (0.1437)  \\ 
Sigmoid  & 0.1091  & (0.1099)  & 0.1082  & (0.1098)  & 0.0487  & (0.0728)  & \textbf{0.0442}  & (0.0707)  \\ 
Step  & 0.5014  & (0.5015)  & 0.5006  & (0.5008)  & 0.0573  & (0.0839)  & \textbf{0.0516}  & (0.0780)  \\ 
\hline & & & & & & & &\vspace{-0.3cm}\\ 
\multicolumn{3}{l}{{\emph{ADULT$\,\,\rightarrow\,\,$MNIST (0.1473)}}}\\ 
 & & & & & & & &  \vspace{-0.35cm}\\ 
Linear  & 0.1063  & (0.1531)  & \textbf{0.1059}  & (0.1577)  & 0.1063  & (0.1531)  & \textbf{0.1059}  & (0.1577)  \\ 
Rectifier  & 0.0891  & (0.1134)  & 0.1137  & (0.1236)  & 0.0778  & (0.1217)  & \textbf{0.0761}  & (0.1197)  \\ 
ExpLin  & 0.1061  & (0.1594)  & 0.1053  & (0.1538)  & 0.1060  & (0.1521)  & \textbf{0.1050}  & (0.1552)  \\ 
Sigmoid  & 0.1155  & (0.1265)  & 0.1219  & (0.1317)  & 0.0762  & (0.1247)  & \textbf{0.0706}  & (0.1182)  \\ 
Step  & 0.5006  & (0.4998)  & 0.4970  & (0.4967)  & 0.0907  & (0.1325)  & \textbf{0.0883}  & (0.1293)  \\ 
\hline & & & & & & & &\vspace{-0.3cm}\\ 
\multicolumn{3}{l}{{\emph{CONNECT$\,\,\rightarrow\,\,$CIFAR (0.167)}}}\\ 
 & & & & & & & &  \vspace{-0.35cm}\\ 
Linear  & 0.1238  & (0.1803)  & \textbf{0.1204}  & (0.1807)  & 0.1238  & (0.1803)  & \textbf{0.1204}  & (0.1807)  \\ 
Rectifier  & 0.1270  & (0.1886)  & 0.1514  & (0.2094)  & 0.1237  & (0.1802)  & \textbf{0.1205}  & (0.1803)  \\ 
ExpLin  & 0.1238  & (0.1803)  & \textbf{0.1204}  & (0.1805)  & 0.1238  & (0.1802)  & \textbf{0.1204}  & (0.1807)  \\ 
Sigmoid  & 0.1253  & (0.1682)  & \textbf{0.1236}  & (0.1698)  & 0.1264  & (0.1974)  & 0.1261  & (0.1679)  \\ 
\hline & & & & & & & &\vspace{-0.3cm}\\ 
\multicolumn{3}{l}{{\emph{MNIST$\,\,\rightarrow\,\,$CONNECT (0.1683)}}}\\ 
 & & & & & & & &  \vspace{-0.35cm}\\ 
Linear  & 0.1830  & (0.2659)  & \textbf{0.1522}  & (0.2053)  & 0.1830  & (0.2659)  & \textbf{0.1522}  & (0.2053)  \\ 
Rectifier  & 0.1911  & (0.2339)  & 0.1594  & (0.1751)  & 0.1042  & (0.1668)  & \textbf{0.0927}  & (0.1511)  \\ 
ExpLin  & 0.1671  & (0.2435)  & 0.1584  & (0.2045)  & 0.1695  & (0.2507)  & \textbf{0.1520}  & (0.2019)  \\ 
Sigmoid  & 0.1242  & (0.1434)  & 0.1321  & (0.1378)  & 0.0552  & (0.1218)  & \textbf{0.0441}  & (0.0955)  \\ 
Step  & 0.4982  & (0.4992)  & 0.4968  & (0.4971)  & 0.0672  & (0.1329)  & \textbf{0.0529}  & (0.1010)  \\ 
\hline & & & & & & & &\vspace{-0.3cm}\\ 
\multicolumn{3}{l}{{\emph{CIFAR$\,\,\rightarrow\,\,$CONNECT (0.1683)}}}\\ 
 & & & & & & & &  \vspace{-0.35cm}\\ 
Linear  & 0.2187  & (0.2576)  & \textbf{0.1783}  & (0.2184)  & 0.2187  & (0.2576)  & \textbf{0.1783}  & (0.2184)  \\ 
Rectifier  & 0.2348  & (0.2636)  & 0.1742  & (0.2013)  & 0.1570  & (0.1964)  & \textbf{0.1311}  & (0.1608)  \\ 
ExpLin  & 0.2239  & (0.2620)  & 0.1774  & (0.2147)  & 0.2161  & (0.2536)  & \textbf{0.1757}  & (0.2143)  \\ 
Sigmoid  & 0.1614  & (0.1852)  & 0.1208  & (0.1302)  & 0.1108  & (0.1633)  & \textbf{0.0811}  & (0.1247)  \\ 
Step  & 0.5009  & (0.4998)  & 0.5021  & (0.5013)  & 0.1186  & (0.1707)  & \textbf{0.0877}  & (0.1286)  \\ 
\end{tabular}
\end{sc}
\end{small}
\end{center}
\caption{The same experiments as in Table~\ref{tab:Hetero_online_centered_1} but with adapted offset values. The offset values were initialized to 0.5 and updated with a shifting factor of 0.05 after each parameter update.
} 
\label{tab:Hetero_online_centered_shift}
\end{table}

\subsubsection{Hetero-Associative Online Learning with Adaptive Offsets}

In the previous experiments the offsets were fixed to the data mean, which was necessary for a fair comparison of Hebb's rule and the covariance rule with gradient descent and Hebbian-descent.
We now want to show that slowly updated offsets lead to a similar or often even slightly better performance for Hebbian-descent than using fixed offsets.
We therefore performed the same experiments as performed for Table~\ref{tab:Hetero_online_uncentered_1}, \emph{i.e.}\ online hetero-association with gradient descent and Hebbian-descent in centered networks but with updated offsets. 
The offset values were initialized to 0.5 and updated by an exponentially moving average with a shifting factor of 0.05 after each parameter update.
The shifting factor should be chosen small enough, so that the offsets represent the temporary mean over sufficiently  many past time steps and large enough, so that the offsets can adapt quickly to changes in the temporary mean.
We found empirically that values between 0.01\,-\,0.05 work well.
The results are shown in Table~\ref{tab:Hetero_online_centered_shift} illustrating that a similar or slightly better performance is achieved when updating the offsets.
Figure~\ref{fig:heteroassociative_online_updated_offsets_scatter} visualizes the effect of updated offsets compared to fixed offsets. 
\begin{figure}[t]
\begin{center}
\subfigure[]{
\includegraphics[scale=0.415, trim=21 0 37 0, clip]{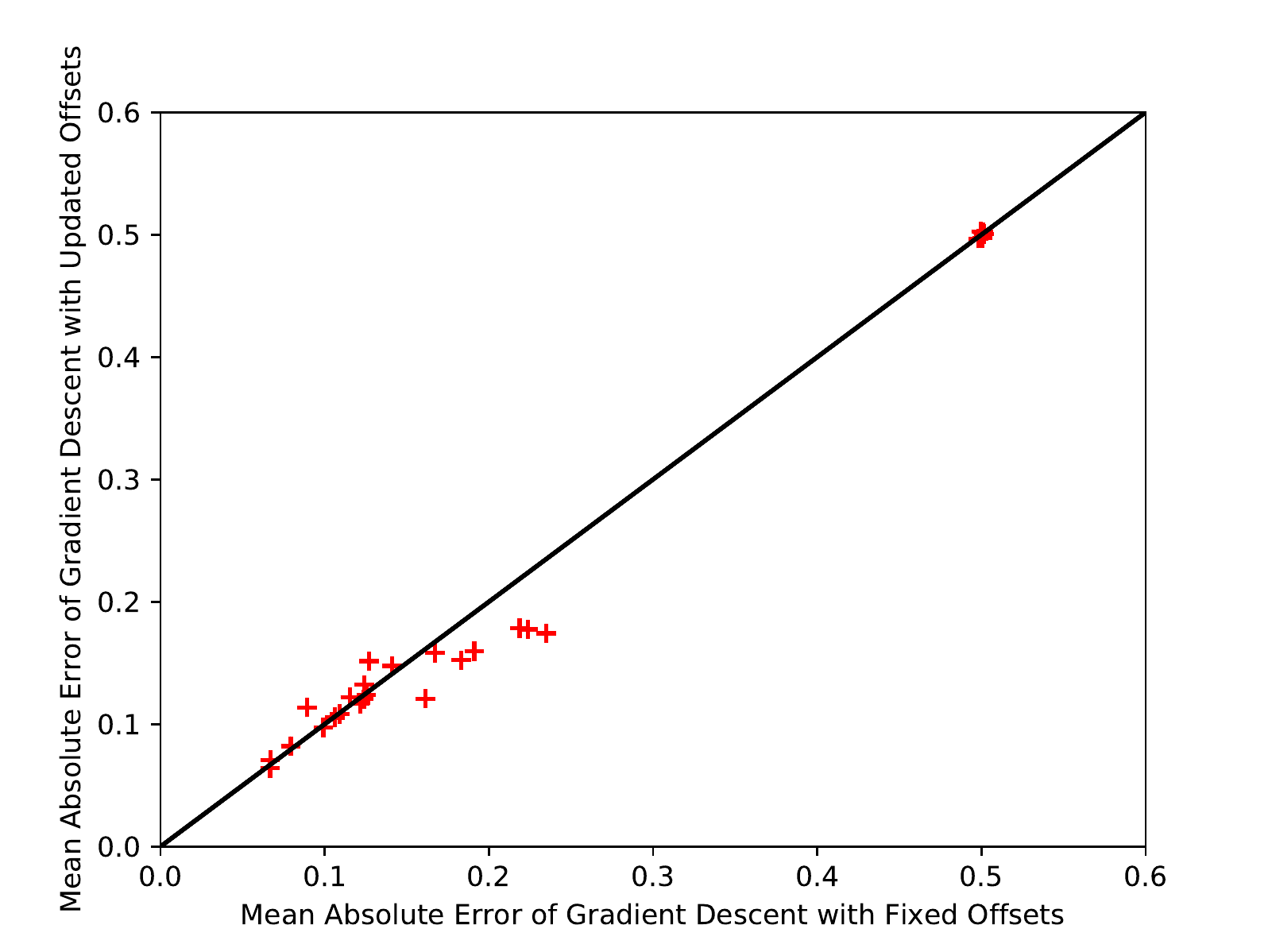}\label{fig:heteroassociative_updated_offsets_gradient_descent}}
\subfigure[]{
\includegraphics[scale=0.415, trim=21 0 37 0, clip]{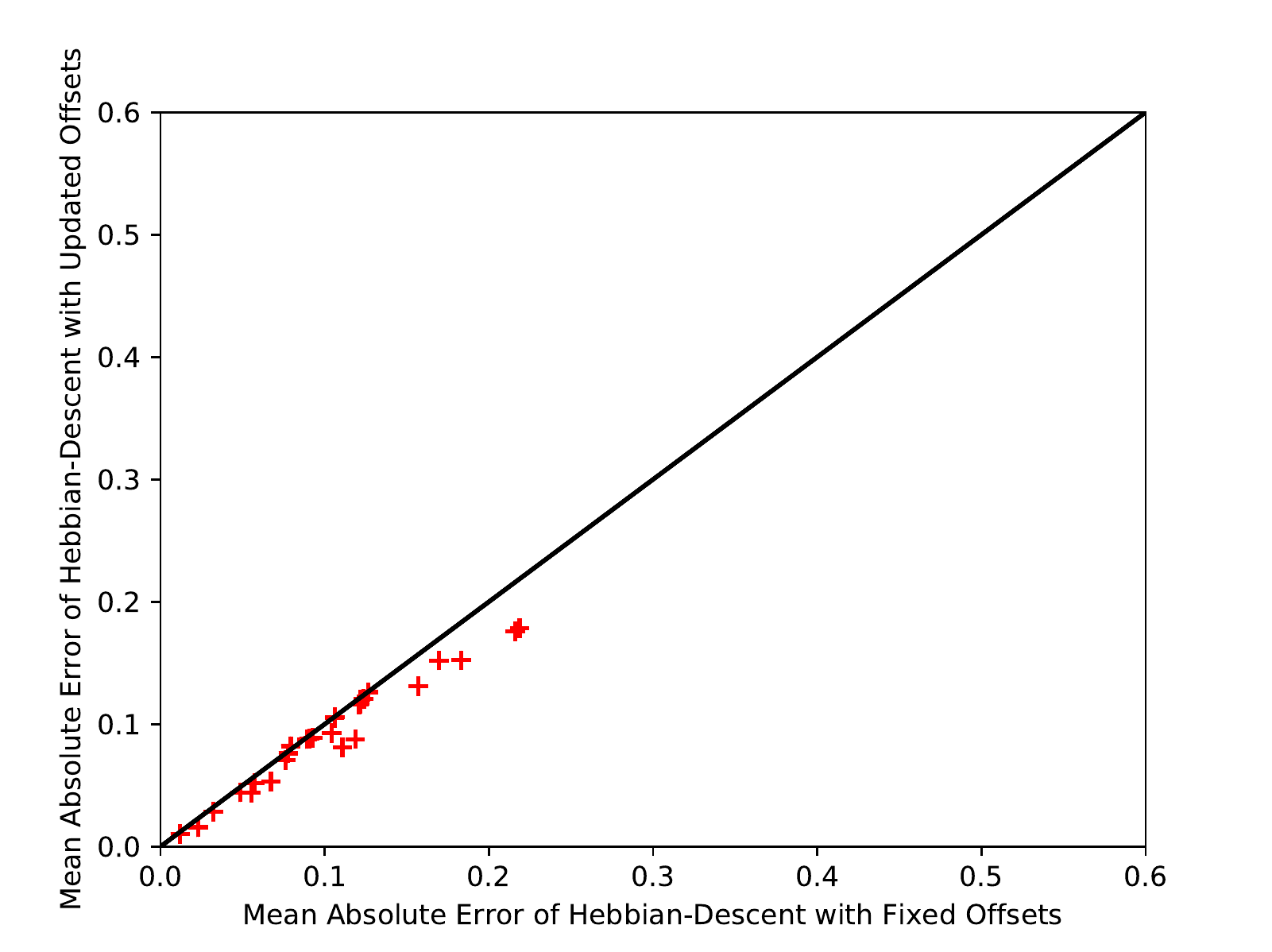}\label{fig:heteroassociative_updated_offsets_hebbian_descent}}
\caption{Comparison of fixed with updated offsets in online hetero-association with (a) gradient descent and (b) Hebbian-descent. 
Each cross represents the mean absolute error of the last 20 patterns averaged over 10 trials for one experiment in Table~\ref{tab:Hetero_online_centered_shift}.}
\label{fig:heteroassociative_online_updated_offsets_scatter}
\end{center}
\end{figure} 
Figure~\ref{fig:heteroassociative_updated_offsets_gradient_descent} shows the results for gradient descent where points lie above and below the diagonal so that there is no clear tendency for fixed or updated offsets. 
Figure~\ref{fig:heteroassociative_updated_offsets_hebbian_descent} shows the results for Hebbian-descent where points lie either on or below the diagonal so that Hebbian-descent seems to profit slightly from updated offsets.

From the perspective of online learning the use of an exponential moving average for updating the offsets makes sense as it leads to an exponentially weighted mean approximation where more recent patterns have a stronger influence. 
This can be interpreted as a forgetting mechanism that allows the offsets to represent the mean of the most recent patterns, in which we are actually interested in instead of the mean of all previously seen patterns.
Even if the input patterns are sampled i.i.d. the mean over the most recent patterns often differs from the mean over all patterns, which can be interpreted as non-stationarity.
To show the effect of non-stationary input distributions more clearly we trained networks using Hebbian-descent with and without updating the offsets to associate 100 \emph{MNIST} patterns that are ordered by their label values as shown in Figure~\ref{fig:moving_MNIST_a} with 100 randomly selected \emph{MNIST} patterns. 
\begin{figure}[t]
\begin{center}
\subfigure[]{
\includegraphics[scale=0.421, trim=100 0 70 0, clip]{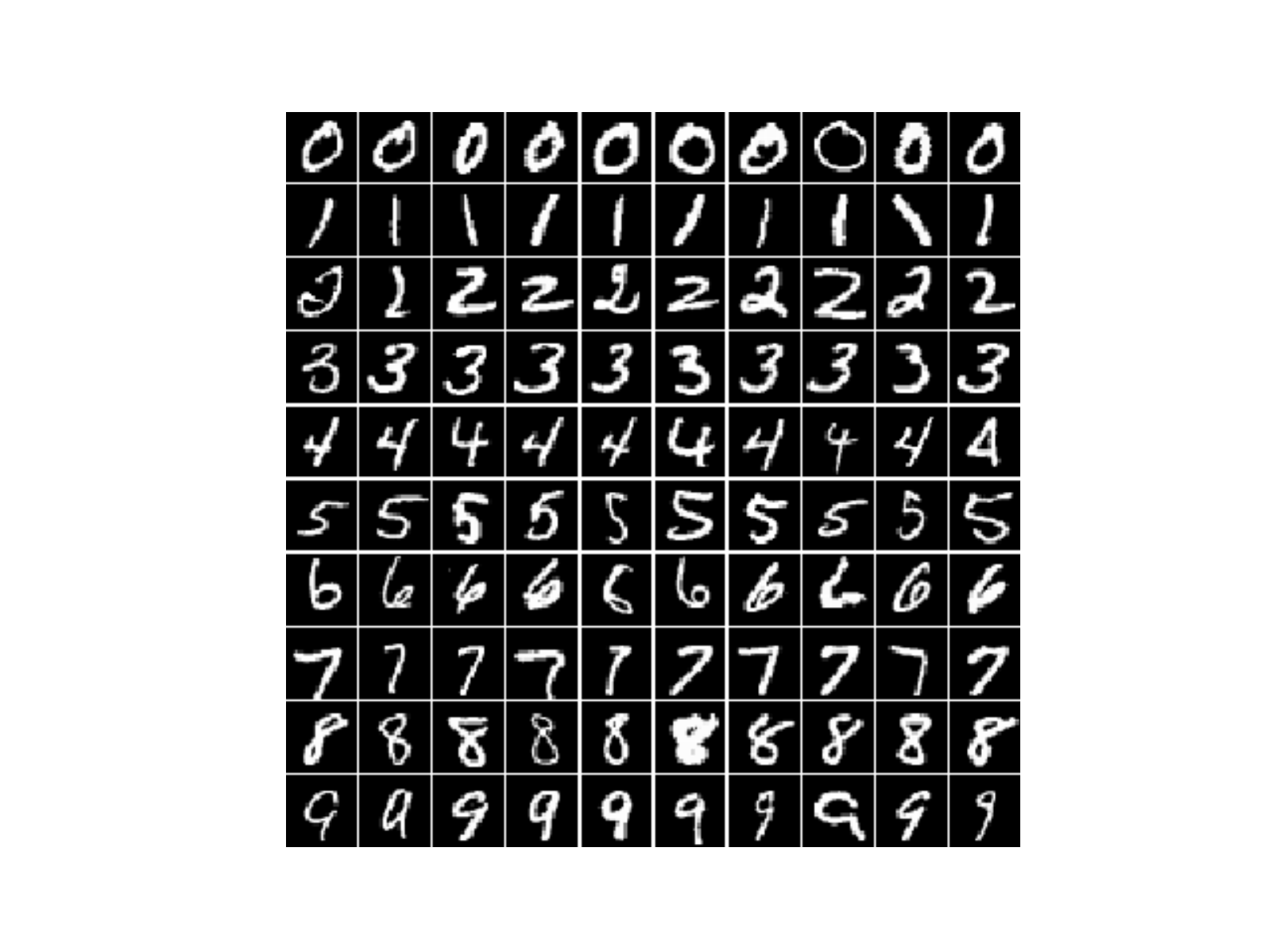}\label{fig:moving_MNIST_a}}
\subfigure[]{
\includegraphics[scale=0.421, trim=220 0 195 0, clip]{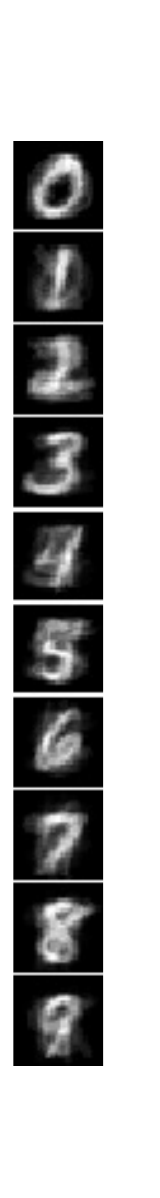}\label{fig:moving_MNIST_b}}
\subfigure[]{
\includegraphics[scale=0.421, trim=5 0 10 0, clip]{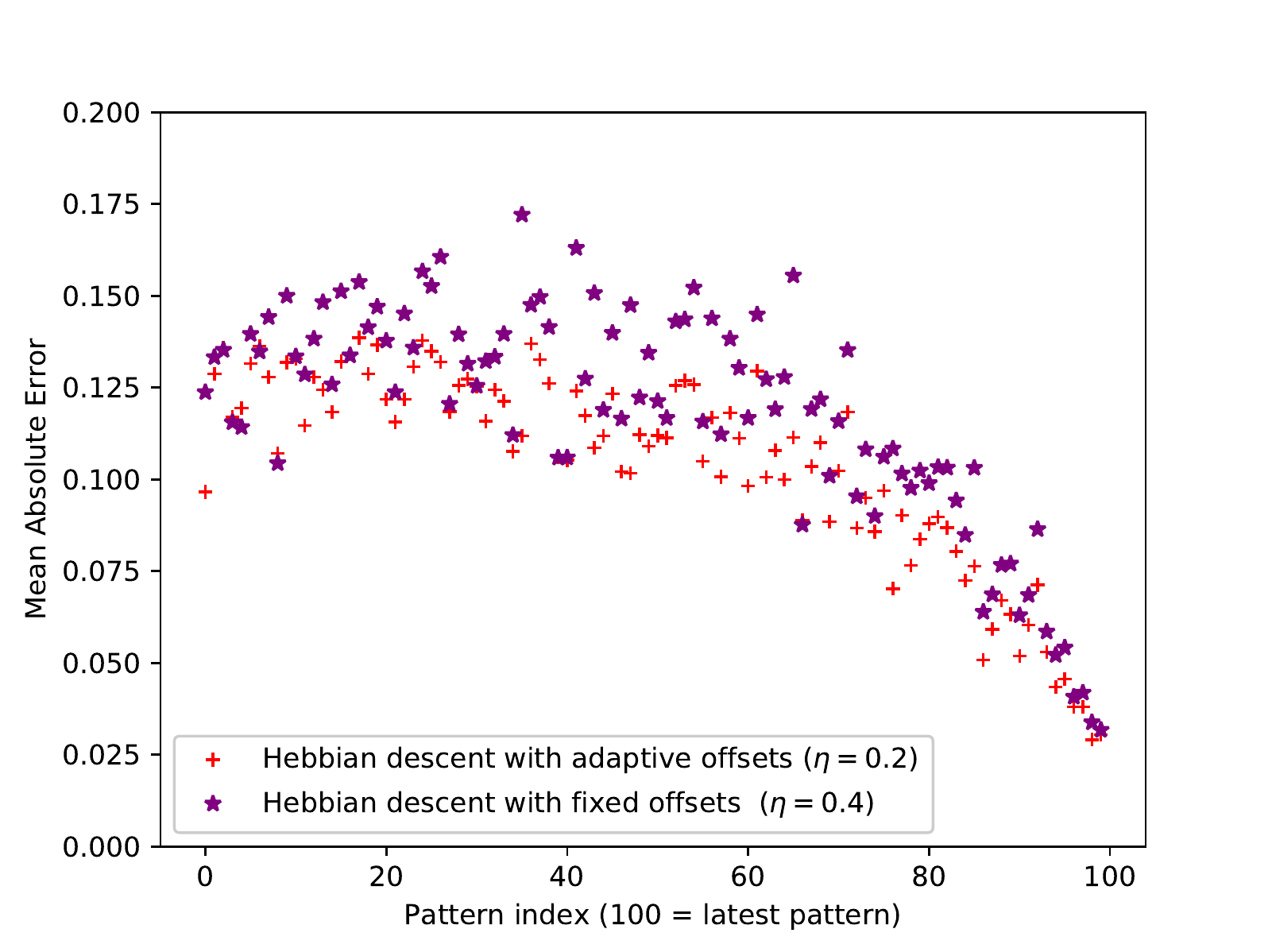}\label{fig:moving_MNIST_c}}
\caption{
Online learning performance of centered Hebbian descent with sigmoid units, when 100 \emph{MNIST} patterns in order have been associated with 100 randomly selected \emph{MNIST} patterns. (a) input data consisting of 100 \emph{MNIST} patterns ordered from first pattern (top left) to the last pattern (bottom right). (b) Offset patterns after every 10th update step. (c) Mean absolute error  of Hebbian-descent with and without updated offsets averaged over 10 trials.
The shifting factor was set to 0.1 and the learning rate $\eta$ was chosen for each method individually, so that the performance over the last 10 patterns is best, but optimizing on all or only the last pattern also leads to very similar results. Notice that the limit of the y-axis is 0.2 instead of 1.0 as for the other plots
}
\label{fig:moving_MNIST}
\end{center}
\end{figure}
Figure~\ref{fig:moving_MNIST_b} shows the evolution of the offset values when an exponentially moving average with shifting factor 0.1 is used.
The offsets have a clear tendency to represent the mean of the last 10 patterns given by the corresponding row in Figure~\ref{fig:moving_MNIST_a}.
Figure~\ref{fig:moving_MNIST_c} shows the performance of Hebbian-descent with fixed and updated offsets averaged over 10 trials. 
Updating the offsets leads to a better performance on the last 10 patterns, 
which is 0.0598 with updating the offsets compared to 0.0702 without updating them. 
But also the performance on all patterns is better when updating the offets, which is 0.1040 compared to 0.1179 without updating the offsets. 

\subsubsection{Hetero-Associative Online Learning with Weight Decay}\label{sec:HD_weight_decay_experiments}

The ability of the network to forget patterns becomes more and more important as more and more patterns are stored in the network.
To illustrate this effect we trained centered networks on 1000 rather than 100 patterns, which is way beyond the capacity of the network. 
Figure~\ref{fig:long_training_rand_adult} shows the results  when 1000 binary random patterns have been associated with (a) another 1000 binary random patterns and (b) with 1000 patterns of the \emph{ADULT} dataset. 
\begin{figure}[t]
\begin{center}
\subfigure[]{
\includegraphics[scale=0.421, trim=21 0 34 0, clip]{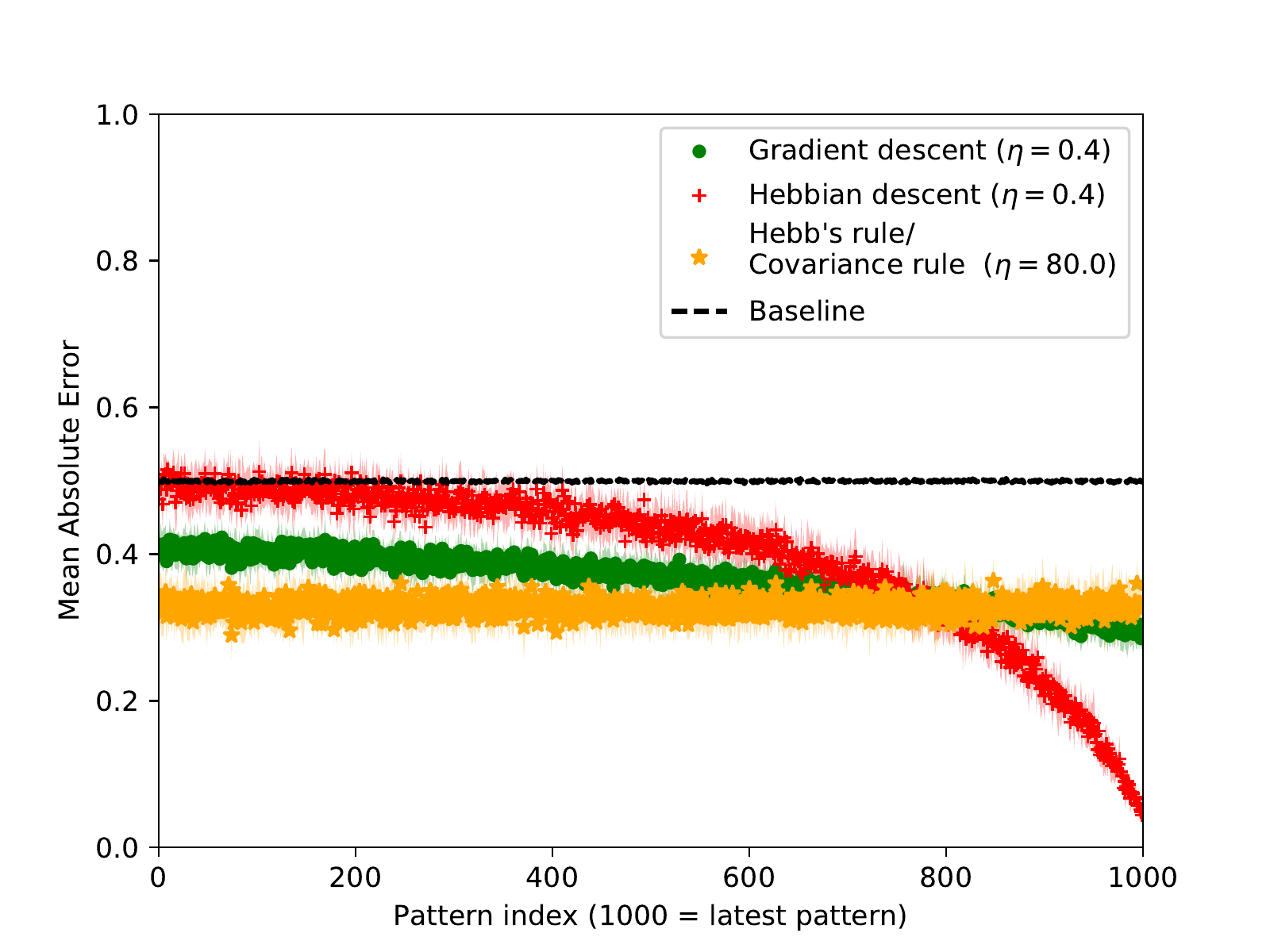}}
\subfigure[]{
\includegraphics[scale=0.421, trim=34 0 34 0, clip]{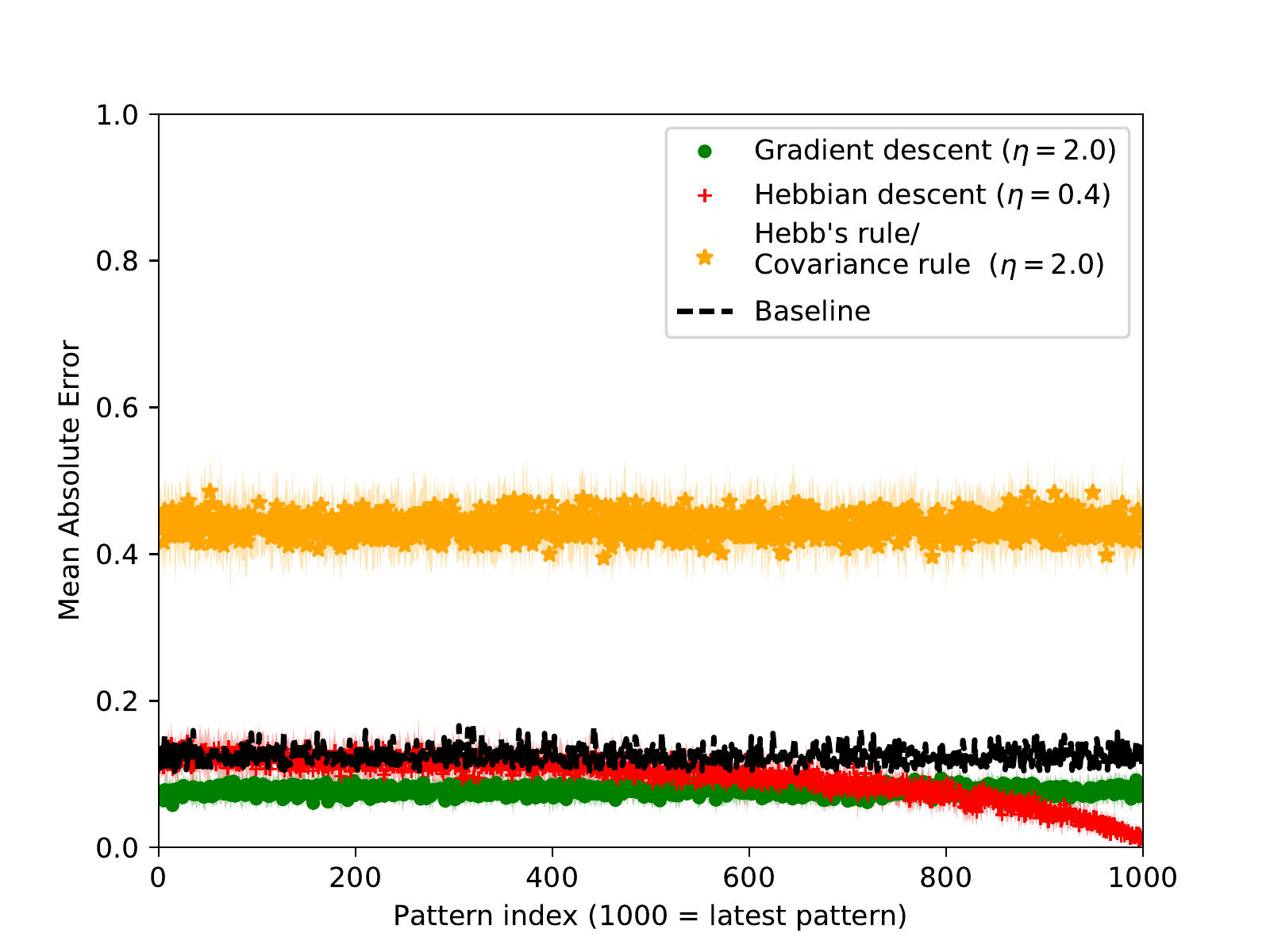}}
\caption{
Online learning performance of the four different update rules with centering and with sigmoid units, when 1000 binary random patterns have been associated with (a) another 1000 binary random patterns, and (b) 1000  patterns of the \emph{ADULT} dataset, one pattern pair at a time. 
The Mean absolute error and the corresponding standard deviation over 10 trials is plotted for each pattern separately. 
The learning rate $\eta$ was chosen for each method individually, so that the performance over the last 20 patterns is best, but optimizing on all or only the last pattern also leads to very similar results.
The baseline represents the performance of a network that independently of the input always returns the mean of the output patterns.
}
\label{fig:long_training_rand_adult}
\end{center}
\end{figure} 
Both plots show that the errors for Hebb's rule, the covariance rule, and gradient descent simply increase for all patterns leading to a very bad performance overall. 
Hebbian-descent in contrast still shows a power-law forgetting curve where more recent patterns are represented better than older once. 
When zooming in on the last 100 patterns the resulting curves for Hebbian-descent look very similar to the curves when only 100 patterns have been stored as shown in Figure~\ref{fig:rand_rand_sigmoid_true_online} and \ref{fig:adult_rand_sigmoid_true_online}, respectively.
Thus only Hebbian-descent allows to have always roughly the same performance on the latest patterns independent of the number of patterns that have been stored previously.
 
While Hebbian-descent controls the amount of forgetting automatically, the most common way to implement it manually is to use a weight decay term, which is independent of the learning rule.
It removes a certain proportion of the current weight matrix in each update step and therefore introduces an additional hyperparameter that controls the speed of forgetting.
To investigate the effect of a weight decay on the four different methods we trained networks to associate 100 binary random patterns (\emph{RAND}) with another 100 binary random patterns (\emph{RAND}). 
A simultaneous grid search over the learning rate and weight decay parameter was performed, so that, just as in the previous experiments, the performance on the last 20 patterns was best. 
The results are shown in Figure~\ref{fig:rand_adult_weightdecay} (a) illustrating that all methods now perform significantly better on the latest patterns and that the covariance rule performs even better than Hebbian-descent.
\begin{figure}[t]
\begin{center}
\subfigure[]{
\includegraphics[scale=0.425, trim=21 0 38 0, clip]{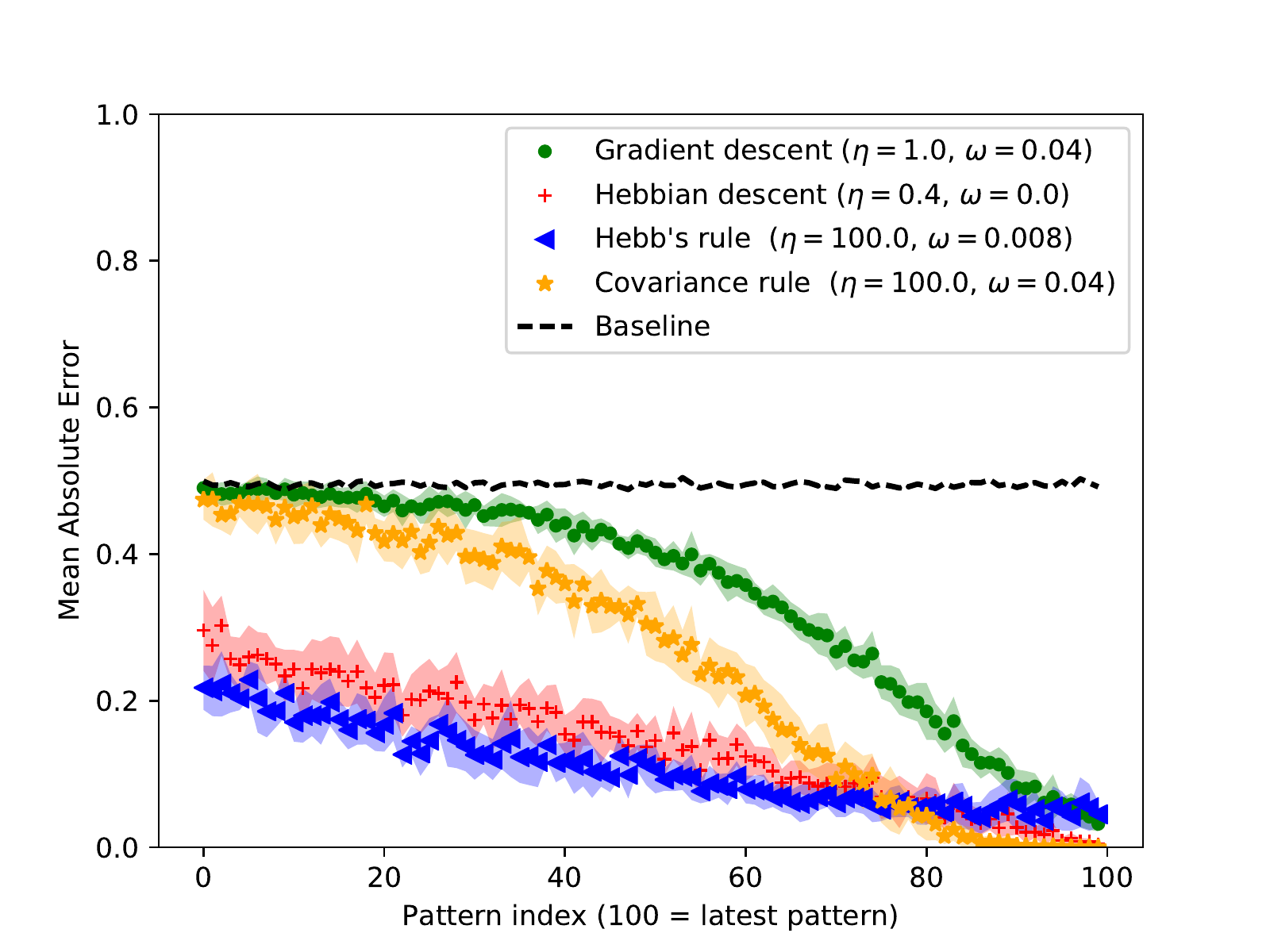}}
\subfigure[]{
\includegraphics[scale=0.425, trim=34 0 37 0, clip]{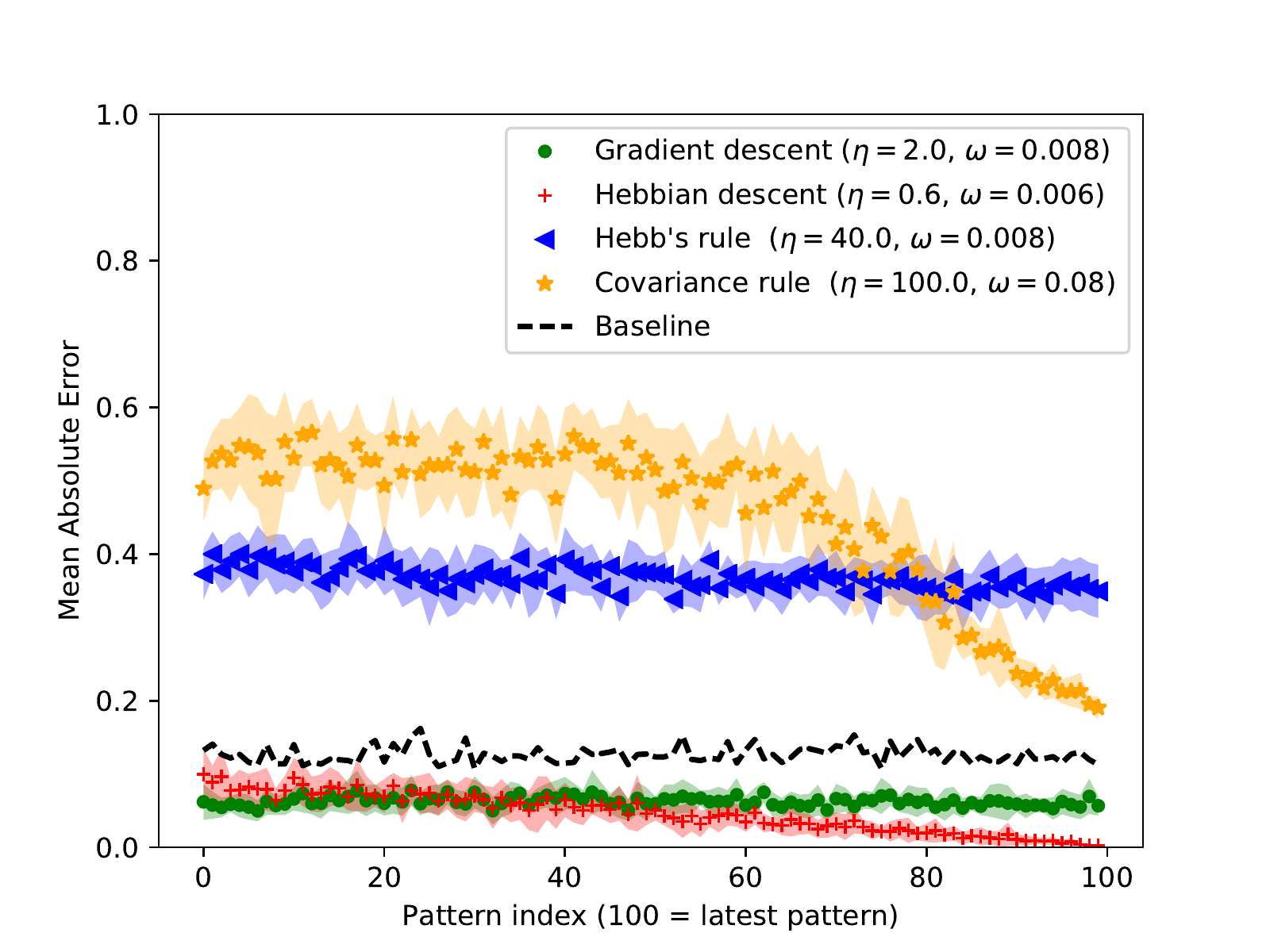}}
\caption{
Online learning performance of the four different update rules with weight decay, centering, and sigmoid units, when (a) 100 binary random patterns (\emph{RAND}) have been associated with 100 patterns of the \emph{ADULT} dataset and (b) 100 patterns of the \emph{ADULT} dataset have been associated with 100 binary random patterns (\emph{RAND}). 
The Mean absolute error and the corresponding standard deviation over 10 trials is plotted for each pattern separately. 
The learning rate $\eta$ and weight decay $\omega$ was chosen for each method individually via grid search, so that the performance over the last 20 patterns was best.
The baseline represents the performance of a network that independently of the input always returns the mean of the output patterns.
Notice, that when using a weight decay Hebb's rule and the covariance rule are not equivalent anymore in case of a centered network.
}
\label{fig:rand_adult_weightdecay}
\end{center}
\end{figure} 
Notice that the equivalence of Hebb's rule and the covariance rule does not hold anymore when a weight decay term is used as can be seen from Equation~\eqref{eqn:single_cov}.
The covariance rule and gradient descent still have a rather steep forgetting curve such that after 100 patterns the performance is already on baseline performance.
Worth mentioning that the optimal weight decay parameter for Hebbian-descent is 0 for this experiment, which indicates that the implicit mechanism of forgetting does not improve with weight decay. 
We also performed the same experiments of associating 100 binary random patterns (\emph{RAND}) with 100 patterns of the \emph{ADULT} dataset. 
The results are shown in Figure~\ref{fig:rand_adult_weightdecay} (b) illustrating that in this case weight decay does not allow for a better performance of neither Hebb's rule, covariance rule, or gradient descent than Hebbian-descent.
We performed many more experiments with other datasets and activation functions and the results are shown in Table~\ref{tab:centered_weight_decay}. 
% Updated results
\begin{table}[htbp]
\setlength{\tabcolsep}{2pt}
\begin{center}
\begin{small}
\begin{sc}
\begin{tabular}{l@{\hskip 0.2in} r@{\hskip 0.02in} r@{\hskip 0.14in} r@{\hskip 0.02in} r@{\hskip 0.14in} r@{\hskip 0.02in} r@{\hskip 0.14in} r@{\hskip 0.02in} r }
\hline
\abovespace\belowspace
  $\vect \phi$ & \multicolumn{2}{c}{\hskip -0.16in Grad. Descent} &   \multicolumn{2}{c}{\hskip -0.16in Hebb. Descent} &  \multicolumn{2}{c}{\hskip -0.16in Hebb rule} &  \multicolumn{2}{c}{\hskip -0.16in Cov. Rule} \\ 
\hline & & & & & & & &\vspace{-0.3cm}\\ 
\multicolumn{3}{l}{{\emph{RAND$\,\,\rightarrow\,\,$RAND (0.4946)}}}\\ 
 & & & & & & & &  \vspace{-0.35cm}\\ 
LINEAR  & \textbf{0.0988}  & (0.2525)  & \textbf{0.0988}  & (0.2525)  & 0.2913  & (0.5212)  & 0.5027  & (0.5247)  \\ 
SIGMOID  & 0.1003  & (0.3452)  & 0.0307  & (0.1411)  & 0.0516  & (0.1097)  & \textbf{0.0090}  & (0.2572)  \\ 
\hline & & & & & & & &\vspace{-0.3cm}\\ 
\multicolumn{3}{l}{{\emph{RAND$\,\,\rightarrow\,\,$ADULT (0.1229)}}}\\ 
 & & & & & & & &  \vspace{-0.35cm}\\
RECTIFIER  & 0.0529  & (0.0771)  & \textbf{0.0232}  & (0.0576)  & 0.0560  & (0.1013)  & 0.0589  & (0.1031)  \\  
SIGMOID  & 0.0592  & (0.0628)  & \textbf{0.0115}  & (0.0470)  & 0.3541  & (0.3680)  & 0.2569  & (0.4538)  \\ 
\hline & & & & & & & &\vspace{-0.3cm}\\ 
\multicolumn{3}{l}{{\emph{ADULT$\,\,\rightarrow\,\,$RAND (0.4946)}}}\\ 
 & & & & & & & &  \vspace{-0.35cm}\\ 
RECTIFIER  & 0.2466  & (0.4126)  & \textbf{0.2246}  & (0.3814)  & 0.3196  & (0.4161)  & 0.2889  & (0.4252)  \\ 
SIGMOID  & 0.2532  & (0.3952)  & 0.1623  & (0.3415)  & 0.2323  & (0.3530)  & \textbf{0.1541}  & (0.3712)  \\ 
\hline & & & & & & & &\vspace{-0.3cm}\\ 
\multicolumn{3}{l}{{\emph{ADULT$\,\,\rightarrow\,\,$MNIST (0.1473)}}}\\ 
 & & & & & & & &  \vspace{-0.35cm}\\ 
EXPLIN  & \textbf{0.0948}  & (0.1462)  & 0.0951  & (0.1468)  & 0.1330  & (0.1730)  & 0.1406  & (0.1382)  \\ 
SIGMOID  & 0.1094  & (0.1375)  & \textbf{0.0762}  & (0.1247)  & 0.4166  & (0.4375)  & 0.3632  & (0.4577)  \\ 
\hline & & & & & & & &\vspace{-0.3cm}\\ 
\multicolumn{3}{l}{{\emph{CONNECT$\,\,\rightarrow\,\,$CIFAR (0.167)}}}\\ 
 & & & & & & & &  \vspace{-0.35cm}\\ 
 LINEAR  & \textbf{0.1198}  & (0.1899)  & \textbf{0.1198}  & (0.1899)  & 0.3425  & (0.5296)  & 0.4936  & (0.4990)  \\ 
SIGMOID  & 0.1232  & (0.1733)  & \textbf{0.1230}  & (0.1957)  & 0.1714  & (0.1765)  & 0.1358  & (0.1954)  \\ 
\hline & & & & & & & &\vspace{-0.3cm}\\ 
\multicolumn{3}{l}{{\emph{CONNECT$\,\,\rightarrow\,\,$MNIST (0.1473)}}}\\ 
 & & & & & & & &  \vspace{-0.35cm}\\ 
EXPLIN  & \textbf{0.0904}  & (0.1484)  & 0.0911  & (0.1494)  & 0.1327  & (0.1756)  & 0.1406  & (0.1382)  \\ 
SIGMOID  & 0.1080  & (0.1376)  & \textbf{0.0696}  & (0.1239)  & 0.4043  & (0.4336)  & 0.3567  & (0.4544)  \\ 
\hline & & & & & & & &\vspace{-0.3cm}\\ 
\multicolumn{3}{l}{{\emph{MNIST$\,\,\rightarrow\,\,$RAND (0.4946)}}}\\ 
 & & & & & & & &  \vspace{-0.35cm}\\ 
SIGMOID  & 0.2374  & (0.3802)  & \textbf{0.1524}  & (0.2951)  & 0.2304  & (0.3444)  & 0.1630  & (0.3948)  \\ 
STEP  & 0.5012  & (0.4999)  & 0.1675  & (0.3051)  & 0.2301  & (0.3440)  & \textbf{0.1629}  & (0.3949)  \\ 
\hline & & & & & & & &\vspace{-0.3cm}\\ 
\multicolumn{3}{l}{{\emph{MNIST$\,\,\rightarrow\,\,$CONNECT (0.1683)}}}\\ 
 & & & & & & & &  \vspace{-0.35cm}\\ 
RECTIFIER  & 0.1190  & (0.1754)  & \textbf{0.0873}  & (0.1518)  & 0.2032  & (0.2941)  & 0.2802  & (0.3345)  \\ 
SIGMOID  & 0.1002  & (0.1344)  & \textbf{0.0546}  & (0.1230)  & 0.3243  & (0.4359)  & 0.2903  & (0.4457)  \\ 
\hline & & & & & & & &\vspace{-0.3cm}\\ 
\multicolumn{3}{l}{{\emph{RANDN$\,\,\rightarrow\,\,$CONNECT (0.1683)}}}\\ 
 & & & & & & & &  \vspace{-0.35cm}\\ 
SIGMOID  & 0.1073  & (0.1259)  & \textbf{0.0487}  & (0.0728)  & 0.2389  & (0.3425)  & 0.2117  & (0.4151)  \\ 
STEP  & 0.3333  & (0.3333)  & \textbf{0.0563}  & (0.0942)  & 0.2374  & (0.3425)  & 0.1913  & (0.4183)  \\ 
\hline & & & & & & & &\vspace{-0.3cm}\\ 
\multicolumn{3}{l}{{\emph{CIFAR$\,\,\rightarrow\,\,$CONNECT (0.1683)}}}\\ 
 & & & & & & & &  \vspace{-0.35cm}\\ 
EXPLIN  & \textbf{0.1643}  & (0.2050)  & 0.1650  & (0.2044)  & 0.3050  & (0.3699)  & 0.3334  & (0.3334)  \\ 
SIGMOID  & 0.1243  & (0.1271)  & \textbf{0.0939}  & (0.1532)  & 0.3923  & (0.4651)  & 0.3650  & (0.4709)  \\ 
\hline & & & & & & & &\vspace{-0.3cm}\\ 
\multicolumn{3}{l}{{\emph{CIFAR$\,\,\rightarrow\,\,$ADULT (0.1229)}}}\\ 
 & & & & & & & &  \vspace{-0.35cm}\\ 
LINEAR  & \textbf{0.0999}  & (0.1225)  & \textbf{0.0999}  & (0.1225)  & 0.1125  & (0.1121)  & 0.1125  & (0.1121)  \\ 
SIGMOID  & 0.0843  & (0.0883)  & \textbf{0.0713}  & (0.1107)  & 0.4431  & (0.4625)  & 0.3977  & (0.4817)  \\ 
\hline & & & & & & & &\vspace{-0.3cm}\\ 
\multicolumn{3}{l}{{\emph{RANDN$\,\,\rightarrow\,\,$RANDN (0.0944)}}}\\ 
 & & & & & & & &  \vspace{-0.35cm}\\ 
SIGMOID  & 0.0539  & (0.0731)  & \textbf{0.0537}  & (0.0732)  & 0.0571  & (0.0758)  & 0.0540  & (0.0855)  \\ 
\hline & & & & & & & &  \vspace{-0.85cm}\\
\end{tabular}
\end{sc}
\end{small}
\end{center}
\caption{The same experiments as in Table~\ref{tab:Hetero_online_centered_1} but with weight decay. We only performed experiments for two activations function per dataset as the grid search over 700 hyperparameter combinations is computational expensive and the performed experiments clearly show that an weight decay term is not an alternative to Hebbian-descent. 
} 
\label{tab:centered_weight_decay}
\end{table}
In most cases Hebbian-descent performs significantly better than the other methods and in the remaining cases its performance is close to the best.
Gradient descent either reaches a similar performance as Hebbian-descent or performs significantly worse.
We did not expect weight decay to allow gradient descent to become better than Hebbian-descent as it does not solve the problem of very small or zero values of the derivative of the activation function.
While Hebb's rule is always significantly worse than all other methods, the covariance rule reaches a very good performance in case of binary random input and output patterns as shown in Figure~\ref{fig:rand_adult_weightdecay} (b). 
When the output data is binary random the covariance rule reaches a similar performance as Hebbian-descent, but in all other cases the performance is significantly worse.
As the problem with correlation is still present for Hebb's rule and covariance rule, we did not expect that a weight decay leads to a better performance than Hebbian-descent.
In Figure~\ref{fig:heteroassociative_online_weight_decay_scatter} we compare the performance of all methods with and without weight decay.
\begin{figure}[t]
\begin{center}
\subfigure[]{
\includegraphics[scale=0.405, trim=21 10 37 10, clip]{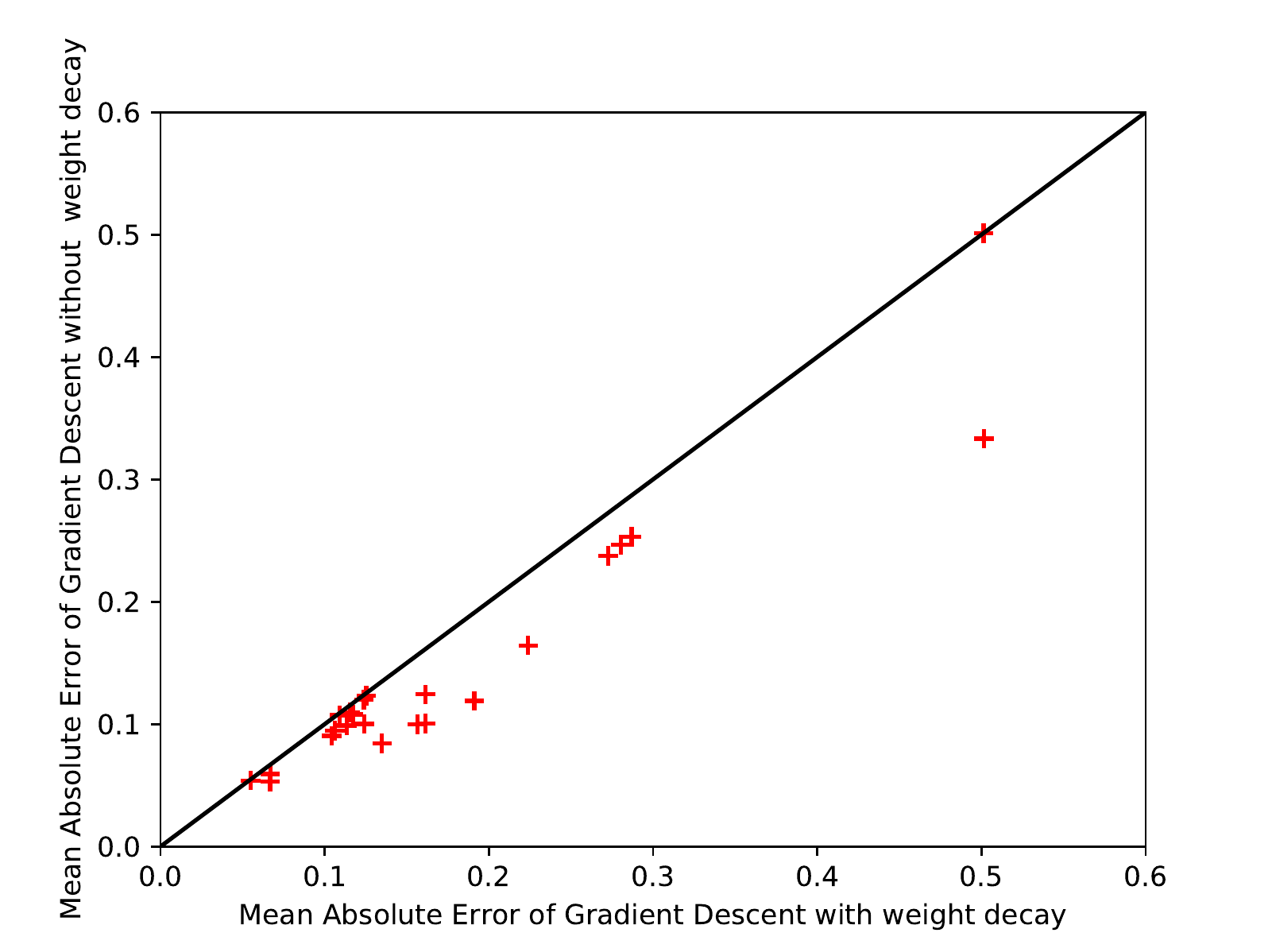}\label{fig:weight_decay_gradient_descent}}
\subfigure[]{
\includegraphics[scale=0.405, trim=21 10 37 10, clip]{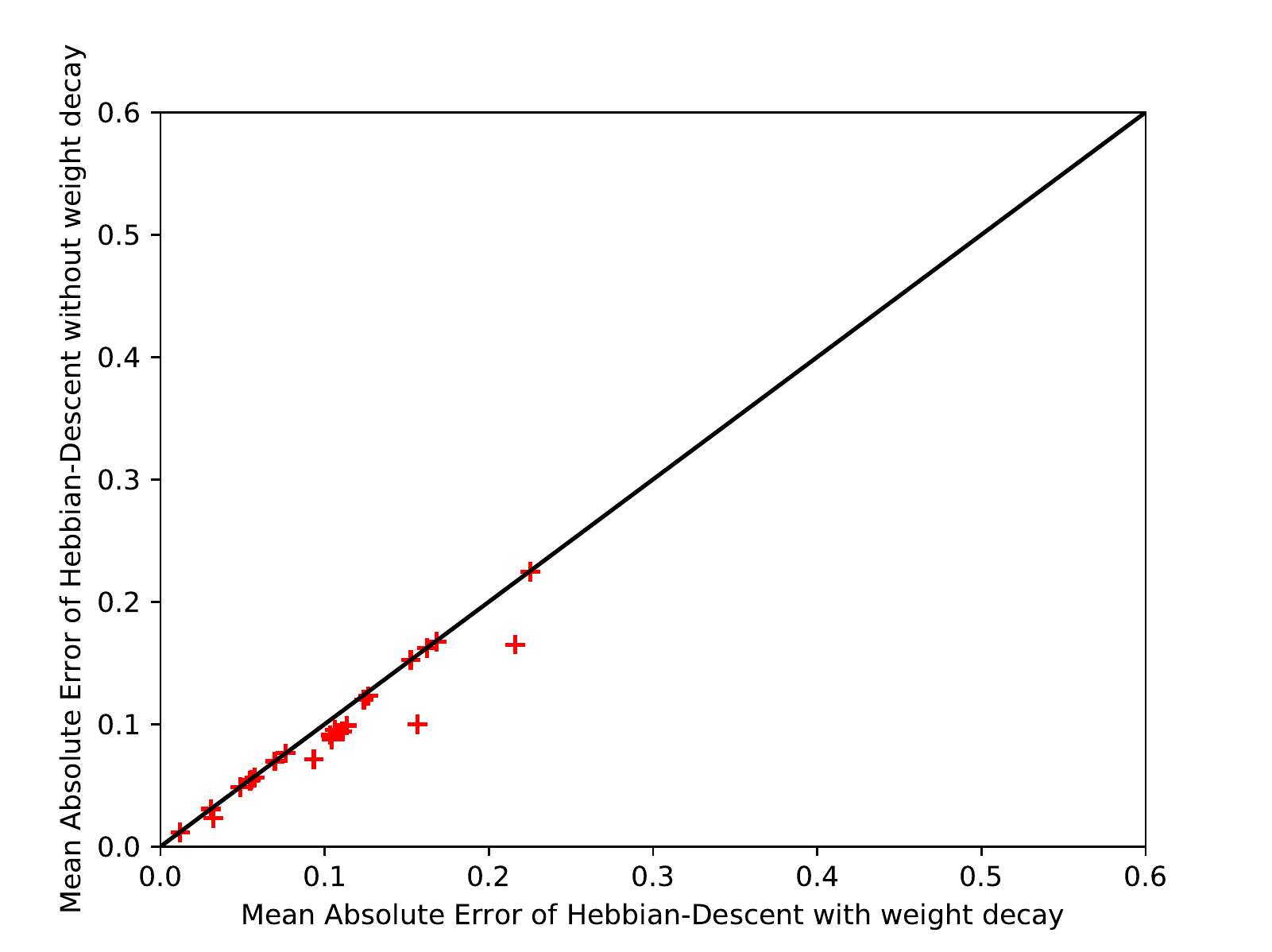}\label{fig:weight_decay_hebbian_descent}}
\subfigure[]{
\includegraphics[scale=0.405, trim=21 10 37 15, clip]{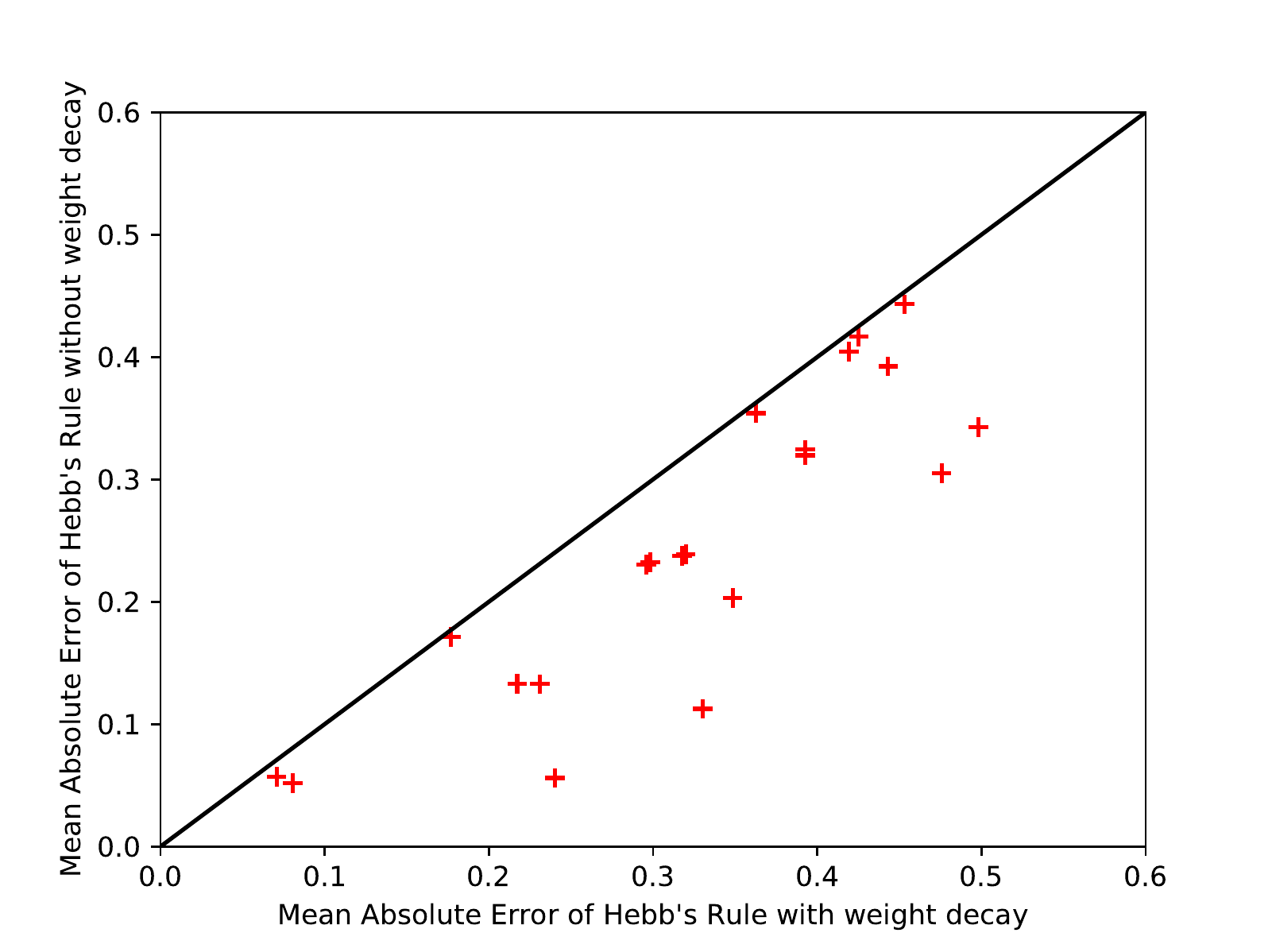}\label{fig:weight_decay_hebb_rule}}
\subfigure[]{
\includegraphics[scale=0.405, trim=21 10 37 15, clip]{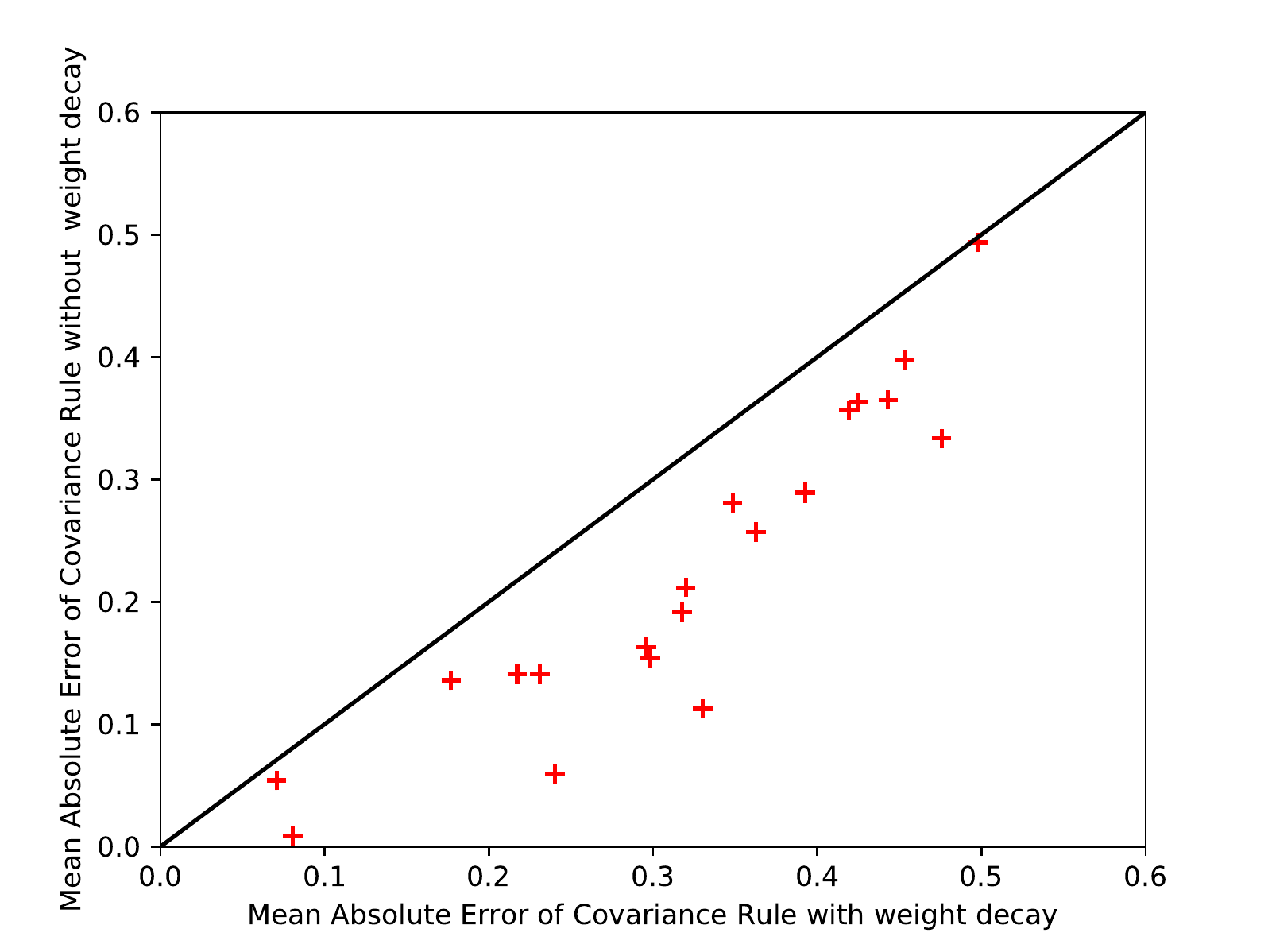}\label{fig:weight_decay_cov_rule}}
\caption{Comparison of online heteroassociation with and without weight decay when using (a) gradient descent, (b) Hebbian-descent, (c) Hebb's rule, and (d) covariance rule.
Each cross represents the mean absolute error of the last 20 patterns averaged over 10 trials for one experiment in Table~\ref{tab:centered_weight_decay}. }
\label{fig:heteroassociative_online_weight_decay_scatter}
\end{center}
\end{figure} 
All methods profit from using a weight decay as all points lie below the diagonal. 
It it obvious to see that Hebbian-descent has the best performance on average, closely followed by gradient descent, as the points are located much closer to the origin compared to the other methods.
Gradient descent profits in most cases from a weight decay as shown in Figure~\ref{fig:weight_decay_gradient_descent} although the improvement for Hebb's rule and the covariance rule is much higher as shown in Figure~\ref{fig:weight_decay_hebb_rule} and Figure~\ref{fig:weight_decay_cov_rule}. 
Hebbian-descent has a similar performance with and without weight decay in most cases as shown in Figure~\ref{fig:weight_decay_hebbian_descent}.
In fact in most of these cases the optimal weight decay is even zero or at least very small (not shown).
In case of unbounded activation functions such as linear, rectifier, and exponential linear units a weight decay term often has a slightly positive effect by keeping the weights from growing to large ('overshooting'). 
The use of a small weight decay term when using an unbounded activation function seems therefore a good idea in general.

\subsubsection{Hetero-Associative Multi-Epoch Learning}\label{sec:HD_multi_epoch_experiments}

In contrast to Hebb's rule and the covariance rule, gradient descent and Hebbian-descent take advantage of seeing training patterns several times. 
Repetitions help to stabilize memories, thinking of humans learning vocabularies for example.
This is a clear advantage as the methods can further improve the network's performance over time.
We used the same training setup as in the online-learning experiments to train networks using gradient descent and Hebbian-descent, but this time for 100 epochs instead of one. 
Or in other words we continued training for another 99 epochs to investigate how much the networks can improve their performance when each pattern is presented several times. 
The results for centered and uncentered networks are shown in Table~\ref{tab:Hetero_offline_1} as well as Table~\ref{tab:Hetero_offline_2} and Table~\ref{tab:Hetero_offline_3} in Appendix~\ref{appendix:hebbian_descent_additional_results}, where the number in each entry represents the MAE averaged over all patterns and trials followed by the corresponding standard deviation in parenthesis.
\begin{table}[htbp]
\setlength{\tabcolsep}{2pt}
\begin{center}
\begin{small}
\begin{sc}
\begin{tabular}{l@{\hskip 0.17in} r@{\hskip 0.02in} r@{\hskip 0.1in} r@{\hskip 0.01in} r@{\hskip 0.17in} r@{\hskip 0.01in} r@{\hskip 0.1in} r@{\hskip 0.01in} r@{\hskip 0.02in} }
\hline
\abovespace\belowspace
      & \multicolumn{4}{c}{\hskip -0.16in Gradient Descent} &   \multicolumn{4}{c}{\hskip -0.16in Hebbian-Descent} \\ 
  $\vect \phi$ & \multicolumn{2}{c}{\hskip -0.16in \footnotesize{Centered}} &   \multicolumn{2}{c}{\hskip -0.16in \footnotesize{Uncentered}} &  \multicolumn{2}{c}{\hskip -0.16in \footnotesize{Centered}} &  \multicolumn{2}{c}{\hskip -0.05in \footnotesize{Uncentered}} \\ 
\hline & & & & & & & &\vspace{-0.3cm}\\ 
\multicolumn{3}{l}{{\emph{RAND$\,\,\rightarrow\,\,$ADULT (0.1229)}}}\\ 
 & & & & & & & &  \vspace{-0.35cm}\\ 
Linear  &  \textbf{0.0000} & $\pm$ 0.0000  & 0.0237  & $\pm$ 0.0002  & \textbf{0.0000} & $\pm$ 0.0000 & 0.0237   & $\pm$ 0.0002  \\ 
ExpLin  &  \textbf{0.0000} & $\pm$ 0.0000  & 0.0264  & $\pm$ 0.0003  & \textbf{0.0000} & $\pm$ 0.0000 & 0.0240   & $\pm$ 0.0002  \\ 
Rectifier  &  0.0429  & $\pm$ 0.0006  & 0.0567  & $\pm$ 0.0013  & \textbf{0.0000} & $\pm$ 0.0000 & 0.0016   & $\pm$ 0.0002  \\ 
Sigmoid  &  0.0070  & $\pm$ 0.0001  & 0.0062  & $\pm$ 0.0003  & \textbf{0.0000} & $\pm$ 0.0000 & \textbf{0.0000} & $\pm$ 0.0000  \\ 
Step  &  0.4997  & $\pm$ 0.0022  & 0.4972  & $\pm$ 0.0194  & \textbf{0.0000} & $\pm$ 0.0000 & \textbf{0.0000} & $\pm$ 0.0000  \\ 
\hline & & & & & & & &\vspace{-0.3cm}\\ 
\multicolumn{3}{l}{{\emph{RANDN$\,\,\rightarrow\,\,$CONNECT (0.1683)}}}\\ 
 & & & & & & & &  \vspace{-0.35cm}\\ 
Linear  &  \textbf{0.0006} & $\pm$ 0.0000  & 0.1013  & $\pm$ 0.0007  & \textbf{0.0006} & $\pm$ 0.0000 & 0.1013   & $\pm$ 0.0007  \\ 
ExpLin  &  0.0007  & $\pm$ 0.0000  & 0.0999  & $\pm$ 0.0007  & \textbf{0.0006} & $\pm$ 0.0000 & 0.0997   & $\pm$ 0.0007  \\ 
Rectifier  &  0.0141  & $\pm$ 0.0010  & 0.1503  & $\pm$ 0.0178  & \textbf{0.0001} & $\pm$ 0.0000 & 0.0585   & $\pm$ 0.0005  \\ 
Sigmoid  &  0.0077  & $\pm$ 0.0001  & 0.0333  & $\pm$ 0.0015  & \textbf{0.0000} & $\pm$ 0.0000 & 0.0023   & $\pm$ 0.0015  \\ 
Step  &  0.4990  & $\pm$ 0.0010  & 0.4922  & $\pm$ 0.0381  & \textbf{0.0000} & $\pm$ 0.0000 & 0.0001   & $\pm$ 0.0001  \\ 
\hline & & & & & & & &\vspace{-0.3cm}\\ 
\multicolumn{3}{l}{{\emph{ADULT$\,\,\rightarrow\,\,$MNIST (0.1473)}}}\\ 
 & & & & & & & &  \vspace{-0.35cm}\\ 
Linear  &  \textbf{0.0944} & $\pm$ 0.0000  & 0.1038  & $\pm$ 0.0000  & \textbf{0.0944} & $\pm$ 0.0000 & 0.1038   & $\pm$ 0.0000  \\ 
ExpLin  &  0.0942  & $\pm$ 0.0000  & 0.1035  & $\pm$ 0.0000  & \textbf{0.0940} & $\pm$ 0.0000 & 0.1034   & $\pm$ 0.0000  \\ 
Rectifier  &  \textbf{0.0631} & $\pm$ 0.0003  & 0.0781  & $\pm$ 0.0005  & 0.0651   & $\pm$ 0.0000 & 0.0764   & $\pm$ 0.0000  \\ 
Sigmoid  &  \textbf{0.0426} & $\pm$ 0.0001  & 0.0551  & $\pm$ 0.0002  & 0.0581   & $\pm$ 0.0000 & 0.0701   & $\pm$ 0.0000  \\ 
Step  &  0.4995  & $\pm$ 0.0013  & 0.5008  & $\pm$ 0.0066  & \textbf{0.0744} & $\pm$ 0.0003 & 0.0881   & $\pm$ 0.0007  \\ 
\hline & & & & & & & &\vspace{-0.3cm}\\ 
\multicolumn{3}{l}{{\emph{CONNECT$\,\,\rightarrow\,\,$CIFAR (0.167)}}}\\ 
 & & & & & & & &  \vspace{-0.35cm}\\ 
Linear  &  \textbf{0.1125} & $\pm$ 0.0000  & 0.1300  & $\pm$ 0.0000  & \textbf{0.1125} & $\pm$ 0.0000 & 0.1300   & $\pm$ 0.0000  \\ 
ExpLin  &  \textbf{0.1125} & $\pm$ 0.0000  & 0.1300  & $\pm$ 0.0000  & \textbf{0.1125} & $\pm$ 0.0000 & 0.1300   & $\pm$ 0.0000  \\ 
Rectifier  &  \textbf{0.1123} & $\pm$ 0.0000  & 0.1601  & $\pm$ 0.0040  & 0.1124   & $\pm$ 0.0000 & 0.1300   & $\pm$ 0.0000  \\ 
Sigmoid  &  \textbf{0.1084} & $\pm$ 0.0000  & 0.1284  & $\pm$ 0.0000  & 0.1109   & $\pm$ 0.0000 & 0.1290   & $\pm$ 0.0000  \\ 
\hline & & & & & & & &\vspace{-0.3cm}\\ 
\multicolumn{3}{l}{{\emph{MNIST$\,\,\rightarrow\,\,$CONNECT (0.1683)}}}\\ 
 & & & & & & & &  \vspace{-0.35cm}\\ 
Linear  &  \textbf{0.0003} & $\pm$ 0.0000  & 0.0430  & $\pm$ 0.0006  & \textbf{0.0003} & $\pm$ 0.0000 & 0.0430   & $\pm$ 0.0006  \\ 
ExpLin  &  \textbf{0.0003} & $\pm$ 0.0000  & 0.0424  & $\pm$ 0.0006  & \textbf{0.0003} & $\pm$ 0.0000 & 0.0424   & $\pm$ 0.0006  \\ 
Rectifier  &  0.0359  & $\pm$ 0.0017  & 0.0526  & $\pm$ 0.0026  & \textbf{0.0000} & $\pm$ 0.0000 & 0.0145   & $\pm$ 0.0005  \\ 
Sigmoid  &  0.0151  & $\pm$ 0.0003  & 0.0205  & $\pm$ 0.0003  & \textbf{0.0000} & $\pm$ 0.0000 & \textbf{0.0000} & $\pm$ 0.0000  \\ 
Step  &  0.4990  & $\pm$ 0.0050  & 0.5142  & $\pm$ 0.0145  & \textbf{0.0000} & $\pm$ 0.0000 & \textbf{0.0000} & $\pm$ 0.0000  \\ 
\hline & & & & & & & &\vspace{-0.3cm}\\ 
\multicolumn{3}{l}{{\emph{CIFAR$\,\,\rightarrow\,\,$CONNECT (0.1683)}}}\\ 
 & & & & & & & &  \vspace{-0.35cm}\\ 
Linear  &  \textbf{0.0201} & $\pm$ 0.0002  & 0.1281  & $\pm$ 0.0009  & \textbf{0.0201} & $\pm$ 0.0002 & 0.1281   & $\pm$ 0.0009  \\ 
ExpLin  &  \textbf{0.0197} & $\pm$ 0.0002  & 0.1271  & $\pm$ 0.0009  & 0.0198   & $\pm$ 0.0002 & 0.1266   & $\pm$ 0.0009  \\ 
Rectifier  &  0.0319  & $\pm$ 0.0010  & 0.1326  & $\pm$ 0.0102  & \textbf{0.0074} & $\pm$ 0.0001 & 0.0885   & $\pm$ 0.0007  \\ 
Sigmoid  &  0.0158  & $\pm$ 0.0006  & 0.0642  & $\pm$ 0.0034  & \textbf{0.0000} & $\pm$ 0.0000 & 0.0311   & $\pm$ 0.0036  \\ 
Step  &  0.4993  & $\pm$ 0.0032  & 0.5165  & $\pm$ 0.0138  & \textbf{0.0000} & $\pm$ 0.0000 & 0.0313   & $\pm$ 0.0042  \\ 
\hline & & & & & & & &\vspace{-0.3cm}\\ 
\end{tabular}
\end{sc}
\end{small}
\end{center}
\caption{Performance of hetero-association for several epochs of training with and without centering for various activation functions and datasets. 
The task is to associate 100 patterns of one dataset with 100 patterns of another dataset and training is performed for 100 epochs again with a batch size of one.
Each experiment was repeated 10 times and the optimal learning rate was determined via grid search for each method separately, so that the performance over all patterns is best.
The first number in each entry shows the MAE for all patterns with the corresponding standard deviation over the 10 trials. See Table~\ref{tab:Hetero_offline_2} and \ref{tab:Hetero_offline_3} in Appendix~\ref{appendix:hebbian_descent_additional_results} for more results.
} 
\label{tab:Hetero_offline_1}
\end{table}
In all cases, whether centered or not, both methods improve significantly compared to when training only a single epoch (See Table~\ref{tab:Hetero_online_centered_1} and Table~\ref{tab:Hetero_online_uncentered_1} for comparison).
Furthermore, both methods reach better results when centering is used and networks trained with centered Hebbian-descent reach more often even a zero error value.
In most cases Hebbian-descent reaches better values than gradient descent especially when the association can be learned almost perfectly as in the case of \emph{RAND}$\,\,\rightarrow\,\,$\emph{ADULT}, \emph{RANDN}$\,\,\rightarrow\,\,$\emph{CONNECT}, or \emph{MNIST}$\,\,\rightarrow\,\,$\emph{CONNECT} for example. 
Both algorithms have a very similar performance for exponential linear units on all datasets, but the performance of rectifier, sigmoid, or step function is more often better when using Hebbian-descent. 
In case of the rectifier as activation function gradient descent gets often trapped in rather bad local optima, which is a direct consequence of the rectifier having a zero derivative for negative input values. 
As the derivative of the step function is constantly zero only Hebbian-descent can deal with the step-function as nonlinearity.
To emphasize the better performance of Hebbian-descent compared to gradient descent we plotted all results of both algorithms against each other either with centering shown in Figure~\ref{fig:heteroassociative_online_multi_epoch_centered} or without centering shown in Figure~\ref{fig:heteroassociative_online_multi_epoch_uncentered}.
\begin{figure}[t]
\begin{center}
\subfigure[]{
\includegraphics[scale=0.405, trim=21 10 37 10, clip]{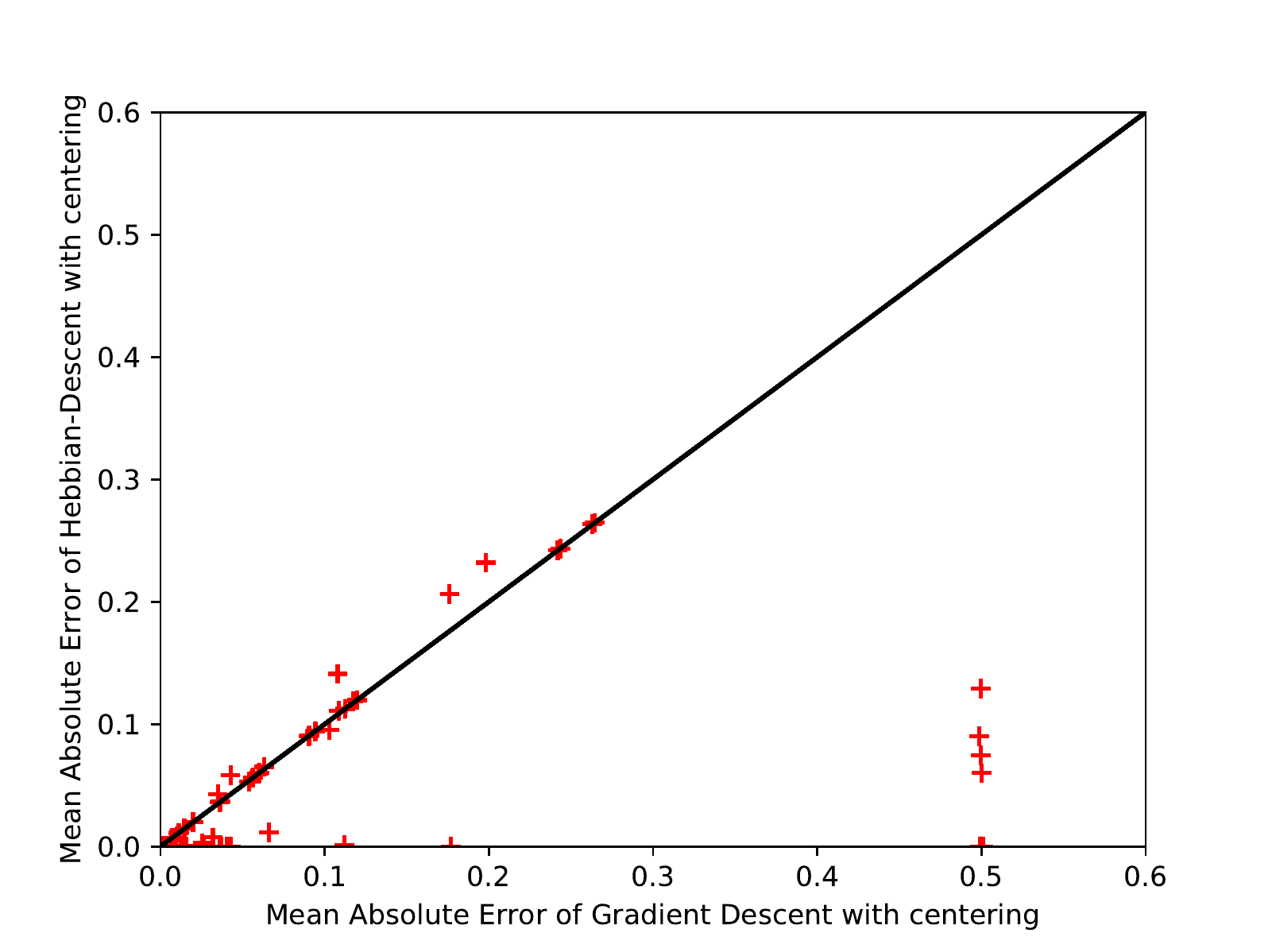}\label{fig:heteroassociative_online_multi_epoch_centered}}
\subfigure[]{
\includegraphics[scale=0.405, trim=21 10 37 10, clip]{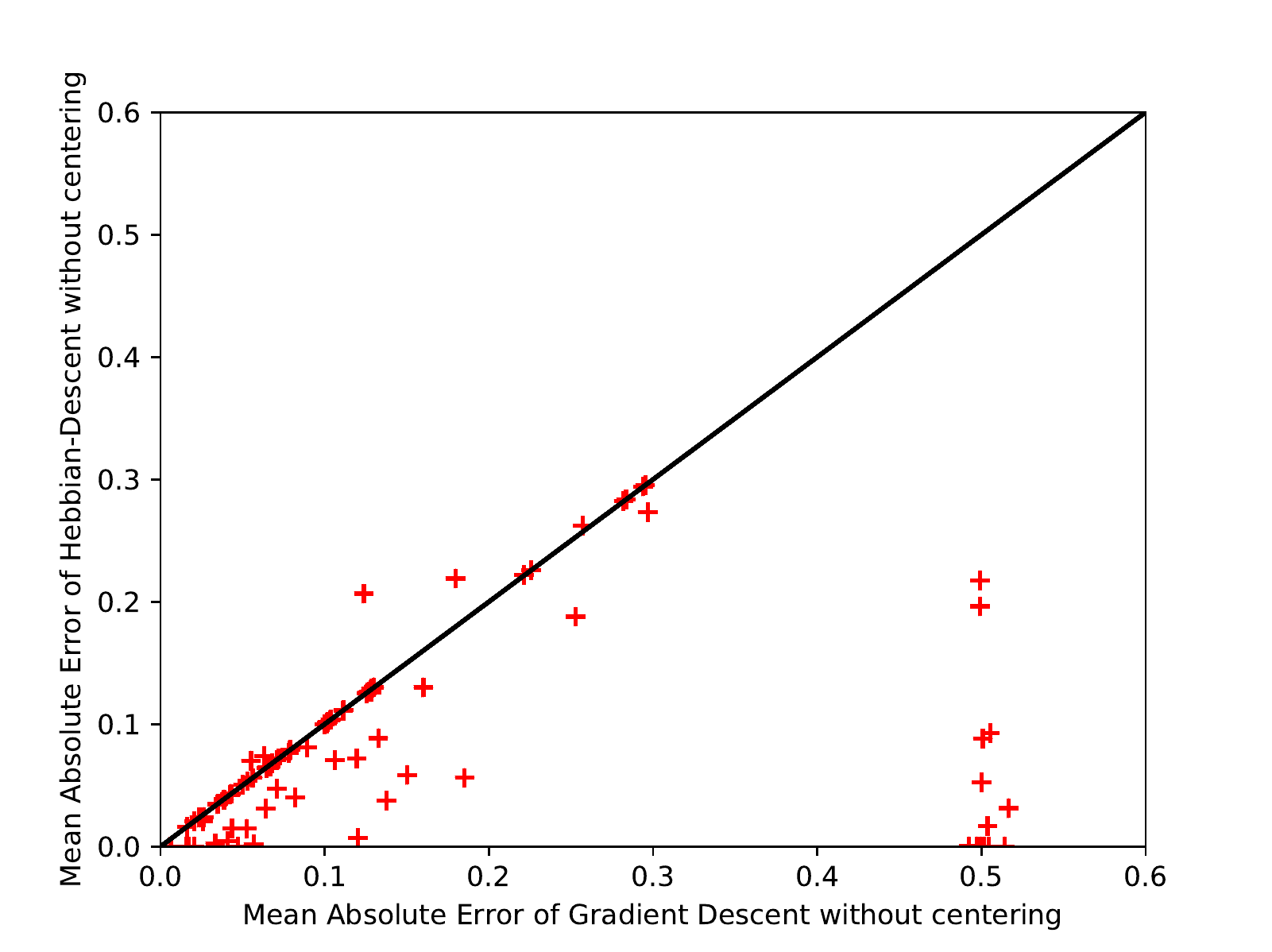}\label{fig:heteroassociative_online_multi_epoch_uncentered}}
\caption{Comparison of gradient descent and Hebbian-descent in multi epoch heteroassociation (a) with centering, and (b) without centering.
Each cross represents the mean absolute error on all patterns averaged over 10 trials for one experiment in Table~\ref{tab:Hetero_offline_1}. }
\label{fig:heteroassociative_online_multi_epoch_scatter}
\end{center}
\end{figure} 
As the points in both plots lie more often below the diagonal there is a tendency of Hebbian-descent to perform better than gradient descent, which is even more prominent without centering.

To summarize, both methods reach similar results, but only Hebbian-descent reaches good results also in online learning, \emph{i.e.}\ seeing every pattern only once. 
Similar results (not shown) are also achieved when mini-batch learning (\emph{e.g.}\ 10) is used instead of a batch size of one.

\subsection{Classification}

Another learning scenario that is of interest especially in machine learning, is that of classification, \emph{i.e.}\ each input pattern is associated with a corresponding label.
As there are usually many fewer labels than input patterns a lot of patterns get associated with the same label, which makes the difference to the previous experiments where two output patterns might have been similar but never equal.
The most common representation when performing classification with neural networks is that of `one-hot-encoding', \emph{i.e.}\ each output neuron corresponds to one of the dataset's classes. 
The output patterns then have a value of one for the neuron that corresponds to the class of the current input pattern and zero everywhere else.
Depending on the number of labels this usually leads to a network with much lower output than input dimensionality.
During training the network then learns to predict the label for a given input pattern, which is considered to be correct if the most active neuron is the one that represents the true class of the current input pattern and is incorrect otherwise.

We investigated the classification performance of networks trained on the four real-world datasets using the four different update rules. Various activation functions were used and each network was trained for 100 epochs, with a batch size of 100 and the mean squared error loss. 
As we are interested in the classification performance on the test set we investigate the multi-epoch mini-batch learning performance and discuss the online learning performance only briefly at the end of this section.
\begin{table}[htbp]
\setlength{\tabcolsep}{2pt}
\begin{center}
\begin{small}
\begin{sc}
\begin{tabular}{l@{\hskip 0.2in} r@{\hskip 0.02in} r@{\hskip 0.14in} r@{\hskip 0.02in} r@{\hskip 0.14in} r@{\hskip 0.02in} r@{\hskip 0.14in} r@{\hskip 0.02in} r }
\hline
\abovespace\belowspace
  $\vect \phi$ & \multicolumn{2}{c}{\hskip -0.16in Grad. Descent} &   \multicolumn{2}{c}{\hskip -0.16in Hebb. Descent} &  \multicolumn{2}{c}{\hskip -0.16in Hebb rule} &  \multicolumn{2}{c}{\hskip -0.16in Cov. Rule} \\ 
\hline & & & & & & & &\vspace{-0.3cm}\\ 
\multicolumn{3}{l}{{\emph{ADULT (0.2399)}}}\\ 
 & & & & & & & &  \vspace{-0.35cm}\\ 
Linear  & \textbf{0.1560}  & $\pm$0.0004  & \textbf{0.1560}  & $\pm$0.0004  & 0.2870  & $\pm$0.0002  & 0.2870  & $\pm$0.0002  \\ 
Rectifier  & 0.1546  & $\pm$0.0003  & \textbf{0.1544}  & $\pm$0.0005  & 0.2854  & $\pm$0.0033  & 0.2854  & $\pm$0.0033  \\ 
ExpLin  & \textbf{0.1558}  & $\pm$0.0004  & 0.1560  & $\pm$0.0004  & 0.2870  & $\pm$0.0002  & 0.2870  & $\pm$0.0002  \\ 
Sigmoid  & 0.1546  & $\pm$0.0002  & \textbf{0.1543}  & $\pm$0.0002  & 0.2870  & $\pm$0.0002  & 0.2870  & $\pm$0.0002  \\ 
Softmax  & 0.1545  & $\pm$0.0004  & \textbf{0.1544}  & $\pm$0.0004  & 0.2870  & $\pm$0.0002  & 0.2870  & $\pm$0.0002  \\ 
Step  & 0.3528  & $\pm$0.0377  & \textbf{0.1728}  & $\pm$0.0024  & 0.2664  & $\pm$0.0181  & 0.2664  & $\pm$0.0181  \\ 
\hline & & & & & & & &\vspace{-0.3cm}\\ 
\multicolumn{3}{l}{{\emph{CONNECT (0.3409)}}}\\ 
 & & & & & & & &  \vspace{-0.35cm}\\ 
Linear  & \textbf{0.2454}  & $\pm$0.0001  & \textbf{0.2454}  & $\pm$0.0001  & 0.4354  & $\pm$0.0004  & 0.4354  & $\pm$0.0004  \\ 
Rectifier  & \textbf{0.2434}  & $\pm$0.0001  & 0.2443  & $\pm$0.0000  & 0.4354  & $\pm$0.0004  & 0.4354  & $\pm$0.0004  \\ 
ExpLin  & \textbf{0.2450}  & $\pm$0.0001  & 0.2452  & $\pm$0.0001  & 0.4354  & $\pm$0.0004  & 0.4354  & $\pm$0.0004  \\ 
Sigmoid  & \textbf{0.2406}  & $\pm$0.0000  & 0.2432  & $\pm$0.0000  & 0.4353  & $\pm$0.0004  & 0.4353  & $\pm$0.0004  \\ 
Softmax  & \textbf{0.2410}  & $\pm$0.0001  & 0.2435  & $\pm$0.0000  & 0.4354  & $\pm$0.0004  & 0.4354  & $\pm$0.0004  \\ 
Step  & 0.7347  & $\pm$0.0147  & \textbf{0.3092}  & $\pm$0.0214  & 0.5541  & $\pm$0.0058  & 0.5541  & $\pm$0.0058  \\ 
\hline & & & & & & & &\vspace{-0.3cm}\\ 
\multicolumn{3}{l}{{\emph{MNIST (0.8865)}}}\\ 
 & & & & & & & &  \vspace{-0.35cm}\\ 
Linear  & \textbf{0.1410}  & $\pm$0.0011  & \textbf{0.1410}  & $\pm$0.0011  & 0.2559  & $\pm$0.0000  & 0.2559  & $\pm$0.0000  \\ 
Rectifier  & \textbf{0.0812}  & $\pm$0.0004  & 0.0836  & $\pm$0.0008  & 0.2559  & $\pm$0.0000  & 0.2559  & $\pm$0.0000  \\ 
ExpLin  & \textbf{0.1361}  & $\pm$0.0012  & 0.1395  & $\pm$0.0011  & 0.2559  & $\pm$0.0003  & 0.2559  & $\pm$0.0003  \\ 
Sigmoid  & \textbf{0.0764}  & $\pm$0.0002  & 0.0800  & $\pm$0.0002  & 0.2564  & $\pm$0.0029  & 0.2564  & $\pm$0.0029  \\ 
Softmax  & \textbf{0.0689}  & $\pm$0.0002  & 0.0742  & $\pm$0.0002  & 0.2559  & $\pm$0.0000  & 0.2559  & $\pm$0.0000  \\ 
Step  & 0.9182  & $\pm$0.0326  & \textbf{0.1746}  & $\pm$0.0025  & 0.7267  & $\pm$0.0013  & 0.7267  & $\pm$0.0013  \\ 
\hline & & & & & & & &\vspace{-0.3cm}\\ 
\multicolumn{3}{l}{{\emph{CIFAR (0.9)}}}\\ 
 & & & & & & & &  \vspace{-0.35cm}\\ 
Linear  & \textbf{0.7401}  & $\pm$0.0015  & \textbf{0.7401}  & $\pm$0.0015  & 0.7663  & $\pm$0.0003  & 0.7663  & $\pm$0.0003  \\ 
Rectifier  & \textbf{0.7047}  & $\pm$0.0022  & 0.7216  & $\pm$0.0024  & 0.7694  & $\pm$0.0009  & 0.7694  & $\pm$0.0009  \\ 
ExpLin  & \textbf{0.7254}  & $\pm$0.0023  & 0.7258  & $\pm$0.0022  & 0.7694  & $\pm$0.0009  & 0.7694  & $\pm$0.0009  \\ 
Sigmoid  & \textbf{0.6842}  & $\pm$0.0021  & 0.6949  & $\pm$0.0011  & 0.7695  & $\pm$0.0015  & 0.7695  & $\pm$0.0015  \\ 
Softmax  & \textbf{0.6803}  & $\pm$0.0018  & 0.6959  & $\pm$0.0020  & 0.7694  & $\pm$0.0009  & 0.7694  & $\pm$0.0009  \\ 
Step  & 0.8964  & $\pm$0.0178  & \textbf{0.8186}  & $\pm$0.0061  & 0.8561  & $\pm$0.0043  & 0.8561  & $\pm$0.0043  \\ 
\hline & & & & & & & &\vspace{-0.3cm}\\ 
\end{tabular}
\end{sc}
\end{small}
\end{center}
\caption{Classification error when performing mini-batch learning with the four different training methods. 
The networks were trained for 100 epochs with centering, various activation functions, and datasets. 
Each experiment was repeated 10 times and the optimal learning rate was determined for each method separately, so that the performance over all test patterns was best.
The first number in each entry shows the average classification error followed by the corresponding standard deviation over 10 trials. 
The best result is indicated in bold.
The baseline given for each dataset in parenthesis behind its name represent the classification error when the most frequent label in the test data is always returned.
Notice, that the first label in the \emph{ADULT} dataset is roughly 3 times more frequent than the second label for example.
} 
\label{tab:label_learning_centered}
\end{table}
The results for centered networks are shown in Table~\ref{tab:label_learning_centered} where the number in each entry is the average classification error on the test data followed by the corresponding standard deviation over the trials. 
The baseline, given in parenthesis behind the datasets' names, is the classification error when the most frequent label in the test data would have always been returned by the network.
As expected Hebb's rule and the covariance rule perform significantly worse than gradient descent and Hebbian-descent as they do not profit from seeing patterns several times.
Gradient descent often reaches slightly better performances than Hebbian-descent except when using the step function, of course. 
We checked that the networks did not overfit to the training data, such that the better performance of gradient descent it is not just due to a better generalization but due to a better optimum that has been found.

In contrast to the previous experiments a benefit of centering in case of classification is not noticeable.
This is somewhat surprising as in case of a softmax activation function Hebbian-descent corresponds to gradient descent with cross-entropy loss, which is usually considered to be the better loss function for softmax units.
Notice, however, that the cross entropy, and also Hebbian-descent in general, usually converge initially faster and are more robust with respect to the choice of the learning rate as shown for online learning with sigmoid units in Figure~\ref{fig:learning_rate_compare_lin_sig}(b).
%It is thus more likely to find a good solution with Hebbian-descent when the optimal learning rate is just guessed instead of performing a grid search.
Furthermore, things change when training deep neural networks where in our experience the cross entropy loss, and also the Hebbian-descent loss in general, perform better than gradient descent with mean squared error loss.
As the learning rate usually has to be magnitudes smaller in deep networks compared to single layer networks, the robustness of Hebbian-descent becomes more important.
Furthermore, as discussed in the introduction the derivative of the activation function in the output layer is part of the vanishing gradient problem in deep neural networks since the error term gets back-propagated through the entire network.
It seems that a non-vanishing error term is more important than a similar performance on all patterns, such that the Hebbian-descent losses (see Table~\ref{tab:list_HD_loss_functions}) perform better than the mean squared error loss in our experience.
However, for single layer networks where the vanishing gradient problem is not present one should consider using the mean squared error loss instead of the cross entropy.

We also performed the same experiments as in Table~\ref{tab:label_learning_centered} but without centering. 
The comparison of gradient descent with Hebbian-desdcent is visualized for centered networks in Figure~\ref{fig:label_learning_centered} and for uncentered networks in Figure~\ref{fig:label_learning_uncentered}.
\begin{figure}[t]
\begin{center}
\subfigure[]{
\includegraphics[scale=0.405, trim=21 10 37 10, clip]{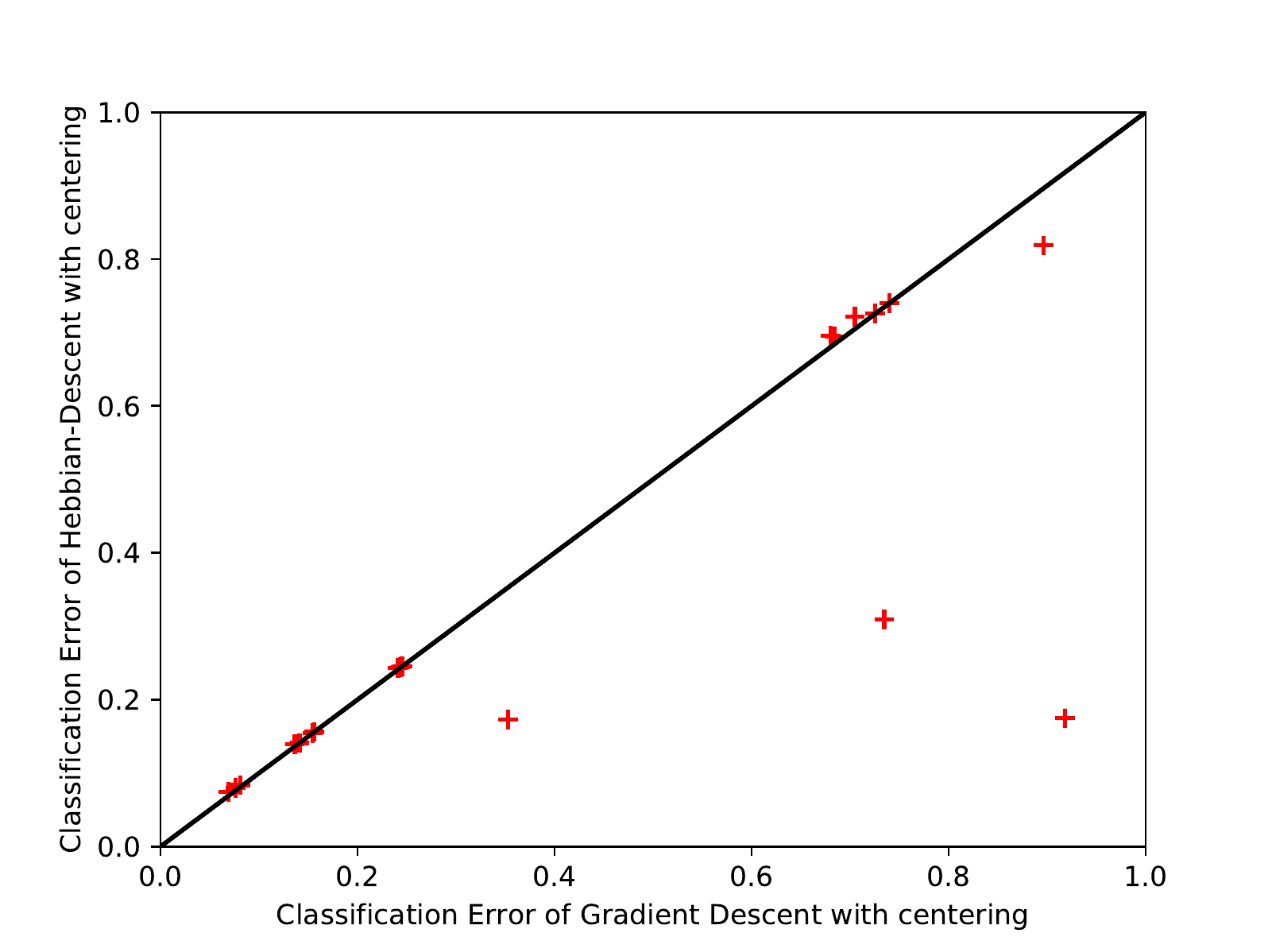}\label{fig:label_learning_centered}}
\subfigure[]{
\includegraphics[scale=0.405, trim=21 10 37 10, clip]{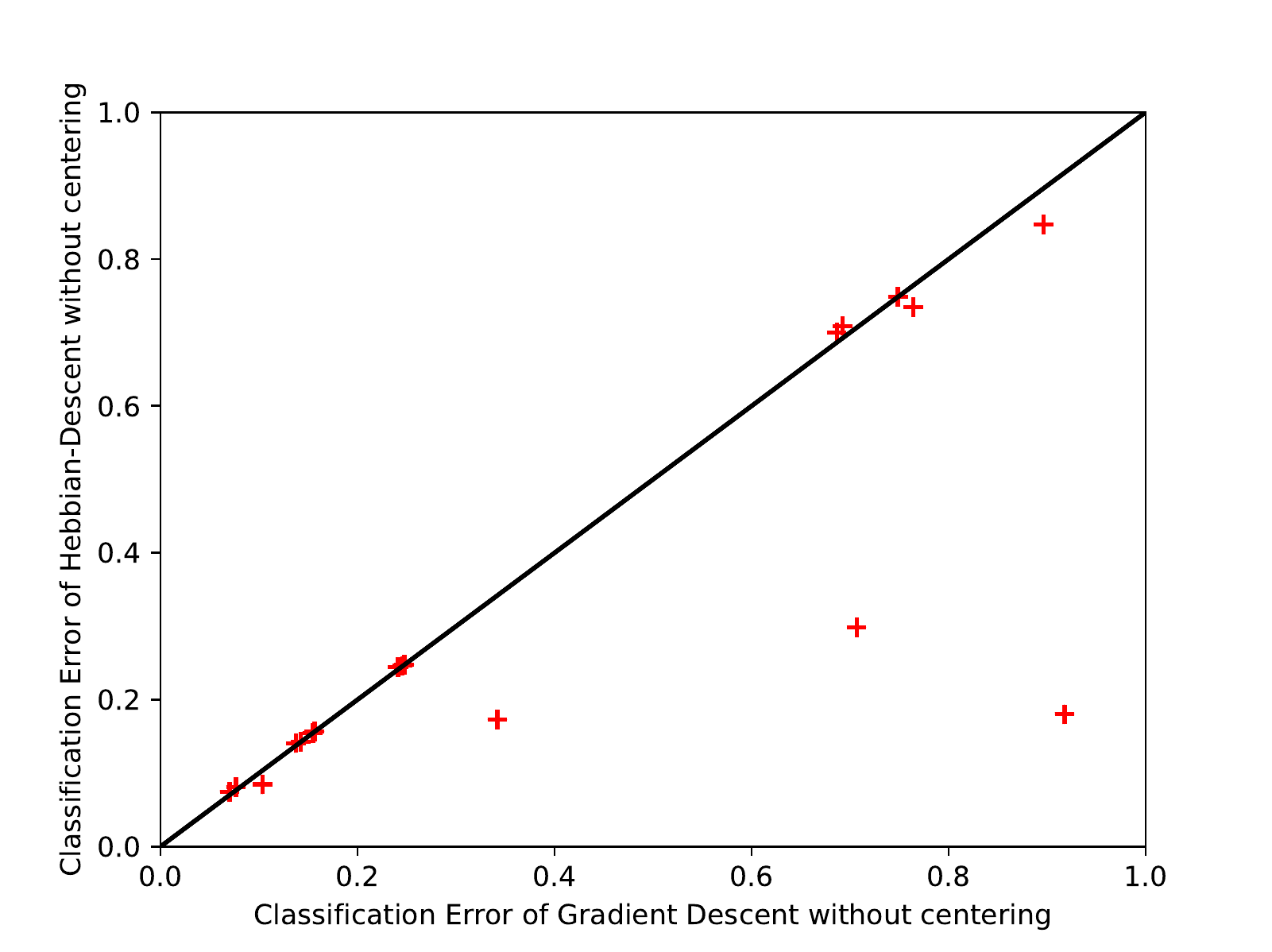}\label{fig:label_learning_uncentered}}
\caption{Comparison of gradient descent and Hebbian-descent in classification (a)~with centering, and (b) without centering.
Each cross represents the average classification error on all patterns averaged over 10 trials for one experiment in Table~\ref{tab:label_learning_centered}. }
\label{fig:label_learning_scatter}
\end{center}
\end{figure} 
Except for the results for the step function all points in both plots lie roughly on the diagonal, illustrating that, independent of whether centering is used or not, there is no significant difference between Hebbian-descent and gradient descent. 
Furthermore, as both plots are very similar there is no significant difference between centered and uncentered networks (for details compare Table~\ref{tab:label_learning_centered} with Table~\ref{tab:label_learning_uncentered} in Appendix~\ref{appendix:hebbian_descent_additional_results}). 

To show that the better performance of Hebbian-descent on more recently seen patterns can also be observed when performing classification we trained a centered network on the MNIST dataset with sigmoid units for one epoch with a batch size of one.
The learning rate was chosen again, so that the performance on the last 20 patterns was best.
The mean absolute error and the classification error for the last 20 patterns are shown in Figure~\ref{fig:online_label_learning_MNIST} (a) and (b), respectively. 
\begin{figure}[t]
\begin{center}
\subfigure[]{
\includegraphics[scale=0.4075, trim=21 0 27 0, clip]{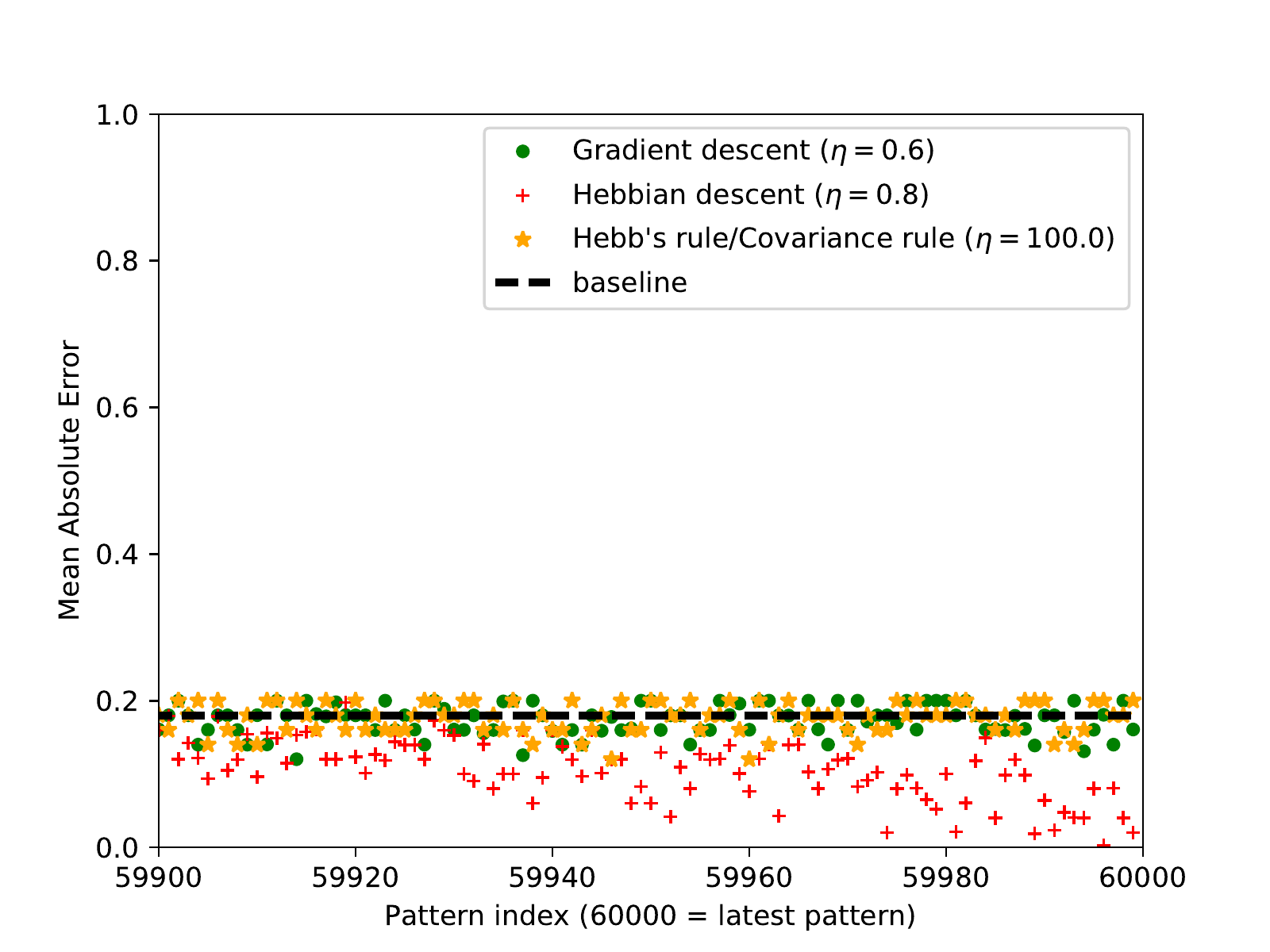}}
\subfigure[]{
\includegraphics[scale=0.4075, trim=21 0 27 0, clip]{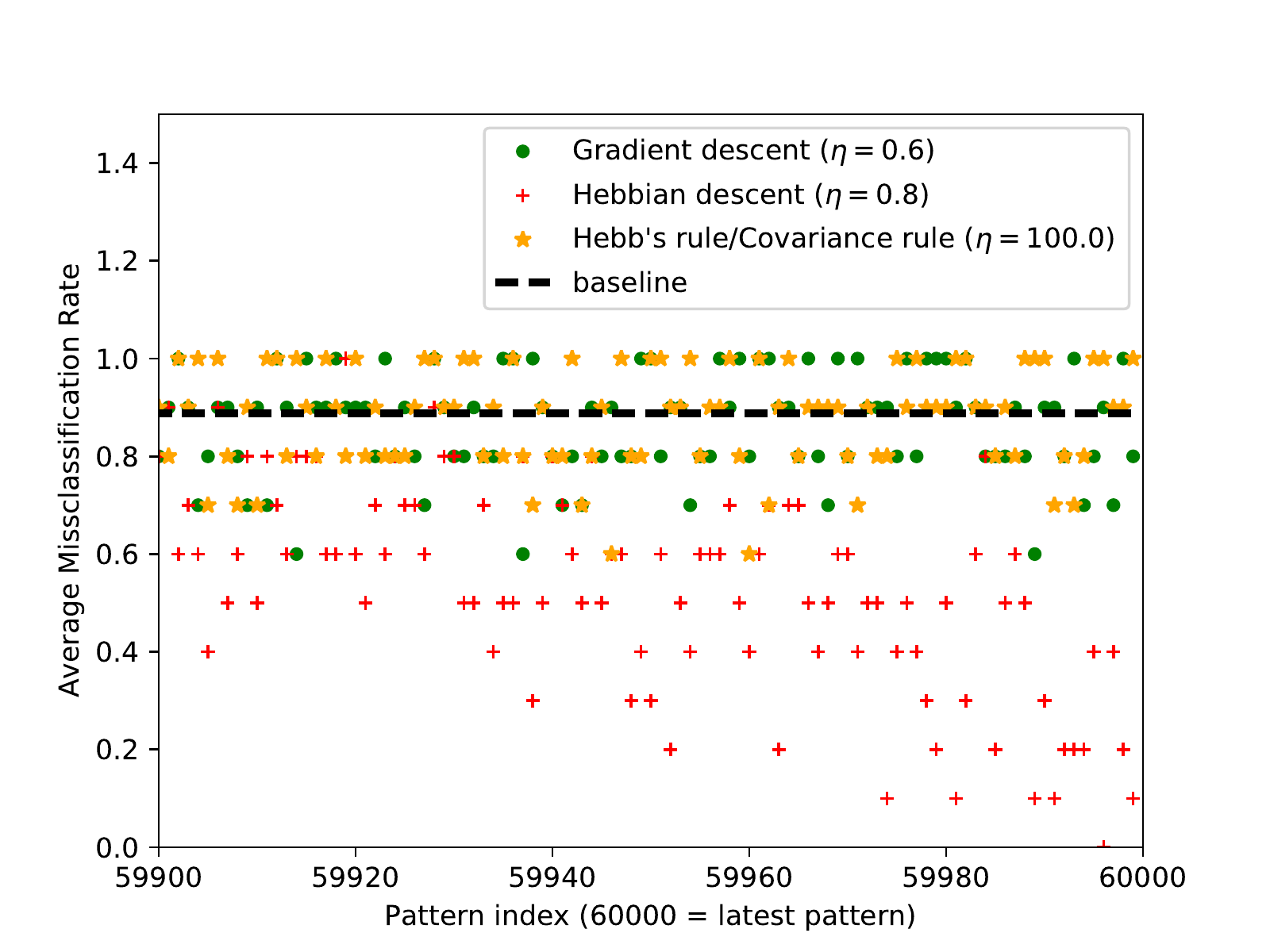}}
\caption{Online-learning classification performance of a network trained on the \emph{MNIST} dataset with the four different update rules, centering, and softmax units. 
The points show the (a) MAE, and (b) classification error for the last 100 patterns averaged over 10 trials.
The learning rate $\eta$ was chosen for each method individually, so that the performance over all 60,000 training patterns was best, but the results when choosing only the most recent patterns are very similar. 
The baseline represents (a) the MAE of a network that independently of the input always returns the mean of the labels and (b) the classification performance when the most frequent label in the training data is always returned, which in case of \emph{MNIST} is roughly 89\%.  }
\label{fig:online_label_learning_MNIST}
\end{center}
\end{figure} 
As shown in the other experiments only Hebbian-descent has a significantly better performance on the most recent patterns  in terms of both, the mean absolute error and the classification error.

\subsubsection{Outlook on Hebbian-descent in Deep Neural Networks}

The analysis of Hebbian-descent and the Hebbian-descent loss in deep networks is beyond the scope of this publication but an interesting research direction for future work.
To give an example apart from the cross entropy loss, where the Hebbian-descent loss is probably beneficial, consider the task of value-function approximation in reinforcement learning such as the famous DQN network~\citep{MnihKavukcuogluEtAl-2015}.
The output neurons are linear in this case although the output values are non-negative. 
The rectifier would thus be a more reasonable choice, not least because linear outputs always performed worse in all of our experiments. 
However, deep reinforcement learning suffers from high variance and when rectifiers are used as output units they can, especially in an early phase of training, get pushed into the negative regime for all inputs. 
While these units never get active again with gradient descent, so that the network might get trapped in a bad local optimum, Hebbian-decent in contrast does not have this problem at all.

\subsection{Auto-Associative Learning}\label{sec:HD_auto_associative_experiments}

Finally, we also performed experiments on auto-associative learning, namely we trained centered auto-encoder networks using either gradient descent or Hebbian-descent on the four real-world datasets. 
As the task is to find a compressed representation of the entired dataset we investigate the multi epoch mini batch learning performance.
Each experiment was repeated ten times and the centered networks were trained for 100 epochs with mean squared error loss, a batch size of 100, and a shifting factor for the hidden offsets of 0.01.
The optimal learning rate was determined via grid search for each method separately, so that the performance on all test patterns was best. 
We used a variety of activation functions for the hidden layer and either a linear or sigmoid activation function for the output layer.
The results are shown in Table~\ref{tab:auto_associative} where the first number represents the mean absolute error averaged over ten trials followed by the corresponding standard deviation, the optimal learning rate in parenthesis, and the average activity of the hidden units also in parenthesis.
\begin{table}[htbp]
\setlength{\tabcolsep}{2pt}
\begin{center}
\begin{small}
\begin{sc}
\begin{tabular}{l@{\hskip 0.2in} r@{\hskip 0.02in} r@{\hskip 0.02in} r@{\hskip 0.02in} r@{\hskip 0.20in} r@{\hskip 0.02in} r@{\hskip 0.02in} r@{\hskip 0.02in}r }
\hline
\abovespace\belowspace
  $\vect \phi$ & \multicolumn{4}{c}{\hskip -0.16in Grad. Descent} & \multicolumn{4}{c}{\hskip -0.16in Hebb. Descent}  \\ 
\hline & & & & & &  \vspace{-0.3cm}\\
\multicolumn{4}{l}{{\hspace{-0.1cm} \emph{ADULT (0.1248) with Linear Output}}} \\ 
 & & & & & & \vspace{-0.35cm}\\
Linear  &  0.0330  & $\pm$ 0.0002  & (0.06) & (0.064) & \textbf{0.0330} & $\pm$ 0.0003 & (0.06) & (-0.000)  \\ 
Rectifier  &  \textbf{0.0332} & $\pm$ 0.0002  & (0.08) & (0.864) & 0.0385   & $\pm$ 0.0019 & (0.20) & (0.209)  \\ 
ExpLin  &  0.0336  & $\pm$ 0.0002  & (0.10) & (0.513) & \textbf{0.0328} & $\pm$ 0.0005 & (0.20) & (0.027)  \\ 
Sigmoid  &  0.0364  & $\pm$ 0.0002  & (0.80) & (0.498) & \textbf{0.0348} & $\pm$ 0.0005 & (0.80) & (0.500)  \\ 
Step  &  \textbf{0.0452} & $\pm$ 0.0037  & (0.60) & (0.491) & \textbf{0.0452}   & $\pm$ 0.0037 & (0.60) & (0.491)  \\ 
\hline & & & &  & &  \vspace{-0.3cm}\\
\multicolumn{4}{l}{{\hspace{-0.1cm} \emph{ADULT (0.1248) with Sigmoid Output}}}\\ 
 & & & & & & \vspace{-0.35cm}\\
Linear  &  0.0032  & $\pm$ 0.0001  & (2.00) & (-1.847) & \textbf{0.0022} & $\pm$ 0.0001 & (1.00) & (0.001)  \\ 
Rectifier  &  0.0083  & $\pm$ 0.0008  & (4.00) & (2.457) & \textbf{0.0078} & $\pm$ 0.0007 & (2.00) & (1.432)  \\ 
ExpLin  &  0.0070  & $\pm$ 0.0003  & (4.00) & (1.070) & \textbf{0.0049} & $\pm$ 0.0004 & (2.00) & (0.870)  \\ 
Sigmoid  &  \textbf{0.0042} & $\pm$ 0.0004  & (80.00) & (0.503) & 0.0079   & $\pm$ 0.0010 & (100.00) & (0.498)  \\ 
Step  &  \textbf{0.0060} & $\pm$ 0.0008  & (100.00) & (0.499) & 0.0064   & $\pm$ 0.0013 & (100.00) & (0.498)  \\ 
\hline & & & &  & & \vspace{-0.3cm}\\
\multicolumn{4}{l}{{\hspace{-0.1cm} \emph{CONNECT (0.1717) with Linear Output}}}\\ 
 & & & & & & \vspace{-0.35cm}\\
Linear  &  0.0515  & $\pm$ 0.0014  & (0.04) & (-0.032) & \textbf{0.0496} & $\pm$ 0.0008 & (0.04) & (-0.001)  \\ 
Rectifier  &  \textbf{0.0501} & $\pm$ 0.0006  & (0.06) & (0.976) & 0.0659   & $\pm$ 0.0008 & (0.10) & (0.247)  \\ 
ExpLin  &  0.0500  & $\pm$ 0.0006  & (0.10) & (0.716) & \textbf{0.0495} & $\pm$ 0.0010 & (0.10) & (0.039)  \\ 
Sigmoid  &  0.0511  & $\pm$ 0.0007  & (0.60) & (0.504) & \textbf{0.0489} & $\pm$ 0.0003 & (0.60) & (0.500)  \\ 
Step  &  \textbf{0.0777} & $\pm$ 0.0012  & (0.10) & (0.498) & \textbf{0.0777}   & $\pm$ 0.0012 & (0.10) & (0.498)  \\ 
\hline & & & &  & & \vspace{-0.3cm}\\
\multicolumn{4}{l}{{\hspace{-0.1cm} \emph{CONNECT (0.1717) with Sigmoid Output }}}\\ 
 & & & & & & \vspace{-0.35cm}\\
Linear  &  0.0009  & $\pm$ 0.0001  & (2.00) & (-2.852) & \textbf{0.0005} & $\pm$ 0.0000 & (1.00) & (-0.002)  \\ 
Rectifier  &  0.0022  & $\pm$ 0.0001  & (4.00) & (2.761) & \textbf{0.0015} & $\pm$ 0.0004 & (2.00) & (2.308)  \\ 
ExpLin  &  0.0019  & $\pm$ 0.0002  & (4.00) & (2.045) & \textbf{0.0010} & $\pm$ 0.0002 & (2.00) & (1.317)  \\ 
Sigmoid  &  \textbf{0.0020} & $\pm$ 0.0002  & (60.00) & (0.500) & 0.0090   & $\pm$ 0.0008 & (100.00) & (0.497)  \\ 
Step  &  \textbf{0.0077} & $\pm$ 0.0005  & (100.00) & (0.497) & 0.0108   & $\pm$ 0.0012 & (100.00) & (0.497)  \\ 
\hline & & & & & & \vspace{-0.3cm}\\
\multicolumn{4}{l}{{\hspace{-0.1cm} \emph{MNIST (0.1496) with Sigmoid Output}}}\\ 
 & & & & & & \vspace{-0.35cm}\\
Linear  &  0.0136  & $\pm$ 0.0000  & (0.40) & (-0.932) & \textbf{0.0115} & $\pm$ 0.0000 & (0.06) & (-0.020)  \\ 
Rectifier  &  \textbf{0.0150} & $\pm$ 0.0001  & (1.00) & (2.230) & 0.0172   & $\pm$ 0.0001 & (0.10) & (0.881)  \\ 
ExpLin  &  \textbf{0.0149} & $\pm$ 0.0001  & (0.80) & (1.269) & 0.0168   & $\pm$ 0.0001 & (0.10) & (0.437)  \\ 
Sigmoid  &  \textbf{0.0200} & $\pm$ 0.0001  & (10.00) & (0.498) & 0.0261   & $\pm$ 0.0001 & (0.20) & (0.500)  \\ 
Step  &  0.0347  & $\pm$ 0.0001  & (8.00) & (0.499) & \textbf{0.0346} & $\pm$ 0.0001 & (0.20) & (0.497)  \\ 
 \hline & & & & & &  \vspace{-0.3cm}\\
\multicolumn{4}{l}{{\hspace{-0.1cm} \emph{CIFAR (0.1955) with Linear Output}}}\\ 
 & & & &  & & \vspace{-0.35cm}\\
 Linear  &  0.0285  & $\pm$ 0.0000  & (0.01) & (-0.206) & \textbf{0.0282} & $\pm$ 0.0000 & (0.01) & (-0.006)  \\ 
Rectifier  &  \textbf{0.0293} & $\pm$ 0.0001  & (0.02) & (0.821) & 0.0439   & $\pm$ 0.0002 & (0.02) & (0.237)  \\ 
ExpLin  &  \textbf{0.0293} & $\pm$ 0.0001  & (0.01) & (0.487) & 0.0373   & $\pm$ 0.0000 & (0.02) & (0.049)  \\ 
Sigmoid  &  \textbf{0.0349} & $\pm$ 0.0000  & (0.10) & (0.499) & 0.0380   & $\pm$ 0.0000 & (0.10) & (0.500)  \\ 
Step  &  \textbf{0.0974} & $\pm$ 0.0001  & (0.01) & (0.500) & \textbf{0.0974}   & $\pm$ 0.0001 & (0.01) & (0.500)  \\ 
\hline & & & &  & &  \vspace{-0.85cm}\\
\end{tabular}
\end{sc}
\end{small}
\end{center}
\caption{Performance for centered auto-encoder networks trained with either gradient descent or Hebbian-descent, for various activation functions and datasets. 
The networks are trained for 100 epochs with a batch size of 100. Each experiment is repeated 10 times and the optimal learning rate is determined via grid search for each method separately, so that the performance on all test patterns is best. 
The first number in each entry shows the MAE averaged over ten trials followed by the corresponding standard deviation, the optimal learning rate, and the average hidden activity in parenthesis. 
The best result is indicated in bold.
The baseline given for each dataset in parenthesis behind its name represent the mean absolute error between the input data and its mean.
} 
\label{tab:auto_associative}
\end{table}
The performances of Hebbian-descent or gradient descent are comparable in most cases, but when gradient descent performs better than Hebbian-descent the difference is usually more distinct compared to when Hebbian-descent has the better performance, which can also be seen from scatter plot in Figure~\ref{fig:auto_association_compare_performance}. 
\begin{figure}[t]
\begin{center}
\subfigure[]{
\includegraphics[scale=0.405, trim=11 5 27 10, clip]{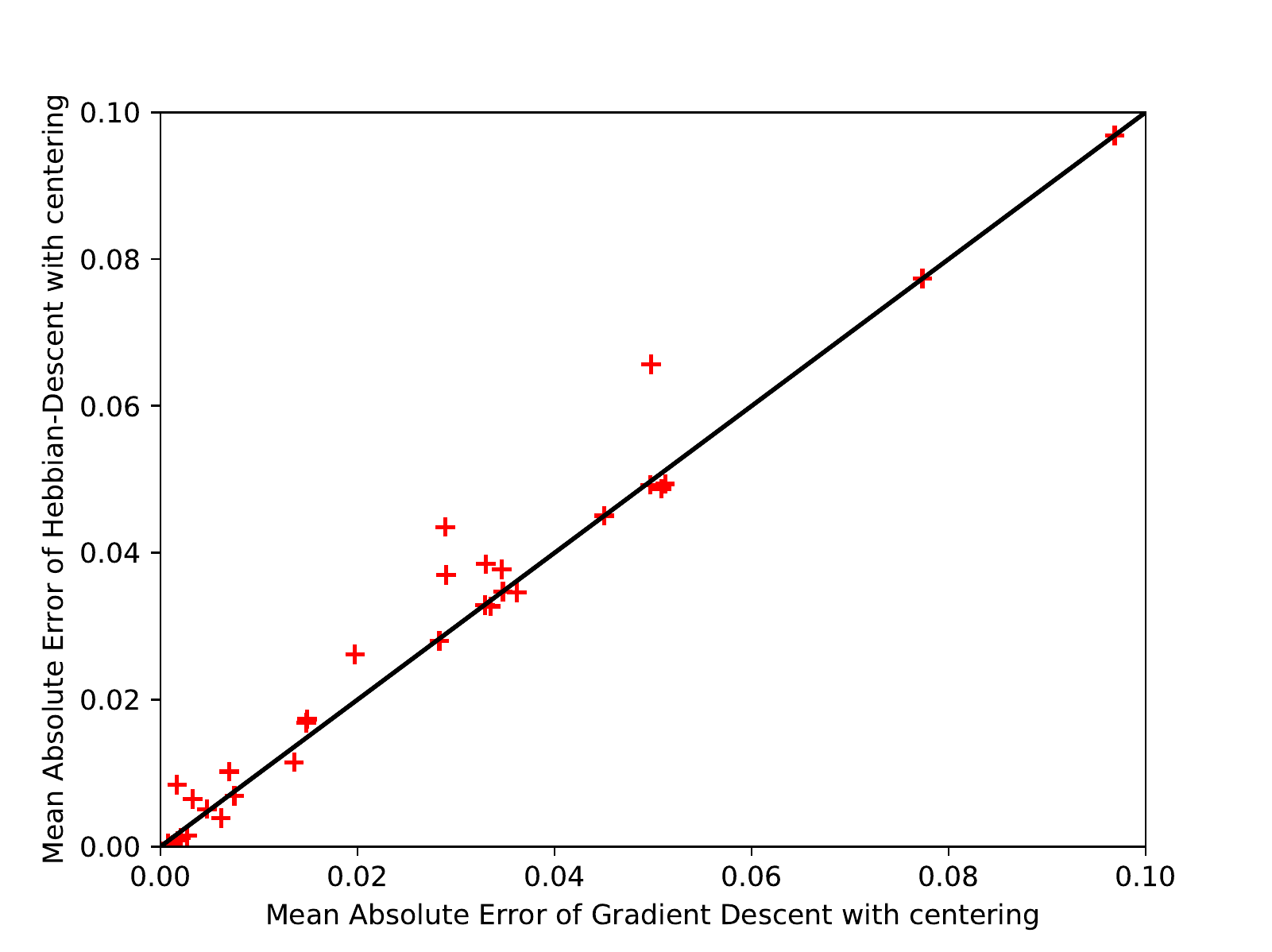}\label{fig:auto_association_compare_performance}}
\subfigure[]{
\includegraphics[scale=0.405, trim=21 5 37 10, clip]{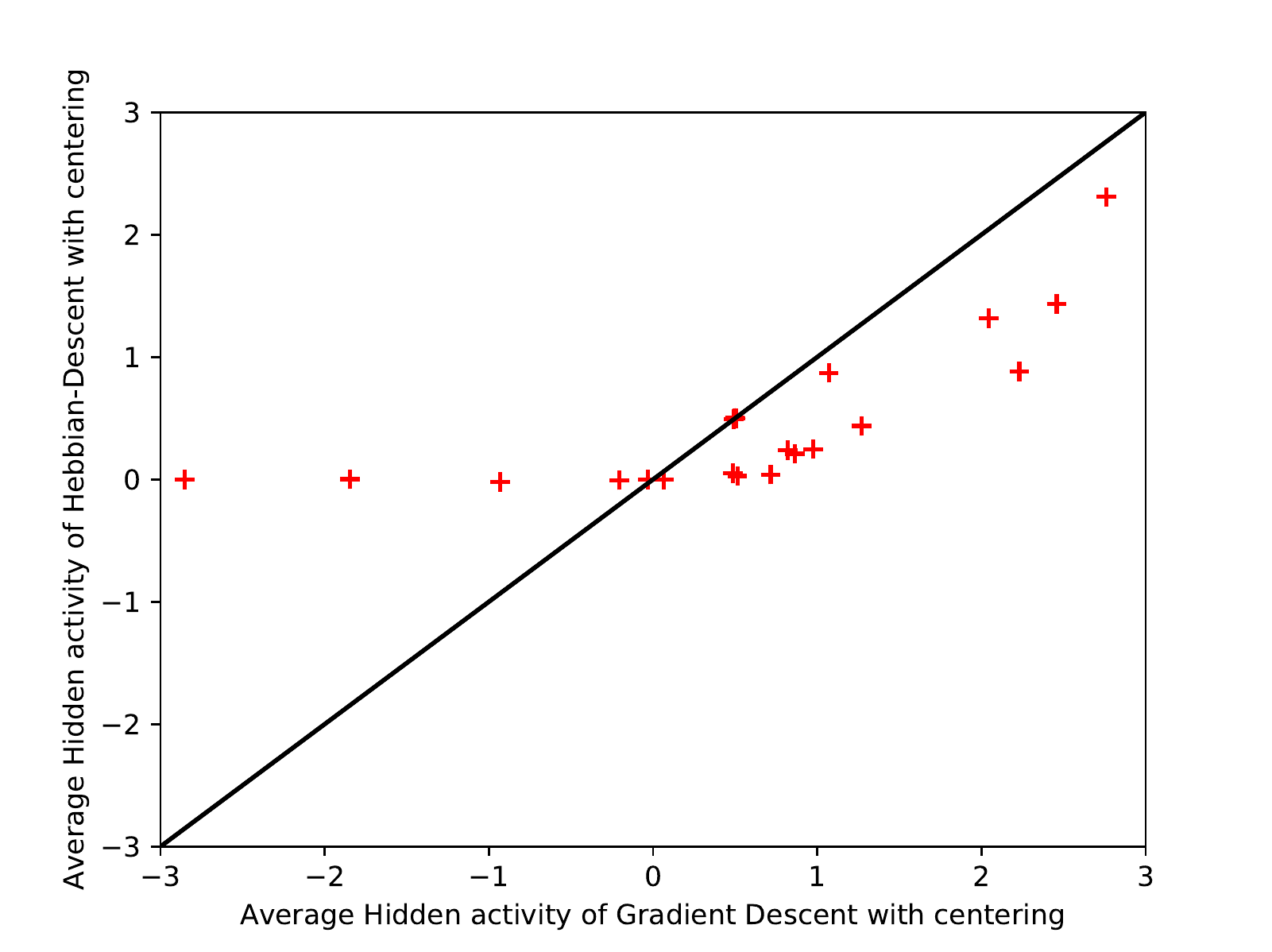}\label{fig:auto_association_compare_activity}}
\caption{Comparison of (a) mean absolute error, and (b) average hidden activity of gradient descent and Hebbian-descent in auto-associative learning.
Each cross represents the result for one experiment in Table~\ref{tab:auto_associative}. }
\label{fig:auto_association_compare}
\end{center}
\end{figure} 
As an example see the \emph{CIFAR} dataset in combination with rectifier or exponential linear units or the \emph{CONNECT} dataset with rectifier units for example.
The results on the training data show qualitatively the same picture and we verified that the networks have not over-fitted to the training data, and that the methods actually converge.
Notice, that we do not show results for uncentered networks as the benefit of centering in auto-encoder networks has already been shown by~\citet{Melchior2016}.

We also performed experiments with Hebb's rule / covariance rule, but the results are not shown as they are always much worse than that of gradient descent and Hebbian-descent.
The performance when using sigmoid hidden units and linear output units for example is 0.1128, 0.3335, 0.1340, and 0.4835 for \emph{ADULT}, \emph{CONNECT}, \emph{MNIST}, and \emph{CIFAR} dataset, respectively. 
In all cases the results are either close to the corresponding baseline or even much worse.

\subsubsection{Auto-Associative Hebbian-Descent Learns a Homogeneous Hidden Representation}\label{sec:HD_homo_hidden_experiments}

An interesting observation is that the average activity of hidden linear units as shown in Table~\ref{tab:auto_associative} is approximately zero when using Hebbian-descent, whereas it is significantly negative when using gradient descent except for first and third example.
This can best be seen from the scatter plot in Figure~\ref{fig:auto_association_compare_activity}, which also illustrates that Hebbian-descent in contrast to gradient descent never leads to a negative average hidden activity.
Furthermore, for rectifier and exponential linear units Hebbian-descent learns a hidden distribution with much lower activity compared to gradient descent in most cases, while for Sigmoid and Step units the average activity is approximately 0.5 for both methods.
To get a more detailed picture of the learned hidden representations we investigated the units' individual mean values and the corresponding standard deviation averaged over all test patterns but for single trials. 
Through all experiments the average activity of all hidden units was approximately the same when the network had been trained with Hebbian-descent, whereas it varied significantly with gradient descent. 
Figure~\ref{fig:hidden_activity_GD_HD} illustrates this for (a) the \emph{ADULT} dataset with sigmoid hidden units and linear output units and (b) the \emph{MNIST} dataset with rectifier hidden units and sigmoid output units.
\begin{figure}[t]
\begin{center}
\subfigure[]{
\includegraphics[scale=0.425, trim=21 0 38 0, clip]{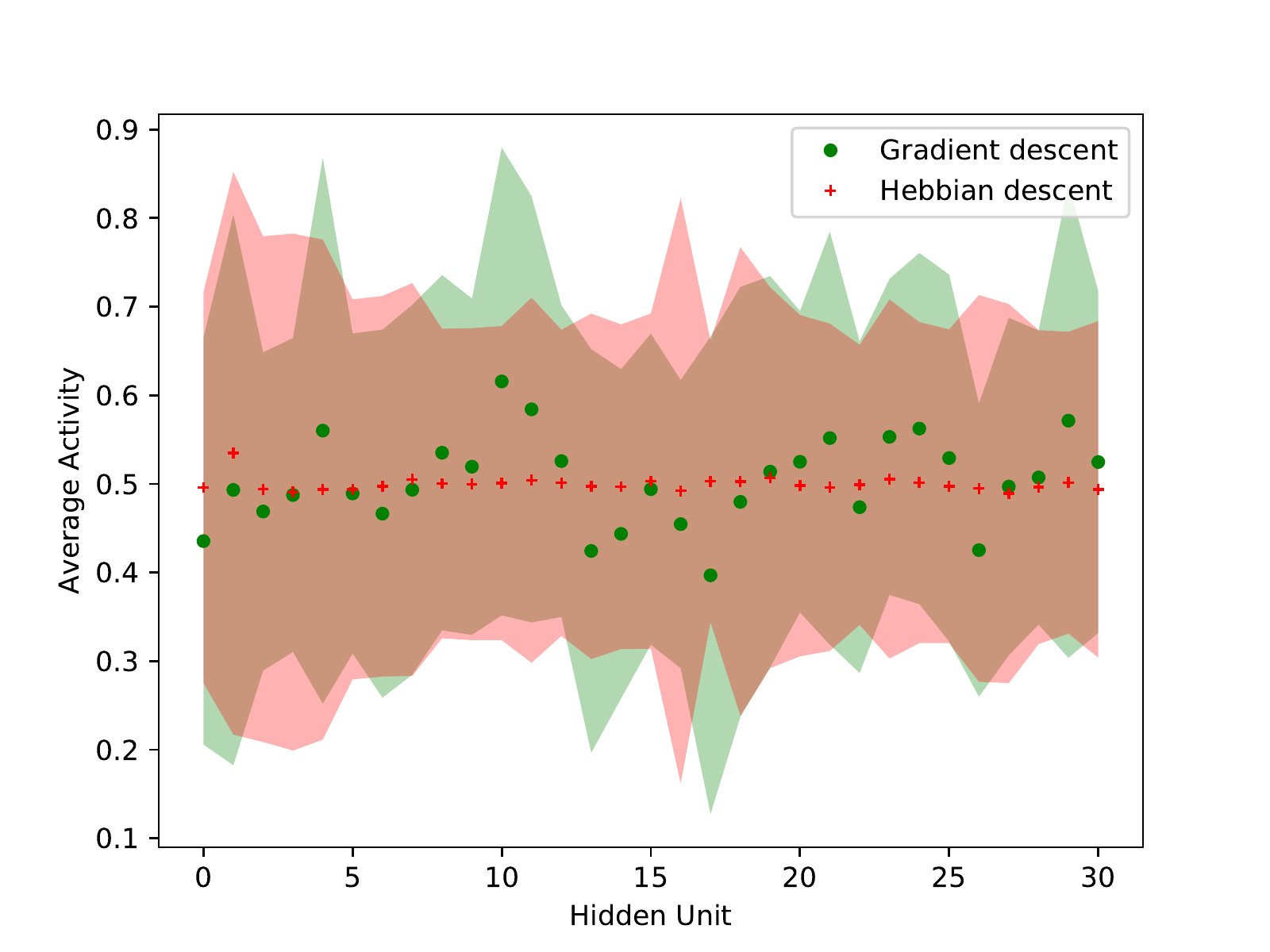}}
\subfigure[]{
\includegraphics[scale=0.425, trim=35 0 38 0, clip]{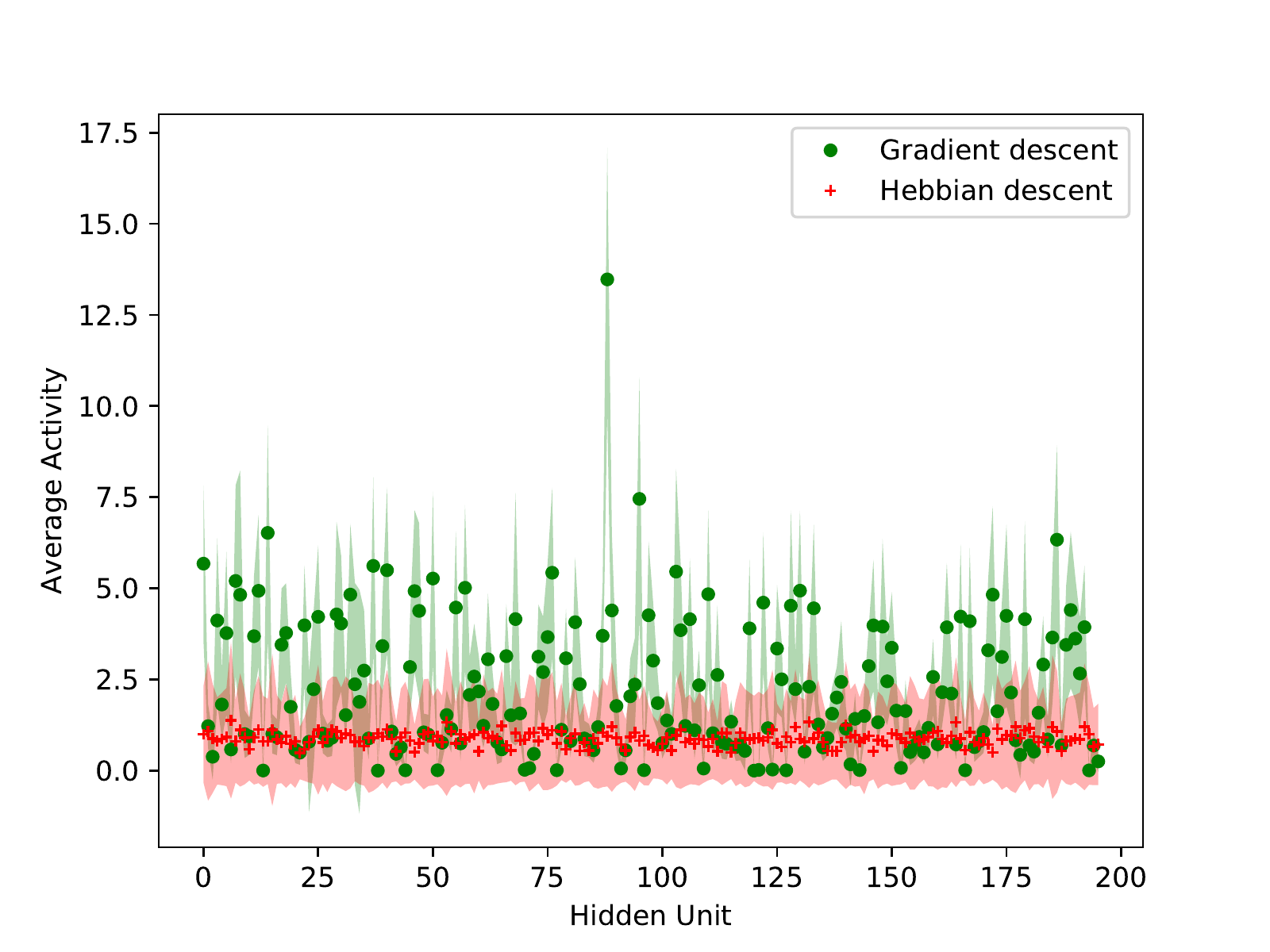}}
\caption{Average hidden activity with corresponding standard deviation of a network trained either with gradient descent or Hebbian-descent on (a) \emph{ADULT} dataset with sigmoid hidden units and linear output units and (b) \emph{MNIST} dataset with rectifier hidden units and sigmoid output units.}
\label{fig:hidden_activity_GD_HD}
\end{center}
\end{figure} 
Thus Hebbian-descent covers the hidden or latent space homogeneously, which leads to an equal importance of all hidden units and in case of linear units even leads to a mean free hidden representation (See Table~\ref{tab:auto_associative}).
A homogeneously covered latent space is desirable from an efficient coding perspective as it has maximal entropy, but it does not necessarily imply a smaller reconstruction error.
Notice, that we did not use the update for the encoder bias (Equation~\eqref{eqn:update_auto_hebbian_descent_unsup_b_1}), which can control the activation level of the hidden representation if desired. 

\section{Conclusion}\label{sec:conclusion}

In this work we have proposed Hebbian-descent as a biologically plausible learning rule for hetero-associative as well as auto-associative learning in artificial neural networks.
It can be used as a replacement for gradient descent as well as Hebbian learning as it inherits their advantages while not suffering from their disadvantages.
In Section~\ref{sec:HD_limit_cov_rule} we argued that the major drawbacks of Hebbian learning are the problem of dealing with correlated input data and that it does not profit from seeing training patterns several times.
For gradient descent we identify the derivative of the activation function as problematic as it is not biologically plausible and can lead to a vanishing error-term that prevents efficient online learning (Section~\ref{sec:HD_problems_gradient_descent}).
Hebbian-descent addresses these problems by getting rid of the activation function's derivative leading to a biologically plausible update rule that is provably convergent, does not suffer from the vanishing error term problem, can deal with correlated data, profits form seeing patterns several times, and enables successful online learning when centering is used.\\\\
In the case of hetero-associative learning we have shown analytically that:
\begin{itemize}
\item[1.] The Hebbian-descent update can generally be understood as gradient descent update where the derivative of the activation function of the output layer is removed (Section~\ref{sec:HD_heteroasso_HD} and Section~\ref{sec:HD_gradient_descent_update}).
\item[2.] In case of a strictly positive derivative of the activation function Hebbian-descent leads to the same update rule as gradient descent but for a different loss function (Section~\ref{sec:HD_Hebbian_descent_loss}).
\item[3.] In this case Hebbian-descent is a particular form of gradient descent and thus inherits its convergence properties (Section~\ref{sec:HD_Hebbian_descent_covergence}).
\item[4.] A well known case of this loss is the cross entropy in combination with softmax output units, in which case the derivative of the activation function vanishes and the error term is just the difference between the network output and the desired output value. Hebbian-descent generalizes this idea for arbitrary activation functions and error terms (Section~\ref{sec:HD_the_error_term_perspective} and Appendix~\ref{appendix:list_of_HD_losses}).
\end{itemize}
Furthermore, in case of the mean squared error loss we have shown in particular that:
\begin{itemize}
\item[5.] Hebbian-descent can be understood as the difference of a supervised and an unsupervised Hebb learning step (Section~\ref{sec:HD_heteroasso_HD} and Section~\ref{sec:HD_hebb}).
\item[6.] It can also be understood as contrastive learning where the hidden units are clamped to the same values in positive and negative phase (Section~\ref{sec:HD_contrastive_clamping}).
\item[7.] For an invertible and integrable activation function Hebbian-descent optimizes a generalized linear model (Section~\ref{sec:HD_glm}).
\item[8.] As a consequence the Hebbian-descent loss can be seen as the general log-likelihood loss (Section~\ref{sec:HD_HD_loss_generalized}).
\end{itemize}
Our empirical results suggests that:
\begin{itemize}
\item[9.] All update rules considered in this work profit from centering (Section~\ref{sec:hetero_ssociative_online_learning_with_centering}).
\item[10.] Hebbian-descent outperforms Hebb's rule and the covariance rule in general (Section~\ref{sec:HD_hetero_associative_learning}).
\item[11.] In batch or mini-batch learning for several epochs it has a similar performance as gradient decent (Section~\ref{sec:HD_multi_epoch_experiments}).
\item[12.] It performs significantly better in online / one-shot learning than all the other update rules (Section~\ref{sec:hetero_ssociative_online_learning_with_centering}).
\item[13.] Hebbian-descent even converges when the derivative of the activation function is non-negative, then often outperforms gradient decent, and can even be used with non-linearities like the step function (Section~\ref{sec:HD_results}).
\item[14.] Only Hebbian-descent with centering shows an inherent and plausible curve of forgetting that follows a power law distribution. No additional forgetting mechanism like a weight decay term is required (Section~\ref{sec:HD_weight_decay_experiments}).
\end{itemize}
In case of auto-associative learning we have shown that:
\begin{itemize}
\item[15.] Hebbian-descent corresponds to a one step mean field contrastive divergence update and and is thus related to contrastive learning as used for restricted Boltzmann machines or Hopfield networks for example (Section~\ref{sec:HD_auto_asso_from_the_perspective_of_contrastive_learning}).
\item[16.] The update rule is also related to gradient-decent as it corresponds to the encoder related part of the gradient of an auto encoder (Section~\ref{sec:HD_from_the_perspective_of_gradient_descent}).
\item[17.] Furthermore, it can also be understood as non-linear Oja's / Sanger's rule (Section~\ref{sec:HD_from_the_perspective_of_oja_sanger}).
\item[18.] We have proven that this update rule does not correspond to the gradient of any objective function (Appendix~\ref{appendix:AA_HD_is_generally_not_the_gradient_of_any_objective_function}). However, this does not imply that the update rule does not converge and we have shown empirically that Hebbian-descent converges in terms of the reconstruction error (Section~\ref{sec:HD_auto_associative_experiments}).
\item[19.] It leads to a hidden representation where each hidden unit has almost the same activity on average such that the learned distribution implies a more efficient encoding of the input data (Section~\ref{sec:HD_homo_hidden_experiments}).
\item[20.] It has a similar or slightly worse performance than gradient descent, but is computationally less expensive, and biologically more plausible (Section~\ref{sec:HD_auto_associative_experiments}).
\end{itemize}
Besides the benefit for shallow networks that have been considered in this work, Hebbian-descent has also a direct impact on deep learning. 
The vanishing error term influences the lower layers in deep neural networks (Section~\ref{sec:HD_learning_rate} and Section~\ref{sec:HD_outlook_deep_auto_encoder}) and is thus part of the vanishing gradient problem in deep neural networks. 
Furthermore, the error-term perspective of Hebbian-descent, which treats the output activation functions and loss functions as a unity, allows to design and understand loss functions more intuitively (Section~\ref{sec:HD_the_error_term_perspective}). 
The empirical evaluation of Hebbian-descent losses (\emph{e.g.}\ Appendix~\ref{appendix:list_of_HD_losses}) in deep neural networks is therefore a promising future research direction.
Future work might also focus on Hebbian-decent learning in spiking neural networks since an implementation of Hebbian-descent through spike-timing-dependent plasticity is obvious.

\section*{Acknowledgement}

We would like to thank Dr. Amir Hossein Azizi and Dr. Mehdi Bayati for helpful discussions on Hebbian learning.

\bibliography{HebbianDescent}
\bibliographystyle{apalike}
%\nocite{*}
\clearpage

\begin{appendices}

\section{List of Hebbian-Descent Loss Functions}\label{appendix:list_of_HD_losses}

The following table lists some Hebbian-descent loss functions for different activation functions and error terms. As shown in Section~\ref{sec:HD_Hebbian_descent_loss}, using these loss functions with gradient descent leads to the Hebbian-descent update. For a single weight update we have
$
\delta_{_{GD}} w_{ij} \stackrel{\eqref{eqn:update_general_gradient_descent_sup_w_2}}{=}  -\eta \big(x_i-\mu_i \big) 
\frac{\partial \mathcal{L}_{_{HD}}(t_j, h_j)}{ \partial h_j} \phi'(a_j)
\stackrel{\eqref{eqn:update_general_gradient_descent_sup_w_2}}{=} -\eta \big(x_i-\mu_i \big) 
\frac{\mathcal{E}(t_j, h_j)}{\phi'(a_j)}\phi'(a_j)  \\
\stackrel{\eqref{eqn:update_general_gradient_descent_sup_w_2}}{=} -\eta \big(x_i-\mu_i \big) 
\mathcal{E}(t_j, h_j) \stackrel{\eqref{eqn:update_general_hebbian_descent_sup_w_1}}{=} \delta_{_{HD}} w_{ij} 
$.
\begin{table}[!htbp]
\centering
\begin{tabular}{l|l|l}
\hline \hline& & \vspace{-0.3cm}\\
$ h_j = \phi(a_j)$ &  $\frac{ \mathcal{E}( t_j, h_j)}{ \phi'(a_j)} = \frac{h_j -t_j}{ \phi'(a_j)}$  &  $\mathcal{L}_{_{HD}}(t_j, h_j) = \int \frac{ \mathcal{E}(t_j, h_j)}{\phi'(a_j)} \,\mathrm{d} h_j$  \\
& & \vspace{-0.3cm}\\
\hline \hline& & \vspace{-0.3cm}\\
Linear & \multirow{2}{*}{} & Squared error loss \\
$a_j$ & $h_j - t_j$ & $\frac{1}{2}\left(h_j - t_j\right)^2$\\   
 & & \vspace{-0.3cm}\\
\hline & & \vspace{-0.3cm}\\
Sigmoid \scriptsize & \multirow{2}{*}{} & Cross entropy loss   \\
$\frac{1}{1+\exp(-a_j)}$ & $\frac{h_j - t_j}{h_j  ( 1 - h_j)}$ & $-t_j \ln ({h_j})-(1-t_j)\ln(1-{h_j})$ \\ 
 & & \vspace{-0.3cm}\\
\hline & & \vspace{-0.3cm}\\
Softmax &  & Cross entropy loss   \\
 \scriptsize {\citep{Bridle-1990}} &  &    \\
$\frac{\exp(a_j)}{\sum_k \exp(a_k)}$ & $\frac{h_j - t_j}{h_j  ( \delta_{j,k} - h_k)}$ & $-t_j \ln ({h_j})-(1-t_j)\ln(1-{h_j})$ \\ 
 & & \vspace{-0.3cm}\\
\hline & & \vspace{-0.3cm}\\
Scaled Hyperbolic  &  &   \\
Tangent &  &   \\
$\alpha \tanh(a_j)$ & $\frac{h_j - t_j}{\alpha - \frac{1}{\alpha} h_j^2}$ & $-\frac{1}{2}(\alpha+t_j) \ln ({\alpha+h_j})-\frac{1}{2}(\alpha-t_j)\ln(\alpha-{h_j})$ \\     
 & & \vspace{-0.3cm}\\
\hline & & \vspace{-0.3cm}\\
Approx. Step \scriptsize{($\alpha \rightarrow 0$)} & \multirow{2}{*}{} & \multirow{2}{*}{} \\
%$\begin{cases} \alpha a_j &{}\\ \alpha a_j + \beta &{}\end{cases}$ 
%& $\begin{cases} \frac{h_j - t_j}{\alpha} &{}\\ \frac{h_j - t_j}{\alpha} &{}\end{cases}$ 
%& $\begin{cases} \frac{1}{2 \alpha}\left(h_j - t_j\right)^2 &{\text{for }}a_j<0\\ \frac{1}{2 \alpha}\left(h_j - t_j\right)^2 &{\text{for }}a_j\geq 0\end{cases}$  \\
%$\begin{cases} \alpha a_j &{\text{for }}a_j<0\\ \alpha a_j + \beta &{\text{for }}a_j\geq 0\end{cases}$ 
%& $\frac{h_j - t_j}{\alpha} $ 
%& $\frac{1}{2 \alpha}\left(h_j - t_j\right)^2$  \\
$\begin{cases} \alpha a_j &{}\\ \alpha a_j + \beta &{}\end{cases}$ 
& $\begin{cases}  &{}\\  &{} \end{cases} \hspace{-0.5cm}\frac{h_j - t_j}{\alpha}$ 
& $\begin{cases}  &{}\\  &{} \end{cases} \hspace{-0.5cm}\frac{1}{2 \alpha}\left(h_j - t_j\right)^2 
\hspace{0.3cm}
\begin{aligned}
{\text{for }}a_j<0\\
{\text{for }}a_j\geq 0
\end{aligned}$\\
 & & \vspace{-0.3cm}\\
\hline & & \vspace{-0.3cm}\\
Leaky Rectifier.  & & \\
\scriptsize {\citep{MaasHannunEtAl-2013}} & & \\
$\begin{cases} \alpha\,a_j &{}\\a_j &{}\end{cases}
$ & $\begin{cases} \frac{1}{\alpha} \left(h_j - t_j\right) &{}\\h_j - t_j &{}\end{cases}$ 
& $\begin{cases} \frac{1}{2\alpha}\left(h_j - t_j\right)^2 &{\text{for }}a_j<0\\\frac{1}{2}\left(h_j - t_j\right)^2 &{\text{for }}a_j\geq 0\end{cases}$  \\  
 & & \vspace{-0.3cm}\\
\hline & & \vspace{-0.3cm}\\
Scaled Exp. Linear & & \\
\scriptsize {\citep{KlambauerUnterthinerEtAl-2017}}& & \\
$\begin{cases} \lambda \alpha (\exp(a_j) - 1) &{}\\\lambda a_j &{}\end{cases}
$ & $\begin{cases} \frac{h_j - t_j}{h_j - \lambda \alpha} &{}\\ \frac{1}{\lambda}\left(h_j - t_j\right) &{}\end{cases}$ 
& $\begin{cases} h_j -\left(t_j + \lambda \alpha \right)\log(h_j + \lambda \alpha ) &{\text{for }}a_j<0\\\frac{1}{2 \lambda}\left(h_j - t_j\right)^2 &{\text{for }}a_j\geq 0\end{cases}$  \\
%& & \vspace{-0.3cm}\\ 
\hline & & \vspace{-0.3cm}\\
\end{tabular}

\end{table}
\begin{table}[!thpb]
\centering
\begin{tabular}{l|l|l}
% & & \vspace{-0.3cm}\\
%\hline & & \vspace{-0.3cm}\\
Inv. Sqrt. & & \\
\scriptsize {\citep{CarlileDelamarterEtAl-2017}}& & \\
$\begin{cases} \frac{a_j}{\sqrt{1+\alpha a_j^2}} &{}\\a_j &{}\end{cases}
$ & $\begin{cases} \frac{h_j - t_j}{\left(\alpha a_j^2+1\right)^\frac{3}{2}} &{}\\\left(h_j - t_j\right) &{}\end{cases}$ 
& $\begin{cases} 
\frac{1}{2} \left(\alpha \hat{a}_j^2+1\right)^\frac{3}{2}\left(h_j - t_j\right)^2
 &{\text{for }}a_j<0\\\frac{1}{2}\left(h_j - t_j\right)^2 &{\text{for }}a_j\geq 0\end{cases}$  \\
 \hline & & \vspace{-0.3cm}\\
SoftSign & & \\
\scriptsize {\citep{BergstraDesjardinsEtAl-2009}}& & \\
$\frac{a_j}{1+|a_j|}$ & $\frac{(h_j - t_j)}{(1+|a_j|)^{-2}}$ & $\frac{1}{2}(1+|\hat{a}_j|)^2\left(h_j - t_j\right)^2$\\
\hline & & \vspace{-0.3cm}\\
SoftPlus & & \\
\scriptsize {\citep{GlorotBordesEtAl-2011}} & & \\
$\log(1+\exp(a_j))$ & $\frac{(h_j - t_j)}{(1+\exp(-a_j)^{-1}}$ & $\frac{1}{2}(1+\exp(-\hat{a}_j)\left(h_j - t_j\right)^2$\\
 & & \vspace{-0.3cm}\\
\hline & & \vspace{-0.3cm}\\
Inv. Sqrt.  & & \\
\scriptsize {\citep{CarlileDelamarterEtAl-2017}} & & \\
$\frac{a_j}{\sqrt{1+\alpha a_j^2}}$ & $\frac{h_j - t_j}{\left(\alpha a_j^2+1\right)^\frac{3}{2}}$ & $\frac{1}{2} \left(\alpha \hat{a}_j^2+1\right)^\frac{3}{2}\left(h_j - t_j\right)^2$\\
& & \vspace{+0.5cm}\\
\hline \hline& & \vspace{-0.3cm}\\
 $ h_j = \phi(a_j)$ &  $\frac{ \mathcal{E}( t_j, h_j)}{ \phi'(a_j)} = $  &  $\mathcal{L}_{_{HD}}(t_j, h_j) = \int \frac{ \mathcal{E}(t_j, h_j)}{\phi'(a_j)} \,\mathrm{d} h_j$  \\
  &  $\frac{\alpha \tanh(\beta (h_j - t_j))}{ \phi'(a_j)}$  &   \\
\hline \hline& & \vspace{-0.2cm}\\
Linear & \multirow{2}{*}{} & A smooth version of the Huber loss \\
$a_j$ & $\alpha \tanh(\beta (h_j - t_j))$ & $\frac{\alpha}{\beta}  \ln(\cosh(\beta ( h_j- t_j)))$\\
& & \vspace{+0.5cm}\\
\hline \hline & & \vspace{-0.3cm}\\
$ h_j = \phi(a_j)$ & \scriptsize{$(\alpha > 0, \alpha \rightarrow 0)$} & $\mathcal{L}_{_{HD}}(t_j, h_j) = \int \frac{ \mathcal{E}(t_j, h_j)}{\phi'(a_j)} \,\mathrm{d} h_j$ \\
 & $\frac{ \mathcal{E}( t_j, h_j)}{ \phi'(a_j)} = \begin{cases} \frac{-t_j}{\phi'(a_j)} &{}\\ \frac{\alpha}{\phi'(a_j)} \end{cases}$ \\
\hline \hline& & \vspace{-0.3cm}\\
Linear & \scriptsize{$(\alpha > 0, \alpha \rightarrow 0)$} & 'Leaky' version of the Hinge loss  \\
$a_j$ 
& $\begin{cases} -t_j &{}\\ \alpha \end{cases}$ 
& $\begin{cases} 1-t_j h_j  &{\text{for }} t_j h_j<1 \\ \alpha h_j &{\text{for }} t_jh_j\geq 1 \end{cases}$  \\  
 & & \vspace{-0.3cm}\\
\hline & & \vspace{-0.3cm}\\
Sigmoid   & \scriptsize{$(\alpha > 0, \alpha \rightarrow 0)$} & \multirow{2}{*}{}   \\
$\frac{1}{1+\exp(-a_j)}$ 
& $\begin{cases} \frac{-t_j}{h_j  ( 1 - h_j)} &{}\\ \frac{\alpha}{h_j  ( 1 - h_j)}\end{cases}$ 
& $\begin{cases} -t_j \log(h_j) - \log(1-h_j) &{\text{for }} t_j h_j<1 \\\alpha \log(h_j) - \log(1-h_j) &{\text{for }} t_jh_j\geq 1 \end{cases}$  
\end{tabular}
\caption{List of Hebbian-descent loss functions (see Equation~\eqref{eqn:gradient_descent_loss_implementing_Hebbian_descent}) for various activation functions and error terms. For notational simplicity given for a single output, but for multiple outputs simply use $\mathcal{L}_{_{HD}}(\vect t, \vect h) = \sum_j \mathcal{L}_{_{HD}}(t_j, h_j)$. Notice that $\hat{a}_j$ needs to be treated as constant when the gradient of the loss function $\mathcal{L}_{_{HD}}(t_j, h_j)$ w.r.t. the model parameters is calculated.}
\label{tab:list_HD_loss_functions}
\end{table}

\section{Convergence of Stochastic Hebbian-Descent}\label{appendix:convergence_of_stochastic_hebbian_descent}

\begin{theorem} 
Stochastic Hebbian-descent converges for an infinitesimal step-size if the derivative of the activation function is strictly positive. 
\end{theorem}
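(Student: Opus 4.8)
The plan is to treat Hebbian-descent as a preconditioned form of gradient descent and to use the loss itself as a Lyapunov function. First I would collect all parameters into a single vector $\theta=(\mathrm{vec}(\vect W),\vect b)$ and write both update rules as vector fields on $\theta$. Comparing Equations~\eqref{eqn:update_general_hebbian_descent_sup_w_1}--\eqref{eqn:update_general_hebbian_descent_sup_b_1} with the gradient in Equations~\eqref{eqn:update_general_gradient_descent_sup_w_2} and \eqref{eqn:update_general_gradient_descent_sup_b_2}, the Hebbian-descent field is $-\eta\,\vect g^{HD}$ with components $g^{HD}_{ij}=(x_i-\mu_i)\,\mathcal{E}(t_j,h_j)$, whereas the true gradient $\vect g=\nabla_\theta\mathcal{L}$ carries the extra factor $\phi'(a_j)$ on every component feeding into unit $j$. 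Hence $\vect g=\vect D\,\vect g^{HD}$, where $\vect D$ is the diagonal (per-output-unit) matrix with entries $\phi'(a_j)$. The strict positivity hypothesis enters exactly here: it makes $\vect D$ positive definite and invertible, so $\vect g^{HD}=\vect D^{-1}\vect g$ is a well-defined, strictly positive-definite preconditioning of the gradient, matching the adaptive learning-rate view of Section~\ref{sec:HD_learning_rate}.

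The central estimate is then immediate. The directional derivative of $\mathcal{L}$ along the Hebbian-descent direction is
\begin{equation}
\big\langle \nabla_\theta\mathcal{L},\,-\vect g^{HD}\big\rangle = -\,\vect g^{T}\vect D^{-1}\vect g \le 0,
\end{equation}
with equality only when $\vect g=\vect 0$ (equivalently $\vect g^{HD}=\vect 0$). This confirms the geometric claim of Section~\ref{sec:HD_Hebbian_descent_covergence} that the two update vectors enclose an angle below $90^\circ$, and that per pattern the two rules share exactly the same stationary set. For an infinitesimal step-size I would pass to the associated flow $\dot\theta=-\vect g^{HD}(\theta)$, along which $\dot{\mathcal{L}}=-\vect g^{T}\vect D^{-1}\vect g\le 0$. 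Since a proper loss such as the squared error is bounded below, $\mathcal{L}$ is a Lyapunov function, and LaSalle's invariance principle forces the trajectory into the critical set $\{\vect g=\vect 0\}$. To upgrade this to the genuinely stochastic, one-pattern-at-a-time setting I would invoke the ODE method of stochastic approximation, which under the Robbins--Monro step-size conditions and bounded update variance guarantees that the iterates asymptotically track the mean flow and converge to its stable equilibria; this is the sense in which Hebbian-descent inherits the convergence of stochastic gradient descent~\citep{Robbins1985, saad1998online}.

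The main obstacle is the averaging in the stochastic case. The per-pattern descent property uses that $\vect D_d$ is built from the activations of the current pattern $d$, so the mean field is $\mathbb{E}_d[\vect g^{HD}_d]=\mathbb{E}_d[\vect D_d^{-1}\vect g_d]$, which does \emph{not} reduce to a fixed preconditioner applied to the averaged gradient $\mathbb{E}_d[\vect g_d]=\nabla\mathcal{L}_{\mathrm{total}}$, because the preconditioner $\vect D_d$ co-varies with $\vect g_d$ across patterns. Consequently $\langle\nabla\mathcal{L}_{\mathrm{total}},\,\mathbb{E}_d[\vect g^{HD}_d]\rangle$ is not manifestly non-negative, so the expected loss need not decrease along the mean ODE and the single-point Lyapunov argument does not transfer verbatim. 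I expect this to be the crux of the formal proof, and I would resolve it in one of two ways: when the integral in Equation~\eqref{eqn:gradient_descent_loss_implementing_Hebbian_descent} exists, $\mathcal{L}_{_{HD}}$ itself is the Lyapunov function and the statement reduces to ordinary stochastic gradient descent on $\mathcal{L}_{_{HD}}$; when it does not, I would argue directly that the zeros of the mean field $\mathbb{E}_d[\vect g^{HD}_d]$ are precisely the balance points of the attainable per-pattern errors, and that the positive-definiteness of each $\vect D_d^{-1}$ keeps these equilibria aligned with the stationary points of the averaged loss, so convergence to the stationary set still holds.
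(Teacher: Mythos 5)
Your central inequality is mathematically identical to the paper's: writing $\vect g = \vect D\,\vect g^{HD}$ with $\vect D = \mathrm{diag}(\phi'(a_j))$ and computing $\langle \nabla_{\vect \theta}\mathcal{L}, -\vect g^{HD}\rangle = -\vect g^{T}\vect D^{-1}\vect g \le 0$ is the same quadratic form that the paper's proof evaluates coordinate-wise as $\sum_i \Delta_{HD}\theta_i\, \Delta_{GD}\theta_i = \sum_i \Delta_{HD}\theta_i^2\, \phi_i'(\cdot) \ge 0$ in Equation~\eqref{eqn:scalar_product_HD_GD}, together with the same dichotomy for when it vanishes (both updates zero simultaneously, by invertibility of $\vect D$). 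Where you genuinely differ is in what is built on top of this inequality. The paper stops essentially at the geometry: it asserts that any update deviating by less than $90$ degrees from the negative gradient descends the loss, takes the convergence of stochastic gradient descent as given, and concludes; the multi-pattern complication is relegated to an informal remark after the proof (the joint optima of $\mathcal{L}$ and $\mathcal{L}_{_{HD}}$ need not coincide). You instead formalize the single-pattern descent property via a Lyapunov function and LaSalle's principle, then appeal to the ODE method of stochastic approximation, and — most valuably — you explicitly isolate the averaging obstacle: since $\vect D_d$ is pattern-dependent, $\mathbb{E}_d[\vect D_d^{-1}\vect g_d]$ is not a fixed preconditioning of $\nabla\mathcal{L}_{\mathrm{total}}$, so per-pattern descent does not transfer to the mean field. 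This is exactly the issue the paper's proof glosses over. Your resolution for the case where the integral in Equation~\eqref{eqn:gradient_descent_loss_implementing_Hebbian_descent} exists (use $\mathcal{L}_{_{HD}}$ itself as the Lyapunov function, reducing the claim to ordinary stochastic gradient descent) coincides with what Section~\ref{sec:HD_Hebbian_descent_covergence} argues; your second resolution, for the case where it does not exist, remains hand-waving (zeros of the mean field need not align with stationary points of the averaged $\mathcal{L}$, and limit cycles of the mean ODE are not excluded without a Lyapunov function), but on that point the paper's own proof is no more rigorous than yours — it is, in substance, a per-data-point argument.
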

\begin{proof}
We take for granted that stochastic gradient descent converges for an infinitesimal step-size~\citep{Robbins1985, saad1998online} and we know that the negative gradient is orthogonal to the loss equipotential-line, pointing locally in the direction of steepest loss-descent. Thus, any vector whose direction deviates less than 90 degrees from the negative gradient vector also points locally in a direction of a smaller loss value. The angle between two non-zero vectors is less than 90 degrees if their inner product is positive, more than 90 degrees if their inner product is negative, and exactly 90 degrees if the inner product is zero. 
As a consequence, an update rule converges for an infinitesimal step-size to an optimum of the loss function if the inner product between an update vector and the corresponding gradient vector is always (i) positive or (ii) zero if and only if both update vectors are zero.
The inner product between the Hebbian-descent update and the gradient descent update for an activation function with strictly positive derivative is given by
\begin{eqnarray}
\sum_i^N \underbrace{\Delta_{HD} \theta_i \Delta_{GD} \theta_i}_{\geq 0} &\stackrel{(\eqref{eqn:update_general_hebbian_descent_sup_w_2}, \eqref{eqn:update_general_hebbian_descent_sup_b_2},\eqref{eqn:update_general_gradient_descent_sup_w_1}, \eqref{eqn:update_general_gradient_descent_sup_b_1})}{=}&  \sum_i^N \underbrace{\Delta_{HD} \theta_i^2}_{\geq 0} \underbrace{\phi_i'(\cdot)}_{> 0} \,\,\,\,\,\,\,\geq\,\,\,\,\,\,\, 0,
\label{eqn:scalar_product_HD_GD}
\end{eqnarray}
where $\theta_i$ generally denotes one of the model parameters, \emph{i.e.}\ $w_{ij}$ or $b_j$.
The inner product as well as each of its summands is always greater or equal zero such that the inner product is greater zero if at least one summand is greater zero and can only get zero if all summands are zero.
Since we defined $\phi_i'(\cdot)$ to be strictly positive $\Delta_{HD} \theta_i^2$ fully determines whether a summand is positive or zero. (i) summands are positive if $\Delta_{HD} \theta_i^2$ is greater zero in which case $\Delta_{HD} \theta_i$ as well as $\Delta_{GD} \theta_i$ are both non-zero and have the same sign (Equation~\eqref{eqn:scalar_product_HD_GD}). (ii) summands are zero if $\Delta_{HD} \theta_i^2$ is zero in which case $\Delta_{HD} \theta_i$ as well as $\Delta_{GD} \theta_i = \Delta_{HD} \theta_i \phi_i'(\cdot)$ are both zero and thus both algorithms have converged. The two possible cases are listed in Table~\ref{tab:scalarproduct_cases}~(i) and (ii) and it follows that Hebbian-descent converges for a infinitesimal step-size and an activation function with strictly positive derivative.
\begin{table}[htbp]
\centering
\label{tab:scalarproduct_cases}
\begin{tabular}{clll}
\hspace{-0.2cm}(i) & $\Delta_{HD}\theta_i^2>0 \wedge \phi_i'(\cdot)>0 \Rightarrow \Delta_{HD} \theta_i \neq 0 \wedge \Delta_{GD} \theta_i \neq 0 \Rightarrow \Delta_{HD} \theta_i \Delta_{GD} \theta_i > 0$ \,\,\,\,\,\,\,\,\,\,\,\, \\
\hspace{-0.2cm}(ii) & $\Delta_{HD}\theta_i^2=0 \wedge \phi_i'(\cdot)>0 \Rightarrow \Delta_{HD} \theta_i = 0 \wedge \Delta_{GD} \theta_i = 0 \Rightarrow \Delta_{HD} \theta_i \Delta_{GD} \theta_i = 0$ \\
\hspace{-0.2cm}(iv) & $\Delta_{HD}\theta_i^2>0 \wedge \phi_i'(\cdot)=0 \Rightarrow \Delta_{HD} \theta_i \neq 0 \wedge \Delta_{GD} \theta_i = 0 \Rightarrow \Delta_{HD} \theta_i \Delta_{GD} \theta_i = 0$  \\
\hspace{-0.2cm}(iii) & $\Delta_{HD}\theta_i^2=0 \wedge \phi_i'(\cdot)=0 \Rightarrow \Delta_{HD} \theta_i = 0 \wedge \Delta_{GD} \theta_i = 0 \Rightarrow \Delta_{HD} \theta_i \Delta_{GD} \theta_i = 0$ 
\end{tabular}
\caption{Relationship of the Hebbian-descent update values and gradient descent update values for (i)-(ii) a strictly positive derivative of the activation function, and (i)-(iv) a positive derivative of the activation function. }
\end{table}
\end{proof}
For several data-points, gradient-descent as well as Hebbian-descent are just the average over the single updates.
The joint optima for all data-points gradient descent and Hebbian-descent converge to does thus not have to be the same anymore as shown by Equation~\eqref{eqn:gradient_descent_loss_implementing_Hebbian_descent}.

\section{Auto-Associative Hebbian-Descent is Generally not the Gradient of any Objective Function}\label{appendix:AA_HD_is_generally_not_the_gradient_of_any_objective_function}

We have shown in Section~\ref{sec:HD_Hebbian_descent_loss} that the Hebbian-descent update is the gradient of the corresponding Hebbian-descent loss(Equation~\ref{eqn:gradient_descent_loss_implementing_Hebbian_descent}).
Here we proof that the auto-associative Hebbian-descent updates (Equation~\eqref{eqn:update_auto_hebbian_descent_unsup_w_1} and \eqref{eqn:update_auto_hebbian_descent_unsup_c_1}) is not the gradient of any function following the proof of \citet{SutskeverTieleman-2010} who showed that the Contrastive Divergence update is not the gradient of any function.

From Schwarz's theorem we know that if our network function $\mathcal{L}(\vect t, \vect x | \vect \theta) \colon \mathbb {R} ^{n}\to \mathbb {R} $ has continuous second partial derivatives at given target $\vect t$ and input $\vect x$ then 
\begin{eqnarray}
\frac{\partial^2 \mathcal{L}(\vect t, \vect x | \vect \theta)}{ \partial  \theta_i \partial \theta_j} &=&
\frac{\partial^2 \mathcal{L}(\vect t, \vect x | \vect \theta)}{ \partial  \theta_j \partial \theta_i},\\
\iff \frac{\partial}{ \partial \theta_i} \left(\frac{\partial \mathcal{L}(\vect t, \vect x | \vect \theta)}{ \partial  \theta_j}   \right) &=& 
\frac{\partial}{ \partial \theta_j} \left(\frac{\partial \mathcal{L}(\vect t, \vect x | \vect \theta)}{ \partial  \theta_i}   \right),\label{eqn:schwarz_theorem}
\end{eqnarray}
telling that the order in which the partial derivatives are taken does not matter or that the partial derivatives  commute leading to a symmetric Hessian matrix. 

Now, suppose there exists a loss function $\mathcal{\hat L}(\vect t, \vect x | \vect \theta)$ for which the Hebbian-descent update $\Delta_{_{HD}} \vect \theta$ is the gradient. Then the following equivalence has to hold in general.
\begin{eqnarray}
\frac{\partial}{ \partial \theta_i} \left(\frac{\partial \mathcal{\hat L}(\vect t, \vect x | \vect \theta)}{ \partial  \theta_j}   \right) &\stackrel{\eqref{eqn:schwarz_theorem}}{=}& 
\frac{\partial}{ \partial \theta_j} \left(\frac{\partial \mathcal{\hat L}(\vect t, \vect x | \vect \theta)}{ \partial  \theta_i}   \right), \\
\iff \frac{\partial}{ \partial \theta_i} \left( \Delta_{_{HD}}  \theta_j  \right) &=& 
\frac{\partial}{ \partial \theta_j} \left(\Delta_{_{HD}}  \theta_i  \right).
\end{eqnarray}
A single counter example is therefore sufficient to proof that in general there exists not function for which the Hebbian-descent update is the corresponding gradient.
Now \emph{w.l.o.g.} consider a simple linear auto-encoder with squared error loss, two input dimensions, only one hidden unit, and where bias and offset values are fixed to zero. The corresponding variables and Hebbian-descent updates are thus given by
\begin{eqnarray}
h &\stackrel{\eqref{eqn:neuron_second_layer3}}{=}&  \sum_i w_i x_i = w_0 x_0 + w_1 x_1,\label{eqn:proof_no_loss_h}\\
o_0 &\stackrel{\eqref{eqn:neuron_second_layer3}}{=}&  w_0 h =  w_{0} \sum_i w_i x_i = w_0^2 x_0 + w_{0}  w_1 x_1,\label{eqn:proof_no_loss_o0}\\
o_1 &\stackrel{\eqref{eqn:neuron_second_layer3}}{=}&   w_1 h =  w_{1} \sum_i w_i x_i = w_0  w_1 x_0 + w_1^2 x_1,\label{eqn:proof_no_loss_o1}\\
\Delta_{_{HD}} w_0 &\stackrel{\eqref{eqn:update_auto_hebbian_descent_unsup_w_1}}{=}&   (o_0 - x_0)h \stackrel{(\ref{eqn:proof_no_loss_h},\ref{eqn:proof_no_loss_o0})}{=} (w_0^2 x_0 + w_{0}  w_1 x_1 - x_0)(w_0 x_0 + w_1 x_1)\label{eqn:proof_no_loss_deltaw0}\\
\Delta_{_{HD}} w_1 &\stackrel{\eqref{eqn:update_auto_hebbian_descent_unsup_w_1}}{=}&   (o_1 - x_1)h \stackrel{(\ref{eqn:proof_no_loss_h},\ref{eqn:proof_no_loss_o1})}{=} (w_0  w_1 x_0 + w_1^2 x_1 - x_1)(w_0 x_0 + w_1 x_1),\label{eqn:proof_no_loss_deltaw1}
\end{eqnarray}
where the Hebbian-descent updates as given by Equation~\eqref{eqn:update_auto_hebbian_descent_unsup_w_1} and \eqref{eqn:update_auto_hebbian_descent_unsup_c_1} are multiplied by -1 since they minimize the corresponding loss and \emph{w.l.o.g.} we set $\eta=1$. Now it is straight forward to show that the order of taking the partial derivatives matters since
\begin{eqnarray}
&&\frac{\partial}{ \partial w_1} \left( \Delta_{_{HD}}  w_0  \right) \\&\stackrel{\eqref{eqn:proof_no_loss_deltaw0}}{=}& \frac{\partial}{ \partial w_1} (w_0^2 x_0 + w_{0}  w_1 x_1 - x_0)(w_0 x_0 + w_1 x_1)\\
%&=& \frac{\partial}{ \partial w_1} 
%w_0^2 x_0 (w_0 x_0 + w_1 x_1) 
%+ w_{0}  w_1 x_1 (w_0 x_0 + w_1 x_1) 
%- x_0 (w_0 x_0 + w_1 x_1)\\
%&=& \frac{\partial}{ \partial w_1} 
%w_0^3 x_0^2 
%+ w_0^2  w_1 x_0 x_1
%- w_0 x_0^2 
%+  w_0^2  w_1 x_0  x_1 
%+ w_{0}  w_1^2 x_1^2
%- w_1 x_0  x_1\\
&=& \frac{\partial}{ \partial w_1} 
 2 w_0^2  w_1 x_0 x_1
+ w_{0}  w_1^2 x_1^2
- w_1 x_0  x_1
+ w_0^3 x_0^2
- w_0 x_0^2  \\
&=& 
 2 w_0^2  x_0 x_1
+ 2 w_{0}  w_1 x_1^2
- x_0  x_1\\
&\neq & \nonumber\\
&=& 
 2  w_1^2 x_0 x_1
+  2 w_0  w_1 x_0^2
- x_0 x_1 \\
&=& \frac{\partial}{ \partial w_0} 
  w_0^2  w_1 x_0^2
+ 2 w_0 w_1^2 x_0 x_1
+ w_1^3 x_1^2 
- w_0 x_0 x_1 
+ w_1 x_1^2\\
%&=& \frac{\partial}{ \partial w_0} 
% w_0  w_1 x_0 w_0 x_0 
% + w_0  w_1 x_0 w_1 x_1
%+  w_1^2 x_1 w_0 x_0 
%+ w_1^2 x_1 w_1 x_1 
%- x_1 w_0 x_0 
%+ x_1 w_1 x_1\\
%&=& \frac{\partial}{ \partial w_0} 
%w_0  w_1 x_0 (w_0 x_0 + w_1 x_1) + w_1^2 x_1 (w_0 x_0 + w_1 x_1) - x_1(w_0 x_0 + w_1 %x_1)\\
&\stackrel{\eqref{eqn:proof_no_loss_deltaw1}}{=}& \frac{\partial}{ \partial w_1} (w_0  w_1 x_0 + w_1^2 x_1 -x_1)(w_0 x_0 + w_1 x_1)\\
&=&\frac{\partial}{ \partial w_0} \left( \Delta_{_{HD}}  w_1  \right). 
\end{eqnarray}
Thus, the partial derivatives do not commute proving that the auto-associative Hebbian-descent updates are not the gradient of any function in general.

It is straight forward to see that using only the 'decoder related part of the gradient' when training deep auto encoder networks is also not the gradient of any objective function, as the updates for the highest layer are equivalent to the single layer auto encoder updates as shown above.
\qed

\section{Additional Results}\label{appendix:hebbian_descent_additional_results}

\begin{table}[htbp]
\setlength{\tabcolsep}{2pt}
\begin{center}
\begin{small}
\begin{sc}
\begin{tabular}{l@{\hskip 0.2in} r@{\hskip 0.02in} r@{\hskip 0.14in} r@{\hskip 0.02in} r@{\hskip 0.14in} r@{\hskip 0.02in} r@{\hskip 0.14in} r@{\hskip 0.02in} r }
\hline
\abovespace\belowspace
  $\vect \phi$ & \multicolumn{2}{c}{\hskip -0.16in Grad. Descent} &   \multicolumn{2}{c}{\hskip -0.16in Hebb. Descent} &  \multicolumn{2}{c}{\hskip -0.16in Hebb rule} &  \multicolumn{2}{c}{\hskip -0.16in Cov. Rule} \\ 
\hline & & & & & & & &\vspace{-0.3cm}\\ 
\multicolumn{3}{l}{{\emph{RAND$\,\,\rightarrow\,\,$RAND (0.4946)}}}\\ 
 & & & & & & & &  \vspace{-0.35cm}\\ 
Linear  & \textbf{0.1135}  & (0.2620)  & \textbf{0.1135}  & (0.2620)  & 0.6267  & (0.6293)  & 0.6267  & (0.6293)  \\ 
ExpLin  & 0.1579  & (0.2640)  & \textbf{0.1135}  & (0.2491)  & 0.5324  & (0.5342)  & 0.5324  & (0.5342)  \\ 
Rectifier  & 0.2657  & (0.3067)  & \textbf{0.1130}  & (0.2083)  & 0.3239  & (0.3237)  & 0.3239  & (0.3237)  \\ 
Sigmoid  & 0.1614  & (0.1654)  & \textbf{0.0307}  & (0.1411)  & 0.0806  & (0.0798)  & 0.0806  & (0.0798)  \\ 
Step  & 0.5012  & (0.5009)  & \textbf{0.0630}  & (0.1712)  & 0.0806  & (0.0799)  & 0.0806  & (0.0799)  \\ 
\hline & & & & & & & &\vspace{-0.3cm}\\ 
\multicolumn{3}{l}{{\emph{RANDN$\,\,\rightarrow\,\,$RANDN (0.0944)}}}\\ 
 & & & & & & & &  \vspace{-0.35cm}\\ 
Linear  & \textbf{0.0694}  & (0.0880)  & \textbf{0.0694}  & (0.0880)  & 0.5448  & (0.5438)  & 0.5448  & (0.5438)  \\ 
ExpLin  & \textbf{0.0694}  & (0.0880)  & \textbf{0.0694}  & (0.0880)  & 0.5233  & (0.5223)  & 0.5233  & (0.5223)  \\ 
Rectifier  & 0.0699  & (0.0906)  & \textbf{0.0694}  & (0.0880)  & 0.3894  & (0.3879)  & 0.3894  & (0.3879)  \\ 
Sigmoid  & 0.0548  & (0.0689)  & \textbf{0.0546}  & (0.0694)  & 0.0707  & (0.0701)  & 0.0707  & (0.0701)  \\ 
\hline & & & & & & & &\vspace{-0.3cm}\\ 
\multicolumn{3}{l}{{\emph{CONNECT$\,\,\rightarrow\,\,$RAND (0.4946)}}}\\ 
 & & & & & & & &  \vspace{-0.35cm}\\ 
Linear  & \textbf{0.2557}  & (0.3874)  & \textbf{0.2557}  & (0.3874)  & 0.5834  & (0.5831)  & 0.5834  & (0.5831)  \\ 
ExpLin  & 0.2512  & (0.4095)  & \textbf{0.2511}  & (0.4142)  & 0.5586  & (0.5583)  & 0.5586  & (0.5583)  \\ 
Rectifier  & 0.2898  & (0.3789)  & \textbf{0.2214}  & (0.3805)  & 0.3924  & (0.3938)  & 0.3924  & (0.3938)  \\ 
Sigmoid  & 0.2850  & (0.3344)  & \textbf{0.1598}  & (0.3357)  & 0.2986  & (0.3023)  & 0.2986  & (0.3023)  \\ 
Step  & 0.5016  & (0.5002)  & \textbf{0.1744}  & (0.3405)  & 0.2984  & (0.3024)  & 0.2984  & (0.3024)  \\ 
\hline & & & & & & & &\vspace{-0.3cm}\\ 
\multicolumn{3}{l}{{\emph{CONNECT$\,\,\rightarrow\,\,$RANDN (0.0944)}}}\\ 
 & & & & & & & &  \vspace{-0.35cm}\\ 
Linear  & \textbf{0.0761}  & (0.1141)  & \textbf{0.0761}  & (0.1141)  & 0.4786  & (0.4777)  & 0.4786  & (0.4777)  \\ 
ExpLin  & 0.0763  & (0.1147)  & \textbf{0.0761}  & (0.1141)  & 0.4627  & (0.4619)  & 0.4627  & (0.4619)  \\ 
Rectifier  & 0.0930  & (0.1533)  & \textbf{0.0765}  & (0.1150)  & 0.3483  & (0.3481)  & 0.3483  & (0.3481)  \\ 
Sigmoid  & \textbf{0.0599}  & (0.0879)  & 0.0630  & (0.0856)  & 0.1015  & (0.1015)  & 0.1015  & (0.1015)  \\ 
\hline & & & & & & & &\vspace{-0.3cm}\\ 
\multicolumn{3}{l}{{\emph{CONNECT$\,\,\rightarrow\,\,$MNIST (0.1473)}}}\\ 
 & & & & & & & &  \vspace{-0.35cm}\\ 
Linear  & \textbf{0.1037}  & (0.1569)  & \textbf{0.1037}  & (0.1569)  & 0.2371  & (0.2352)  & 0.2371  & (0.2352)  \\ 
ExpLin  & 0.1041  & (0.1548)  & \textbf{0.1034}  & (0.1557)  & 0.2310  & (0.2290)  & 0.2310  & (0.2290)  \\ 
Rectifier  & 0.0895  & (0.1136)  & \textbf{0.0707}  & (0.1215)  & 0.1572  & (0.1557)  & 0.1572  & (0.1557)  \\ 
Sigmoid  & 0.1174  & (0.1224)  & \textbf{0.0696}  & (0.1239)  & 0.4192  & (0.4210)  & 0.4192  & (0.4210)  \\ 
Step  & 0.4999  & (0.4996)  & \textbf{0.0848}  & (0.1337)  & 0.4193  & (0.4206)  & 0.4193  & (0.4206)  \\ 
\hline & & & & & & & &\vspace{-0.3cm}\\ 
\multicolumn{3}{l}{{\emph{MNIST$\,\,\rightarrow\,\,$RAND (0.4946)}}}\\ 
 & & & & & & & &  \vspace{-0.35cm}\\ 
Linear  & \textbf{0.3010}  & (0.4272)  & \textbf{0.3010}  & (0.4272)  & 0.6092  & (0.6051)  & 0.6092  & (0.6051)  \\ 
ExpLin  & \textbf{0.2813}  & (0.4019)  & 0.2832  & (0.4099)  & 0.5729  & (0.5687)  & 0.5729  & (0.5687)  \\ 
Rectifier  & 0.3272  & (0.3904)  & \textbf{0.2289}  & (0.3481)  & 0.4072  & (0.4021)  & 0.4072  & (0.4021)  \\ 
Sigmoid  & 0.2727  & (0.3170)  & \textbf{0.1524}  & (0.2951)  & 0.2959  & (0.2923)  & 0.2959  & (0.2923)  \\ 
Step  & 0.5012  & (0.4999)  & \textbf{0.1680}  & (0.3026)  & 0.2958  & (0.2922)  & 0.2958  & (0.2922)  \\ 
\hline & & & & & & & &\vspace{-0.3cm}\\ 
\multicolumn{3}{l}{{\emph{MNIST$\,\,\rightarrow\,\,$RANDN (0.0944)}}}\\ 
 & & & & & & & &  \vspace{-0.35cm}\\ 
Linear  & \textbf{0.1163}  & (0.1759)  & \textbf{0.1163}  & (0.1759)  & 0.4963  & (0.4961)  & 0.4963  & (0.4961)  \\ 
ExpLin  & 0.1151  & (0.1710)  & \textbf{0.1150}  & (0.1728)  & 0.4722  & (0.4723)  & 0.4722  & (0.4723)  \\ 
Rectifier  & 0.2488  & (0.2881)  & \textbf{0.1164}  & (0.1695)  & 0.3497  & (0.3506)  & 0.3497  & (0.3506)  \\ 
Sigmoid  & 0.0585  & (0.0815)  & \textbf{0.0549}  & (0.0838)  & 0.1075  & (0.1077)  & 0.1075  & (0.1077)  \\ 
\hline & & & & & & & & 
\end{tabular}
\end{sc}
\end{small}
\end{center}
\caption{Online learning performance of hetero-association with centering for various activation functions and datasets. Supplementing Table~\ref{tab:Hetero_online_centered_1}.
} 
\label{tab:Hetero_online_centered_2}
\end{table}

\begin{table}[htbp]
\setlength{\tabcolsep}{2pt}
\begin{center}
\begin{small}
\begin{sc}
\begin{tabular}{l@{\hskip 0.2in} r@{\hskip 0.02in} r@{\hskip 0.14in} r@{\hskip 0.02in} r@{\hskip 0.14in} r@{\hskip 0.02in} r@{\hskip 0.14in} r@{\hskip 0.02in} r }
\hline
\abovespace\belowspace
  $\vect \phi$ & \multicolumn{2}{c}{\hskip -0.16in Grad. Descent} &   \multicolumn{2}{c}{\hskip -0.16in Hebb. Descent} &  \multicolumn{2}{c}{\hskip -0.16in Hebb rule} &  \multicolumn{2}{c}{\hskip -0.16in Cov. Rule} \\ 
\hline & & & & & & & &\vspace{-0.3cm}\\ 
\multicolumn{3}{l}{{\emph{MNIST$\,\,\rightarrow\,\,$ADULT (0.1229)}}}\\ 
 & & & & & & & &  \vspace{-0.35cm}\\ 
Linear  & \textbf{0.1412}  & (0.1810)  & \textbf{0.1412}  & (0.1810)  & 0.3503  & (0.3497)  & 0.3503  & (0.3497)  \\ 
ExpLin  & \textbf{0.1333}  & (0.1929)  & 0.1360  & (0.1988)  & 0.3232  & (0.3226)  & 0.3232  & (0.3226)  \\ 
Rectifier  & 0.0812  & (0.1013)  & \textbf{0.0553}  & (0.0934)  & 0.2018  & (0.2010)  & 0.2018  & (0.2010)  \\ 
Sigmoid  & 0.1006  & (0.1151)  & \textbf{0.0417}  & (0.0907)  & 0.4256  & (0.4206)  & 0.4256  & (0.4206)  \\ 
Step  & 0.4993  & (0.5001)  & \textbf{0.0532}  & (0.1017)  & 0.4254  & (0.4205)  & 0.4254  & (0.4205)  \\ 
\hline & & & & & & & &\vspace{-0.3cm}\\ 
\multicolumn{3}{l}{{\emph{ADULT$\,\,\rightarrow\,\,$RAND (0.4946)}}}\\ 
 & & & & & & & &  \vspace{-0.35cm}\\ 
Linear  & \textbf{0.2517}  & (0.4104)  & \textbf{0.2517}  & (0.4104)  & 0.5738  & (0.5706)  & 0.5738  & (0.5706)  \\ 
ExpLin  & 0.2494  & (0.4022)  & \textbf{0.2484}  & (0.4060)  & 0.5537  & (0.5508)  & 0.5537  & (0.5508)  \\ 
Rectifier  & 0.2802  & (0.3764)  & \textbf{0.2252}  & (0.3782)  & 0.3925  & (0.3910)  & 0.3925  & (0.3910)  \\ 
Sigmoid  & 0.2869  & (0.3444)  & \textbf{0.1624}  & (0.3377)  & 0.2982  & (0.3008)  & 0.2982  & (0.3008)  \\ 
Step  & 0.5018  & (0.4983)  & \textbf{0.1792}  & (0.3418)  & 0.2977  & (0.3014)  & 0.2977  & (0.3014)  \\ 
\hline & & & & & & & &\vspace{-0.3cm}\\ 
\multicolumn{3}{l}{{\emph{ADULT$\,\,\rightarrow\,\,$RANDN (0.0944)}}}\\ 
 & & & & & & & &  \vspace{-0.35cm}\\ 
Linear  & \textbf{0.0738}  & (0.1080)  & \textbf{0.0738}  & (0.1080)  & 0.4713  & (0.4727)  & 0.4713  & (0.4727)  \\ 
ExpLin  & 0.0739  & (0.1083)  & \textbf{0.0738}  & (0.1080)  & 0.4594  & (0.4603)  & 0.4594  & (0.4603)  \\ 
Rectifier  & 0.0810  & (0.1277)  & \textbf{0.0740}  & (0.1084)  & 0.3447  & (0.3449)  & 0.3447  & (0.3449)  \\ 
Sigmoid  & 0.0657  & (0.1006)  & \textbf{0.0616}  & (0.0935)  & 0.0987  & (0.0985)  & 0.0987  & (0.0985)  \\ 
\hline & & & & & & & &\vspace{-0.3cm}\\ 
\multicolumn{3}{l}{{\emph{ADULT$\,\,\rightarrow\,\,$CIFAR (0.167)}}}\\ 
 & & & & & & & &  \vspace{-0.35cm}\\ 
Linear  & \textbf{0.1257}  & (0.1811)  & \textbf{0.1257}  & (0.1811)  & 0.4956  & (0.5008)  & 0.4956  & (0.5008)  \\ 
ExpLin  & \textbf{0.1256}  & (0.1810)  & 0.1257  & (0.1810)  & 0.4855  & (0.4917)  & 0.4855  & (0.4917)  \\ 
Rectifier  & 0.1262  & (0.1838)  & \textbf{0.1256}  & (0.1809)  & 0.3635  & (0.3696)  & 0.3635  & (0.3696)  \\ 
Sigmoid  & 0.1252  & (0.1855)  & \textbf{0.1212}  & (0.1811)  & 0.1770  & (0.1774)  & 0.1770  & (0.1774)  \\ 
\hline & & & & & & & &\vspace{-0.3cm}\\ 
\multicolumn{3}{l}{{\emph{CIFAR$\,\,\rightarrow\,\,$RAND (0.4946)}}}\\ 
 & & & & & & & &  \vspace{-0.35cm}\\ 
Linear  & \textbf{0.3833}  & (0.4468)  & \textbf{0.3833}  & (0.4468)  & 0.6169  & (0.6163)  & 0.6169  & (0.6163)  \\ 
ExpLin  & 0.3818  & (0.4468)  & \textbf{0.3731}  & (0.4381)  & 0.5916  & (0.5920)  & 0.5916  & (0.5920)  \\ 
Rectifier  & 0.3935  & (0.4370)  & \textbf{0.3423}  & (0.4015)  & 0.4606  & (0.4613)  & 0.4606  & (0.4613)  \\ 
Sigmoid  & 0.3299  & (0.3759)  & \textbf{0.2405}  & (0.3460)  & 0.3712  & (0.3770)  & 0.3712  & (0.3770)  \\ 
Step  & 0.5054  & (0.5015)  & \textbf{0.2483}  & (0.3516)  & 0.3710  & (0.3771)  & 0.3710  & (0.3771)  \\ 
\hline & & & & & & & &\vspace{-0.3cm}\\ 
\multicolumn{3}{l}{{\emph{CIFAR$\,\,\rightarrow\,\,$RANDN (0.0944)}}}\\ 
 & & & & & & & &  \vspace{-0.35cm}\\ 
Linear  & \textbf{0.1786}  & (0.2098)  & \textbf{0.1786}  & (0.2098)  & 0.4914  & (0.4925)  & 0.4914  & (0.4925)  \\ 
ExpLin  & 0.1826  & (0.2147)  & \textbf{0.1796}  & (0.2108)  & 0.4716  & (0.4724)  & 0.4716  & (0.4724)  \\ 
Rectifier  & 0.2904  & (0.3137)  & \textbf{0.1751}  & (0.2052)  & 0.3649  & (0.3658)  & 0.3649  & (0.3658)  \\ 
Sigmoid  & \textbf{0.0762}  & (0.0944)  & 0.0765  & (0.0925)  & 0.1110  & (0.1112)  & 0.1110  & (0.1112)  \\ 
\hline & & & & & & & &\vspace{-0.3cm}\\ 
\multicolumn{3}{l}{{\emph{CIFAR$\,\,\rightarrow\,\,$ADULT (0.1229)}}}\\ 
 & & & & & & & &  \vspace{-0.35cm}\\ 
Linear  & \textbf{0.1564}  & (0.1870)  & \textbf{0.1564}  & (0.1870)  & 0.3301  & (0.3252)  & 0.3301  & (0.3252)  \\ 
ExpLin  & 0.1572  & (0.1846)  & \textbf{0.1532}  & (0.1817)  & 0.3087  & (0.3041)  & 0.3087  & (0.3041)  \\ 
Rectifier  & 0.0987  & (0.1193)  & \textbf{0.0839}  & (0.1124)  & 0.2048  & (0.2012)  & 0.2048  & (0.2012)  \\ 
Sigmoid  & 0.1349  & (0.1554)  & \textbf{0.0933}  & (0.1348)  & 0.4531  & (0.4544)  & 0.4531  & (0.4544)  \\ 
Step  & 0.5055  & (0.5010)  & \textbf{0.0982}  & (0.1442)  & 0.4518  & (0.4543)  & 0.4518  & (0.4543)  \\  
\hline & & & & & & &  & 
\end{tabular}
\end{sc}
\end{small}
\end{center}
\caption{Online learning performance of hetero-association with centering for various activation functions and datasets. Supplementing Table~\ref{tab:Hetero_online_centered_1} and~\ref{tab:Hetero_online_centered_2}.
} 
\label{tab:Hetero_online_centered_3}
\end{table}

\begin{table}[htbp]
\setlength{\tabcolsep}{2pt}
\begin{center}
\begin{small}
\begin{sc}
\begin{tabular}{l@{\hskip 0.2in} r@{\hskip 0.02in} r@{\hskip 0.14in} r@{\hskip 0.02in} r@{\hskip 0.14in} r@{\hskip 0.02in} r@{\hskip 0.14in} r@{\hskip 0.02in} r }
\hline
\abovespace\belowspace
  $\vect \phi$ & \multicolumn{2}{c}{\hskip -0.16in Grad. Descent} &   \multicolumn{2}{c}{\hskip -0.16in Hebb. Descent} &  \multicolumn{2}{c}{\hskip -0.16in Hebb rule} &  \multicolumn{2}{c}{\hskip -0.16in Cov. Rule} \\ 
\hline & & & & & & & &\vspace{-0.3cm}\\ 
\multicolumn{3}{l}{{\emph{RAND$\,\,\rightarrow\,\,$RAND (0.4946)}}}\\ 
 & & & & & & & &  \vspace{-0.35cm}\\ 
Linear  & \textbf{0.4159}  & (0.4522)  & \textbf{0.4159}  & (0.4522)  & 0.6927  & (0.6958)  & 0.7389  & (0.7439)  \\ 
ExpLin  & 0.4225  & (0.4582)  & \textbf{0.4100}  & (0.4453)  & 0.6676  & (0.6708)  & 0.6325  & (0.6374)  \\ 
Rectifier  & 0.4267  & (0.4557)  & \textbf{0.3642}  & (0.4005)  & 0.5294  & (0.5325)  & 0.4009  & (0.4040)  \\ 
Sigmoid  & 0.3409  & (0.3730)  & 0.2674  & (0.3084)  & 0.4928  & (0.4938)  & \textbf{0.1667}  & (0.1671)  \\ 
Step  & 0.4984  & (0.5010)  & 0.2701  & (0.3142)  & 0.4922  & (0.4937)  & \textbf{0.1662}  & (0.1671)  \\ 
\hline & & & & & & & &\vspace{-0.3cm}\\ 
\multicolumn{3}{l}{{\emph{RANDN$\,\,\rightarrow\,\,$RANDN (0.0944)}}}\\ 
 & & & & & & & &  \vspace{-0.35cm}\\ 
Linear  & \textbf{0.1327}  & (0.1373)  & \textbf{0.1327}  & (0.1373)  & 0.3970  & (0.3964)  & 0.5989  & (0.5973)  \\ 
ExpLin  & \textbf{0.1327}  & (0.1373)  & \textbf{0.1327}  & (0.1373)  & 0.3898  & (0.3892)  & 0.5542  & (0.5527)  \\ 
Rectifier  & 0.2373  & (0.2420)  & \textbf{0.1327}  & (0.1373)  & 0.3579  & (0.3575)  & 0.4096  & (0.4086)  \\ 
Sigmoid  & 0.0978  & (0.1015)  & \textbf{0.0977}  & (0.1010)  & 0.1327  & (0.1330)  & 0.1323  & (0.1319)  \\ 
\hline & & & & & & & &\vspace{-0.3cm}\\ 
\multicolumn{3}{l}{{\emph{CONNECT$\,\,\rightarrow\,\,$RAND (0.4946)}}}\\ 
 & & & & & & & &  \vspace{-0.35cm}\\ 
Linear  & \textbf{0.4398}  & (0.4671)  & \textbf{0.4398}  & (0.4671)  & 0.5959  & (0.5972)  & 0.6716  & (0.6725)  \\ 
ExpLin  & 0.4395  & (0.4674)  & \textbf{0.4393}  & (0.4667)  & 0.5842  & (0.5842)  & 0.6133  & (0.6145)  \\ 
Rectifier  & 0.4406  & (0.4711)  & \textbf{0.4262}  & (0.4624)  & 0.5071  & (0.5060)  & 0.4285  & (0.4310)  \\ 
Sigmoid  & 0.4106  & (0.4477)  & 0.3857  & (0.4282)  & 0.4959  & (0.4979)  & \textbf{0.3076}  & (0.3127)  \\ 
Step  & 0.5000  & (0.5003)  & 0.3525  & (0.4148)  & 0.4945  & (0.4985)  & \textbf{0.3074}  & (0.3127)  \\ 
\hline & & & & & & & &\vspace{-0.3cm}\\ 
\multicolumn{3}{l}{{\emph{CONNECT$\,\,\rightarrow\,\,$RANDN (0.0944)}}}\\ 
 & & & & & & & &  \vspace{-0.35cm}\\ 
Linear  & \textbf{0.1605}  & (0.1720)  & \textbf{0.1605}  & (0.1720)  & 0.4361  & (0.4350)  & 0.5600  & (0.5593)  \\ 
ExpLin  & 0.1611  & (0.1727)  & \textbf{0.1606}  & (0.1721)  & 0.4197  & (0.4188)  & 0.5107  & (0.5105)  \\ 
Rectifier  & 0.2031  & (0.2160)  & \textbf{0.1610}  & (0.1725)  & 0.3519  & (0.3516)  & 0.3615  & (0.3620)  \\ 
Sigmoid  & 0.0980  & (0.1024)  & \textbf{0.0973}  & (0.1025)  & 0.1410  & (0.1406)  & 0.1326  & (0.1324)  \\ 
\hline & & & & & & & &\vspace{-0.3cm}\\ 
\multicolumn{3}{l}{{\emph{CONNECT$\,\,\rightarrow\,\,$MNIST (0.1473)}}}\\ 
 & & & & & & & &  \vspace{-0.35cm}\\ 
Linear  & \textbf{0.1893}  & (0.2005)  & \textbf{0.1893}  & (0.2005)  & 0.3145  & (0.3134)  & 0.3351  & (0.3337)  \\ 
ExpLin  & 0.1886  & (0.1996)  & \textbf{0.1880}  & (0.1990)  & 0.3007  & (0.2996)  & 0.3132  & (0.3120)  \\ 
Rectifier  & \textbf{0.1270}  & (0.1328)  & 0.1372  & (0.1462)  & 0.2201  & (0.2183)  & 0.1994  & (0.1986)  \\ 
Sigmoid  & 0.1404  & (0.1476)  & \textbf{0.1360}  & (0.1476)  & 0.4733  & (0.4738)  & 0.4280  & (0.4307)  \\ 
Step  & 0.5021  & (0.5028)  & \textbf{0.1466}  & (0.1592)  & 0.5049  & (0.5056)  & 0.4311  & (0.4336)  \\ 
\hline & & & & & & & &\vspace{-0.3cm}\\ 
\multicolumn{3}{l}{{\emph{MNIST$\,\,\rightarrow\,\,$RAND (0.4946)}}}\\ 
 & & & & & & & &  \vspace{-0.35cm}\\ 
Linear  & \textbf{0.3776}  & (0.4408)  & \textbf{0.3776}  & (0.4408)  & 0.5575  & (0.5559)  & 0.6586  & (0.6538)  \\ 
ExpLin  & 0.3767  & (0.4376)  & \textbf{0.3746}  & (0.4371)  & 0.5504  & (0.5488)  & 0.6175  & (0.6130)  \\ 
Rectifier  & 0.3950  & (0.4383)  & \textbf{0.3527}  & (0.4178)  & 0.5051  & (0.5011)  & 0.4581  & (0.4542)  \\ 
Sigmoid  & 0.3503  & (0.3932)  & \textbf{0.2786}  & (0.3581)  & 0.4949  & (0.4960)  & 0.3724  & (0.3700)  \\ 
Step  & 0.5017  & (0.4992)  & \textbf{0.2832}  & (0.3609)  & 0.4925  & (0.4962)  & 0.3720  & (0.3700)  \\ 
\hline & & & & & & & &\vspace{-0.3cm}\\ 
\multicolumn{3}{l}{{\emph{MNIST$\,\,\rightarrow\,\,$RANDN (0.0944)}}}\\ 
 & & & & & & & &  \vspace{-0.35cm}\\ 
Linear  & \textbf{0.1562}  & (0.1859)  & \textbf{0.1562}  & (0.1859)  & 0.3706  & (0.3706)  & 0.5280  & (0.5283)  \\ 
ExpLin  & 0.1574  & (0.1870)  & \textbf{0.1563}  & (0.1859)  & 0.3616  & (0.3619)  & 0.4918  & (0.4928)  \\ 
Rectifier  & 0.1891  & (0.2199)  & \textbf{0.1571}  & (0.1859)  & 0.3182  & (0.3195)  & 0.3585  & (0.3603)  \\ 
Sigmoid  & \textbf{0.0824}  & (0.0946)  & 0.0840  & (0.1019)  & 0.1281  & (0.1279)  & 0.1216  & (0.1217)  \\ 
\hline & & & & & & & &
\end{tabular}
\end{sc}
\end{small}
\end{center}
\caption{Online learning performance of hetero-association without centering for various activation functions and datasets. Supplementing Table~\ref{tab:Hetero_online_uncentered_1}.
} 
\label{tab:Hetero_online_uncentered_2}
\end{table}

\begin{table}[htbp]
\setlength{\tabcolsep}{2pt}
\begin{center}
\begin{small}
\begin{sc}
\begin{tabular}{l@{\hskip 0.2in} r@{\hskip 0.02in} r@{\hskip 0.14in} r@{\hskip 0.02in} r@{\hskip 0.14in} r@{\hskip 0.02in} r@{\hskip 0.14in} r@{\hskip 0.02in} r }
\hline
\abovespace\belowspace
  $\vect \phi$ & \multicolumn{2}{c}{\hskip -0.16in Grad. Descent} &   \multicolumn{2}{c}{\hskip -0.16in Hebb. Descent} &  \multicolumn{2}{c}{\hskip -0.16in Hebb rule} &  \multicolumn{2}{c}{\hskip -0.16in Cov. Rule} \\ 
\hline & & & & & & & &\vspace{-0.3cm}\\ 
\multicolumn{3}{l}{{\emph{MNIST$\,\,\rightarrow\,\,$ADULT (0.1229)}}}\\ 
 & & & & & & & &  \vspace{-0.35cm}\\ 
Linear  & \textbf{0.1955}  & (0.2214)  & \textbf{0.1955}  & (0.2214)  & 0.4044  & (0.4035)  & 0.4276  & (0.4268)  \\ 
ExpLin  & 0.1933  & (0.2202)  & \textbf{0.1885}  & (0.2180)  & 0.3719  & (0.3708)  & 0.3857  & (0.3847)  \\ 
Rectifier  & \textbf{0.0860}  & (0.0893)  & \textbf{0.0860}  & (0.1020)  & 0.2522  & (0.2515)  & 0.2445  & (0.2440)  \\ 
Sigmoid  & 0.0861  & (0.0874)  & \textbf{0.0673}  & (0.0901)  & 0.5003  & (0.5000)  & 0.4428  & (0.4394)  \\ 
Step  & 0.4973  & (0.4995)  & \textbf{0.0673}  & (0.0888)  & 0.4975  & (0.5004)  & 0.4422  & (0.4403)  \\ 
\hline & & & & & & & &\vspace{-0.3cm}\\ 
\multicolumn{3}{l}{{\emph{ADULT$\,\,\rightarrow\,\,$RAND (0.4946)}}}\\ 
 & & & & & & & &  \vspace{-0.35cm}\\ 
Linear  & \textbf{0.3606}  & (0.4208)  & \textbf{0.3606}  & (0.4208)  & 0.5043  & (0.5044)  & 0.6029  & (0.6001)  \\ 
ExpLin  & 0.3607  & (0.4201)  & \textbf{0.3600}  & (0.4200)  & 0.5040  & (0.5042)  & 0.5744  & (0.5720)  \\ 
Rectifier  & 0.3698  & (0.4242)  & \textbf{0.3360}  & (0.4165)  & 0.4953  & (0.4937)  & 0.4107  & (0.4085)  \\ 
Sigmoid  & 0.3574  & (0.4040)  & \textbf{0.2882}  & (0.3793)  & 0.4937  & (0.4958)  & 0.3169  & (0.3172)  \\ 
Step  & 0.5015  & (0.4985)  & \textbf{0.2948}  & (0.3834)  & 0.4938  & (0.4952)  & 0.3167  & (0.3172)  \\ 
\hline & & & & & & & &\vspace{-0.3cm}\\ 
\multicolumn{3}{l}{{\emph{ADULT$\,\,\rightarrow\,\,$RANDN (0.0944)}}}\\ 
 & & & & & & & &  \vspace{-0.35cm}\\ 
Linear  & \textbf{0.1117}  & (0.1330)  & \textbf{0.1117}  & (0.1330)  & 0.2718  & (0.2705)  & 0.4867  & (0.4878)  \\ 
ExpLin  & 0.1118  & (0.1332)  & \textbf{0.1117}  & (0.1330)  & 0.2715  & (0.2702)  & 0.4676  & (0.4685)  \\ 
Rectifier  & 0.1200  & (0.1436)  & \textbf{0.1119}  & (0.1333)  & 0.2662  & (0.2667)  & 0.3528  & (0.3537)  \\ 
Sigmoid  & \textbf{0.0829}  & (0.0935)  & \textbf{0.0829}  & (0.0974)  & 0.1140  & (0.1135)  & 0.1070  & (0.1065)  \\ 
\hline & & & & & & & &\vspace{-0.3cm}\\ 
\multicolumn{3}{l}{{\emph{ADULT$\,\,\rightarrow\,\,$CIFAR (0.167)}}}\\ 
 & & & & & & & &  \vspace{-0.35cm}\\ 
Linear  & \textbf{0.1727}  & (0.1994)  & \textbf{0.1727}  & (0.1994)  & 0.2623  & (0.2588)  & 0.4990  & (0.5037)  \\ 
ExpLin  & \textbf{0.1727}  & (0.1994)  & \textbf{0.1727}  & (0.1994)  & 0.2623  & (0.2588)  & 0.4908  & (0.4963)  \\ 
Rectifier  & 0.1761  & (0.2038)  & \textbf{0.1727}  & (0.1993)  & 0.2623  & (0.2588)  & 0.3648  & (0.3706)  \\ 
Sigmoid  & \textbf{0.1668}  & (0.1894)  & 0.1680  & (0.1979)  & 0.2010  & (0.1989)  & 0.1798  & (0.1804)  \\ 
\hline & & & & & & & &\vspace{-0.3cm}\\ 
\multicolumn{3}{l}{{\emph{CIFAR$\,\,\rightarrow\,\,$RAND (0.4946)}}}\\ 
 & & & & & & & &  \vspace{-0.35cm}\\ 
Linear  & \textbf{0.4762}  & (0.4958)  & \textbf{0.4762}  & (0.4958)  & 0.6921  & (0.6830)  & 0.7936  & (0.7840)  \\ 
ExpLin  & 0.4756  & (0.4961)  & \textbf{0.4754}  & (0.4951)  & 0.6673  & (0.6590)  & 0.7085  & (0.7024)  \\ 
Rectifier  & 0.4686  & (0.4895)  & \textbf{0.4639}  & (0.4849)  & 0.5530  & (0.5481)  & 0.5224  & (0.5206)  \\ 
Sigmoid  & 0.4322  & (0.4662)  & \textbf{0.4257}  & (0.4545)  & 0.4959  & (0.4985)  & 0.4638  & (0.4701)  \\ 
Step  & 0.5004  & (0.4998)  & \textbf{0.4252}  & (0.4535)  & 0.4932  & (0.4986)  & 0.4633  & (0.4703)  \\ 
\hline & & & & & & & &\vspace{-0.3cm}\\ 
\multicolumn{3}{l}{{\emph{CIFAR$\,\,\rightarrow\,\,$RANDN (0.0944)}}}\\ 
 & & & & & & & &  \vspace{-0.35cm}\\ 
Linear  & \textbf{0.2109}  & (0.2193)  & \textbf{0.2109}  & (0.2193)  & 0.5501  & (0.5538)  & 0.6704  & (0.6743)  \\ 
ExpLin  & 0.2120  & (0.2207)  & \textbf{0.2107}  & (0.2192)  & 0.5250  & (0.5285)  & 0.5881  & (0.5907)  \\ 
Rectifier  & 0.2645  & (0.2733)  & \textbf{0.2083}  & (0.2168)  & 0.4157  & (0.4178)  & 0.4052  & (0.4066)  \\ 
Sigmoid  & \textbf{0.1023}  & (0.1067)  & 0.1025  & (0.1064)  & 0.1697  & (0.1702)  & 0.1571  & (0.1577)  \\ 
\hline & & & & & & & &\vspace{-0.3cm}\\ 
\multicolumn{3}{l}{{\emph{CIFAR$\,\,\rightarrow\,\,$ADULT (0.1229)}}}\\ 
 & & & & & & & &  \vspace{-0.35cm}\\ 
Linear  & \textbf{0.2481}  & (0.2563)  & \textbf{0.2481}  & (0.2563)  & 0.6214  & (0.6139)  & 0.6419  & (0.6349)  \\ 
ExpLin  & 0.2391  & (0.2476)  & \textbf{0.2386}  & (0.2465)  & 0.5402  & (0.5347)  & 0.5463  & (0.5409)  \\ 
Rectifier  & \textbf{0.0944}  & (0.0966)  & 0.1125  & (0.1121)  & 0.3652  & (0.3606)  & 0.3555  & (0.3511)  \\ 
Sigmoid  & \textbf{0.0969}  & (0.0986)  & 0.1034  & (0.1107)  & 0.5044  & (0.5039)  & 0.4829  & (0.4847)  \\ 
Step  & 0.5130  & (0.5123)  & \textbf{0.1017}  & (0.1103)  & 0.5152  & (0.5145)  & 0.4865  & (0.4870)  \\ 
\hline & & & & & & & & 
\end{tabular}
\end{sc}
\end{small}
\end{center}
\caption{Online learning performance of hetero-association without centering for various activation functions and datasets. Supplementing Table~\ref{tab:Hetero_online_uncentered_1} and~\ref{tab:Hetero_online_uncentered_2}.
} 
\label{tab:Hetero_online_uncentered_3}
\end{table}

\begin{table}[htbp]
\setlength{\tabcolsep}{2pt}
\begin{center}
\begin{small}
\begin{sc}
\begin{tabular}{l@{\hskip 0.17in} r@{\hskip 0.02in} r@{\hskip 0.1in} r@{\hskip 0.01in} r@{\hskip 0.17in} r@{\hskip 0.01in} r@{\hskip 0.1in} r@{\hskip 0.01in} r@{\hskip 0.02in} }
\hline
\abovespace\belowspace
      & \multicolumn{4}{c}{\hskip -0.16in Gradient Descent} &   \multicolumn{4}{c}{\hskip -0.16in Hebbian-Descent} \\ 
  $\vect \phi$ & \multicolumn{2}{c}{\hskip -0.16in Centered} &   \multicolumn{2}{c}{\hskip -0.16in Uncentered} &  \multicolumn{2}{c}{\hskip -0.16in Centered} &  \multicolumn{2}{c}{\hskip -0.16in Uncentered} \\ 
\hline & & & & & & & &\vspace{-0.3cm}\\ 
\multicolumn{3}{l}{{\emph{RAND$\,\,\rightarrow\,\,$RAND (0.4946)}}}\\ 
 & & & & & & & &  \vspace{-0.35cm}\\ 
Linear  &  \textbf{0.0000} & $\pm$ 0.0000  & 0.0384  & $\pm$ 0.0005  & \textbf{0.0000} & $\pm$ 0.0000 & 0.0384   & $\pm$ 0.0005  \\ 
ExpLin  &  \textbf{0.0000} & $\pm$ 0.0000  & 0.0387  & $\pm$ 0.0005  & \textbf{0.0000} & $\pm$ 0.0000 & 0.0383   & $\pm$ 0.0005  \\ 
Rectifier  &  0.1768  & $\pm$ 0.0027  & 0.1202  & $\pm$ 0.0069  & \textbf{0.0000} & $\pm$ 0.0000 & 0.0068   & $\pm$ 0.0001  \\ 
Sigmoid  &  0.0156  & $\pm$ 0.0002  & 0.0159  & $\pm$ 0.0003  & \textbf{0.0000} & $\pm$ 0.0000 & \textbf{0.0000} & $\pm$ 0.0000  \\ 
Step  &  0.5002  & $\pm$ 0.0045  & 0.4996  & $\pm$ 0.0039  & \textbf{0.0000} & $\pm$ 0.0000 & \textbf{0.0000} & $\pm$ 0.0000  \\ 
\hline & & & & & & & &\vspace{-0.3cm}\\ 
\multicolumn{3}{l}{{\emph{RANDN$\,\,\rightarrow\,\,$RANDN (0.0944)}}}\\ 
 & & & & & & & &  \vspace{-0.35cm}\\ 
Linear  &  \textbf{0.0004} & $\pm$ 0.0000  & 0.0561  & $\pm$ 0.0005  & \textbf{0.0004} & $\pm$ 0.0000 & 0.0561   & $\pm$ 0.0005  \\ 
ExpLin  &  \textbf{0.0004} & $\pm$ 0.0000  & 0.0561  & $\pm$ 0.0005  & \textbf{0.0004} & $\pm$ 0.0000 & 0.0561   & $\pm$ 0.0005  \\ 
Rectifier  &  \textbf{0.0004} & $\pm$ 0.0000  & 0.1852  & $\pm$ 0.0149  & \textbf{0.0004} & $\pm$ 0.0000 & 0.0561   & $\pm$ 0.0005  \\ 
Sigmoid  &  0.0004  & $\pm$ 0.0000  & 0.0502  & $\pm$ 0.0001  & \textbf{0.0003} & $\pm$ 0.0000 & 0.0504   & $\pm$ 0.0001  \\ 
\hline & & & & & & & &\vspace{-0.3cm}\\ 
\multicolumn{3}{l}{{\emph{CONNECT$\,\,\rightarrow\,\,$RAND (0.4946)}}}\\ 
 & & & & & & & &  \vspace{-0.35cm}\\ 
Linear  &  \textbf{0.2435} & $\pm$ 0.0001  & 0.2952  & $\pm$ 0.0002  & \textbf{0.2435} & $\pm$ 0.0001 & 0.2952   & $\pm$ 0.0002  \\ 
ExpLin  &  \textbf{0.2419} & $\pm$ 0.0001  & 0.2940  & $\pm$ 0.0003  & 0.2422   & $\pm$ 0.0001 & 0.2942   & $\pm$ 0.0002  \\ 
Rectifier  &  \textbf{0.1760} & $\pm$ 0.0016  & 0.2967  & $\pm$ 0.0031  & 0.2064   & $\pm$ 0.0000 & 0.2733   & $\pm$ 0.0002  \\ 
Sigmoid  &  0.1029  & $\pm$ 0.0006  & 0.1796  & $\pm$ 0.0027  & \textbf{0.0951} & $\pm$ 0.0008 & 0.2190   & $\pm$ 0.0036  \\ 
Step  &  0.4987  & $\pm$ 0.0037  & 0.4990  & $\pm$ 0.0035  & \textbf{0.0900} & $\pm$ 0.0020 & 0.2175   & $\pm$ 0.0035  \\ 
\hline & & & & & & & &\vspace{-0.3cm}\\ 
\multicolumn{3}{l}{{\emph{CONNECT$\,\,\rightarrow\,\,$RANDN (0.0944)}}}\\ 
 & & & & & & & &  \vspace{-0.35cm}\\ 
Linear  &  \textbf{0.0563} & $\pm$ 0.0000  & 0.0707  & $\pm$ 0.0003  & \textbf{0.0563} & $\pm$ 0.0000 & 0.0707   & $\pm$ 0.0003  \\ 
ExpLin  &  \textbf{0.0563} & $\pm$ 0.0000  & 0.0707  & $\pm$ 0.0003  & \textbf{0.0563} & $\pm$ 0.0000 & 0.0707   & $\pm$ 0.0003  \\ 
Rectifier  &  0.0564  & $\pm$ 0.0000  & 0.1062  & $\pm$ 0.0078  & \textbf{0.0563} & $\pm$ 0.0000 & 0.0707   & $\pm$ 0.0003  \\ 
Sigmoid  &  \textbf{0.0561} & $\pm$ 0.0000  & 0.0680  & $\pm$ 0.0001  & \textbf{0.0561} & $\pm$ 0.0000 & 0.0681   & $\pm$ 0.0001  \\ 
\hline & & & & & & & &\vspace{-0.3cm}\\ 
\multicolumn{3}{l}{{\emph{CONNECT$\,\,\rightarrow\,\,$MNIST (0.1473)}}}\\ 
 & & & & & & & &  \vspace{-0.35cm}\\ 
Linear  &  \textbf{0.0908} & $\pm$ 0.0000  & 0.1116  & $\pm$ 0.0001  & \textbf{0.0908} & $\pm$ 0.0000 & 0.1116   & $\pm$ 0.0001  \\ 
ExpLin  &  \textbf{0.0901} & $\pm$ 0.0000  & 0.1112  & $\pm$ 0.0001  & \textbf{0.0901} & $\pm$ 0.0000 & 0.1110   & $\pm$ 0.0001  \\ 
Rectifier  &  0.0540  & $\pm$ 0.0005  & 0.0892  & $\pm$ 0.0012  & \textbf{0.0530} & $\pm$ 0.0000 & 0.0810   & $\pm$ 0.0000  \\ 
Sigmoid  &  \textbf{0.0349} & $\pm$ 0.0001  & 0.0631  & $\pm$ 0.0010  & 0.0425   & $\pm$ 0.0000 & 0.0741   & $\pm$ 0.0000  \\ 
Step  &  0.5001  & $\pm$ 0.0014  & 0.5052  & $\pm$ 0.0131  & \textbf{0.0600} & $\pm$ 0.0003 & 0.0926   & $\pm$ 0.0005  \\ 
\hline & & & & & & & &\vspace{-0.3cm}\\ 
\multicolumn{3}{l}{{\emph{MNIST$\,\,\rightarrow\,\,$RAND (0.4946)}}}\\ 
 & & & & & & & &  \vspace{-0.35cm}\\ 
Linear  &  \textbf{0.0113} & $\pm$ 0.0001  & 0.0790  & $\pm$ 0.0004  & \textbf{0.0113} & $\pm$ 0.0001 & 0.0790   & $\pm$ 0.0004  \\ 
ExpLin  &  \textbf{0.0109} & $\pm$ 0.0000  & 0.0784  & $\pm$ 0.0004  & 0.0111   & $\pm$ 0.0001 & 0.0784   & $\pm$ 0.0004  \\ 
Rectifier  &  0.1121  & $\pm$ 0.0020  & 0.0821  & $\pm$ 0.0023  & \textbf{0.0011} & $\pm$ 0.0000 & 0.0402   & $\pm$ 0.0004  \\ 
Sigmoid  &  0.0403  & $\pm$ 0.0004  & 0.0470  & $\pm$ 0.0007  & \textbf{0.0000} & $\pm$ 0.0000 & \textbf{0.0000} & $\pm$ 0.0000  \\ 
Step  &  0.5004  & $\pm$ 0.0028  & 0.5016  & $\pm$ 0.0031  & \textbf{0.0000} & $\pm$ 0.0000 & \textbf{0.0000} & $\pm$ 0.0000  \\ 
\hline & & & & & & & &\vspace{-0.3cm}\\ 
\multicolumn{3}{l}{{\emph{MNIST$\,\,\rightarrow\,\,$RANDN (0.0944)}}}\\ 
 & & & & & & & &  \vspace{-0.35cm}\\ 
Linear  &  \textbf{0.0002} & $\pm$ 0.0000  & 0.0206  & $\pm$ 0.0003  & \textbf{0.0002} & $\pm$ 0.0000 & 0.0206   & $\pm$ 0.0003  \\ 
ExpLin  &  \textbf{0.0002} & $\pm$ 0.0000  & 0.0207  & $\pm$ 0.0003  & \textbf{0.0002} & $\pm$ 0.0000 & 0.0206   & $\pm$ 0.0003  \\ 
Rectifier  &  0.0086  & $\pm$ 0.0011  & 0.0259  & $\pm$ 0.0016  & \textbf{0.0002} & $\pm$ 0.0000 & 0.0206   & $\pm$ 0.0003  \\ 
Sigmoid  &  0.0003  & $\pm$ 0.0000  & 0.0160  & $\pm$ 0.0001  & \textbf{0.0002} & $\pm$ 0.0000 & 0.0160   & $\pm$ 0.0001  \\ 
\hline & & & & & & & &\vspace{-0.3cm}\\ 
\end{tabular}
\end{sc}
\end{small}
\end{center}
\caption{Multi epoch learning performance of hetero-association with and without centering for various activation functions and datasets. Supplementing Table~\ref{tab:Hetero_offline_1}.}
\label{tab:Hetero_offline_2}
\end{table}

\begin{table}[htbp]
\setlength{\tabcolsep}{2pt}
\begin{center}
\begin{small}
\begin{sc}
\begin{tabular}{l@{\hskip 0.17in} r@{\hskip 0.02in} r@{\hskip 0.1in} r@{\hskip 0.01in} r@{\hskip 0.17in} r@{\hskip 0.01in} r@{\hskip 0.1in} r@{\hskip 0.01in} r@{\hskip 0.02in} }
\hline
\abovespace\belowspace
      & \multicolumn{4}{c}{\hskip -0.16in Gradient Descent} &   \multicolumn{4}{c}{\hskip -0.16in Hebbian-Descent} \\ 
  $\vect \phi$ & \multicolumn{2}{c}{\hskip -0.16in Centered} &   \multicolumn{2}{c}{\hskip -0.16in Uncentered} &  \multicolumn{2}{c}{\hskip -0.16in Centered} &  \multicolumn{2}{c}{\hskip -0.16in Uncentered} \\ 
\hline & & & & & & & &\vspace{-0.3cm}\\ 
\multicolumn{3}{l}{{\emph{MNIST$\,\,\rightarrow\,\,$ADULT (0.1229)}}}\\ 
 & & & & & & & &  \vspace{-0.35cm}\\ 
Linear  &  \textbf{0.0003} & $\pm$ 0.0000  & 0.0350  & $\pm$ 0.0007  & \textbf{0.0003} & $\pm$ 0.0000 & 0.0350   & $\pm$ 0.0007  \\ 
ExpLin  &  \textbf{0.0003} & $\pm$ 0.0000  & 0.0346  & $\pm$ 0.0007  & \textbf{0.0003} & $\pm$ 0.0000 & 0.0345   & $\pm$ 0.0007  \\ 
Rectifier  &  0.0263  & $\pm$ 0.0014  & 0.0408  & $\pm$ 0.0024  & \textbf{0.0000} & $\pm$ 0.0000 & 0.0045   & $\pm$ 0.0003  \\ 
Sigmoid  &  0.0124  & $\pm$ 0.0004  & 0.0170  & $\pm$ 0.0005  & \textbf{0.0000} & $\pm$ 0.0000 & \textbf{0.0000} & $\pm$ 0.0000  \\ 
Step  &  0.4989  & $\pm$ 0.0043  & 0.5043  & $\pm$ 0.0199  & \textbf{0.0000} & $\pm$ 0.0000 & \textbf{0.0000} & $\pm$ 0.0000  \\ 
\hline & & & & & & & &\vspace{-0.3cm}\\ 
\multicolumn{3}{l}{{\emph{ADULT$\,\,\rightarrow\,\,$RAND (0.4946)}}}\\ 
 & & & & & & & &  \vspace{-0.35cm}\\ 
Linear  &  \textbf{0.2645} & $\pm$ 0.0000  & 0.2835  & $\pm$ 0.0001  & \textbf{0.2645} & $\pm$ 0.0000 & 0.2835   & $\pm$ 0.0001  \\ 
ExpLin  &  \textbf{0.2629} & $\pm$ 0.0000  & 0.2820  & $\pm$ 0.0001  & 0.2634   & $\pm$ 0.0000 & 0.2824   & $\pm$ 0.0001  \\ 
Rectifier  &  \textbf{0.1980} & $\pm$ 0.0020  & 0.2571  & $\pm$ 0.0028  & 0.2319   & $\pm$ 0.0000 & 0.2621   & $\pm$ 0.0001  \\ 
Sigmoid  &  \textbf{0.1079} & $\pm$ 0.0002  & 0.1238  & $\pm$ 0.0012  & 0.1411   & $\pm$ 0.0014 & 0.2066   & $\pm$ 0.0000  \\ 
Step  &  0.4996  & $\pm$ 0.0026  & 0.4991  & $\pm$ 0.0019  & \textbf{0.1292} & $\pm$ 0.0025 & 0.1962   & $\pm$ 0.0017  \\ 
\hline & & & & & & & &\vspace{-0.3cm}\\ 
\multicolumn{3}{l}{{\emph{ADULT$\,\,\rightarrow\,\,$RANDN (0.0944)}}}\\ 
 & & & & & & & &  \vspace{-0.35cm}\\ 
Linear  &  \textbf{0.0602} & $\pm$ 0.0000  & 0.0650  & $\pm$ 0.0001  & \textbf{0.0602} & $\pm$ 0.0000 & 0.0650   & $\pm$ 0.0001  \\ 
ExpLin  &  \textbf{0.0602} & $\pm$ 0.0000  & 0.0650  & $\pm$ 0.0001  & \textbf{0.0602} & $\pm$ 0.0000 & 0.0650   & $\pm$ 0.0001  \\ 
Rectifier  &  \textbf{0.0602} & $\pm$ 0.0000  & 0.0667  & $\pm$ 0.0017  & \textbf{0.0602} & $\pm$ 0.0000 & 0.0650   & $\pm$ 0.0001  \\ 
Sigmoid  &  \textbf{0.0597} & $\pm$ 0.0000  & 0.0644  & $\pm$ 0.0000  & \textbf{0.0597} & $\pm$ 0.0000 & 0.0642   & $\pm$ 0.0000  \\ 
\hline & & & & & & & &\vspace{-0.3cm}\\ 
\multicolumn{3}{l}{{\emph{ADULT$\,\,\rightarrow\,\,$CIFAR (0.167)}}}\\ 
 & & & & & & & &  \vspace{-0.35cm}\\ 
Linear  &  \textbf{0.1197} & $\pm$ 0.0000  & 0.1260  & $\pm$ 0.0000  & \textbf{0.1197} & $\pm$ 0.0000 & 0.1260   & $\pm$ 0.0000  \\ 
ExpLin  &  \textbf{0.1197} & $\pm$ 0.0000  & 0.1260  & $\pm$ 0.0000  & \textbf{0.1197} & $\pm$ 0.0000 & 0.1260   & $\pm$ 0.0000  \\ 
Rectifier  &  \textbf{0.1196} & $\pm$ 0.0000  & 0.1284  & $\pm$ 0.0007  & \textbf{0.1196} & $\pm$ 0.0000 & 0.1260   & $\pm$ 0.0000  \\ 
Sigmoid  &  \textbf{0.1172} & $\pm$ 0.0000  & 0.1254  & $\pm$ 0.0000  & 0.1185   & $\pm$ 0.0000 & 0.1249   & $\pm$ 0.0000  \\ 
\hline & & & & & & & &\vspace{-0.3cm}\\ 
\multicolumn{3}{l}{{\emph{CIFAR$\,\,\rightarrow\,\,$RAND (0.4946)}}}\\ 
 & & & & & & & &  \vspace{-0.35cm}\\ 
Linear  &  \textbf{0.0366} & $\pm$ 0.0003  & 0.2258  & $\pm$ 0.0018  & \textbf{0.0366} & $\pm$ 0.0003 & 0.2258   & $\pm$ 0.0018  \\ 
ExpLin  &  \textbf{0.0359} & $\pm$ 0.0002  & 0.2214  & $\pm$ 0.0018  & 0.0361   & $\pm$ 0.0003 & 0.2218   & $\pm$ 0.0018  \\ 
Rectifier  &  0.0660  & $\pm$ 0.0013  & 0.2529  & $\pm$ 0.0056  & \textbf{0.0114} & $\pm$ 0.0001 & 0.1878   & $\pm$ 0.0014  \\ 
Sigmoid  &  0.0371  & $\pm$ 0.0006  & 0.1375  & $\pm$ 0.0012  & \textbf{0.0000} & $\pm$ 0.0000 & 0.0376   & $\pm$ 0.0007  \\ 
Step  &  0.5011  & $\pm$ 0.0039  & 0.5001  & $\pm$ 0.0028  & \textbf{0.0000} & $\pm$ 0.0000 & 0.0525   & $\pm$ 0.0034  \\ 
\hline & & & & & & & &\vspace{-0.3cm}\\ 
\multicolumn{3}{l}{{\emph{CIFAR$\,\,\rightarrow\,\,$RANDN (0.0944)}}}\\ 
 & & & & & & & &  \vspace{-0.35cm}\\ 
Linear  &  \textbf{0.0069} & $\pm$ 0.0002  & 0.0720  & $\pm$ 0.0013  & \textbf{0.0069} & $\pm$ 0.0002 & 0.0720   & $\pm$ 0.0013  \\ 
ExpLin  &  \textbf{0.0069} & $\pm$ 0.0002  & 0.0720  & $\pm$ 0.0013  & \textbf{0.0069} & $\pm$ 0.0002 & 0.0720   & $\pm$ 0.0013  \\ 
Rectifier  &  0.0093  & $\pm$ 0.0003  & 0.1195  & $\pm$ 0.0060  & \textbf{0.0069} & $\pm$ 0.0002 & 0.0720   & $\pm$ 0.0013  \\ 
Sigmoid  &  0.0064  & $\pm$ 0.0000  & 0.0530  & $\pm$ 0.0003  & \textbf{0.0061} & $\pm$ 0.0000 & 0.0535   & $\pm$ 0.0002  \\ 
\hline & & & & & & & &\vspace{-0.3cm}\\ 
\multicolumn{3}{l}{{\emph{CIFAR$\,\,\rightarrow\,\,$ADULT (0.1229)}}}\\ 
 & & & & & & & &  \vspace{-0.35cm}\\ 
Linear  &  \textbf{0.0148} & $\pm$ 0.0003  & 0.1019  & $\pm$ 0.0011  & \textbf{0.0148} & $\pm$ 0.0003 & 0.1019   & $\pm$ 0.0011  \\ 
ExpLin  &  \textbf{0.0145} & $\pm$ 0.0003  & 0.1013  & $\pm$ 0.0010  & 0.0146   & $\pm$ 0.0003 & 0.1006   & $\pm$ 0.0010  \\ 
Rectifier  &  0.0255  & $\pm$ 0.0008  & 0.0708  & $\pm$ 0.0029  & \textbf{0.0027} & $\pm$ 0.0001 & 0.0471   & $\pm$ 0.0006  \\ 
Sigmoid  &  0.0134  & $\pm$ 0.0004  & 0.0435  & $\pm$ 0.0018  & \textbf{0.0000} & $\pm$ 0.0000 & 0.0148   & $\pm$ 0.0005  \\ 
Step  &  0.4998  & $\pm$ 0.0040  & 0.5037  & $\pm$ 0.0296  & \textbf{0.0000} & $\pm$ 0.0000 & 0.0168   & $\pm$ 0.0030  \\ 
\hline & & & & & & & &\vspace{-0.3cm}\\ 
\end{tabular}
\end{sc}
\end{small}
\end{center}
\caption{Multi epoch learning performance of hetero-association with and without centering for various activation functions and datasets. Supplementing Table~\ref{tab:Hetero_offline_1} and \ref{tab:Hetero_offline_2}.}
\label{tab:Hetero_offline_3}
\end{table}

\begin{table}[t]
\setlength{\tabcolsep}{2pt}
\begin{center}
\begin{small}
\begin{sc}
\begin{tabular}{l@{\hskip 0.2in} r@{\hskip 0.02in} r@{\hskip 0.14in} r@{\hskip 0.02in} r@{\hskip 0.14in} r@{\hskip 0.02in} r@{\hskip 0.14in} r@{\hskip 0.02in} r }
\hline
\abovespace\belowspace
  $\vect \phi$ & \multicolumn{2}{c}{\hskip -0.16in Grad. Descent} &   \multicolumn{2}{c}{\hskip -0.16in Hebb. Descent} &  \multicolumn{2}{c}{\hskip -0.16in Hebb rule} &  \multicolumn{2}{c}{\hskip -0.16in Cov. Rule} \\ 
\hline & & & & & & & &\vspace{-0.3cm}\\ 
\multicolumn{3}{l}{{\emph{ADULT (0.2399)}}}\\ 
 & & & & & & & &  \vspace{-0.35cm}\\ 
Linear  & \textbf{0.1564}  & $\pm$0.0003  & \textbf{0.1564}  & $\pm$0.0003  & 0.2399  & $\pm$0.0000  & 0.2957  & $\pm$0.0046  \\ 
Rectifier  & 0.1549  & $\pm$0.0006  & \textbf{0.1545}  & $\pm$0.0005  & 0.2399  & $\pm$0.0000  & 0.2795  & $\pm$0.0348  \\ 
ExpLin  & \textbf{0.1561}  & $\pm$0.0003  & 0.1564  & $\pm$0.0003  & 0.2399  & $\pm$0.0000  & 0.2957  & $\pm$0.0046  \\ 
Sigmoid  & \textbf{0.1541}  & $\pm$0.0002  & 0.1542  & $\pm$0.0003  & 0.2399  & $\pm$0.0000  & 0.2957  & $\pm$0.0046  \\ 
Softmax  & \textbf{0.1541}  & $\pm$0.0002  & 0.1543  & $\pm$0.0002  & 0.2399  & $\pm$0.0000  & 0.2957  & $\pm$0.0046  \\ 
Step  & 0.3419  & $\pm$0.0907  & \textbf{0.1730}  & $\pm$0.0026  & 0.2398  & $\pm$0.0016  & 0.2556  & $\pm$0.0366  \\ 
\hline & & & & & & & &\vspace{-0.3cm}\\ 
\multicolumn{3}{l}{{\emph{CONNECT (0.3409)}}}\\ 
 & & & & & & & &  \vspace{-0.35cm}\\ 
Linear  & \textbf{0.2475}  & $\pm$0.0005  & \textbf{0.2475}  & $\pm$0.0005  & 0.3409  & $\pm$0.0000  & 0.4306  & $\pm$0.0106  \\ 
Rectifier  & \textbf{0.2456}  & $\pm$0.0004  & 0.2461  & $\pm$0.0003  & 0.3409  & $\pm$0.0000  & 0.4310  & $\pm$0.0043  \\ 
ExpLin  & \textbf{0.2473}  & $\pm$0.0005  & 0.2475  & $\pm$0.0005  & 0.3409  & $\pm$0.0000  & 0.4306  & $\pm$0.0106  \\ 
Sigmoid  & \textbf{0.2409}  & $\pm$0.0003  & 0.2445  & $\pm$0.0001  & 0.3409  & $\pm$0.0000  & 0.4306  & $\pm$0.0106  \\ 
Softmax  & \textbf{0.2416}  & $\pm$0.0001  & 0.2444  & $\pm$0.0001  & 0.3409  & $\pm$0.0000  & 0.4306  & $\pm$0.0106  \\ 
Step  & 0.7066  & $\pm$0.0802  & \textbf{0.2983}  & $\pm$0.0289  & 0.7537  & $\pm$0.0000  & 0.5197  & $\pm$0.0377  \\ 
\hline & & & & & & & &\vspace{-0.3cm}\\ 
\multicolumn{3}{l}{{\emph{MNIST (0.8895)}}}\\ 
 & & & & & & & &  \vspace{-0.35cm}\\ 
Linear  & \textbf{0.1423}  & $\pm$0.0012  & \textbf{0.1423}  & $\pm$0.0012  & 0.3191  & $\pm$0.0009  & 0.3536  & $\pm$0.0038  \\ 
Rectifier  & 0.1035  & $\pm$0.0379  & \textbf{0.0846}  & $\pm$0.0010  & 0.3191  & $\pm$0.0009  & 0.3536  & $\pm$0.0038  \\ 
ExpLin  & \textbf{0.1375}  & $\pm$0.0011  & 0.1407  & $\pm$0.0010  & 0.3192  & $\pm$0.0015  & 0.3536  & $\pm$0.0038  \\ 
Sigmoid  & \textbf{0.0766}  & $\pm$0.0002  & 0.0809  & $\pm$0.0001  & 0.3285  & $\pm$0.0117  & 0.3542  & $\pm$0.0074  \\ 
Softmax  & \textbf{0.0700}  & $\pm$0.0003  & 0.0741  & $\pm$0.0003  & 0.3191  & $\pm$0.0009  & 0.3536  & $\pm$0.0038  \\ 
Step  & 0.9176  & $\pm$0.0286  & \textbf{0.1801}  & $\pm$0.0042  & 0.9020  & $\pm$0.0000  & 0.7635  & $\pm$0.0001  \\ 
\hline & & & & & & & &\vspace{-0.3cm}\\ 
\multicolumn{3}{l}{{\emph{CIFAR (0.9)}}}\\ 
 & & & & & & & &  \vspace{-0.35cm}\\ 
Linear  & \textbf{0.7485}  & $\pm$0.0021  & \textbf{0.7485}  & $\pm$0.0021  & 0.7663  & $\pm$0.0003  & 0.7663  & $\pm$0.0003  \\ 
Rectifier  & 0.7642  & $\pm$0.0175  & \textbf{0.7347}  & $\pm$0.0032  & 0.8960  & $\pm$0.0046  & 0.8960  & $\pm$0.0046  \\ 
ExpLin  & \textbf{0.7483}  & $\pm$0.0023  & 0.7485  & $\pm$0.0024  & 0.7664  & $\pm$0.0019  & 0.7664  & $\pm$0.0019  \\ 
Sigmoid  & \textbf{0.6922}  & $\pm$0.0008  & 0.7090  & $\pm$0.0013  & 0.7694  & $\pm$0.0022  & 0.7694  & $\pm$0.0022  \\ 
Softmax  & \textbf{0.6863}  & $\pm$0.0015  & 0.7002  & $\pm$0.0014  & 0.8960  & $\pm$0.0046  & 0.8960  & $\pm$0.0046  \\ 
Step  & 0.8961  & $\pm$0.0074  & \textbf{0.8470}  & $\pm$0.0031  & 0.9000  & $\pm$0.0000  & 0.9000  & $\pm$0.0000  \\ 
\hline & & & & & & & &\vspace{-0.3cm}\\ 
\end{tabular}
\end{sc}
\end{small}
\end{center}
\caption{The same classification experiments as in Table~\ref{tab:label_learning_centered} but without centering.
} 
\vspace{5.5cm}
\label{tab:label_learning_uncentered}
\end{table}

\end{appendices}

\end{document}